\theoremstyle{plain}
\newtheorem{theorem}{Theorem}
\newtheorem{proposition}{Proposition}
\newtheorem{lemma}{Lemma}
\newtheorem{corollary}{Corollary}
\theoremstyle{definition}
\newtheorem{definition}{Definition}
\newtheorem{assumption}[definition]{Assumption}
\theoremstyle{remark}
\newcommand{\myparagraph}[1]{\paragraph{#1.}\hspace{-0.8em}}  %
\definecolor{darkpink}{rgb}{0.91, 0.33, 0.5}
\definecolor{puorange}{rgb}{0.80,0.40,0}
\definecolor{bluegray}{rgb}{0.04,0,0.7}
\definecolor{greengray}{rgb}{0.05,0.50,0.15}
\definecolor{darkbrown}{rgb}{0.40,0.2,0.05}
\definecolor{darkcyan}{rgb}{0,0.4,1}
\definecolor{black}{rgb}{0,0,0}
\definecolor{grey}{rgb}{0.93,0.93,0.93}
\definecolor{royalazure}{rgb}{0.0, 0.22, 0.66}
\definecolor{myyellow}{rgb}{0.70, 0.52, 0.01}
\definecolor{mypurple}{rgb}{0.38, 0.01, 0.70}
\newcommand{\red}[1]{{\color{purple}#1}}
\newcommand{\blue}[1]{{\color{royalazure}#1}}
\crefname{section}{Sec.}{Sections}
\crefname{theorem}{Thm.}{Thms.}
\crefname{lemma}{Lem.}{Lems.}
\crefname{corollary}{Cor.}{Cors.}
\crefname{proposition}{Prop.}{Props.}
\crefname{assumption}{Asm.}{Asms.}
\crefname{property}{Propt.}{Propts.}
\crefname{algorithm}{Alg.}{Algs.}
\crefname{appendix}{Appx.}{Appxs.}
\crefname{figure}{Fig.}{Figs.}
\crefname{table}{Tab.}{Tabs.}
\newcommand{\abs}[1]{\left| #1 \right|}
\newcommand{\ones}{\mathbf 1}
\newcommand{\zeros}{\operatorname{\mathbf 0}}
\newcommand{\Null}{\operatorname{null}}
\newcommand{\Range}{\operatorname{range}}
\newcommand{\Dom}{\operatorname{dom}}
\newcommand{\Tr}{\operatorname{\bf Tr}}
\newcommand{\Cl}{\operatorname{cl}}
\newcommand{\Span}{\operatorname{span}}
\newcommand{\ip}[1]{{\left\langle #1 \right\rangle}}
\newcommand{\ipsmall}[1]{{\langle #1 \rangle}}
\newcommand{\lone}{\mathbf{L}^1}
\newcommand{\ltwo}{\mathbf{L}^2}
\newcommand{\linf}{\mathbf{L}^\infty}
\newcommand{\Var}{\operatorname{\mathbb{V}ar}}
\newcommand{\Cov}{\operatorname{\mathbb{C}ov}}
\newcommand{\tv}{\operatorname{TV}}
\DeclareMathOperator*{\argmax}{arg\,max}
\DeclareMathOperator*{\argmin}{arg\,min}
\newcommand{\calB}{\mathcal{B}}
\newcommand{\calF}{\mathcal{F}}
\newcommand{\calG}{\mathcal{G}}
\newcommand{\calH}{\mathcal{H}}
\newcommand{\calS}{\mathcal{S}}
\newcommand{\ind}{\mathds{1}}
\newcommand{\ceil}[1]{\ensuremath{\left\lceil#1\right\rceil}}
\newcommand{\floor}[1]{\ensuremath{\left\lfloor#1\right\rfloor}}
\newcommand{\br}[1]{\ensuremath{\left\{#1\right\}}}
\renewcommand{\P}[2]{\ensuremath{{\mathbb P}_{#1}\left[#2\right]}}
\newcommand{\msc}[1]{\ensuremath{\mathscr{#1}}}
\newcommand{\prob}{\mathbb{P}}
\newcommand{\mc}[1]{\mathcal{#1}}
\newcommand{\sse}{\subseteq}
\newcommand{\R}{\mathbb{R}}
\newcommand{\Ex}{\mathbb{E}}
\newcommand{\E}[2]{\ensuremath{{\mathbb E}_{#1}\left[#2\right]}}
\newcommand{\sbr}[1]{\ensuremath{\left[#1\right]}}
\newcommand{\p}[1]{\ensuremath{\left(#1\right)}}
\renewcommand{\d}{\mathrm{d}}
\newcommand{\indep}{\perp\!\!\!\!\perp}
\newcommand{\risk}{\msc{R}}
\newcommand{\X}{\msc{X}}
\newcommand{\Y}{\msc{Y}}
\newcommand{\Z}{\msc{Z}}
\newcommand{\x}{\boldsymbol{x}}
\newcommand{\y}{\boldsymbol{y}}
\newcommand{\z}{\boldsymbol{z}}
\renewcommand{\u}{\boldsymbol{u}}
\renewcommand{\v}{\boldsymbol{v}}
\newcommand{\A}{\mathbf{A}}
\newcommand{\B}{\mathbf{B}}
\newcommand{\J}{\mathbf{J}}
\newcommand{\I}{\mathbf{I}}
\newcommand{\C}{\mathbf{C}}
\newcommand{\bmu}{\boldsymbol{\mu}}
\newcommand{\hC}{\widehat{\mathbf{C}}}
\newcommand{\hS}{\hat{\mathbf{\Sigma}}}
\newcommand{\bS}{\bar{\mathbf{\Sigma}}}
\newcommand{\balpha}{\boldsymbol{\alpha}}
\newcommand{\bbeta}{\boldsymbol{\beta}}
\newcommand{\heta}{\hat{\eta}}
\newcommand{\pow}[1]{^{\scriptscriptstyle(#1)}}
\newcommand{\hg}{\hat{g}}
\newcommand{\hrho}{\hat{\rho}}
\newcommand{\iidsim}{\overset{\text{\tiny i.i.d}}{\sim}}
\newcommand{\kmax}{k_{\max{}}}
\newcommand{\lmax}{l_{\max{}}}
\newcommand{\Lclip}{\mathcal{L}_{\mathrm{CLIP}}}
\newcommand{\hLBT}{\hat{\mathcal{L}}_{\mathrm{BT}}}
\newcommand{\hLSC}{\hat{\mathcal{L}}_{\mathrm{SC}}}
\newcommand{\hLclip}{\hat{\mathcal{L}}_{\mathrm{CLIP}}}
\newcommand{\hLvic}{\hat{\mathcal{L}}_{\mathrm{VICReg}}}
\newcommand{\HS}{\mathrm{HS}}
\newcommand{\df}{\operatorname{df}}
\newcommand{\norm}[1]{\lVert #1 \rVert}
\newcommand{\bignorm}[1]{\big{\lVert} #1 \big{\rVert}}
\newcommand{\V}{\mathbf{V}}
\newcommand{\hV}{\widehat{\mathbf{V}}}
\newcommand{\Rsans}{\mathsf{R}}
\newcommand{\M}{\mathbf{M}}
\newcommand{\hM}{\widehat{\mathbf{M}}}
\newcommand{\Ssans}{\mathsf{S}}
\newcommand{\Rhat}{\widehat{\mathsf{R}}}
\newcommand{\Fhat}{\widehat{F}}
\newcommand{\Lsans}{\mathsf{L}^2}
\newcommand{\defeq}{:=}
\newcommand{\dec}{\operatorname{dec}}
\newcommand{\Iop}{\mathbf{I}}
\newcommand{\Sop}{\mathbf{S}}
\newcommand{\Cop}{\mathbf{C}}
\newcommand{\Top}{\mathbf{T}}
\newcommand{\Np}{N_{\mathrm{p}}}
\newcommand{\Nu}{N_{\mathrm{u}}}
\newcommand{\hCopp}{\widehat{\mathbf{C}}_{\mathrm{p}}}
\newcommand{\hCopu}{\widehat{\mathbf{C}}_{\mathrm{u}}}
\newcommand{\op}{\operatorname{op}}
\newcommand{\polylog}{\operatorname{plog}}
\newcommand{\fstar}{r}
\newcommand{\etastar}{\eta_\star}
\newcommand{\fstarbound}{B_r}
\newif\ifcomments
\newcommand{\ZH}[1]{
\ifcomments
{\color{orange} [{\bf ZH}: #1]}
\fi
}
\title{A Generalization Theory for Zero-Shot Prediction}
\author{Ronak Mehta \qquad Zaid Harchaoui \vspace{0.3cm} \\
{
\small University of Washington, Seattle
}
}
\date{\today}
\begin{document}

\maketitle

\begin{abstract}
A modern paradigm for generalization in machine learning and AI consists of pre-training a task-agnostic foundation model, generally obtained using self-supervised and multimodal contrastive learning. The resulting representations can be used for prediction on a downstream task for which no labeled data is available. We present a theoretical framework to better understand this approach, called zero-shot prediction. We identify the target quantities that zero-shot prediction aims to learn, or learns in passing, and the key conditional independence relationships that enable its generalization ability.

\end{abstract}

\section{Introduction}\label{sec:intro}
In 2021, OpenAI shocked the world by improving the zero-shot classification accuracy on ImageNet from 11.5\%  to 76.2\% via the CLIP series of models \citep{radford2021learning}. This event redefined the goal of zero-shot prediction from producing models that generalized to \emph{unseen classes} to those that generalized to \emph{unseen tasks} entirely. Two fundamental drivers of CLIP's success were 1) the use of natural language as a medium for representing arbitrary classes (as in the previous state-of-the-art Visual N-grams \citep{li2017learning}), and 2) a massive, yet carefully designed pre-training set which significantly impacted downstream performance \citep{radford2021learning,fang2023data,xu2024demystifying}. Despite the remarkable success of these foundation model-based pipelines~\citep{bommasani2021opportunities}, there are unique components of zero-shot prediction that warrant investigation from a theoretical point of view.

To clarify these gaps, we contrast zero-shot prediction (ZSP) with the related setting of few-shot learning (FSL). Let $\x \in \X$ denote an input (often an image) that accompanies a discrete value $\y \in \Y$ (often a class label).
Common to both ZSP and FSL is a pre-training procedure in which a large unlabeled dataset $\x_1, \ldots, \x_N \in \X$ is used to produce an \emph{encoder} $\balpha: \X \rightarrow \R^d$. The \emph{embedding} $\balpha(\x)$ is thought to contain information that is relevant for predicting $\y$ from $\x$. Pre-training typically occurs through the process of self-supervised learning (SSL), using a \emph{pretext task} that can be solved with only instances of $\x$ (e.g.~filling in a blank image patch).
In FSL, the user may then access a labeled dataset $(\x^{\mathrm{lab}}_1, \y^{\mathrm{lab}}_1), \ldots, (\x^{\mathrm{lab}}_n, \y^{\mathrm{lab}}_n)$ from which a predictor can be trained inexpensively. This often takes the form of a linear classifier $\x \mapsto \mathbf{W} \balpha(\x) + \boldsymbol{b}$ for $\mathbf{W} \in \R^{\abs{\Y} \times d}$ and $\boldsymbol{b} \in \R^{\abs{\Y}}$. Where ZSP departs from FSL is the additional challenge of being given \emph{no directly labeled training data}. 

At first glance, ZSP seems impossible. Yet, the ingenuity of practitioners has yielded the following solution; if 1) each pre-training example $\x_i$ is paired with another ``view'' $\z_i \in \Z$ (e.g.~a caption in natural language) and 2) if each label $\y \in \Y$ can intelligently be embedded into $\Z$, then the relationship between each $\x_i$ and $\z_i$ could provide the means to perform prediction. Concretely, one learns a complementary encoder $\bbeta: \Z \rightarrow \R^d$ during pre-training and designs \emph{prompts} $\z_k^y$ for $\y \in \Y$ and $k = 1, \ldots, m$. Then,
\begin{align}
    \x \mapsto \argmax_{\y \in \Y} \frac{1}{m} \sum_{k=1}^m \langle \balpha(\x), \bbeta(\z_k^y)\rangle \label{eq:proxy}
\end{align}
is employed for prediction. An example of a prompt is the template text ``photo of a \underline{\hspace{0.5cm}}.'', where the blank can be filled by the textual representation of the class (e.g.~``cat'' or ``dog''). The ZSP pipeline, from pre-training to prompt selection, is clearly a wild departure from what is explained by statistical learning theory. Moreover, while some components of these systems have been studied in the context of FSL (such as the reasons why various pre-training objectives result in encoders that provably accelerate learning), unique aspects of ZSP, such as the role of prompting and the cost of ``translating'' modalities, have not yet received theoretical treatment. Herein lies our question.

\begin{mdframed}[userdefinedwidth=0.95\columnwidth, align=center, innerleftmargin=6pt, innerrightmargin=6pt]
{\it Through what decomposition of downstream task performance can we compare zero-shot prediction to the direct supervised learner, with a transparent dependence on the 1) pre-training distribution, 2) evaluation distribution, and 3) prompting strategy?}
\end{mdframed}

\myparagraph{Contributions}
In \Cref{sec:framework}, we present a learning theoretic framework for the pre-training/evaluation/prompting data and propose two expressions for the population counterpart of~\eqref{eq:proxy}. 
These expressions, while equivalent at the population level, reflect two classes of learning methods which we call the ``conditional mean'' and the ``information density'' approaches.
In \Cref{sec:theory}, we prove a generic decomposition of the prediction error on the downstream task, which furnishes three components: \emph{prompt bias} measures the compatibility of the prompt strategy with the pre-training and evaluation distributions, \emph{residual dependence} measures the information-theoretic cost of using one modality to make predictions on another, and \emph{estimation error} quantifies the effect of the finite number of pre-training examples and prompts. The estimation error decomposes further depending on whether the conditional mean or information density approaches are taken. To provide insight and demonstrate the usefulness of the decomposition, we analyze the performance of nonparametric regression methods for each approach by way of finite-sample bounds in high probability. Our framework arms practitioners with a means to imbue existing SSL-to-ZSP pipelines with theoretical guarantees, depending on the approach with which they best align.
In \Cref{sec:experiments}, we illustrate our theoretical claims by empirically evaluating prompt bias and residual dependence on zero-shot prediction tasks with simulated and image data.

\myparagraph{Related Work}
One can argue that precursors to both FSL and ZSP in 
machine learning can be found in the literature of meta-learning, or ``learning to learn'' \citep{thrun1998learning, andrychowicz2016learning, finn2017model}. There, the downstream evaluation tasks are given to the user upfront, so that pre-training an encoder and training a predictor for all of the evaluation tasks can be performed in one step.
On the other hand, FSL and ZSP both involve fully task-agnostic pre-training phases. 
Seminal work in computer vision on matching words and pictures is also worth mentioning \citep{barnard2003matching,forsyth2009words}. 

Two complementary bodies of work studied phenomena common to FSL and ZSP. The first considers which properties of learned encoders can provably improve downstream performance \citep{wang2020understanding, haochen2021provable, atzmon2020acausal, wang2024desiderata, du2024lowrank}. The other is dedicated to explaining how otherwise mysterious SSL objectives achieve these properties \citep{wen2021toward, li2021selfsupervised, pokle2022constrasting, kiani2022jointembeddings, johnson2023contrastive, schwarz-ziv2023aninformation}. In particular, \citet{balestriero2022constrastive} and \citet{tan2024contrastive} relate various SSL objectives to spectral clustering.
One FSL-specific line of work studies when linear mappings of pre-trained encoders can achieve optimal downstream performance \citep{saunshi2019atheoretical, haochen2021provable, tosh2021contrastive, lee2021predicting}.

While informative representations are essential, the core of ZSP is the remarkable ability of models to make predictions without \emph{any} task-specific data, a challenge even for the perfect encoder. For context, we avoid the historical term ``zero-shot learning'' \citep{larochelle2008zerodata, akata2015label}, which refers to a setting in which pre-training data is not only \emph{labeled}, but contains metadata-based features associated with each class. In general, this only handles unseen classes, and only if the same features are observed at inference time. 
To our knowledge, the only work studying ZSP based on self-supervised pre-training is \citet{chen2024understanding}. In particular, \citet[Theorem 4.2 and Corollary 5.1]{chen2024understanding} provides bounds on the top-$k$ accuracy of ZSP for CLIP-based encoders. However, the bound \emph{increases} with the batch size, may not decay to zero even if the pre-training loss is fully optimized and upstream and downstream data distributions are the same, and does not seem to explicitly depend on the prompt quality. 
The independent and concurrent work of \citet{oko2025statistical} develops a statistical analysis based on sufficiency notions, with the aim of capturing the predictive performance in the downstream task. 
Their work is complementary to ours, in that they determine the distributional parameter learned by the CLIP objective, but also assume that the prompting strategy and downstream data distribution are ``idealized'', in that the prompt bias and residual dependence quantities alluded to in the contributions are zero.

On the applied side, we are inspired by the number of works that use diverse, class-specific prompts generated using large language models (LLMs) for enhancing ZSP performance \citep{pratt2023what, yang2023language, maniparambil2023enhancing} and interpretability \citep{menon2023visual, esfandiarpoor2024ifclip}. While these empirical methods, such as the customized prompts via language models method~(CuPL, \citet{pratt2023what}), are often designed using intuition from human understanding of natural language, we aim to offer a theoretical explanation for their success from a statistical learning theory and probabilistic graphical modeling perspective. Despite this particular application of LLMs, we also acknowledge that ``prompting'' in ZSP has a different meaning than in the growing field of prompt engineering, in which inputs are designed for large language models \citep{pryzant2023automatic, wang2024promptagent, guo2024connecting, sclar2024quantifying}.

\section{Theoretical Framework}\label{sec:framework}
We introduce the mathematical objects that connect the empirically-motivated predictor~\eqref{eq:proxy} to its theoretical counterpart analyzed in \Cref{sec:theory}. For the reader's convenience, a global notation table is provided in \Cref{sec:a:notation}.

\begin{figure*}[t!]
    \centering
    \includegraphics[width=\linewidth]{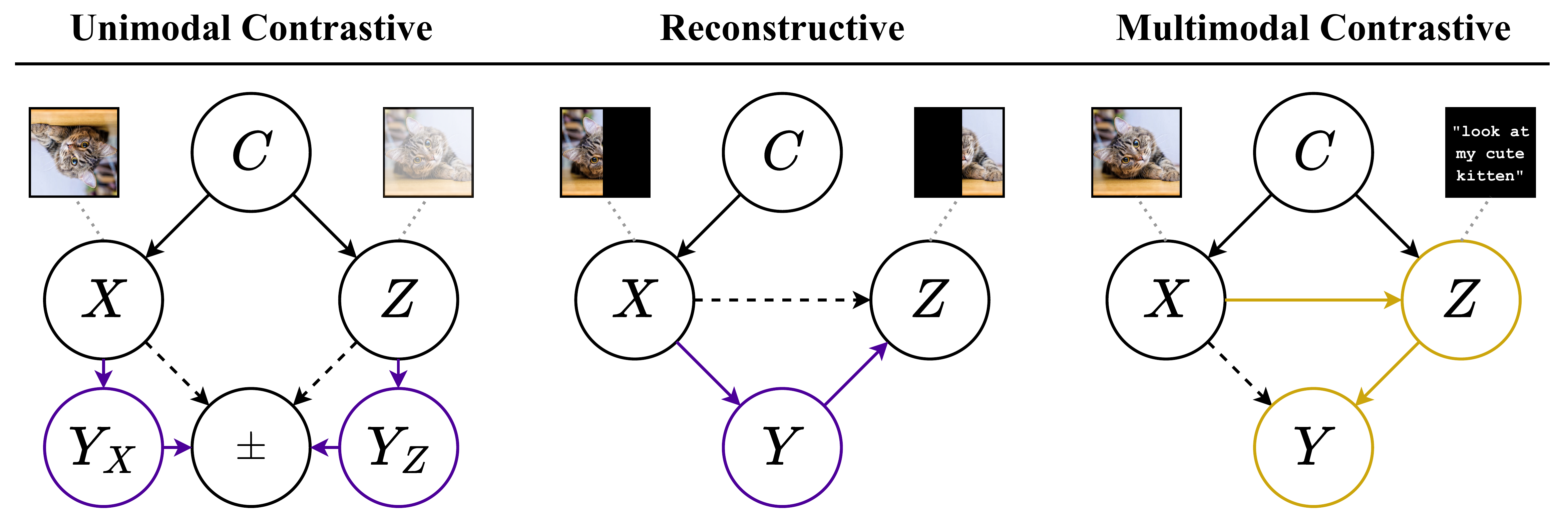}
    \caption{{\bf Graphical Models of Prediction Paths.} Each directed graphical model corresponds to the data types and dependence structures for various SSL pre-training approaches. The variable $C$ represents an unobserved context that determines the observed data-generating distribution. Dotted lines indicate the possibility of presence or absence of the arrow. 
    Methods {\color{myyellow}{\bf compatible with ZSP}} may learn the relationship between $X$ and $Z$ directly, whereas the relationship between $Z$ and $Y$ is learned via prompting.
    Methods that are {\color{mypurple}{\bf compatible with FSL}} learn the label $Y$ as a latent variable in the process of solving the pretext task. 
    }
    \label{fig:dag}
\end{figure*}

\myparagraph{Prediction Setups}
Consider random variables $X$, $Y$, and $Z$ observed in $\X$, $\Y$, and $\Z$, respectively. We interpret $\X$ as the space of images, $\Y$ as the (not necessarily discrete) space of labels, and $\Z$ as the space of text captions. 

Consider a probability measure $P_{X, Y}$ on $\X \times \Y$, called the \emph{evaluation distribution}. We specify a collection of downstream tasks, with which we may evaluate predictors on data drawn from $P_{X, Y}$ (e.g.~CIFAR-10). Consider a function $\fstar: \Y \rightarrow \R$, and the least squares prediction problem
\begin{align}
    \min_{\eta: \X \rightarrow \R} \E{P_{X, Y}}{(\eta(X) - \fstar(Y))^2}\label{eq:mmse}
\end{align}
The function $\fstar$ serves only to handle multiple task formats such as regression ($\fstar(\y) = \y$) or binary classification ($\fstar(\y) = \ind\br{\y=1}$) in a unified manner. We discuss formulations of multi-class classification and structured prediction in \Cref{sec:theory}. The optimizer of~\eqref{eq:mmse} over $\eta \in \ltwo(P_X)$\footnote{In the appendix, we carefully construct $\ltwo$-spaces as sets of equivalence classes of functions (see \Cref{sec:a:background:l2}) for explicitness and rigor. We do not belabor this distinction in the main text.}, or all measurable, square-integrable functions on $\X$, is
\begin{align}
    \etastar(\x) := \E{P_{Y, X}}{\fstar(Y)|X}(\x).\label{eq:est:bayes}
\end{align}
We will call this the \emph{direct predictor} throughout this paper, which will contrast our viewpoint of ZSP as an indirect, multi-stage prediction procedure. Indeed, the prompting step in~\eqref{eq:proxy} resembles an empirical average of draws from a probability distribution on $\Z$ based on the class label $Y = \y$ (especially when considering the LLM-based generation methods mentioned in \Cref{sec:intro}), whereas the encoders capture a dependence relation between $X$ and $Z$. Accordingly, we introduce a probability measure  $Q_{X, Z}$ on $\X \times \Z$, called the \emph{pre-training distribution}, and the \emph{prompt distribution} $\rho_{Y, Z}$ on $\Y \times \Z$ which represents the user-defined strategy for generating prompts. As a theoretical model for ZSP, we propose the function
\begin{align}
    \eta_\rho(\x) = \E{Q_{X, Z}}{g_\rho(Z)|X}(\x),\label{eq:est:ts}
\end{align}
called the \emph{indirect predictor}, where
\begin{align}
    g_\rho(\z) = \E{\rho_{Y, Z}}{\fstar(Y)|Z}(\z).\label{eq:g_rho}
\end{align}
Notice that $\etastar$ relies only on $P_{X, Y}$ while $\eta_\rho$ is a two-stage predictor relying only on the pair $(Q_{X, Z}, \rho_{Y, Z})$. The pre-training, evaluation, and prompt distributions represent pairwise dependencies between the random variables $X$, $Y$, and $Z$, as well as the observable data of the problem. Intuitively, our analysis of the performance gap between the direct and indirect predictors will quantify the ``compatibility'' of these three fundamental distributions as a possible joint distribution on $\X \times \Y \times \Z$. 

\myparagraph{Prediction Paths of FSL and ZSP} 
For context, we contrast our setup with previous theoretical analyses of FSL, aiming to 1) highlight the fundamental differences between SSL-for-FSL and SSL-for-ZSP, 2) describe assumptions we make (and do not make) to best align with applications.
First, we consider two common SSL tasks that precede FSL. In unimodal contrastive learning, $X$ and $Z$ are augmented/corrupted images, and the pretext task is to identify examples derived from the same (``$+$'') or different (``$-$'') underlying image \citep{Chen2020ASimple}. In reconstructive SSL, the encoder is pre-trained to predict a hidden portion of the raw/embedded image \citep{assran2023self}. 
Foundational works such as \citet{saunshi2019atheoretical} and \citet{wang2020understanding} explain the success of these SSL-for-FSL pipelines by the following mechanism: the labels $\Y$ used in the downstream task form a latent variable mixture model for the pre-training set, i.e.~$Q_{X,Z} = \sum_{\y \in \Y} Q_{X, Z|Y= \y} \cdot Q_Y(\y)$. Thus, generalization guarantees hinge upon the fact that learning parameters of the pre-training distribution must inherently capture its latent variables (the downstream labels). This theory is visualized in \Cref{fig:dag} (left \& center); observe that if the dotted arrows were absent, the only path to solve the pretext task is \emph{through} the label.
This FSL ``prediction path'' motivates another prevalent assumption of exact/approximate conditional independence of $X$ and $Z$ given $Y$ (e.g., as in \citet{lee2021predicting}). We avoid this assumption, which is unrealistic in the multimodal context as the dependence between an image and its caption is unlikely to be fully explained by a coarse label such as ``cat''. Moreover, this \emph{latent label model} assumes equality of the marginals $P_X = Q_X$ on $\X$. As a concrete example, this amounts to assuming that the marginal distribution of images on the Internet ($Q_X$) is equal to that of CIFAR-10 images ($P_X$). We explicitly track this mismatch in our generalization bounds.

For ZSP, the prevailing SSL pretext task is multimodal contrastive learning (\Cref{fig:dag}, right), wherein the foundation model learns a similarity function $(\x, \z) \mapsto \ip{\balpha(\x), \bbeta(\z)}$. To discuss a joint distribution $P \equiv P_{X, Y, Z}$, we adopt a \emph{latent caption model} that associates $X \sim P_X$ with an unobserved $\Z$-valued latent variable $Z$ (i.e.~an unobserved caption). Because pre-training connects $X$ to $Z$ and prompting then connects $Y$ to $Z$, the ideal dependence structure for ZSP is fundamentally different from FSL; if $X$ and $Y$ are conditionally independent given $Z$, the direct and indirect predictors are in fact equivalent. Indeed, the tower property of conditional expectation gives the identity
\begin{align*}
    \etastar(\x) &= \E{P}{\fstar(Y)|X}(\x)\\
    &= \E{P}{\E{P}{\fstar(Y)|Z, X}|X}(\x)\\    
    &= \E{P}{\E{P}{\fstar(Y)|Z}|X}(\x). &(X \indep Y |Z)
\end{align*}
The final expression is not equal to~\eqref{eq:est:ts} because of the difference between $(Q_{X, Z}, \rho_{Y, Z})$ and $(P_{X, Z}, P_{Y, Z})$. Additionally, $X$ and $Y$ are not necessarily conditionally independent given $Z$. 
These discrepancies are precisely exposed in our analysis via a measure of distribution mismatch and a measure of the conditional dependence of $X$ and $Y$ given $Z$. The latter formalizes the information-theoretic cost of using natural language as a proxy for image classification.

\myparagraph{Representations of the Indirect Predictor}
We establish several central identities involving the indirect predictor~\eqref{eq:est:ts}. These expressions will strengthen the justification for $\eta_\rho$ as the target function of ZSP and naturally lead to two classes of learning methods that we analyze in \Cref{sec:theory}.
As a preview, consider the example of balanced binary classification ($r(\y) = \ind\br{\y = 1}$) and the classifier that returns $1$ when $\eta_\rho(\x) \geq 1/2$ and $0$ otherwise. We will show that there exist encoders $\balpha: \X \rightarrow \R^d$ and $\bbeta: \Z \rightarrow \R^d$, and a sequence of scalars $\sigma_1 \geq \ldots \geq \sigma_d \geq 0$ such that if $\rho_Z \approx Q_Z$ and $d$ is sufficiently large, then this classifier is equivalent to
\begin{align}
    \x \mapsto \textstyle\argmax_{\y \in \Y} \ip{\balpha(\x), \E{\rho_{Y, Z}}{\bbeta(Z)|Y = \y}}_{\sigma},\label{eq:proxy_classifier}
\end{align}
where $\ip{\u, \v}_{\sigma} := \sum_{i=1}^d \sigma_i u_i v_i$. This expression mirrors~\eqref{eq:proxy} down to a rescaling of the inner product. We now present the expressions that are used to derive~\eqref{eq:proxy_classifier}.

For the first, let $Q_X$ and $Q_Z$ be the marginals of $Q_{X, Z}$ on $\X$ and $\Z$, respectively. We introduce the fundamental \emph{conditional mean operator} $\M_{Z|X}: \ltwo(Q_Z) \rightarrow \ltwo(Q_X)$, which assigns to any $g \in \ltwo(Q_Z)$ the function $\x \mapsto \E{Q_{X, Z}}{g(Z)|X}(\x)$. Then, it holds by definition that
\begin{align}
    \eta_\rho(\x) = [\M_{Z|X} g_\rho](\x). \label{eq:framework:cond_mean}
\end{align}
For the second, 
consider the case in which $Q_{X, Z} \ll Q_XQ_Z$\footnote{A distribution $\mu$ on $\msc{U}$ is \emph{absolute continuous} with respect to another distribution $\nu$ (denoted $\mu \ll \nu$) if $\nu(A) = 0 \implies \mu(A) = 0$ for every measurable set $A \sse \msc{U}$. If so, there exists a \emph{Radon-Nikodym derivative} $\frac{\d\mu}{\d\nu}: \msc{U} \rightarrow \R_{\geq 0}$ such that for every measurable set $A \sse \msc{U}$, it holds that $\mu(A) = \int_A \frac{\d\mu}{\d\nu}(\u) \d \nu(\u)$.}, where $Q_X Q_Z$ denotes the probability distribution of the pair $(X, Z)$ drawn independently as $X \sim Q_X$ and $Z \sim Q_Z$. 
Then, we define the Radon-Nikodym derivative $\Rsans := \frac{\d Q_{X, Z}}{\d Q_X Q_Z}: \X \times \Z \rightarrow \R_{\geq 0}$. 
The function $\Rsans$, called the \emph{information density}\footnote{This term actually refers to $(\x, \z) \mapsto \log \Rsans(\x, \z)$, but for simplicity, we use it for $\Rsans$---see \citet[Eq. (11)]{dytso2023meta}.} has a long history in statistics and information theory. Using $\Rsans$ (\Cref{lem:info_density2}, \Cref{sec:a:background:buja}), the indirect predictor writes as
\begin{align}
    \eta_\rho(\x) &= \E{Q_Z}{g_\rho(Z)\Rsans(\x, Z)} \notag\\
    &= \E{\rho_{Y,Z}}{\fstar(Y)\Rsans(\x, Z)} + \operatorname{err}(Q_Z, \rho_Z),\label{eq:framework:rnd}
\end{align}
where $\operatorname{err}(Q_Z, \rho_Z)$ term measures the discrepancy between the marginal distributions of the captions generated during pre-training and prompting, respectively. The expressions~\eqref{eq:framework:cond_mean} and~\eqref{eq:framework:rnd}, while equal at the population level, motivate two categories of approaches for learning/estimation that have different statistical properties. The ``conditional mean'' approach uses pre-training data to learn the operator $\M_{Z|X}$ and prompts to approximate the function $g_\rho$. On the other hand, the ``information density'' approach learns the function $\Rsans$ during pre-training, and approximates the expectation over $\rho_{Y, Z}$ using prompts. 
The information density approach is particularly reflective of the prompting aspect of~\eqref{eq:proxy}, as one may perceive $\z_k^{\y}$ for $k = 1, \ldots, m$ and $\y \in \Y$ as $M = m\abs{\Y}$ as samples from $\rho_{Y, Z}$ with $\rho_Y$ chosen to be uniform on $\Y$. These are then used to replace the expectation in~\eqref{eq:framework:rnd}.

Finally, we tie back to~\eqref{eq:proxy_classifier} and describe the formal connection between $\M_{Z|X}$ and $\Rsans$. 
In \Cref{prop:lancaster} (\Cref{sec:a:background:buja}), we prove the decomposition of the form
\begin{align}
    \Rsans(\x, \z) = \ip{\balpha(\x), \bbeta(\z)}_{\sigma} + \varepsilon_d,\label{eq:lancaster}
\end{align}
where $\sigma_d, \varepsilon_d \rightarrow 0$ as $d \rightarrow \infty$. Then,~\eqref{eq:proxy_classifier} follows under the given conditions by plugging~\eqref{eq:lancaster} into~\eqref{eq:framework:rnd}.

The encoders $\balpha = (\alpha_1, \ldots, \alpha_d)$ and $\bbeta = (\beta_1, \ldots, \beta_d)$, and constants $(\sigma_i)_{i=1}^d$ are none other than the components of the truncated singular value decomposition (SVD) of $\M_{Z|X}$ (\Cref{prop:svd}, \Cref{sec:a:background:buja}). The SVD of $\M_{Z|X}$ and the information density $\Rsans$ characterize the full dependence structure of $Q_{X, Z}$; because $\Rsans$ is identically $1$ when $Q_{X, Z} = Q_X Q_Z$, we may define the (squared) \emph{mean square contingency} dependence measure
\begin{align}
    I(X; Z) &= \E{Q_{X} Q_{Z}}{(\Rsans(X, Z) - 1)^2}\label{eq:msc}\\
    &= \norm{\M_{Z|X}}_{\HS}^2 - 1 \notag\\
    &= \textstyle\sum_{i=2}^\infty \sigma_i^2,\label{eq:framework:spectrum}
\end{align}
where $\norm{\cdot}_{\HS}$ denotes the Hilbert-Schmidt norm (see \Cref{def:trace_hs}, \Cref{sec:a:background:compact}), and the identities are proven in \Cref{prop:lancaster}. The right-hand side of~\eqref{eq:msc} can also be interpreted as the $\chi^2$-divergence $D_{\chi^2}(Q_{X, Z} \Vert Q_{X} Q_{Z}) := \Ex_{Q_X Q_Z}[(\frac{\d Q_{X, Z}}{\d Q_{X} Q_{Z}}(X, Z) - 1)^2]$ between the joint distribution and the product of the marginals (see \Cref{def:msc}).

\section{Generalization Guarantees for ZSP}\label{sec:theory}

In this section, we prove generalization guarantees for ZSP methods by comparing $\etastar$ to $\eta_\rho$ and $\eta_\rho$ to an estimator $\heta_\rho$, based on an $N$-sized pre-training set and $M$-sized prompt set (recall that $M = m\abs{\Y}$ in~\eqref{eq:proxy}). While there are some subtleties in the sampling models between various methods, one can consider  $(X_1, Z_1), \ldots, (X_N, Z_N) \iidsim Q_{X, Z}$ and $(Y_1, Z_1'), \ldots, (Y_M, Z_M') \iidsim \rho_{Y, Z}$ for intuition purposes (see \Cref{sec:a:complexity:sampling} for a detailed description).
We consider specific instances of both the conditional mean and information density approaches, based on learning theory in reproducing kernel Hilbert space (RKHS); our arguments do not intend to interpret foundation modeling as a kernel method, but to use the detailed analysis of the statistical errors in kernel methods to gain insight. In particular, we aim to expose two key dependences for the random triple $(X, Y, Z)$: the dependence between $X$ and $Z$ (which governs pre-training) and the conditional dependence between $X$ and $Y$ given $Z$ (which governs downstream prediction). Similar statistical guarantees for other function classes (reviewed in \Cref{sec:a:objectives}) can be plugged into our framework, which intends to capture the end-to-end performance from pre-training to downstream prediction. 

For $h \in \ltwo(P_X)$, we define the norm $\norm{h}_{\ltwo(P_X)}^2 := \int_{\X} h^2(\x) \d P_X(\x)$, using analogous notation for other probability distributions. We will assume throughout the paper that $\fstar$ is bounded by $\fstarbound$ with probability one under $P_Y$ and $\rho_Y$, so that $\eta_\rho, \etastar \in \ltwo(P_X)$. Given a square-integrable $\heta_\rho$, we first control the mean squared error (MSE) via $\norm{\etastar - \heta_\rho}_{\ltwo(P_X)}^2 \leq$
\begin{align}
    2\underbrace{\norm{\etastar - \eta_\rho}_{\ltwo(P_X)}^2 }_{\text{information-theoretic error}} + 2\underbrace{\norm{\eta_\rho - \heta_\rho}_{\ltwo(P_X)}^2}_{\text{estimation error}}.\label{eq:theory:decomp1}
\end{align}
The information-theoretic error captures the prompt bias and residual dependence that differentiates indirect and direct prediction, whereas the estimation error is a familiar term in statistical analysis. We discuss in \Cref{sec:a:complexity:classification} how to convert the MSE bounds to risk bounds for classification.

\myparagraph{Prompt Bias and Residual Dependence}
Here, we control the information-theoretic error term in~\eqref{eq:theory:decomp1}. We state our assumptions regarding conditional probability informally and defer the formal descriptions using the language of regular conditional distributions to \Cref{sec:a:dependence}. We work within the latent caption model from \Cref{sec:framework}, for which we consider a joint distribution $P_{X, Y, Z}$ on $\X \times \Y \times \Z$ which equals the evaluation distribution $P_{X, Y}$ when marginalized over $\Z$.
Similar to the information density $\Rsans$ from \Cref{sec:framework}, we introduce the conditional information density
\begin{align}
    \Ssans_{\z} := \frac{\d P_{X, Y|\z}}{\d (P_{X|\z} P_{Y|\z})}: \X \times \Y \rightarrow \R_{\geq 0},\label{eq:theory:rnd0}
\end{align}
where $P_{X, Y|\z}$ denotes the conditional distribution of $(X, Y)$ given $Z = \z$, and $P_{X|\z} P_{Y|\z}$ is defined analogously. 
This naturally motivates the conditional dependence measure given by
\begin{align}
    I(X; Y|\z) = \E{P_{X|\z} P_{Y|\z}}{(\Ssans_{\z}(X, Y) - 1)^2},\label{eq:cond_msc}
\end{align}
called the \emph{conditional mean square contingency}.
Finally, consider the following regularity assumption on the joint distribution $P_{X, Y, Z}$, also discussed in \Cref{sec:a:dependence}.
\begin{assumption}\label{asm:rcd1}
    $P_{X, Y, Z}$ on $\X \times \Y \times \Z$ satisfies the following: \emph{1) Agreement of caption distribution:} $P_X$-almost all $\x \in \X$,~$P_{Z|\x}$ exists and $P_{Z|\x} = Q_{Z|\x}$. \emph{2) Regularity of conditional distributions:} For $P_Z$-almost all $\z \in \Z$,~$P_{X, Y|\z}$ exists, $P_{X, Y|\z} \ll P_{X|\z} P_{Y|\z}$, and the conditional information density~\eqref{eq:theory:rnd0} satisfies $\E{P_{X, Y|\z}}{\Ssans_{\z}(X, Y)} < +\infty$ and $\E{P_{X, Y, Z}}{\Ssans_{Z}(X, Y)} < +\infty$.
\end{assumption}
To measure the bias of the prompt distribution $\rho_{Y, Z}$, we denote the analog of~\eqref{eq:g_rho} under $P_{Y, Z}$ as
\begin{align*}
    g_{P_{Y, Z}}(\z) = \E{P_{Y, Z}}{\fstar(Y)|Z}(\z).
\end{align*}
We may now state the main result, proved in \Cref{sec:a:dependence}.
\begin{theorem}\label{thm:res_dep1}
    Under \Cref{asm:rcd1},
    \begin{align}
        \norm{\eta_\rho - \etastar}_{\ltwo(P_X)}^2 \lesssim \underbrace{\E{P_Z}{I(X; Y|Z)}}_{\text{residual dependence}} +\underbrace{\norm{g_\rho - g_{P_{Y, Z}}}_{\ltwo(P_Z)}^2}_{\text{prompt bias}}.\label{eq:res_dep1}
    \end{align}
\end{theorem}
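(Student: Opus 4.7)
The plan is to insert an intermediate function that separates the two error sources appearing on the right-hand side of~\eqref{eq:res_dep1}. I would define
\[
    \tilde\eta(\x) := \int_{\Z} g_{P_{Y,Z}}(\z)\, P_{Z|\x}(\d\z),
\]
and split via the triangle inequality
\[
    \norm{\eta_\rho - \etastar}_{\ltwo(P_X)}^2 \leq 2\norm{\etastar - \tilde\eta}_{\ltwo(P_X)}^2 + 2\norm{\tilde\eta - \eta_\rho}_{\ltwo(P_X)}^2.
\]
By \Cref{asm:rcd1}(1), $P_{Z|\x} = Q_{Z|\x}$ for $P_X$-a.e.~$\x$, so $\eta_\rho$ admits the representation $\eta_\rho(\x) = \int g_\rho(\z)\, P_{Z|\x}(\d\z)$; this removes any mismatch between $P_X$ and $Q_X$ from the subsequent estimates and identifies $\tilde\eta - \eta_\rho$ with the conditional mean operator applied to $g_{P_{Y,Z}} - g_\rho$.

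For the residual-dependence piece, I would first apply the tower property to write $\etastar(\x) = \int \E{P}{\fstar(Y)|X=\x, Z=\z}\, P_{Z|\x}(\d\z)$. Under \Cref{asm:rcd1}(2), Bayes' rule then yields $P_{Y|\x, \z}(\d\y) = \Ssans_{\z}(\x, \y)\, P_{Y|\z}(\d\y)$, so
\[
    \etastar(\x) - \tilde\eta(\x) = \int\!\int \fstar(\y)\,(\Ssans_{\z}(\x,\y) - 1)\, P_{Y|\z}(\d\y)\, P_{Z|\x}(\d\z).
\]
Two applications of Cauchy--Schwarz, first against $P_{Z|\x}$ and then against $P_{Y|\z}$, together with $|\fstar|\leq \fstarbound$, show that $(\etastar(\x) - \tilde\eta(\x))^2$ is bounded by $\fstarbound^2 \int\!\int (\Ssans_{\z}(\x,\y) - 1)^2\, P_{Y|\z}(\d\y)\, P_{Z|\x}(\d\z)$. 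Integrating against $P_X$ and swapping the outer integrations via Fubini collapses the right-hand side to $\fstarbound^2\,\E{P_Z}{I(X;Y|Z)}$.

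For the prompt-bias piece, the $\ltwo$-contractivity of conditional expectation (Jensen's inequality applied to $\M_{Z|X}^{P}$) gives $\norm{\tilde\eta - \eta_\rho}_{\ltwo(P_X)}^2 \leq \norm{g_{P_{Y,Z}} - g_\rho}_{\ltwo(P_Z)}^2$, which is the desired prompt-bias bound. Assembling both estimates produces~\eqref{eq:res_dep1} with the constants absorbed into $\lesssim$. I expect the main difficulty to be not any single calculation, but rather the measure-theoretic bookkeeping: verifying that the regular conditional distributions in question exist, establishing the Bayes-rule identity for $P_{Y|\x,\z}$ in terms of $\Ssans_\z$ on an appropriate null-set-free domain, and justifying the Fubini exchanges. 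This is exactly what the integrability and absolute-continuity clauses of \Cref{asm:rcd1} are designed to enable.
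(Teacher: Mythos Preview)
Your proposal is correct and follows essentially the same route as the paper: introduce the intermediate $\tilde\eta(\x)=\int g_{P_{Y,Z}}(\z)\,P_{Z|\x}(\d\z)$, use $P_{Z|\x}=Q_{Z|\x}$ to identify $\eta_\rho$ with $\M^P_{Z|X}g_\rho$, bound the prompt-bias term by Jensen/contractivity, and for the residual-dependence term expand $\etastar-\tilde\eta$ via $\Ssans_\z$, apply Jensen/Cauchy--Schwarz twice, and swap to $\int_\Z \E{P_{X|\z}}{\cdot}\,\d P_Z$ to recover $\E{P_Z}{I(X;Y|Z)}$. The paper proceeds with the same decomposition and the same sequence of Jensen/Fubini steps, with the only difference being cosmetic (it phrases the inner bound as Jensen rather than Cauchy--Schwarz and makes the passage $P_X\to P_{X,Z}\to P_Z$ explicit).
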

To give context to \Cref{thm:res_dep1}, conditional independence relations have previously been used to describe the performance of multimodal contrastive SSL for FSL. We are particularly inspired by the \emph{multi-view redundancy} theory of \citet{tosh2021contrastive}, which states informally that the population FSL predictor can approach the performance of the idealized direct predictor that is given \emph{both} $X$ and $Z$ at test time, if $X \indep Y | Z$ and $Z \indep Y | X$ approximately hold. However, the theory of graphical models \citep[Proposition 3.1]{lauritzen1996graphical} asserts that both conditional independence relations hold only if $(X,Z) \indep Y$, that is, neither view has information predictive of the label. This can be seen intuitively from \Cref{fig:dag} by breaking the arrows $X \rightarrow Y$ and $Z \rightarrow Y$. Notice that we compare only to the direct predictor~\eqref{eq:est:bayes} given $X$ (which is reflective of practice), so that we need only that $X \indep Y | Z$ (i.e.~$X$ depends on $Y$ through $Z$) to close the gap. The prompt bias term~\eqref{eq:res_dep1} captures the possible incompatibility of the prompt distribution $\rho_{Y, Z}$ with $(P_{X, Y}, Q_{X, Z})$---we call prompt strategies unbiased (see \Cref{sec:a:complexity:sampling}) when this term is zero.

\myparagraph{Sample Complexity and Distribution Mismatch}
The first step in our estimation error analysis is to pass the $\ltwo(P_X)$-norm term $\norm{\eta_\rho - \heta_\rho}_{\ltwo(P_X)}^2$ from~\eqref{eq:theory:decomp1} to the $\ltwo(Q_X)$-norm counterpart $\norm{\eta_\rho - \heta_\rho}_{\ltwo(Q_X)}^2$. We then establish high-probability bounds on the $\ltwo(Q_X)$-norm term, with respect to the random sampling of the pre-training and prompting data. Because this initial step follows from a standard distribution shift argument (based on either a bounded likelihood ratio assumption or an additive error in total variation distance), we defer it to \Cref{sec:a:complexity} (see \Cref{lem:distribution_shift}). Conceptually, the two examples below are derived from estimating the component of either~\eqref{eq:framework:cond_mean} or~\eqref{eq:framework:rnd} that involves $Q_{X, Z}$ using the pre-training set and the one that involves $\rho_{Y, Z}$ using the prompt strategy. In both cases, we discuss the convergence rates of state-of-the-art RKHS-based methods. As we review \Cref{sec:a:background:rkhs}, these rates are typically expressed in terms of two quantities: \emph{source condition} constants, which measure the smoothness of the target function being learned, and \emph{eigendecay exponents} of covariance operators, which measure the effective dimension of the data. It will serve our purposes to interpret the rates in terms of the dependence between $X$ and $Z$ under $Q_{X, Z}$, under the following assumption.
\begin{assumption}
    The pre-training distribution satisfies $Q_{X, Z} \ll Q_{X} Q_Z$, and the information density $\Rsans$ is contained in $\ltwo(Q_XQ_Z)$ (i.e.~$I(X;Z)$  is well-defined).
\end{assumption}
Due to the technical overhead of each method (especially regarding mis-specified function classes), we provide high-level descriptions below and defer detailed descriptions of the specific estimation procedures and formal assumptions to \Cref{sec:a:complexity:conditional_mean} (conditional mean) and \Cref{sec:a:complexity:rn_derivative} (information density).
We denote by $\delta \in (0, 1]$ a failure probability, and $\polylog(\cdot)$ a term that is poly-logarithmic in its input. 

\myparagraph{Example 1: Nonparametric Regression}
This approach, based on~\eqref{eq:framework:cond_mean}, uses the pre-training set to produce an estimate $\hM_{Z|X}$ of the conditional mean operator and the prompts to produce an approximation $\hg_\rho: \Z \rightarrow \R$ of $g_\rho$. For the former, we use as an example the spectral regularization learning method of \citet{meunier2024optimal}, which produces a conditional mean embedding function $\Fhat:\X \rightarrow \calG$, for an RKHS $\calG$ of real-valued functions of $\Z$.  For any $g \in \calG$, we then define $[\hM_{Z|X} g](\x) = \ipsmall{g, \Fhat(\x)}_\calG$. Note that $\Fhat$ predicts a target that is itself a function--such methods are therefore referred to as ``vector-valued'' regression. By the Reisz representation theorem, a similar function $F_\star$ can be constructed such that $[\M_{Z|X} g](\x) = \ipsmall{g, F_\star(\x)}_\calG$.
For $\hg_\rho$, we consider standard kernel regularized least-squares (e.g.,~\citet{smale2007learning}) applied to $M$ i.i.d.~draws from $\rho_{Y, Z}$. Assuming that $g_\rho \in \calG$, one can then pass the problem to controlling $\norm{\hg_\rho - g_\rho}_\calG^2$ and $\norm{\Fhat - F_\star}_{\ltwo(Q_X; \calG)}^2$, where $\ltwo(Q_X; \calG)$ denotes a Bochner space (reviewed in \Cref{sec:a:background:rkhs}).

To derive the convergence rates below, we show in \Cref{sec:a:complexity:conditional_mean} that the source condition on $F_\star$ can be expressed in terms of the singular decay exponent of $\M_{Z|X}$ (i.e.~$\sigma_i \sim i^{-\gamma_{X, Z}}$ from~\eqref{eq:framework:spectrum}), and the eigendecay exponents $\gamma_X$ and $\gamma_Z$ of the covariance operators of $Q_X$ and $Q_Z$, respectively. Additionally, $\omega_\rho > 1/2$ is a parameter controlling the convergence rate of the prompt-based estimate of $g_\rho$. The parametrization below is chosen so that one may interpret $\omega_\rho$ as a similar source condition for the target function $g_\rho$.
In the well-specified case (when $F_\star$ is contained in the hypothesis class), we describe the convergence rate with the aggregated exponent
\ZH{The theorem statement needs to be double checked against the one updated in Appendix. Double check the pointers to Appendix for proofs.}
\begin{align*}
    q(t) = (2\gamma_{X, Z} + \gamma_Z - 1)^{t}\gamma_X^{1 - t} \geq 1, \quad t \in [0, 1)
\end{align*}
where $t$ depends on $F_\star$. The result below corresponds to \Cref{thm:complexity:conditional_mean} in \Cref{sec:a:complexity:conditional_mean}, which relies on a basis alignment assumption to aggregate the singular/eigendecays.
\begin{theorem}[Informal]\label{thm:complexity_vvkr}
    For $\heta_\rho(\x) = \ipsmall{\hg_\rho, \Fhat(\x)}_\calG$, there exist $t \in [0, 1)$ and $C(Q_{X, Z}) \geq 0$ (independent of $(N, M, \delta)$) such that
    \begin{align}
        \norm{\heta_\rho - \eta_\rho}_{\ltwo(Q_X)}^2 \lesssim \polylog\p{1/\delta} \sbr{N^{-\frac{q(t)}{q(t) + 1}} + C(Q_{X, Z}) M^{-\frac{2\omega_\rho - 1}{2\omega_\rho + 1}}}\label{eq:complexity_vvkr}
    \end{align}
    with probability at least $1 - \delta$ for $N$ sufficiently large.
\end{theorem}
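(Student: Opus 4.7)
The plan is to decompose $\norm{\heta_\rho - \eta_\rho}_{\ltwo(Q_X)}^2$ into an operator-estimation error coming from $\Fhat - F_\star$ and a prompt-estimation error coming from $\hg_\rho - g_\rho$, and then to invoke standard RKHS-based convergence rates for each piece. Since $\eta_\rho(\x) = \ipsmall{g_\rho, F_\star(\x)}_\calG$ and $\heta_\rho(\x) = \ipsmall{\hg_\rho, \Fhat(\x)}_\calG$, writing the difference as $\ipsmall{\hg_\rho - g_\rho, \Fhat(\x)}_\calG + \ipsmall{g_\rho, \Fhat(\x) - F_\star(\x)}_\calG$, applying Cauchy-Schwarz in $\calG$ together with $(a+b)^2 \leq 2a^2 + 2b^2$, and integrating against $Q_X$ gives
\begin{align*}
\norm{\heta_\rho - \eta_\rho}_{\ltwo(Q_X)}^2 \;\leq\; 2\,\E{Q_X}{\norm{\Fhat(X)}_\calG^2}\,\norm{\hg_\rho - g_\rho}_\calG^2 \;+\; 2\,\norm{g_\rho}_\calG^2\,\norm{\Fhat - F_\star}_{\ltwo(Q_X;\calG)}^2.
\end{align*}
On the high-probability event that $\Fhat$ lies in a bounded Bochner ball around $F_\star$, the random factor $\E{Q_X}{\norm{\Fhat(X)}_\calG^2}$ is absorbed into a deterministic constant $C(Q_{X,Z})$ depending only on the pre-training distribution and the hypothesis ball.

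For the prompt-side term, I would view $\hg_\rho$ as the kernel regularized least-squares estimator of $g_\rho$ from $M$ iid draws from $\rho_{Y,Z}$. Under the source condition parametrized by $\omega_\rho > 1/2$, the saturation-regime $\calG$-norm bound yields $\norm{\hg_\rho - g_\rho}_\calG^2 \lesssim \polylog(1/\delta)\, M^{-(2\omega_\rho - 1)/(2\omega_\rho + 1)}$ with probability at least $1 - \delta/2$, producing the second summand of \eqref{eq:complexity_vvkr}. For the operator-side term I would invoke the spectral-regularization rate of \citet{meunier2024optimal} for vector-valued regression in $\ltwo(Q_X;\calG)$, whose generic form is $N^{-a/(a+1)}$ for an effective exponent $a$ determined jointly by (i) a smoothness (source) condition on $F_\star$ and (ii) the effective dimensions of the input and output covariance operators.

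The crux of the argument is then to re-express this effective exponent purely in terms of the spectral geometry of $\M_{Z|X}$. Using the singular value decomposition of $\M_{Z|X}$ with decay exponent $\gamma_{X,Z}$ from \eqref{eq:framework:spectrum}, together with the basis alignment assumption (which diagonalizes $\M_{Z|X}$ jointly with the eigenbases of the covariance operators of $Q_X$ and $Q_Z$), the Bochner source condition on $F_\star$ admits a one-parameter family of equivalent parametrizations indexed by $t \in [0,1)$. At $t = 0$ the bound reduces to the pure input-capacity regime $N^{-\gamma_X/(\gamma_X+1)}$, while as $t \to 1$ it approaches the regime dominated by the joint $(X,Z)$-dependence, $N^{-(2\gamma_{X,Z}+\gamma_Z-1)/(2\gamma_{X,Z}+\gamma_Z)}$. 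Interpolating the two regimes geometrically produces the aggregated exponent $q(t) = (2\gamma_{X,Z}+\gamma_Z-1)^t \gamma_X^{1-t}$ and the stated operator-side rate $N^{-q(t)/(q(t)+1)}$. A union bound over the two high-probability events then closes the argument.

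The hard part will be this last aggregation step. The individual kernel ridge regression and vector-valued regression rates are by now classical, but translating an abstract source condition on the operator-valued target $F_\star$ living in the Bochner space $\ltwo(Q_X;\calG)$ into an exponent that mixes the three distinct decay rates $\gamma_{X,Z}$, $\gamma_X$, and $\gamma_Z$ requires the basis alignment assumption to simultaneously diagonalize the three operators involved, and careful bookkeeping of how smoothness on $F_\star$ propagates through $\M_{Z|X}$ back to the $Q_X$-geometry on $\X$. A secondary technicality, handled in the formal statement \Cref{thm:complexity:conditional_mean}, is the treatment of possible misspecification of $F_\star$ relative to the hypothesis class of the spectral regularization procedure, which I anticipate can be absorbed via standard bias-variance interpolation and only affects the admissible range of $t$.
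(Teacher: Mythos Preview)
Your proposal is correct and follows the paper's approach almost exactly: the same Cauchy--Schwarz decomposition in $\calG$, the same Smale--Zhou $\calG$-norm rate for $\hg_\rho$, the same \citet{meunier2024optimal} vector-valued rate for $\Fhat$, and the same identification of the hard step as translating the source condition on $F_\star$ into the aggregated exponent $q(t)$ via the basis alignment assumption (the paper does this through an explicit formula $\norm{F_\star}_\beta^2 = \norm{\Top_X^{-\beta/2}\M_{Z|X}\Top_Z^{1/2}}_{\HS}^2$, which under alignment and the three polynomial decays is finite iff $\beta < (2\gamma_{X,Z}+\gamma_Z-1)/\gamma_X$, and then sets $\beta = ((2\gamma_{X,Z}+\gamma_Z-1)/\gamma_X)^t$). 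The only cosmetic difference is that the paper splits into three terms rather than two, writing $\heta_\rho - \eta_\rho = \ipsmall{g_\rho,\Fhat-F_\star} + \ipsmall{\hg_\rho-g_\rho,F_\star} + \ipsmall{\hg_\rho-g_\rho,\Fhat-F_\star}$ so that the prompt-side coefficient is the deterministic $\norm{F_\star}_{\ltwo(Q_X;\calG)}^2 = \norm{\M_{Z|X}\Top_Z^{1/2}}_{\HS}^2$ (this is the paper's $C(Q_{X,Z})$) rather than your random $\E{Q_X}{\norm{\Fhat(X)}_\calG^2}$, which you then have to re-control on the event.
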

Let us interpret the constant $q(t)$. First, the dependence on $N$ ranges between $O(N^{-1/2})$ and the parametric rate $O(N)$. Convergence is faster when $\gamma_{X, Z} \gg 1$ or $\gamma_Z \gg 1$. The first case implies near-independence of $X$ and $Z$, for which learning is easy as $\Fhat(\x)$ is essentially constant over $\x \in \X$. The second case indicates that the $Z$ variable is near-finite dimensional, or that the vector-valued nature of the problem has been reduced to standard univariate regression. Convergence is slower if $\gamma_X \gg 1$, or if the effective dimension of $\X$ is small relative to the effective dimension of $Z$. The balancing constant $C(Q_{X, Z})$ (shown explicitly in \Cref{thm:complexity:conditional_mean}) decays with $\gamma_{X, Z}$ and $\gamma_Z$, so as $(X, Z)$ becomes more independent or $Z$ approaches finite dimensions, the variance from prompt sampling decreases. We also discuss the mis-specified case in \Cref{sec:a:complexity:conditional_mean}.

\myparagraph{Example 2: Radon-Nikodym Derivative Estimation}
This approach, based on~\eqref{eq:framework:rnd}, considers pre-training to return a learned information density $\Rhat: \X \times \Z \rightarrow \R_{\geq 0}$. By approximating the prompt distribution $\rho_{Y, Z}$ with $\hrho_{Y, Z}$ (e.g.~the empirical measure in the result below), one may define the estimator $\heta_\rho(\x) = \Ex_{\hrho_{Y,Z}}[\fstar(Y)\Rhat(\x, Z)]$. Similar in spirit to the previous example, we consider the kernel Radon-Nikodym derivative estimation with the spectral regularization procedure of \citet{nguyen2024onregularized}. The convergence rate of $\Rhat$ to $\Rsans$ is governed by a source condition constant $\beta \geq 1$ associated to $\Rsans$ (see \Cref{sec:a:complexity:rn_derivative}). We interpret this constant analogously to $q(t)$, in that we prove a relationship to the singular decay exponent $\gamma_{X, Z}$, but is not directly expressible in terms of the latter. \ZH{The theorem statement needs to be double checked against the one updated in Appendix. Double check the pointers to Appendix for proofs. Remember to talk about the trade-off.}
The following result corresponds to \Cref{thm:complexity:rn_derivative} in \Cref{sec:a:complexity:rn_derivative}.
\begin{theorem}[Informal]\label{thm:complexity_rn_derivative}
    For $\heta_\rho(\x) = \Ex_{\hrho_{Y,Z}}[\fstar(Y)\Rhat(\x, Z)]$, and $\rho_Z \ll Q_Z$, there exists $C_{\Rsans, \rho}(Q_X) \geq 0$ (independent of $(N, M, \delta)$) such that
    \begin{align*}
        \norm{\heta_\rho - \eta_\rho}_{\ltwo(Q_X)}^2 \lesssim \polylog\p{1/\delta} \sbr{N^{-\frac{\beta}{\beta + 1}} + C_{\Rsans, \rho}(Q_X) M^{-1}}\hspace{-1pt} + \hspace{-1pt} D_{\chi^2}(\rho_Z \Vert Q_Z)
    \end{align*}
     with probability at least $1 - \delta$ for all  $N$ sufficiently large.
\end{theorem}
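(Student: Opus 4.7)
The plan is to decompose the error $\heta_\rho - \eta_\rho$ into three additive pieces, each corresponding to a distinct source of error in the two-stage procedure, and to control each separately before combining them via a union bound. Using the identity $\eta_\rho(\x) = \Ex_{Q_Z}[g_\rho(Z)\Rsans(\x, Z)]$ from~\eqref{eq:framework:rnd} and the tower property $\Ex_{\rho_{Y,Z}}[\fstar(Y)h(Z)] = \Ex_{\rho_Z}[g_\rho(Z) h(Z)]$ for any bounded $h$, I would write
\begin{align*}
    \heta_\rho(\x) - \eta_\rho(\x)
    &= \underbrace{\E{\hrho_{Y,Z}}{\fstar(Y)\Rhat(\x, Z)} - \E{\rho_{Y,Z}}{\fstar(Y)\Rhat(\x, Z)}}_{T_1(\x):\,\text{Monte Carlo over prompts}}\\
    &\quad + \underbrace{\E{\rho_Z}{g_\rho(Z)(\Rhat(\x, Z) - \Rsans(\x, Z))}}_{T_2(\x):\,\text{RN derivative estimation}}\\
    &\quad + \underbrace{\E{\rho_Z}{g_\rho(Z)\Rsans(\x, Z)} - \E{Q_Z}{g_\rho(Z)\Rsans(\x, Z)}}_{T_3(\x):\,\text{distribution mismatch}}.
\end{align*}
Applying $(a+b+c)^2 \le 3(a^2+b^2+c^2)$ in $\ltwo(Q_X)$ reduces the problem to controlling the three terms separately.

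For the mismatch term $T_3$, I would apply Cauchy--Schwarz with the Radon--Nikodym derivative $\d\rho_Z/\d Q_Z$, yielding $|T_3(\x)|^2 \le \fstarbound^2 \,\E{Q_Z}{\Rsans(\x,Z)^2}\,D_{\cs}(\rho_Z\|Q_Z)$, and then integrate over $Q_X$. Since $\Ex_{Q_X Q_Z}[\Rsans^2] = 1 + I(X;Z)$ is finite by assumption, this produces exactly the additive $D_{\cs}(\rho_Z\|Q_Z)$ contribution appearing in the bound (with a constant absorbed into $\lesssim$). For $T_2$, Cauchy--Schwarz again gives $\|T_2\|_{\ltwo(Q_X)}^2 \le \fstarbound^2 \|\Rhat - \Rsans\|_{\ltwo(Q_X\rho_Z)}^2$; changing measure from $\rho_Z$ to $Q_Z$ via the same density (or a bounded likelihood-ratio assumption on $\d\rho_Z/\d Q_Z$) lets me invoke the $\ltwo(Q_XQ_Z)$-convergence rate $N^{-\beta/(\beta+1)}$ for spectral-regularized RN derivative estimation from \citet{nguyen2024onregularized}, which holds with probability $1-\delta/3$ and carries a $\polylog(1/\delta)$ factor. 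For the Monte Carlo term $T_1$, I would condition on the pre-training set (hence on $\Rhat$), observe that $T_1$ is the average of $M$ i.i.d.~$\ltwo(Q_X)$-valued zero-mean random variables $\x \mapsto \fstar(Y_i)\Rhat(\x, Z_i') - \eta'(\x)$ with conditional variance bounded by $\fstarbound^2\|\Rhat\|_{\ltwo(Q_X\rho_Z)}^2/M$, and invoke a Hilbert space Bernstein inequality to obtain $\|T_1\|_{\ltwo(Q_X)}^2 \lesssim \polylog(1/\delta)\,C_{\Rsans,\rho}(Q_X)/M$ with probability $1-\delta/3$. Here the constant $C_{\Rsans,\rho}(Q_X)$ absorbs $\fstarbound^2\,\Ex_{Q_X\rho_Z}[\Rsans(X,Z)^2]$ plus the $N$-independent slack needed to replace $\Rhat$ by $\Rsans$ when bounding its second moment.

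The main obstacle is the handoff between norms in the $T_2$ step: the Nguyen et al.\ guarantee is stated in $\ltwo(Q_X Q_Z)$, but the quantity I must bound lives in $\ltwo(Q_X\rho_Z)$, so I need either an essential-supremum bound on $\d\rho_Z/\d Q_Z$ or a two-step Cauchy--Schwarz that introduces $1+D_{\cs}(\rho_Z\|Q_Z)$ as a multiplicative factor; the latter produces a cross-term that must be argued to be absorbed into the additive $D_{\cs}$ term without inflating the $N^{-\beta/(\beta+1)}$ rate. A secondary subtlety is controlling the random second moment $\|\Rhat\|_{\ltwo(Q_X\rho_Z)}^2$ appearing in the conditional Bernstein bound; on the event where the $T_2$ estimate holds, $\Rhat$ is within a constant factor of $\Rsans$ in norm for $N$ large, which is precisely what lets $C_{\Rsans,\rho}(Q_X)$ be chosen independent of $N$. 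Union-bounding the three events at level $\delta/3$ each and collecting the bounds yields the claimed inequality.
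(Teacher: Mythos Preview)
Your three-term decomposition is valid, but it differs from the paper's in a way that creates exactly the two obstacles you flag. The paper swaps the pairing: it writes
\[
\heta_\rho(\x)-\eta_\rho(\x)
=\underbrace{\Ex_{\hrho_{Y,Z}}\!\bigl[\fstar(Y)\bigl(\Rhat(\x,Z)-\Rsans(\x,Z)\bigr)\bigr]}_{\text{RN error under }\hrho}
+\underbrace{\int \fstar(\y)\Rsans(\x,\z)\,(\d\hrho_{Y,Z}-\d\rho_{Y,Z})}_{\text{Monte Carlo with true }\Rsans}
+T_3(\x),
\]
i.e., the estimation error $\Rhat-\Rsans$ sits under the \emph{empirical} prompt measure, while the Monte Carlo fluctuation uses the \emph{true} information density $\Rsans$. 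This choice dissolves both of your obstacles. For the first term, the paper invokes the RKHS-norm bound from \citet{nguyen2024onregularized} (not an $\ltwo(Q_XQ_Z)$ bound): since $|\Rhat(\x,\z)-\Rsans(\x,\z)|=|\ip{\varphi(\x,\z),\Rhat-\Rsans}_{\calS}|\le\kappa_{\max}^{1/2}\norm{\Rhat-\Rsans}_{\calS}$ uniformly in $(\x,\z)$, no change of measure from $\rho_Z$ to $Q_Z$ is ever needed, and the bound is indifferent to whether the outer expectation is $\hrho$, $\rho$, or $Q$. For the second term, the summands $W_j=\fstar(Y_j)\Rsans(\cdot,Z_j)$ are now fixed (non-random in the pre-training data), so the Bernstein constant $C_{\Rsans,\rho}(Q_X)$ depends only on the true $\Rsans$ with no need to control $\norm{\Rhat}$ on a high-probability event.

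Your route can be made to work, but note that your proposed fix for $T_2$ via Cauchy--Schwarz on $\d\rho_Z/\d Q_Z$ would demand $\mathbf{L}^4$ control of $\Rhat-\Rsans$, not $\ltwo$; the clean escape is again the uniform RKHS bound, which you do not mention. Once you use $\norm{\cdot}_{\calS}$ for $T_2$, your $T_1$ subtlety is also resolvable (for $N$ large enough, $\norm{\Rhat}_{\calS}\le\norm{\Rsans}_{\calS}+1$ on the good event), so your argument ultimately lands in the same place---but the paper's pairing gets there without either detour.
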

Notice that the bound of \Cref{thm:complexity_rn_derivative}
includes a divergence term between $\rho_Z$ (the captions generated by prompting) and $Q_Z$ (the captions of the pre-training set). This term comes precisely from the error term in~\eqref{eq:framework:rnd}. This elucidates the fact that the conditional mean approach and the information density are not equivalent representations of the pre-training distribution, as one needs \emph{both} $\Rsans$ and $Q_Z$ in order to identify the conditional mean. 
The parametric rate $M^{-1}$ reflects that samples are used to learn a joint expectation over $\rho_{Y, Z}$, which is an easier statistical problem than estimating the regression function of $Y$ on $Z$ that appears in \Cref{thm:complexity_vvkr}. Thus, the information density approach may enjoy faster statistical convergence, at the expense of bias from the distribution mismatch on $\Z$. The constant $C_{\Rsans, \rho}(Q_X)$ relates to the $\ltwo(Q_X)$-norm of the random function $\x \mapsto \fstar(Y)\Rsans(\x, Z)$ for $(Y, Z) \sim \rho_{Y, Z}$; the error from finite prompts decays when this norm is light-tailed.

In both \Cref{thm:complexity_vvkr} and \Cref{thm:complexity_rn_derivative}, we aim to highlight not particular convergence rates of the chosen methods, but the framework that leads to proving them. Similar results can also be leveraged in our framework. SSL procedures such as noise contrastive estimation have been related to the estimation of $\Rsans$ \citep{gutmann2012noise}. For example, \citet[Theorem 11]{tosh2021contrastive} upper bounds $\norm{\Rhat - \Rsans}_{\ltwo(Q_XQ_Z)}^2$ using the suboptimality of the population risk, allowing for empirical risk minimization-style analysis.

\section{Experiments}\label{sec:experiments}
In \Cref{sec:intro}, we asked how the downstream task performance depends on the pre-training distribution $Q_{X, Z}$, evaluation distribution $P_{X, Y}$, and prompting strategy $\rho_{Y, Z}$. At the population level, we captured the dependence on $Q_{X, Z}$ and $P_{X, Y}$ using the residual dependence $\E{P_Z}{I(X; Z)}$ and incorporated $\rho_{Y, Z}$ via the prompt bias (\Cref{thm:res_dep1}). In the first experiment, we create a simulated setting in which the residual dependence can be controlled and investigate whether it is indeed a determining factor for the empirical performance of CLIP \citep{radford2021learning} and VICReg \citep{bardes2022vicreg} models in practice. In the second experiment, we solve an image classification task in which the images have both captions and labels (i.e.~we may sample from a true joint distribution $P_{X, Y, Z}$). This allows us to understand the effect of prompt bias by comparing template-based prompting strategies to the unbiased setting $\rho_{Y, Z} = P_{Y, Z}$. To understand the dependence on $\rho_{Y, Z}$ at a sample level, we explore how downstream performance scales with the number of prompts $M$ in both the second experiment (unbiased prompting) and third experiment (LLM-based prompting). We are particularly interested in verifying the dependence on $M$ (which is the dominant error when $N \gg M$) derived in \Cref{thm:complexity_rn_derivative}).
\Cref{sec:a:experiments} contains further details of the experiments and code for reproduction can be found at \href{github.com/ronakdm/zeroshot}{\url{github.com/ronakdm/zeroshot}}.

\myparagraph{Models, Datasets, and Evaluation}
For foundation models, we use three publicly available CLIP models from the OpenCLIP repository \citep{ilharco2021openclip}: ResNet50 pre-trained on YFCC15M \citep{thomee2016yfcc}, NLLB-CLIP pre-trained on a subset of LAION COCO \citep{visheratin2023nllb}, and ViT-B/32 pre-trained on the DataComp medium pool \citep{gadre2023datacomp}. Our evaluation datasets include five standard benchmarks: the Describable Textures Dataset or DTD \citep{cimpoi14describing}, Flowers 102 \citep{nilsback2008automated}, FGVC Aircraft \citep{maji2013fine}, SUN397 \citep{xiao2010sun}, and ImageNet-1k \citep{deng2009imagenet}. For some experiments, we make use of the ImageNet-Captions dataset \citep{fang2023data}, which pairs a subset of ImageNet images collected from Flickr with their original captions. 
Evaluation occurs via zero-shot classification top-$k$ accuracy, in which a test example is considered to be classified correctly if the true class is contained within the elements of $\Y$ with the $k$ largest scores as computed by~\eqref{eq:proxy}. Evaluation is done using tools from the \href{https://github.com/LAION-AI/CLIP_benchmark?tab=readme-ov-file}{CLIP Benchmark repository}. In \Cref{fig:ideal} and \Cref{fig:llm}, ``templates'' refers to using all \href{https://github.com/LAION-AI/CLIP_benchmark/blob/main/clip_benchmark/datasets/en_zeroshot_classification_templates.json}{default community-curated prompts} available in CLIP Benchmark. Finally, detailed descriptions of the prompt sampling schemes are collected and compared to the theory in \Cref{sec:a:complexity:sampling}.

\begin{figure}[t]
    \centering
    \includegraphics[width=0.8\linewidth]{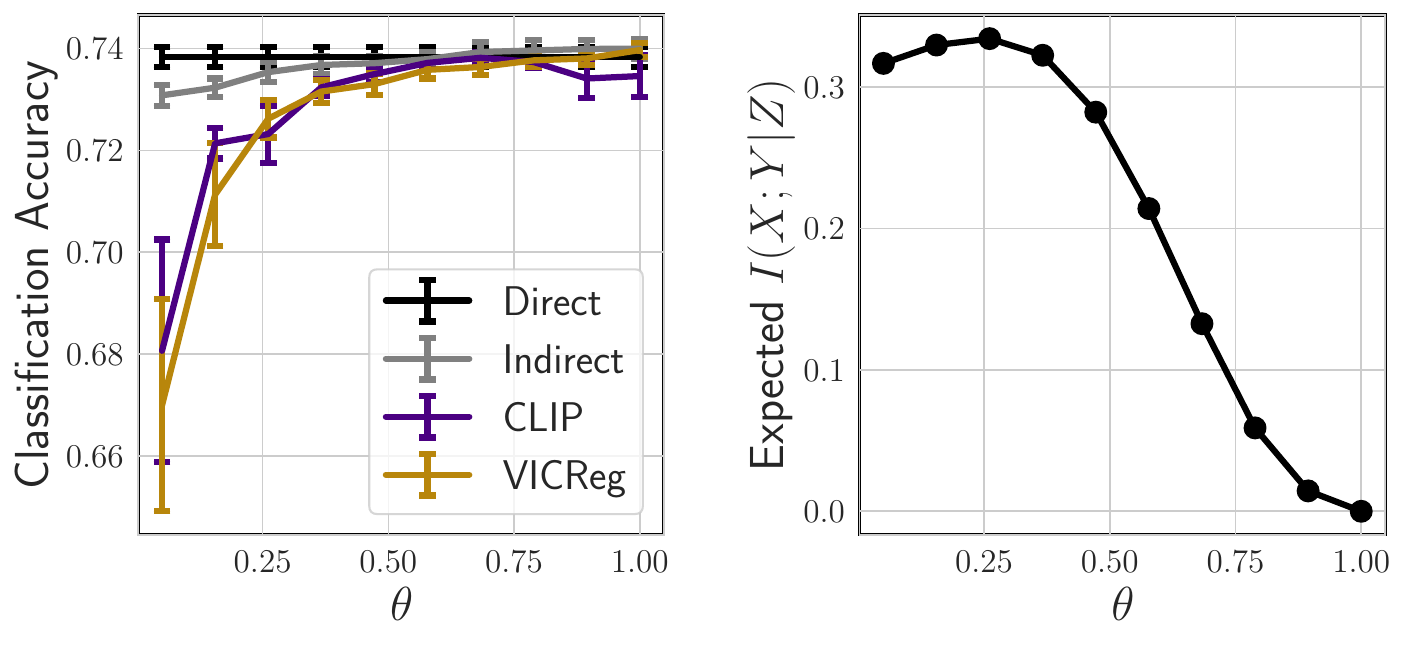}
    \caption{{\bf Results: Residual Dependence Simulation.} Simulation for $(X, Z, Y)$ described in \Cref{sec:a:experiments:simulation}. {\bf Left:} The $y$-axis is the accuracy of classifying $Y$ given $X$ and the $x$-axis is the parameter $\theta$ controlling the residual dependence $I(X;Y|Z)$ as in~\eqref{eq:re:resid}. {\bf Right:} The $y$-axis shows $\E{P_Z}{I(X;Y|Z)}$ as computed in \Cref{sec:a:experiments:simulation}. Error bars indicate standard errors from 10 seeds, which govern the data used for estimating expected values and randomness in the training procedures for CLIP and VICReg.}
    \label{fig:simulation}
\end{figure}

\begin{figure}[t]
    \centering
    \includegraphics[width=0.8\linewidth]{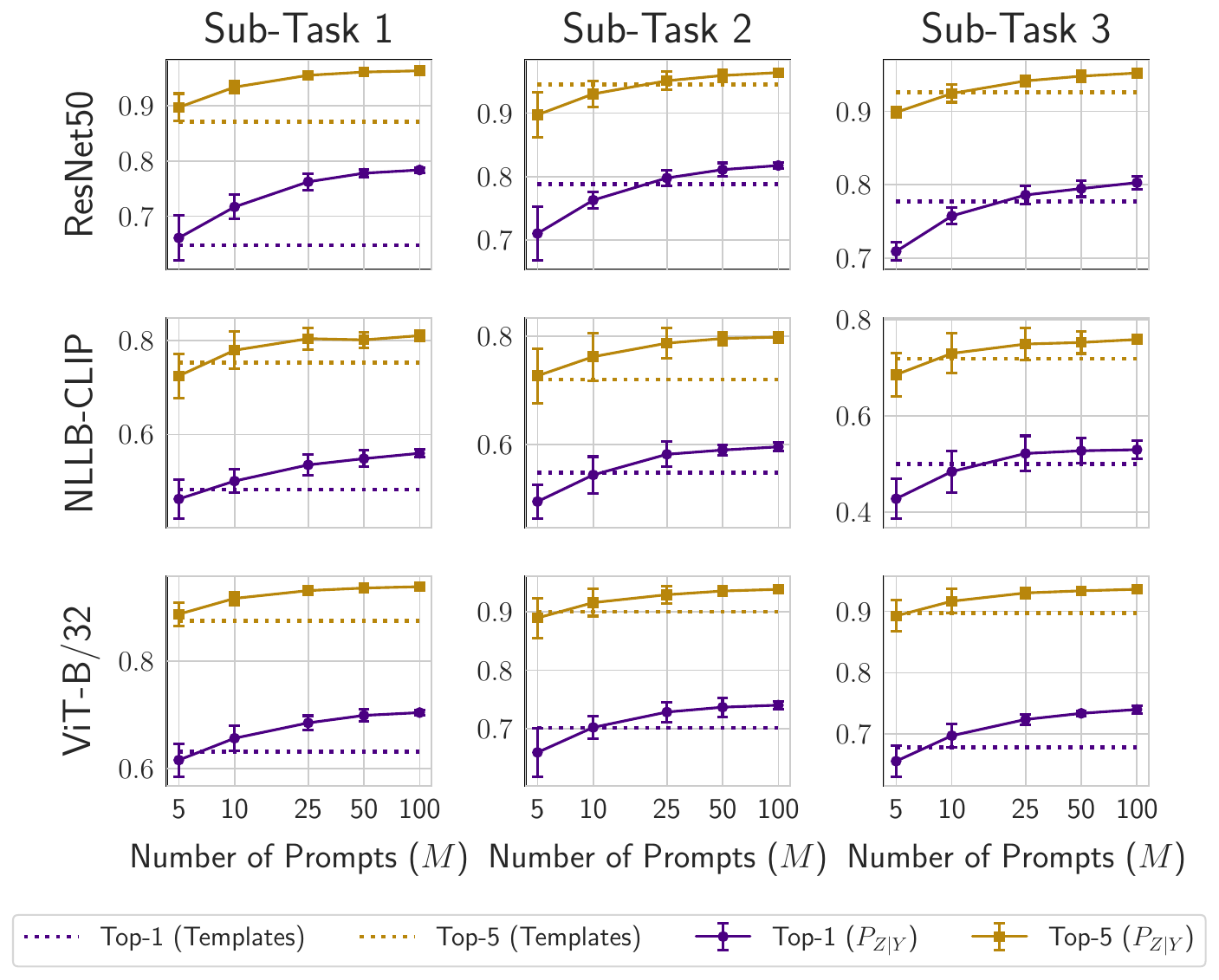}
    \caption{{\bf Results: Unbiased Prompting.} Pre-trained models are varied along the rows and sub-tasks (subsets of 50 ImageNet-1k class) are varied along columns. In all plots, the $x$-axis denotes the number of prompts sampled for each class embedding and the $y$-axis denotes top-$k$ zero-shot classification accuracy. Error bars indicate standard deviations across 10 seeds for prompt sampling.}
    \label{fig:ideal}
\end{figure}

\begin{figure*}[t]
    \centering
    \includegraphics[width=\linewidth]{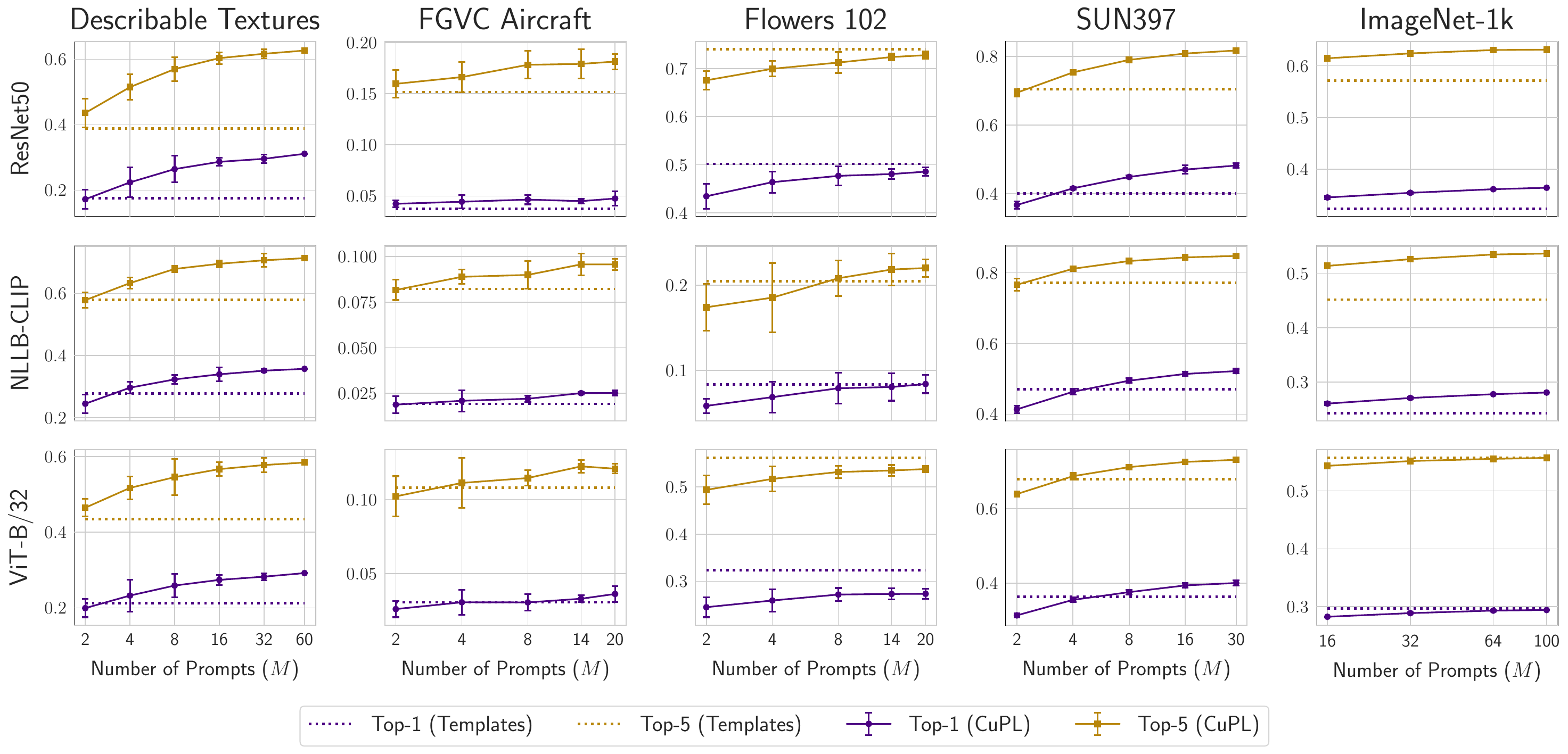}
    \caption{{\bf Results: Class-Conditional Prompting.} Pre-trained models are varied along the rows and evaluation datasets are varied along columns. In all plots, the $x$-axis denotes the number of prompts sampled for each class embedding and the $y$-axis denotes top-$k$ zero-shot classification accuracy. Error bars indicate standard deviations across 10 seeds for prompt sampling.}
    \label{fig:llm}
\end{figure*}

\myparagraph{Classification Accuracy and Residual Dependence}
We consider a simulated binary classification task in which all distributions are compatible (i.e.~$Q_{X, Z} = P_{X, Z}$ and $\rho_{Y, Z} = P_{Y, Z}$ for some $P_{X, Y, Z}$) and the predictors~\eqref{eq:est:bayes} and~\eqref{eq:est:ts} can be computed analytically.
We also include the zero-shot predictor~\eqref{eq:proxy} learned by both the CLIP and VICReg objectives. Our goals are two-fold in this simulation: 1) to empirically observe that as $\E{P_Z}{I(X; Y|Z)} \rightarrow 0$, the predictive performance of the indirect predictor $\eta_\rho$ does indeed approach that of $\etastar$, and 2) that the predictors generated by common SSL methods used in practice have similar performance trends as $\eta_\rho$. As for the data-generating process, we consider $\X = \Z = \R^d$ and a pair of Gaussian distributions $(P_{X, Z|Y = 0}, P_{X, Z|Y = 1})$, where given $Y = \y$,
\begin{align*}
    \begin{bmatrix}
        X \\
        Z
    \end{bmatrix}
    \sim
    \mc{N}\p{
    \begin{bmatrix}
        \bmu_{X|\y} \\
        \bmu_{Z|\y}
    \end{bmatrix},
    \begin{bmatrix}
        \C_{XX|\y} & \C_{XZ|\y} \\
        \C_{ZX|\y} & \C_{ZZ|\y}
    \end{bmatrix}
    }
\end{align*}
with class-conditional mean vectors $\bmu_{X|\y}, \bmu_{Z|\y} \in \R^d$ and covariance matrices $\C_{XX|\y}, \C_{ZX|\y}, \C_{ZZ|\y} \in \R^{d \times d}$. In order to control the conditional dependence between $X$ and $Y$ given $Z$, we fix all parameters except for $\bmu_{Z|\y}$ and $\C_{ZX|\y}$ (for $\y = 0, 1$), and define them using a tunable parameter $\theta \in [0, 1]$ in a way such that the conditional distribution of $Y$ given $X = \x$ stays constant. We make it so that as $\theta \rightarrow 1$, 
$I(X;Y|\z) \rightarrow 0$. Finally, to measure classification accuracy, we directly draw samples from $P_{Y, Z}$ to simulate unbiased prompting. The full mathematical details are given in \Cref{sec:a:experiments:simulation}. We observe both of the intended effects; the left panel of \Cref{fig:simulation} demonstrates that as $\theta$ approaches $1$, the indirect, CLIP, and VICReg predictors approach the performance of the direct predictor in terms of classification performance. The right panel confirms that $\theta$ indeed controls $\E{P_Z}{I(X; Y|Z)}$ in an approximately monotonic fashion.

\myparagraph{Prompting without Bias with Observations from $P_{Z, Y}$}
Next, we illustrate the importance of the prompt bias term in \Cref{thm:res_dep1} by considering an  ImageNet-Captions dataset, in which we may observe the joint sample $(X, Y, Z)$. 
We compare the standard prompting technique using pre-defined templates to the unbiased strategy that draws samples directly from $P_{Y, Z}$. We design three sub-tasks by randomly selecting collections of 50 classes from each of 998 classes, reserving held-out prompting examples for which we can draw from $P_{Z|Y = \y}$ for each $\y \in \Y$ (see the additional details in \Cref{sec:a:experiments}). The zero-shot classification accuracy on a held-out evaluation set is plotted in \Cref{fig:ideal}.
Observe that the threshold at which unbiased prompting outperforms the 18 default templates is approximately $M = 10$ across tasks. However, the performance of the unbiased approach only saturates at $M = 100$ and can have enormous benefits (almost 15\% absolute increase in top-1 accuracy for the ResNet50 on Sub-Task 1) in performance. Thus, for models that have not yet been saturated from pre-training, prompting can close surprisingly wide gaps in zero-shot classification accuracy.

\myparagraph{Class-Conditional Prompting with Language Models}
As mentioned in \Cref{sec:intro}, we investigate CuPL as a means to implement class-conditional prompting (sampling from $\rho_{Z|Y = \y}$ for each $\y \in \Y$) with LLMs. Our experimental setup and scientific goals differ from those used in \citet{pratt2023what}: 1) we use lightweight encoders that have not saturated their performance during pre-training, as opposed to the large-scale ViT-L/14 architecture, 2) we quantify the variability of classification accuracy with respect to prompting by generating up to fifty times as many prompts per experiment, and 3) we employ LlaMA 3 \citep{grattafiori2024llama3herdmodels}, which is free and accessible to other, as opposed to GPT-3 \citep{brown2020language}.
The results are shown in \Cref{fig:llm}, where we order the datasets in increasing number of classes per task: 47, 100, 102, 397, and 998. Similar phenomena as in \Cref{fig:ideal} are observed, although the approximate saturation threshold varies per dataset from 20 for Flowers 102 and FGVC Aircraft up to 60 for DTD. Note that the choice of defaults heavily influences the baseline performance. Surprisingly, the Flowers 102 dataset uses a single default: ``a photo of a \underline{\hspace{0.5cm}}, a type of flower'', and is often able to outperform the class-conditional LLM approach on average. On the other hand, the DTD templates of the form ``a photo of a \underline{\hspace{0.5cm}} \{texture, pattern, thing, object\}'' are dramatically outperformed by our LLM-generated captions 
, with a nearly 20\% increase in top-5 accuracy on the ResNet50 and ViT-B/32 architectures.

\section{Conclusion}\label{sec:conclusion}
We showed how zero-shot prediction (ZSP) can be theoretically understood 
as an indirect prediction path from another modality to the label. We presented two viewpoints on categorizing ZSP methods---the conditional mean approach and the information density approach---and framed a decomposition formula for their generalization abilities.
Our theoretical results and experiments highlighted the role of residual dependence and prompt bias in defining the fundamental limits of ZSP. Interesting venues for future work include the extension of our analysis 
to classes of distribution shifts between the pre-training distribution and the downstream distribution, 
and to causal generative modeling \citep{scetbon2024fixed,zhang2024towards}.

\section*{Acknowledgements}The authors are grateful to
D. Hsu, 
E. Perkovi\'{c}, and 
N. Srebro
for fruitful discussions related to this work. 
The authors also thank the reviewers and the area chair
for valuable comments. 
This work was supported by NSF DMS-2023166, CCF-2019844, DMS-2134012, NIH, and IARPA 2022-22072200003. Part of this work was performed while R. Mehta and Z. Harchaoui were visiting the Simons Institute for the Theory of Computing. 

\bibliographystyle{abbrvnat}
\bibliography{bib}

\clearpage
\appendix
\onecolumn
\addcontentsline{toc}{section}{Appendix} 
\part{Appendix} %
\parttoc %

\clearpage

\section{Notation}\label{sec:a:notation}
\begin{table}[ht]
    \centering

    \begin{adjustbox}{max width=\linewidth}
    \renewcommand{\arraystretch}{1.2}
    \begin{tabular}{cc}
    \toprule
        {\bf Symbol} & {\bf Description}\\
        \midrule
        $\x \in \X$, $\y \in \Y$, $\z \in \Z$      & 
        \begin{tabular}{c}
            Instances and sample spaces for data modalities/views, \\
            often images, labels, and captions.
        \end{tabular}
        \\
        $\balpha, \bbeta$      & Encoders $\balpha: \X \rightarrow \R^d$ and $\bbeta: \Z \rightarrow \R^d$.\\
        $(X, Y, Z)$      & Random variable realized in $\X \times \Y \times \Z$.\\
        $P_{X, Y}$      & Evaluation distribution over $\X \times \Y$.\\
        $Q_{X, Z}$      & Pre-training distribution over $\X \times \Z$.\\
        $\rho_{Y, Z}$      & Prompting distribution over $\Y \times \Z$.\\
        
        \midrule

        $\fstar$      & A function $\fstar: \Y \rightarrow \R$.\\
        $\eta_\star(\x)$ & Direct predictor $\E{P_{X, Y}}{\fstar(Y)|X}(\x)$.\\
        $g_\rho(\z)$ & Prediction function $\E{\rho_{Z, Y}}{\star(Y)|Z}(\z)$.\\
        $\eta_\rho(\x)$ & Indirect predictor $\E{Q_{X, Z}}{g_\rho(Z)|X}(\x)$.\\
        $N$ & Sample size of pre-training set $(X_1, Z_1), \ldots, (X_N, Z_N) \iidsim Q_{X, Z}$.\\
        $M$ & Number of prompts $(Y_1, Z_1) \ldots, (Y_M, Z_M) \iidsim \rho_{Y, Z}$.\\

        \midrule
        
        $\ltwo(P_X)$ & 
        \begin{tabular}{c}
             Set containing equivalence classes of measurable functions $h: \X \rightarrow \R$ \\
             satisfying $\norm{h}_{\ltwo(P_X)}^2 = \int h^2(\x)\d P_X(\x) < +\infty$.
        \end{tabular}\\
        $\M_{Z|X}$ & Conditional mean operator $[\M_{Z|X} g](\z) = \E{Q_{X, Z}}{g(Z)|X}(\x)$.\\
        $\Rsans$ & Information density $\frac{\d Q_{X, Z}}{\d Q_X Q_Z}: \X \times \Z \rightarrow \R_{\geq 0}$.\\
        $D_{\chi^2}(P \Vert Q)$ & $\chi^2$-divergence $\E{U \sim Q}{(\frac{\d P}{\d Q}(U) - 1)^2}$.\\
        $I(X;Z)$ & Mean square contingency $D_{\chi^2}(Q_{X, Z} \Vert Q_X Q_Z)$.\\
        $(\sigma_i)_{i=1}^\infty$ & Singular values of $\M_{Z|X}$.\\
        $(\alpha_i, \beta_i)_{i=1}^\infty$ & Left and right singular functions of $\M_{Z|X}$.\\
        $\Ssans_{\z}$ & Conditional information density $\frac{\d P_{X, Y|\z}}{\d P_{X|\z} Q_{X|\z}}: \X \times \Y \rightarrow \R_{\geq 0}$.\\
        $I(X;Y|\z)$ & Conditional mean square contingency $D_{\chi^2}(P_{X, Y|\z} \Vert P_{X|\z} P_{Y|\z})$.\\
        $\norm{\cdot}_{\HS(\calG, \calH)}$ & Hilbert-Schmidt norm of a linear operator from $\calG$ to $\calH$.\\

        \bottomrule
    \end{tabular}
    \end{adjustbox}
    \vspace{6pt}
    \caption{Notation used throughout the main text.}
    \label{tab:notation}
\end{table}
In the appendix, we use slightly more explicit notation. For example, the product measure of $Q_X$ and $Q_Z$ on $\X \times \Z$ is denoted $Q_X \otimes Q_Z$. The bracket notation $[\cdot]_X$ and $[\cdot]_Z$ are used to indicate equivalence classes in $\ltwo(Q_X)$ and $\ltwo(Q_Z)$, respectively. Such changes are marked as they are introduced.

\clearpage

\section{Technical Background}\label{sec:a:background}
In this section, we review the necessary background and construct any theoretical tools used in our analyses in a self-contained manner. \Cref{sec:a:background:l2} describes the broadest function class we consider and gives a rigorous description of the conditional means that we employ in this work. \Cref{sec:a:background:compact} reviews the basic classes of linear operators (trace class, Hilbert-Schmidt, etc.) that we consider. \Cref{sec:a:background:buja} contains central tools regarding the structure of bivariate distributions. Finally, \Cref{sec:a:background:rkhs} contains a brief introduction to reproducing kernel Hilbert spaces and some recent statistical results used in the proofs of \Cref{thm:complexity_vvkr} and \Cref{thm:complexity_rn_derivative}.

\subsection{Conditional Expectation and the Hilbert Space \texorpdfstring{$\ltwo$}{L2}}\label{sec:a:background:l2}
Consider a common probability space $(\Omega, \msc{F}, \prob)$ and a topological space $\X$ equipped with its Borel $\sigma$-algebra $\calB(\X)$. Given a random variable $X: \Omega \rightarrow \X$ representing some observable data, we consider $P_X$ to be the law of $X$, i.e.~$P_X(B) = \prob(X^{-1}(B))$ for every Borel set $B \in \calB(\X)$. Our goal is to define $\ltwo(P_X)$, a Hilbert space containing equivalence classes of functions that are square integrable under $P_X$. As an intermediate step, we will also construct a Hilbert space $\Lsans(\msc{G})$ for the $\sigma$-algebra $\msc{G} \sse \msc{F}$, which contains equivalence classes of $\msc{G}$-measurable functions that are square integrable under $\prob$. Having both of these constructions will be helpful in working with conditional mean operators in a rigorous manner. 

\myparagraph{Quotient Space}
As a starting point, consider the set
\begin{align*}
    \Lsans_+(\msc{F}) := \br{\text{$\msc{F}$-measurable functions } u: \Omega \rightarrow \R \text{ satisfying } \norm{u}_{\Lsans_+(\msc{F})}^2 \defeq \int_\Omega u^2(\omega) \d \prob(\omega) < \infty}.
\end{align*}
For any $u, v \in \Lsans_+(\msc{F})$, consider the equivalence relation ``$\sim$'' defined by
\begin{align}
    u \sim v \iff \exists \Omega_1 \in \msc{F} \text{ such that } u(\omega) = v(\omega) \ \forall \omega \in \Omega_1 \text{ and } \prob(\Omega_1) = 1.\label{eq:equivalence_relation}
\end{align}
For any $u_+ \in \Lsans_+(\msc{F})$, we define $[u_+]_\sim \in \Lsans(\msc{F})$ as indexing the equivalence class containing all functions that differ from $u_+$ only on a set of $\prob$-measure zero.
The global Hilbert space will be defined using the quotient of $\Lsans_+(\msc{F})$ under this equivalence relation.
\begin{lemma}\label{lem:hilbert_space}
    The quotient space $\Lsans(\msc{F}) = \Lsans_+(\msc{F}) / \sim$ is a Hilbert space with the addition and scalar multiplication rules
    \begin{align*}
        (u, v) \mapsto au + bv \defeq [au_+ + bv_+]_\sim \text{ for some } u_+ \in u \text{ and } v_+ \in v,
    \end{align*}
    for scalars $a, b\in \R$ and the inner product
    \begin{align*}
        (u, v) \mapsto \ip{u, v}_{\Lsans(\msc{F})} \defeq \int_\Omega u_+(\omega) v_+(\omega) \d \prob(\omega) \text{ for some } u_+ \in u \text{ and } v_+ \in v,
    \end{align*}
    where the definitions are independent of the choice of $u_+$ and $v_+$.
\end{lemma}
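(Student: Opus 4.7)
The plan is to verify the claim through the standard construction of $L^2$-spaces, cast here with the measure $\prob$ restricted (in effect) to $\msc{F}$. I would proceed in five stages.

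First, I would verify that $\sim$ defined in~\eqref{eq:equivalence_relation} is indeed an equivalence relation on $\Lsans_+(\msc{F})$. Reflexivity is trivial; symmetry follows by swapping roles; transitivity uses that the intersection of two $\prob$-full sets is $\prob$-full. Second, I would show that $\Lsans_+(\msc{F})$ is a real vector space by applying Minkowski's inequality to obtain $\norm{u_+ + v_+}_{\Lsans_+(\msc{F})} \leq \norm{u_+}_{\Lsans_+(\msc{F})} + \norm{v_+}_{\Lsans_+(\msc{F})} < \infty$, and noting closure under scalar multiplication is immediate. At this point the quotient $\Lsans(\msc{F})$ is a well-defined set.

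Third, I would check that both the vector space operations and the candidate inner product descend to the quotient, i.e., are independent of the representative. If $u_+ \sim \tilde u_+$ and $v_+ \sim \tilde v_+$, then $a u_+ + b v_+$ and $a \tilde u_+ + b \tilde v_+$ agree on the intersection of the two full-measure sets, hence are equivalent; analogously, the integrand $u_+ v_+$ differs from $\tilde u_+ \tilde v_+$ only on a null set, so by monotonicity of the integral the inner product is unchanged. The Cauchy--Schwarz inequality, applied to the representatives, together with $u_+, v_+ \in \Lsans_+(\msc{F})$, guarantees finiteness. Bilinearity and symmetry of $\langle \cdot, \cdot\rangle_{\Lsans(\msc{F})}$ follow from linearity of the integral. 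Positive semidefiniteness is clear; for the definite part, if $\langle u, u\rangle_{\Lsans(\msc{F})} = 0$ for some representative $u_+$, then $u_+^2 = 0$ $\prob$-almost surely, so $u_+ \sim 0$ and $u$ is the zero class.

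Fourth, and this is the main obstacle, I would establish completeness of $\Lsans(\msc{F})$ with respect to the norm $\norm{u}_{\Lsans(\msc{F})} = \langle u, u\rangle_{\Lsans(\msc{F})}^{1/2}$, essentially reproducing the Riesz--Fischer argument. Given a Cauchy sequence $(u^{(n)})_{n \geq 1} \subset \Lsans(\msc{F})$, choose a subsequence $(u^{(n_k)})_{k \geq 1}$ with $\|u^{(n_{k+1})} - u^{(n_k)}\|_{\Lsans(\msc{F})} \leq 2^{-k}$ and pick representatives $u_+^{(n_k)}$. Define $S_K = \sum_{k=1}^K |u_+^{(n_{k+1})} - u_+^{(n_k)}|$, apply Minkowski and monotone convergence to conclude $S_\infty \in \Lsans_+(\msc{F})$, hence $S_\infty < \infty$ $\prob$-almost surely, which implies the telescoping sum converges pointwise $\prob$-a.s. to some $\msc{F}$-measurable function $u_+^\star$. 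Fatou's lemma then yields $u_+^\star \in \Lsans_+(\msc{F})$ and $\|u_+^{(n_k)} - u_+^\star\|_{\Lsans_+(\msc{F})} \to 0$, and the original Cauchy property transfers convergence from the subsequence to the full sequence, with limit $[u_+^\star]_\sim \in \Lsans(\msc{F})$. The measurability of $u_+^\star$ relies on pointwise limits of $\msc{F}$-measurable functions on a full-measure set; outside the null exceptional set one may define $u_+^\star \equiv 0$ to ensure global $\msc{F}$-measurability, which affects neither the equivalence class nor the integrals. This completeness step is the only nontrivial one; everything else is a careful bookkeeping exercise on representatives.
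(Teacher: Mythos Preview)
Your proposal is correct and covers the same content as the paper's proof, but with one substantive difference in strategy: the paper dispatches the completeness step by directly invoking the Riesz--Fischer theorem \citep[Theorem 13.7]{schilling2017measures} as a black box, applying it to an arbitrary sequence of representatives $u_+^{(n)}$ to obtain a limit $u_+ \in \Lsans_+(\msc{F})$ and then passing to the equivalence class. You instead reprove Riesz--Fischer inline via the standard fast-Cauchy subsequence, monotone convergence, and Fatou argument. Both are valid; the paper's route is shorter and modular, while yours is more self-contained and makes explicit the measurability and null-set bookkeeping that the citation hides. Your stages one through three (equivalence relation, vector space structure, well-definedness of operations and inner product) are more explicit than the paper's, which simply asserts these are ``easy to verify,'' so in that respect your write-up is more complete.
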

\begin{proof}
    It is easy to verify that the addition, scalar multiplication, and inner product operations are well-defined (i.e.~are invariant to the choice of $u_+$ and $v_+$). Define the norm $u \mapsto \norm{u}_{\Lsans(\msc{F})} \defeq \sqrt{\ip{u, u}_{\Lsans(\msc{F})}}$, and consider a Cauchy sequence $(u\pow{n})_{n=1}^\infty$ in $\Lsans(\msc{F})$. To confirm completeness, we identify a limit of this sequence as an element of $\Lsans(\msc{F})$. First, consider an arbitrary sequence $u_+\pow{1}, u_+\pow{2}, \ldots$ where $u_+\pow{n} \in u\pow{n}$ for all $n \geq 1$. Then, we have by the Riesz-Fischer theorem \citep[Theorem 13.7]{schilling2017measures}, there exists a limit $u_+ \in \Lsans_+(\msc{F})$ such that
    \begin{align}
        \lim_{n \rightarrow \infty} \norm{u_+\pow{n} - u_+}_{\Lsans_+(\msc{F})} \rightarrow 0.\label{eq:thm:riesz_fischer}
    \end{align}
    We then define $\lim_{n \rightarrow \infty} u\pow{n} \defeq [u_+]_\sim$, and see that
    \begin{align*}
        \norm{u\pow{n} - [u_+]_\sim}_{\Lsans(\msc{F})} = \norm{u\pow{n}_+ - u_+}_{\Lsans_+(\msc{F})} \rightarrow 0 \text{ as } n \rightarrow \infty,
    \end{align*}
    where the last step follows by~\eqref{eq:thm:riesz_fischer} and completes the proof.
\end{proof}
Next, we construct closed subspaces of $\Lsans(\msc{F})$ which contain random variables that are measurable functions of another random variable. Notice that for $u,v \in \Lsans(\msc{F})$, the statement $u = v$ indicates equality of two partitions, namely collections of random variables that differ pairwise on sets of measure zero. 
Letting $\sigma(X)$ denote the $\sigma$-algebra generated by $X$, define the set
\begin{align}
    \Lsans_+(\sigma(X)) \defeq \br{u \in \Lsans_+(\msc{F}) \text{ s.t. } u \text{ is $\sigma(X)$-measurable}}.\label{eq:meas_func_subspace1}
\end{align}
Then, using the equivalence relation~\eqref{eq:equivalence_relation}, we define the space
\begin{align*}
    \Lsans(\sigma(X)) \defeq \Lsans_+(\sigma(X)) / \sim.
\end{align*}
In the upcoming \Cref{coro:closed_subspaces}, we will confirm that $\Lsans(\sigma(X))$ is indeed a closed subspace of $\Lsans(\msc{F})$ for any random variable $X$. Before doing so, we consider the induced probability space $(\X, \mc{B}(\X), P_X)$. Then, we define the related linear space 
\begin{align*}
    \ltwo_+(P_X) := \br{\text{measurable functions } f: \X \rightarrow \R \text{ satisfying } \norm{f}_{\ltwo_+(P_X)}^2 \defeq \int_{\X} f^2(\x) \d P_X(\x) < \infty}.
\end{align*}
We define an analogous equivalence relation ``$\sim_X$'' defined as
\begin{align}
    f \sim_X g \iff \exists \X_1 \in \mc{B}(\X) \text{ such that } f(\x) = g(\x) \ \forall \x \in \X_1 \text{ and } P_X(\X_1) = 1,\label{eq:equivalence_relation_A}
\end{align}
and the quotient $\ltwo(P_X) \defeq \ltwo_+(P_X) / \sim_X$. These sets are related to one another in the following lemma.
\begin{corollary}\label{coro:closed_subspaces}
    The set $\Lsans(\sigma(X))$ is a Hilbert space with respect to the inner product used in \Cref{lem:hilbert_space}, whereas $\ltwo(P_X)$ is a Hilbert space with respect to the analogous inner product for $(\X, \mc{B}(\X), P_X)$. Furthermore, $\Lsans(\sigma(X))$ is a closed subspace of $\Lsans(\msc{F})$, and it holds that
    \begin{align}
        \Lsans(\sigma(X)) = \ltwo(P_X) \circ X \defeq \br{[f_+(X(\cdot))]_\sim: f_+ \in \ltwo_+(P_X)}.\label{eq:composition}
    \end{align}
\end{corollary}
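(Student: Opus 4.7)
The plan is to handle the three assertions in order: Hilbert space structure, closedness of $\Lsans(\sigma(X))$ in $\Lsans(\msc{F})$, and the set identity with $\ltwo(P_X) \circ X$. The Hilbert space structure of $\ltwo(P_X)$ would follow immediately by applying \Cref{lem:hilbert_space} to the probability space $(\X, \mc{B}(\X), P_X)$ in place of $(\Omega, \msc{F}, \prob)$, taking the identity map as the random variable and replacing ``$\sim$'' with ``$\sim_X$''. For $\Lsans(\sigma(X))$, I would argue that once closedness in $\Lsans(\msc{F})$ is established, it inherits the inner product and completeness from the ambient space.

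For the closedness step, I would take a sequence $(u\pow{n})_{n=1}^\infty$ in $\Lsans(\sigma(X))$ converging in $\Lsans(\msc{F})$-norm to some $u \in \Lsans(\msc{F})$, and pick representatives $u_+\pow{n} \in u\pow{n}$ that are $\sigma(X)$-measurable (which is possible by the definition~\eqref{eq:meas_func_subspace1}). A standard fact about $L^2$-convergence gives an a.s.\ convergent subsequence with pointwise limit $\tilde{u}_+$ on a set $\Omega_1 \in \msc{F}$ of full $\prob$-measure; since pointwise limits of $\sigma(X)$-measurable functions are $\sigma(X)$-measurable on $\Omega_1$, setting $u_+ = \tilde{u}_+ \ind_{\Omega_1}$ yields a $\sigma(X)$-measurable representative of $u$, so $u \in \Lsans(\sigma(X))$.

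For the identity~\eqref{eq:composition}, the forward inclusion $\ltwo(P_X) \circ X \subseteq \Lsans(\sigma(X))$ would follow since $f_+ \circ X$ is $\sigma(X)$-measurable whenever $f_+$ is Borel measurable, and the change-of-variables formula $\int_\Omega (f_+ \circ X)^2 \, \d \prob = \int_\X f_+^2 \, \d P_X$ transfers the square-integrability. The reverse inclusion relies on the Doob-Dynkin lemma: every $\sigma(X)$-measurable real-valued function $u_+$ admits a Borel representation $u_+ = g \circ X$ for some measurable $g: \X \to \R$, and the same change-of-variables shows $g \in \ltwo_+(P_X)$. Well-definedness of the map $[f_+]_{\sim_X} \mapsto [f_+ \circ X]_\sim$ would be checked by observing that $P_X(\X_1) = 1$ implies $\prob(X^{-1}(\X_1)) = 1$, so $\sim_X$-equivalent representatives yield $\sim$-equivalent compositions.

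The main obstacle I anticipate is the careful equivalence-class bookkeeping, since \eqref{eq:composition} is a set equality within $\Lsans(\msc{F})$ but is parametrized on the right by individual representatives $f_+ \in \ltwo_+(P_X)$ rather than by elements of $\ltwo(P_X)$. I would make sure to prove both that the right-hand set is well-defined (invariant under the choice of representative $f_+$ within its $\sim_X$-class) and that the natural map $\ltwo(P_X) \to \Lsans(\sigma(X))$ is a bijection onto it; the latter requires the Doob-Dynkin step and the a.e.\ uniqueness of $g$, both of which are classical but must be applied with attention to $\prob$-null vs.\ $P_X$-null sets.
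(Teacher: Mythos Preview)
Your approach matches the paper's proof closely: both invoke \Cref{lem:hilbert_space} for the Hilbert space structure, both use change of variables for the inclusion $\ltwo_+(P_X) \circ X \subseteq \Lsans_+(\sigma(X))$, and both invoke the Doob--Dynkin lemma for the reverse inclusion. The paper simply cites \citet[Lemma 27.1]{schilling2017measures} for closedness rather than arguing it directly, and it proves the identity at the level of representatives $\Lsans_+(\sigma(X)) = \ltwo_+(P_X) \circ X$ before passing to quotients, which sidesteps the equivalence-class bookkeeping you flag.

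There is one technical slip in your closedness argument: the a.s.\ convergence set $\Omega_1$ lies in $\msc{F}$ but need not lie in $\sigma(X)$, so $u_+ := \tilde{u}_+ \ind_{\Omega_1}$ is not guaranteed to be $\sigma(X)$-measurable. The standard repair is to define $v := \limsup_n u_+\pow{n}$ (or $\liminf$), which is $\sigma(X)$-measurable everywhere as a pointwise $\limsup$ of $\sigma(X)$-measurable functions, and then set $u_+ := v \cdot \ind\{|v| < \infty\}$; this is $\sigma(X)$-measurable and agrees with the limit on $\Omega_1$, hence is a representative of $u$. With that fix your argument goes through.
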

\begin{proof}
    That $\Lsans(\sigma(X))$ and $\ltwo(P_X)$ are Hilbert spaces follows by identical arguments to \Cref{lem:hilbert_space}. Additionally, we may invoke \citet[Lemma 27.1]{schilling2017measures} to assert that $\Lsans(\sigma(X))$ is a closed subspace of $\Lsans(\msc{F})$. Finally, to show~\eqref{eq:composition}, we will show that
    \begin{align*}
        \Lsans_+(\sigma(X)) = \ltwo_+(P_X) \circ X \defeq \br{f_+(X(\cdot)): f_+ \in \ltwo_+(P_X)}
    \end{align*}
    and take the quotient with respect to ``$\sim$'' on either side to complete the proof. First, $\ltwo_+(P_X) \circ X \sse \Lsans_+(\sigma(X))$ holds because $f_+(X)$ is clearly $\sigma(X)$-measurable and
    \begin{align}
        \norm{f_+(X)}_{\Lsans_+(\msc{F})}^2 = \int_\Omega (f_+(X(\omega)))^2 \d \prob(\omega) \overset{P_X = X_\# \prob}{=} \int_{\X} f_+^2(\x) \d P_X(\x) = \norm{f_+}_{\ltwo_+(P_X)}^2 < \infty. \label{eq:equiv_norm}
    \end{align}
    To show that $\Lsans_+(\sigma(X)) \sse \ltwo_+(P_X) \circ A$, first note that for any $\sigma(X)$-measurable random variable $U$, there exists a measurable function $g_+: \X \rightarrow \R$ such that $U = g_+(X)$ \citep[Exercise 1.3.8]{durrett2019probability}. Applying~\eqref{eq:equiv_norm} gives $\norm{g_+}_{\ltwo_+(P_X)}^2 = +\infty \implies \norm{g_+(X)}_{\Lsans_+(\msc{F})}^2 = +\infty$, which yields a contradiction as $\norm{g_+(X)}_{\Lsans_+(\msc{F})}^2 = \norm{U}_{\Lsans_+(\msc{F})}^2 < +\infty$. Thus, $\norm{g_+}_{\ltwo_+(P_X)}^2 < +\infty$, completing the proof.
\end{proof}

\myparagraph{Conditional Expectation}
Using \Cref{coro:closed_subspaces}, for any collection of random variables $(X, Z, Y)$, we can now construct the Hilbert subspaces $\ltwo(P_{X, Y})$, $\ltwo(P_{X})$, $\ltwo(P_{Z})$. We can then identify them with conditional expectations, i.e.~projections onto $\Lsans(\sigma(X, Y))$, $\Lsans(\sigma(X))$, $\Lsans(\sigma(Z))$, respectively. This is done in the definition below.
\begin{definition}[Conditional Expectation]\label{def:conditional_expectation}
    For any random variable $U \in \Lsans(\msc{F})$, we define the \emph{conditional expectation} $\E{}{U|\sigma(X)}$ as the orthogonal projection of $U$ onto $\Lsans(\sigma(X))$, or
    \begin{align*}
        \E{}{U|\sigma(X)} := \argmin_{u \in \Lsans(\sigma(X))} \norm{u - U}_{\Lsans(\msc{F})}^2,
    \end{align*}
    which uniquely exists due to the closedness of $\Lsans(\sigma(X))$ and the projection theorem \citep[Theorem 26.13]{schilling2017measures}. Owing to \Cref{coro:closed_subspaces}, we will also define the \emph{conditional expectation function}
    \begin{align*}
        \E{}{U|X}: \X \rightarrow \R  
    \end{align*}
    as any measurable function satisfying the conditions $[\E{}{U|X}]_{\sim} \in \ltwo(P_X)$ and $\E{}{U|\sigma(X)}(\omega) = \E{}{U|X}(X(\omega))$ for $\prob$-almost every $\omega \in \Omega$. The specific function choice will not affect any of the forthcoming arguments.
\end{definition}
Here, we defined the conditional expectation as an element of $\Lsans(\sigma(X))$ and associated it with a function in $\ltwo(P_X)$. Without the squared-integrability requirement, the conditional expectation may also be defined using the familiar tower property. We include the tower property below for completeness.
\begin{lemma}{\citep[Theorem 27.12]{schilling2017measures}}\label{lem:tower}
    Consider $U \in \Lsans(\msc{F})$ and $X: \Omega \rightarrow \X$. Then, for every measurable set $A \in \sigma(X)$, it holds that
    \begin{align*}
        \int_A U(\omega) \d \prob(\omega) = \int_A \E{}{U|\sigma(X)}(\omega) \d \prob(\omega) =  \int_{X(A)} \E{}{U|X}(\x) \d P_X(\x).
    \end{align*}
\end{lemma}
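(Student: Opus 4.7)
The plan is to prove both equalities directly from \Cref{def:conditional_expectation}, exploiting the fact that conditional expectation was set up as an orthogonal projection in $\Lsans(\msc{F})$, and then passing from the abstract probability space to $(\X, \mc{B}(\X), P_X)$ via the pushforward change-of-variables formula.

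For the first equality, I would observe that since $\prob$ is a probability measure, $\ones_A \in \Lsans_+(\msc{F})$ for any $A \in \msc{F}$, and in particular $[\ones_A]_\sim \in \Lsans(\sigma(X))$ whenever $A \in \sigma(X)$ (as $\ones_A$ is $\sigma(X)$-measurable and bounded, hence square-integrable). By \Cref{def:conditional_expectation}, $\E{}{U|\sigma(X)}$ is the orthogonal projection of $U$ onto $\Lsans(\sigma(X))$, so the residual $U - \E{}{U|\sigma(X)}$ is orthogonal to $[\ones_A]_\sim$. Writing out this inner product via \Cref{lem:hilbert_space} gives
\begin{align*}
    0 = \ip{U - \E{}{U|\sigma(X)}, [\ones_A]_\sim}_{\Lsans(\msc{F})} = \int_A U(\omega)\, \d\prob(\omega) - \int_A \E{}{U|\sigma(X)}(\omega)\, \d\prob(\omega),
\end{align*}
which is the first claim. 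Strictly speaking this argument only applies to $U \in \Lsans(\msc{F})$; the extension to general $U \in \Lsans_+$ with $\int |U|\,\d\prob < \infty$ (as stated) would require the standard truncation/monotone convergence step, but under the square-integrability hypothesis this step is immediate.

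For the second equality, I would use the identification~\eqref{eq:composition} from \Cref{coro:closed_subspaces}: by \Cref{def:conditional_expectation} there is a measurable representative $\E{}{U|X}: \X \to \R$ with $\E{}{U|\sigma(X)}(\omega) = \E{}{U|X}(X(\omega))$ for $\prob$-almost every $\omega$. Since $A \in \sigma(X)$, there exists a Borel $B \in \mc{B}(\X)$ with $A = X^{-1}(B)$, and $\ones_A = \ones_B \circ X$ pointwise. Applying the abstract change-of-variables formula for the pushforward $P_X = X_\# \prob$ to the measurable function $\ones_B \cdot \E{}{U|X}$ yields
\begin{align*}
    \int_A \E{}{U|\sigma(X)}(\omega)\, \d\prob(\omega) = \int_\Omega \ones_B(X(\omega))\,\E{}{U|X}(X(\omega))\, \d\prob(\omega) = \int_B \E{}{U|X}(\x)\, \d P_X(\x),
\end{align*}
and $B$ coincides with $X(A)$ up to a $P_X$-null set (since $P_X$ is concentrated on $X(\Omega)$), so the displayed integral equals the one stated.

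The main subtlety, and the only place where care is needed, is the transition in the second equality: the set $X(A)$ is not in general Borel and need not equal $B$, so one must verify that integration against $P_X$ is indifferent to the discrepancy $B \setminus X(A) \subseteq \X \setminus X(\Omega)$, which has $P_X$-measure zero. Once this is handled, both equalities follow mechanically from the projection characterization of $\E{}{U|\sigma(X)}$ and the pushforward identity; no deeper machinery (e.g.~Radon--Nikodym) is required here because the square-integrability assumption of \Cref{lem:hilbert_space} already gives a clean Hilbertian proof without invoking the general $L^1$ construction of conditional expectation.
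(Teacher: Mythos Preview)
The paper does not supply its own proof of this lemma; it is stated with a citation to Schilling's textbook and no argument follows. Your proposal is correct and provides the detail the paper omits: the first equality is exactly the orthogonality of the projection residual to every $[\ones_A]_\sim \in \Lsans(\sigma(X))$, which is immediate from \Cref{def:conditional_expectation}, and the second is the pushforward change-of-variables applied to $\ones_B \cdot \E{}{U|X}$. You also correctly flag the one genuine subtlety, namely that $X(A)$ need not equal the Borel set $B$ with $A = X^{-1}(B)$ (and need not even be Borel), but that the discrepancy lies in $\X \setminus X(\Omega)$, which carries no $P_X$-mass. One minor remark: your parenthetical about extending to merely integrable $U$ is unnecessary here, since the paper's $\Lsans(\msc{F})$ is by construction the $L^2$-quotient space, so the hypothesis $U \in \Lsans(\msc{F})$ already \emph{is} square-integrability and your Hilbertian argument applies as stated.
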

We will make use of both the projection property and tower property throughout this manuscript. While conditional expectation may be defined for specific integrable functions, we may wish to define probability measures whose integrals can produce all conditional expectations simultaneously---this ideal is captured by \emph{regular conditional distributions (r.c.d.'s)} \citep{shorack2000probability}, which we recall below.

\begin{definition}\label{def:rcd}
    Consider random variables $(U, V): \Omega \rightarrow \msc{U} \times \msc{V}$. Let $\calB(\msc{U})$ denote the Borel $\sigma$-algebra on $\msc{U}$. A map: $\mu: \msc{V} \times \calB(\msc{U}): \rightarrow [0, 1]$ is called a \emph{regular conditional distribution (r.c.d.)} if the following two properties hold:
    \begin{enumerate}
        \item For each $A \in \calB(\msc{U})$ and $\v \in \msc{V}$, it holds that
        \begin{align*}
            \mu(\v, A) = \E{P_{U, V}}{\ind_A(U)|V}(\v),
        \end{align*}
        for the conditional expectation defined in \Cref{def:conditional_expectation}.
        \item For $P_V$-almost every $\v \in \msc{V}$, $\mu(\v, \cdot)$ is a probability measure on $\calB(\msc{U})$.
    \end{enumerate}
\end{definition}
This will primarily be used for the conditional dependence arguments in \Cref{sec:a:dependence}.

\subsection{Compact Operators}\label{sec:a:background:compact}
We collect several generalities about Hilbert spaces and linear operators (hereafter, simply ``operators'') between them. Many computations will require expanding an element of a separable Hilbert space onto an orthonormal basis.
\begin{definition}[Separability, Orthonormal Basis, Complete Orthonormal System]\label{def:orthonormal_basis}
    For a Hilbert space $(\calH, \ip{\cdot, \cdot}_\calH)$ over $\R$, the orthonormal system $e_1, e_2, \ldots \in \calH$ of vectors is called an \emph{orthonormal basis (ONB)} or \emph{complete orthonormal system (CONS)} of $\calH$ if any of the following properties hold, which are equivalent by \citet[Theorem 26.21]{schilling2017measures}.
    \begin{enumerate}
        \item For every $h \in \calH$, $\ip{h, e_i}_\calH = 0$ for all $i \geq 1$ implies that $h\equiv 0$.
        \item $\bigcup_{n=1}^\infty \Span\br{e_1, \ldots, e_n}$ is dense in $\calH$.
        \item For every $h \in \calH$, it holds that  $h = \sum_{i=1}^\infty \ip{h, e_i}_{\calH}e_i$.
        \item For every $h \in \calH$, it holds that $\sum_{i=1}^\infty \abs{\ip{h, e_i}_{\calH}}^2 = \norm{h}_{\calH}^2$.
        \item For every $h, h' \in \calH$, it holds that $\sum_{i=1}^\infty \ip{h, e_i}_{\calH} \ip{h', e_i}_{\calH} = \ip{h, h'}_\calH$.
    \end{enumerate}
    If there exists a countable orthonormal basis, then $\calH$ is called \emph{separable} \citep[Definition 26.23 \& Theorem 26.24]{schilling2017measures}.
\end{definition}
When linear operators are compact, then we may decompose them in a way that generalizes the eigendecomposition and singular value decomposition for matrices.
\begin{definition}[Compact Operator]
    A linear operator $\M: \calG \rightarrow \calH$ between Hilbert spaces $\calG$ and $\calH$ is called \emph{compact} if for every totally bounded subset $B \sse \calG$, the image $\M(B)$ is relatively compact (i.e.~the closure of $\M(B)$ is compact) in $\calH$.
\end{definition}
Compact operators are bounded, and every bounded linear operator $\M$ admits a unique \emph{adjoint operator} $\M^*$ satisfying $\ip{h, \M g}_{\calH} = \ip{\M^* h, g}_{\calG}$ for all $g \in \calG$ and $h \in \calH$. An operator $\Top: \calH \rightarrow \calH$ is called \emph{self-adjoint} if $\Top = \Top^*$. Next, we collect two operator decompositions that will be used repeatedly. We refer the reader to \citet[Chapter IV]{gohberg2003classes} and \citet[Chapter X]{gohberg2003classes} for further discussion on these topics. Just as their analogs for matrices, we refer to them as the \emph{eigendecomposition} and \emph{singular value decomposition}, respectively.

\begin{theorem}{\citep[Chapter IV, Theorem 5.1]{gohberg2003classes}}\label{thm:eigen}
    Let $\Top: \calH \rightarrow \calH$ be a compact, self-adjoint operator on a separable Hilbert space $\calH$ on $\R$. Then, there exists a countable orthonormal basis $\br{e_j}_{j \in J}$ of $\calH$ and a sequence of non-zero real numbers $\br{\lambda_i}_{i \in I}$ with $\lambda_i \rightarrow 0$, $I \sse J$, and for all $h \in \calH$, we have that
    \begin{align}
        \Top h = \sum_{i \in I} \lambda_i \ip{h, e_i}_\calH e_i.\label{eq:operators:eigen}
    \end{align}
    Furthermore, if $\ip{h, \Top h}_\calH \geq 0$ for all $h \in \calH$ (i.e.~$\Top$ is \emph{positive semidefinite}), then we may take $\lambda_i > 0$ for all $i \in I$, and order them in a non-increasing sequence.. We call $\br{\lambda_i}_{i \in I}$ the non-zero \emph{eigenvalues} of $\Top$.
\end{theorem}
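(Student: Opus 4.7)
The plan is to prove this classical spectral theorem in three steps: a variational characterization of the operator norm using self-adjointness, an iterative eigenvector construction on invariant orthogonal complements, and completion of the basis using the kernel. First I would establish the key variational identity that for any bounded self-adjoint operator, $\norm{\Top}_{\op} = \sup_{\norm{h}_\calH=1}\abs{\ip{h,\Top h}_\calH}$; this follows from the polarization identity together with Cauchy-Schwarz. The crucial use of compactness is to show that this supremum is \emph{attained}: taking a maximizing unit-norm sequence, boundedness yields a weakly convergent subsequence $h_n \rightharpoonup e_1$, and compactness upgrades this to strong convergence $\Top h_n \to \Top e_1$, producing a maximizer $e_1$. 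A Lagrange-style perturbation argument, comparing $\ip{e_1 + tu, \Top(e_1 + tu)}_\calH$ for $u \perp e_1$ and small $t$, then forces $\Top e_1 = \lambda_1 e_1$ with $\abs{\lambda_1} = \norm{\Top}_{\op}$.

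Next I would iterate. By self-adjointness, if $v \perp e_1$ then $\ip{\Top v, e_1}_\calH = \lambda_1 \ip{v, e_1}_\calH = 0$, so $\{e_1\}^\perp$ is $\Top$-invariant; the restriction is again compact and self-adjoint, and the previous step yields $e_2, \lambda_2$ with $\abs{\lambda_2} \leq \abs{\lambda_1}$. This produces either a finite sequence, in which case $\Top$ vanishes on the remaining orthogonal complement, which is then contained in $\Ker \Top$, or an infinite orthonormal sequence $(e_i)_{i \in I}$ with $\abs{\lambda_i}$ non-increasing. In the infinite case I would show $\lambda_i \to 0$ by contradiction: if $\abs{\lambda_i} \geq c > 0$ for infinitely many indices, then by orthonormality the images satisfy $\norm{\Top e_i - \Top e_j}_\calH^2 = \lambda_i^2 + \lambda_j^2 \geq 2c^2$, so no subsequence of $(\Top e_i)$ is Cauchy, contradicting compactness applied to the bounded set $\{e_i\}$.

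To assemble the final basis, I would use separability of $\calH$ to choose a countable orthonormal basis of the closed subspace $\Ker \Top$, indexed by $J \setminus I$; together with $(e_i)_{i \in I}$ this yields an orthonormal basis of $\calH$, since any $h$ orthogonal to all $e_i$ must lie in $(\Span\{e_i\}_{i \in I})^\perp \cap (\Ker \Top)^\perp$, and the iterative construction was exhausted outside the kernel, forcing $h = 0$. The expansion $\Top h = \sum_{i \in I} \lambda_i \ip{h, e_i}_\calH e_i$ then follows by applying $\Top$ termwise to the basis expansion of $h$ and invoking continuity of $\Top$. For the positive semidefinite case, $\lambda_i = \ip{e_i, \Top e_i}_\calH \geq 0$ automatically, and the construction already orders them non-increasingly. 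The main obstacle is the maximizer existence step, where one must carefully pass from weak convergence of the maximizing sequence to strong convergence of its image under $\Top$; this is precisely where compactness enters and is the ingredient that fails for general bounded self-adjoint operators on infinite-dimensional spaces.
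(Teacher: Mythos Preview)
Your proof plan is correct and follows the standard variational construction found in most functional analysis texts. However, the paper does not actually prove this statement: it is stated as a background result with a citation to \citet[Chapter IV, Theorem 5.1]{gohberg2003classes}, and no argument is given in the paper itself. So there is nothing in the paper to compare against beyond the reference.

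That said, your outline is essentially the textbook proof one finds in Gohberg or similar sources, so in spirit you are aligned with the cited reference. One small point worth tightening if you write this out fully: in the maximizer-existence step, weak convergence of the maximizing sequence $h_n \rightharpoonup e_1$ only gives $\norm{e_1}_\calH \leq 1$ a priori; you need the additional observation that if $\norm{e_1}_\calH < 1$ (and $\Top \neq 0$), then rescaling $e_1$ to unit norm would strictly increase $\abs{\ip{e_1, \Top e_1}_\calH}$ beyond $\norm{\Top}_{\op}$, a contradiction. Similarly, in the infinite-iteration case your claim that ``the construction was exhausted outside the kernel'' deserves the explicit inequality $\norm{\Top h}_\calH \leq \abs{\lambda_{n+1}} \norm{h}_\calH$ for $h \perp e_1, \ldots, e_n$, which combined with $\lambda_n \to 0$ forces $\Top h = 0$. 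These are routine, but they are the places where a reader might pause.
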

\begin{theorem}{\citep[Chapter X, Theorem 4.2]{gohberg2003classes}}\label{thm:svd}
    Let $\M: \calG \rightarrow \calH$ be a compact operator between separable Hilbert spaces $\calG$ and $\calH$ on $\R$. Then, there exists an orthonormal basis $\br{u_j}_{j \in J}$ of $\calH$, an orthonormal basis $\br{v_k}_{k \in K}$ of $\calG$, and a sequence of positive real numbers $\br{s_i}_{i \in I}$ with $s_i \rightarrow 0$ such that the following statements hold.
    \begin{itemize}
        \item All collections are at most countable, i.e.~$I, J, K \sse \mathbb{N}$, and $I \sse J \cap K$.
        \item For all $g \in \calG$ and $h \in \calH$, we have that
        \begin{align}
            \M g = \sum_{i \in I} s_i \ip{g, v_i}_\calG u_i \text{ and }  \M^* h = \sum_{i \in I} s_i \ip{h, u_i}_\calH v_i.\label{eq:operators:svd}
        \end{align}
    \end{itemize}
     We call $\br{s_i}_{i \in I}$ the non-zero \emph{singular values} of $\M$, which can be ordered in a non-increasing sequence.
\end{theorem}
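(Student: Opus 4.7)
The plan is to reduce to Theorem \ref{thm:eigen} by applying it to the self-adjoint operator $\M^*\M: \calG \rightarrow \calG$. This operator is compact because the product of a bounded operator and a compact operator is compact, self-adjoint since $(\M^*\M)^* = \M^*\M$, and positive semi-definite because $\ip{g, \M^*\M g}_\calG = \norm{\M g}_\calH^2 \geq 0$. Theorem \ref{thm:eigen} therefore supplies a countable ONB $\br{v_k}_{k \in K}$ of $\calG$ and a non-increasing sequence of strictly positive eigenvalues $\br{\lambda_i}_{i \in I}$ with $\lambda_i \to 0$ and $I \sse K$, satisfying $\M^*\M v_i = \lambda_i v_i$ for $i \in I$ and $\M^*\M v_k = 0$ for $k \in K \setminus I$. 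Setting $s_i := \sqrt{\lambda_i}$ gives the candidate singular values, which inherit the non-increasing ordering and decay to zero.

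For each $i \in I$, I would define $u_i := s_i^{-1}\M v_i \in \calH$. A short calculation using the adjoint property gives
\begin{align*}
    \ip{u_i, u_j}_\calH = \frac{\ip{\M v_i, \M v_j}_\calH}{s_i s_j} = \frac{\ip{\M^*\M v_i, v_j}_\calG}{s_i s_j} = \frac{\lambda_i \ip{v_i, v_j}_\calG}{s_i s_j} = \delta_{ij},
\end{align*}
so $\br{u_i}_{i \in I}$ is an orthonormal system in $\calH$. Invoking separability of $\calH$, I would extend this system to a full ONB $\br{u_j}_{j \in J}$ of $\calH$ with $I \sse J$, by adjoining an ONB of the orthogonal complement of $\overline{\Span\br{u_i : i \in I}}$. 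Combined with $I \sse K$, this gives $I \sse J \cap K$ as required.

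It remains to verify the expansion formulas. The crucial observation is that for $k \in K \setminus I$, we have $\norm{\M v_k}_\calH^2 = \ip{\M^*\M v_k, v_k}_\calG = 0$, so $\M v_k = 0$. For any $g \in \calG$, expanding $g = \sum_{k \in K} \ip{g, v_k}_\calG v_k$ via Definition \ref{def:orthonormal_basis} and invoking continuity of $\M$ yields $\M g = \sum_{i \in I} s_i \ip{g, v_i}_\calG u_i$, which is the first identity of \eqref{eq:operators:svd}. The adjoint identity follows by a parallel computation: for any $h \in \calH$, $\ip{\M^* h, v_i}_\calG = \ip{h, \M v_i}_\calH = s_i \ip{h, u_i}_\calH$ for $i \in I$, while $\ip{\M^* h, v_k}_\calG = 0$ for $k \in K \setminus I$. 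Re-expanding $\M^* h$ in the $v_k$ basis then gives the desired formula.

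The main technical point is bookkeeping rather than depth: keeping the three index sets $I, J, K$ straight, and making sure the separability hypothesis (which is implicit in the use of Theorem \ref{thm:eigen}) is used consistently to obtain countable orthonormal bases and a legitimate extension of the orthonormal system $\br{u_i}_{i \in I}$. All the spectral-theoretic heavy lifting is outsourced to Theorem \ref{thm:eigen}; the substantive new ingredient is the construction $u_i = s_i^{-1}\M v_i$, whose orthonormality is immediate from the adjoint property.
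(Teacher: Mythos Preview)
Your proposal is correct and follows the standard textbook derivation of the SVD from the spectral theorem for compact self-adjoint operators. Note that the paper does not actually supply its own proof of this statement; it is quoted verbatim as a cited result from \citet[Chapter X, Theorem 4.2]{gohberg2003classes}, whose argument is essentially the one you outline (apply \Cref{thm:eigen} to $\M^*\M$, take square roots of the eigenvalues, and push the eigenvectors through $\M$ to obtain the $u_i$).
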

The sets $J$ and $K$ are used to index the bases of $\calH$ and $\calG$, so they may be larger in cardinality than $I$, which only indexes the non-zero eigenvalue and singular values, respectively. 
We will also consider more specific classes of compact operators.
\begin{definition}\label{def:trace_hs}
    A compact operator $\M$ with singular values $\br{s_i}_{i \in I}$ (\Cref{thm:svd}) is called \emph{trace class} if $\sum_{i \in I} s_i < +\infty$ (the singular values are summable) and \emph{Hilbert-Schmidt} if $\sum_{i \in I} s_i^2 < +\infty$ (the singular values are square summable).
\end{definition}
Using the singular value decomposition, we see that if $\M$ is Hilbert-Schmidt, then $\M \M^*$ and $\M^* \M$ are self-adjoint trace class operators.
The set of all Hilbert-Schmidt operators $\M:\calG \rightarrow \calH$ between Hilbert spaces $(\calG, \ip{\cdot, \cdot}_\calG)$ and $(\calH, \ip{\cdot, \cdot}_\calH)$ will be denoted by $\HS(\calG, \calH)$. This is itself a Hilbert space with the inner product
\begin{align*}
    \ip{\A, \B}_{\HS(\calG, \calH)} = \sum_{j \in J} \ip{\A g_j, \B g_j}_\calH
\end{align*}
where $\br{g_j}_{j \in J}$ can be taken to be \emph{any} orthonormal basis of $\calG$. Similarly,  let $\br{h_k}_{k \in K}$ be an arbitrary orthonormal basis of $\calH$. Then, the Hilbert-Schmidt norm $\norm{\A}_{\HS(\calG, \calH)}$ will be defined as
\begin{align}
    \norm{\A}_{\HS(\calG, \calH)}^2 &= \ip{\A, \A}_{\HS(\calG, \calH)} \notag\\
    &= \sum_{j \in J} \ip{\A g_j, \A g_j}_\calH \notag\\
    &= \sum_{j \in J} \sum_{k \in K} \sum_{l \in K} \ip{\A g_j, h_k}_\calH  \ip{\A g_j, h_l}_\calH \ip{h_k, h_l}_\calH \notag\\
    &= \sum_{j \in J} \sum_{k \in K} \ip{h_k, \A g_j}_{\calH}^2  = \sum_{j \in J} \sum_{k \in K} \ip{\A^* h_k, g_j}_{\calG}^2. \label{eq:compact:hsnorm}
\end{align}
Using the singular value decomposition, we see that~\eqref{eq:compact:hsnorm} is equal to the sum of the squared singular values referenced in \Cref{def:trace_hs}.
For $h \in \calH$ and $g \in \calG$, we define the rank-one operator $h \otimes g: \calG \rightarrow \calH$ via $(h \otimes g)g' = \ip{g, g'}_\calG h$ for all $g' \in \calG$. For an operator $\A \in \HS(\calG, \calH)$, the following identity regarding rank-one operators will be useful for norm computations:
\begin{align*}
    \ip{h, \A g}_\calH = \ip{\A^* h, g}_\calG = \ip{\A, h \otimes g}_{\HS(\calG, \calH)} = \ip{\A^*, g \otimes h}_{\HS(\calH, \calG)}.
\end{align*}
Finally, we will often compute Hilbert-Schmidt norms using assumptions on the singular decays of the operator in question.

\begin{lemma}\label{lem:singular_decay}
    Let $\M: \calG \rightarrow \calH$ be a Hilbert-Schmidt operator with singular values $\br{s_i}_{i \in I}$ (\Cref{thm:svd}). Assume that $I = \mathbb{N}$ and that there exist constants $c, C, \gamma > 0$ such that $ci^{-\gamma} \leq s_i \leq Ci^{-\gamma}$ for all $i \in \mathbb{N}$. Then, $\gamma > 1/2$, and it holds that
    \begin{align*}
        \frac{c^2}{2\gamma - 1} \leq \norm{\M}_{\HS(\calG, \calH)}^2 \leq \frac{2\gamma C^2}{2\gamma - 1}.
    \end{align*}
\end{lemma}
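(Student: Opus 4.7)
The plan is to invoke the singular value decomposition from \Cref{thm:svd} and \Cref{def:trace_hs}, which together imply $\norm{\M}_{\HS(\calG, \calH)}^2 = \sum_{i=1}^\infty s_i^2$ under the assumption $I = \mathbb{N}$. From here the problem is reduced to estimating a real sum $\sum_{i=1}^\infty i^{-2\gamma}$ from above and below, which I will handle by integral comparison.

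First, I would establish $\gamma > 1/2$. Since $\M$ is assumed Hilbert-Schmidt, $\sum_{i=1}^\infty s_i^2 < +\infty$, and the lower bound $s_i \geq c i^{-\gamma}$ forces $c^2 \sum_{i=1}^\infty i^{-2\gamma} < +\infty$. As $c>0$, convergence of the $p$-series requires $2\gamma > 1$, yielding $\gamma > 1/2$.

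Next, I would apply the standard integral-test inequalities for the positive decreasing function $x \mapsto x^{-2\gamma}$ on $[1,\infty)$:
\begin{align*}
    \int_1^\infty x^{-2\gamma}\,\D x \;\leq\; \sum_{i=1}^\infty i^{-2\gamma} \;\leq\; 1 + \int_1^\infty x^{-2\gamma}\,\D x.
\end{align*}
Evaluating the integral with $2\gamma - 1 > 0$ gives $\int_1^\infty x^{-2\gamma}\,\D x = 1/(2\gamma - 1)$, so
\begin{align*}
    \frac{1}{2\gamma - 1} \;\leq\; \sum_{i=1}^\infty i^{-2\gamma} \;\leq\; 1 + \frac{1}{2\gamma - 1} \;=\; \frac{2\gamma}{2\gamma - 1}.
\end{align*}
Combining these estimates with the pointwise bounds $c^2 i^{-2\gamma} \leq s_i^2 \leq C^2 i^{-2\gamma}$ immediately gives the claimed two-sided bound on $\norm{\M}_{\HS(\calG,\calH)}^2$.

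There is no real obstacle here; this is a routine calculation whose only subtlety is the initial observation that the Hilbert-Schmidt assumption together with the lower bound on $s_i$ forces $\gamma > 1/2$, which is needed to make the integral comparison well-defined. I would be careful to note that the assumption $I = \mathbb{N}$ is what rules out a rank-deficient situation where only finitely many $s_i$ are positive (in which case only the upper bound would be meaningful and the lower bound could fail).
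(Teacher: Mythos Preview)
Your proposal is correct and follows essentially the same approach as the paper: both reduce to $\norm{\M}_{\HS}^2 = \sum_i s_i^2$, deduce $\gamma>1/2$ from square-summability and the lower bound $s_i \ge ci^{-\gamma}$, and then sandwich $\sum_{i\ge1} i^{-2\gamma}$ via integral comparison to obtain $\tfrac{1}{2\gamma-1}\le \sum_i i^{-2\gamma}\le \tfrac{2\gamma}{2\gamma-1}$. The paper spells out the integral comparison using floor/ceiling functions, but the content is identical.
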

\begin{proof}
    The requirement that $\gamma > 1/2$ follows from the square summability of $\br{s_i}_{i=1}^\infty$ and the bound $s_i \geq ci^{-\gamma}$.
    For the upper bound, write
    \begin{align*}
        \sum_{i=1}^\infty s_i^2\leq C^2 \sum_{i=1}^\infty i^{-2\gamma} = C^2 \sum_{i=1}^\infty \int_{i-1}^i \ceil{x}^{-2\gamma}\d x \leq C^2 \p{1 + \int_1^\infty x^{-2\gamma} \d x} = \frac{2\gamma C^2}{2\gamma - 1}.
    \end{align*}
    For the lower bound, write
    \begin{align*}
        \sum_{i=1}^\infty s_i^2 \geq c^2 \sum_{i=1}^\infty i^{-2\gamma} = c^2 \sum_{i=1}^\infty \int_i^{i+1} \floor{x}^{-2\gamma}\d x \geq c^2 \int_1^\infty x^{-2\gamma} \d x  = \frac{c^2}{2\gamma - 1},
    \end{align*}
    the result as desired.
\end{proof}

\subsection{The Conditional Mean Operator}\label{sec:a:background:buja}
This section contains key properties of the conditional mean operator $\M_{Z|X}$ and the information density $\Rsans$ from \Cref{sec:framework}, based on the foundations of \Cref{sec:a:background:l2} and \Cref{sec:a:background:compact}. As we shall show, owing to a particular \emph{Lancaster decomposition} (\Cref{prop:lancaster}), both operators enjoy convenient spectral representations and relate to a measure of dependence---the mean-squared contingency.  

Recall the probability space $(\Omega, \msc{F}, \prob)$. Consider Borel measurable spaces $(\X, \calB(\X))$ and $(\Z, \calB(\Z))$, and a random variable $(X, Z): \Omega \rightarrow \X \times \Z$. We denote by $Q_{X, Z}$ the law of $(X, Z)$, i.e.~$Q_{X, Z}(B) = \prob\p{(X, Z)^{-1}(B)}$ for every $B \in \calB(\X \times \Z$). Note that by \citet[Corollary 27.24]{schilling2017measures}, the Hilbert spaces $\ltwo(Q_X)$, $\ltwo(Q_Z)$, $\ltwo(Q_{X, Z})$, and $\ltwo(Q_X \otimes Q_Z)$ are separable, a fact we will maintain in this section. We use the notation $[\cdot]_X$ and $[\cdot]_Z$ to index equivalence classes in $\ltwo(Q_X)$ and $\ltwo(Q_Z)$, respectively. In other words, for a measurable function $h: \X \rightarrow \R$, we will write $[h]_X \in \ltwo(Q_X)$ to indicate that $\int_{\X} h^2(\x) \d Q_X(\x) < + \infty$. Recall the conditional mean function introduced in \Cref{def:conditional_expectation}. We define the conditional mean operator
\begin{align}
    \M_{Z|X} &: \ltwo(Q_Z) \rightarrow \ltwo(Q_X) \notag\\
    \M_{Z|X} [g]_Z &= [\E{Q_{X, Z}}{g(Z)|X}(\cdot)]_X. \label{eq:buja:cmo}
\end{align}
The specific function $g \in [g]_Z$ chosen for the output conditional expectation is not relevant, as all choices will result in the same equivalence class. We define $\M_{X|Z}$ as the analogous operator for the conditional mean of $h(X)$ given $Z$ for $[h]_X \in \ltwo(Q_X)$.

\myparagraph{Spectral Representation}
In the case that $\M_{Z|X}$ is compact, the conditional mean operator admits a singular value decomposition, which will be instrumental in obtaining several important properties.

\begin{proposition}[Singular Value Decomposition of the Conditional Mean Operator]\label{prop:svd}
    Let $\M_{Z|X}: \ltwo(Q_Z) \rightarrow \ltwo(Q_X)$ be compact. There exists a countable orthonormal basis $\br{\alpha_j}_{j \in J}$ of $\ltwo(Q_X)$, a countable orthonormal basis $\br{\beta_k}_{k \in K}$ of $\ltwo(Q_Z)$, and a countable sequence of positive real numbers $\br{\sigma_i}_{i \in I}$ satisfying $\sigma_i \searrow 0$, $I \sse J \cap K$, and the following statements in addition:
    \begin{itemize}
        \item We may take $\sigma_1 = 1$, $\ones_\X \in \alpha_1$, and $\ones_\Z \in \beta_1$, where $\ones_\X$ is identically $1$ on $\X$ and $\ones_\Z$ is identically $1$ on $\Z$.
        \item For all $[g]_Z \in \ltwo(Q_Z)$ and $[h]_X \in \ltwo(Q_X)$, we have that
        \begin{align}
            \M_{Z|X} [g]_Z = \sum_{i \in I} \sigma_i \ip{[g]_Z, \beta_i}_{\ltwo(Q_Z)} \alpha_i \text{ and }  \M_{X|Z} [h]_X = \sum_{i \in I} \sigma_i \ip{[h]_X, \alpha_i}_{\ltwo(Q_X)} \beta_i.\label{eq:operators:svd_cond_mean}
        \end{align}
    \end{itemize}
\end{proposition}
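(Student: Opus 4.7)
The plan is to obtain the statement essentially as a specialization of the abstract compact-operator SVD from \Cref{thm:svd}, with the specific claims about the leading singular triple ($\sigma_1=1$, $\ones_\X \in \alpha_1$, $\ones_\Z \in \beta_1$) coming from a direct operator-norm computation for $\M_{Z|X}$. The only ingredients beyond \Cref{thm:svd} are (i) the separability of $\ltwo(Q_X)$ and $\ltwo(Q_Z)$, which was noted at the start of \Cref{sec:a:background:buja} and guarantees that the bases in \Cref{thm:svd} can be taken countable with $I,J,K \sse \bbN$, and (ii) the contractivity and normalization properties of conditional expectation.

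First I would verify that $\M_{Z|X}$ is a well-defined bounded operator from $\ltwo(Q_Z)$ to $\ltwo(Q_X)$ with $\norm{\M_{Z|X}}_{\op} \leq 1$, by applying the conditional Jensen inequality together with the tower property (\Cref{lem:tower}):
\begin{align*}
  \norm{\M_{Z|X}[g]_Z}_{\ltwo(Q_X)}^2
  = \int_\X \bigl(\E{Q_{X,Z}}{g(Z)|X}(\x)\bigr)^2 \d Q_X(\x)
  \le \int_\X \E{Q_{X,Z}}{g(Z)^2|X}(\x) \d Q_X(\x)
  = \norm{[g]_Z}_{\ltwo(Q_Z)}^2.
\end{align*}
Since $\M_{Z|X}$ is compact by assumption, \Cref{thm:svd} applied to $\M_{Z|X}$ yields orthonormal systems $\{\alpha_j\}_{j\in J} \sse \ltwo(Q_X)$ and $\{\beta_k\}_{k\in K} \sse \ltwo(Q_Z)$ and a decreasing non-negative sequence $\{\sigma_i\}_{i\in I}$ with $\sigma_i \searrow 0$, together with the representations in~\eqref{eq:operators:svd_cond_mean}. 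Separability of both Hilbert spaces allows $J$ and $K$ to be extended to countable orthonormal bases while preserving those representations. Identifying $\M_{X|Z}$ with the adjoint $\M_{Z|X}^*$ (which one can verify by writing out the $\ltwo(Q_{X,Z})$-pairing and applying the tower property) gives the second identity in~\eqref{eq:operators:svd_cond_mean}.

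It remains to pin down the leading singular triple. The key observation is that the definition of conditional expectation forces $\M_{Z|X}[\ones_\Z]_Z = [\ones_\X]_X$, and likewise $\M_{X|Z}[\ones_\X]_X = [\ones_\Z]_Z$. Both $[\ones_\Z]_Z$ and $[\ones_\X]_X$ are unit vectors in their respective $\ltwo$-spaces, so combined with the contractivity bound $\norm{\M_{Z|X}}_{\op}\le 1$ derived above, this forces $\sigma_1 = \norm{\M_{Z|X}}_{\op} = 1$ and identifies $([\ones_\Z]_Z, [\ones_\X]_X)$ as a singular pair at the top singular value. The main subtlety, which I expect to be the only nontrivial step, is the choice of basis when the singular value $1$ has multiplicity greater than one: because $\M_{Z|X}$ is compact, its non-zero singular values can only accumulate at $0$, so the eigenspace of $\M_{Z|X}^*\M_{Z|X}$ at eigenvalue $1$ is finite-dimensional and contains $[\ones_\Z]_Z$. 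Applying Gram--Schmidt within this finite-dimensional subspace starting from $[\ones_\Z]_Z$, and taking $\alpha_i = \sigma_i^{-1}\M_{Z|X}\beta_i$ for the corresponding right-singular vectors (so that $\alpha_1 = [\ones_\X]_X$), yields an SVD satisfying all the required normalizations.
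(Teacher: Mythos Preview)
Your proposal is correct and follows essentially the same route as the paper: Jensen's inequality plus the tower property give $\norm{\M_{Z|X}}_{\op}\le 1$, the constants saturate this bound to force $\sigma_1=1$ with singular pair $([\ones_\X]_X,[\ones_\Z]_Z)$, and the adjoint identity $\M_{Z|X}^*=\M_{X|Z}$ is verified via the tower property in the $\ltwo(Q_{X,Z})$ pairing. Your explicit treatment of the multiplicity-one subtlety (Gram--Schmidt within the finite-dimensional top eigenspace) is in fact more careful than the paper's argument, which tacitly assumes the top singular vector can be taken to be $[\ones_\Z]_Z$ without discussing multiplicity.
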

\begin{proof}
    Beyond the direct application of \Cref{thm:svd}, we must prove the statement regarding $(\sigma_1, \alpha_1, \beta_1)$ and that $\M_{Z|X}^* = \M_{X|Z}$ (which relates~\eqref{eq:operators:svd} to~\eqref{eq:operators:svd_cond_mean}) to achieve the desired result. For the first, we appeal to the variational representation of the first singular value $\sigma_1$ \citep[Section IV.1, Eq. (2)]{gohberg1990classes}, which states that
    \begin{align}
        \sigma_1 = \sup\br{\norm{\M_{Z|X} g}_{\ltwo(Q_X)}: g \in \ltwo(Q_Z), \norm{g}_{\ltwo(Q_Z)} = 1}.\label{eq:cond_mean:max_singular_value}
    \end{align}
    We will show that $\beta_1 = [\ones_\Z]_Z$ achieves the supremum. Then, it will hold that $\sigma_1 = 1$ and $\alpha_1 = \M_{Z|X} \beta_1 = [\ones_\X]_X$, because any version of the conditional expectation $\E{Q_{X, Z}}{1|X}$ is $Q_X$-almost surely equal to $1$. Consider any $[g]_Z \in \ltwo(Q_Z)$ satisfying $ \norm{g}_{\ltwo(Q_Z)} = 1$. Then, we apply Jensen's inequality and the tower property (\Cref{lem:tower}) to achieve
    \begin{align*}
        \norm{\M_{Z|X} [g]_Z}_{\ltwo(Q_X)}^2 &= \int_\X \p{\E{Q_{X, Z}}{g(Z)|X}(\x)}^2 \d Q_X(\x)\\ 
        &\leq \int_\X \E{Q_{X, Z}}{g^2(Z)|X}(\x) \d Q_X(\x)\\ 
        &= \E{Q_Z}{g^2(Z)} = \norm{g}_{\ltwo(Q_Z)}^2 = 1.
    \end{align*}
    Setting $g(\z)  = \ones_{\Z}(\z) \equiv 1$ achieves the upper bound, hence also achieving the supremum in~\eqref{eq:cond_mean:max_singular_value}. Next, to prove that $\M_{Z|X}^* = \M_{X|Z}$, we similarly consider $[h]_X \in \ltwo(Q_X)$ and write
    \begin{align*}
        \ip{[h]_X, \M_{Z|X} [g]_Z}_{\ltwo(Q_X)} &= \E{Q_{X}}{h(X) \E{Q_{X, Z}}{g(Z)|X}}\\
        &= \E{Q_{X, Z}}{h(X) g(Z)}\\
        &= \E{Q_{Z}}{\E{Q_{X, Z}}{h(X)|Z} g(Z)}\\
        &= \ip{\M_{X|Z} [h]_X,  [g]_Z}_{\ltwo(Q_Z)},
    \end{align*}
    which satisfies the adjoint relationship and completes the proof.
\end{proof}

\myparagraph{Lancaster Decomposition}
In the remaining proofs of this section, we do not differentiate an equivalence class in an $\ltwo$-space with its component functions, as the distinction will be clear from context. First, using the orthonormal bases defined in \Cref{prop:svd}, we may form a convenient orthonormal basis of $\ltwo(Q_X \otimes Q_Z)$.
\begin{lemma}\label{lem:bivariate:onb}
    The collection $\br{\alpha_j \beta_k}_{j \in J, k \in K}$ from \Cref{prop:svd}, where $\br{\alpha_j}_{j \in J}$ is a countable orthonormal basis of $\ltwo(Q_X)$ and $\br{\beta_k}_{k \in K}$ is a countable orthonormal basis of $\ltwo(Q_Z)$, forms an orthonormal basis of $\ltwo(Q_X \otimes Q_Z)$.
\end{lemma}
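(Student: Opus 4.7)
The plan is to verify the two defining properties of an orthonormal basis from \Cref{def:orthonormal_basis}: orthonormality of the system, and the ``total'' property that the only vector orthogonal to all of the $\alpha_j \beta_k$ is zero.

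First I would check orthonormality. For $(j, k), (j', k') \in J \times K$, the product $\alpha_j \beta_k \in \ltwo(Q_X \otimes Q_Z)$ by Tonelli's theorem and the square-integrability of the factors. A direct application of Fubini's theorem gives
\begin{align*}
    \ip{\alpha_j \beta_k, \alpha_{j'} \beta_{k'}}_{\ltwo(Q_X \otimes Q_Z)} = \ip{\alpha_j, \alpha_{j'}}_{\ltwo(Q_X)} \ip{\beta_k, \beta_{k'}}_{\ltwo(Q_Z)} = \delta_{jj'}\delta_{kk'},
\end{align*}
which uses that $\br{\alpha_j}_{j \in J}$ and $\br{\beta_k}_{k \in K}$ are orthonormal systems.

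Second, I would verify the total property by taking $f \in \ltwo(Q_X \otimes Q_Z)$ with $\ip{f, \alpha_j \beta_k}_{\ltwo(Q_X \otimes Q_Z)} = 0$ for every $(j, k) \in J \times K$, and showing that $f$ is the zero equivalence class. For each $k \in K$, define $h_k(\x) \defeq \int_\Z f(\x, \z) \beta_k(\z) \d Q_Z(\z)$, which exists for $Q_X$-almost every $\x$ and lies in $\ltwo(Q_X)$ by Cauchy-Schwarz (applied in $\z$) together with Fubini's theorem: $\norm{h_k}_{\ltwo(Q_X)}^2 \leq \norm{f}_{\ltwo(Q_X \otimes Q_Z)}^2$. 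The orthogonality hypothesis translates to $\ip{h_k, \alpha_j}_{\ltwo(Q_X)} = 0$ for every $j \in J$, and since $\br{\alpha_j}_{j \in J}$ is an orthonormal basis of $\ltwo(Q_X)$, we conclude $h_k = 0$ in $\ltwo(Q_X)$, i.e.~there exists a $Q_X$-null set $N_k$ outside of which $h_k(\x) = 0$.

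Finally, I would exploit the countability of $K$ (guaranteed by separability of $\ltwo(Q_Z)$) to take $N = \bigcup_{k \in K} N_k$, which is still a $Q_X$-null set. For $\x \notin N$, the function $\z \mapsto f(\x, \z)$ lies in $\ltwo(Q_Z)$ (for $Q_X$-almost every $\x$, again by Fubini) and is orthogonal to every element of the orthonormal basis $\br{\beta_k}_{k \in K}$, so it must be the zero element of $\ltwo(Q_Z)$. Applying Fubini once more yields $\norm{f}_{\ltwo(Q_X \otimes Q_Z)}^2 = \int_{\X \setminus N} \norm{f(\x, \cdot)}_{\ltwo(Q_Z)}^2 \d Q_X(\x) = 0$, so $f$ is the zero equivalence class as required. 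The main bookkeeping obstacle is the careful handling of null sets in the two Fubini applications; countability of $J$ and $K$ (inherited from separability of the component spaces) is essential so that the aggregated exceptional set $N$ remains $Q_X$-null.
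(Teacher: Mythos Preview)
Your argument is correct and essentially identical to the paper's: both verify orthonormality via Fubini and then establish totality by integrating out one variable, using the ONB property in the remaining variable, aggregating the resulting null sets via countability, and applying the ONB property in the other variable. The only difference is cosmetic---the paper integrates out $\x$ first (defining $g_j(\z)$) whereas you integrate out $\z$ first (defining $h_k(\x)$).
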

\begin{proof}
    We first show that $\br{\alpha_j \beta_k}_{j \in J, k \in K}$ is an orthonormal system. For any indices $i, i' \in I$ and $j, j' \in J$, it holds via independence that
    \begin{align*}
        \ip{\alpha_j \beta_j, \alpha_{j'} \beta_{k'}}_{\ltwo(Q_X \otimes Q_Z)} &= \ip{\alpha_j, \alpha_{j'}}_{\ltwo(Q_X)} \ip{\beta_k, \beta_{k'}}_{\ltwo(Q_Z)}\\
        &= \begin{cases}
            1 & \text{ if } j = j' \text{ and } k = k'\\
            0 & \text{ otherwise}
        \end{cases}.
    \end{align*}
    To establish that this orthonormal system is now complete, we use the first equivalent condition in \Cref{def:orthonormal_basis}. Consider $s \in \ltwo(Q_X \otimes Q_Z)$ such that $\ip{s, \alpha_j \beta_k}_{\ltwo(Q_X \otimes Q_Z)} = 0$ for all $j \in J$ and $k \in K$. Then, via Fubini's theorem \citep[Corollary 14.9]{schilling2017measures}, it holds that
    \begin{align*}
        0 &= \int_\Z \underbrace{\p{\int_\X  s(\x, \z) \alpha_j(\x)  \d Q_X(\x)}}_{g_j(\z)} \beta_k(\z) \d Q_Z(\z) = \ip{g_j, \beta_k}_{\ltwo(Q_Z)}.
    \end{align*}
    Because $\br{\beta_k}_{k \in K}$ forms an ONB, it holds that the equivalence class of $g_j$ is the zero element in $\ltwo(Q_Z)$, or in other words, $g_j(\z) = 0$ for $Q_Z$-almost all $\z \in \Z$. Due to the fact that $J$ is countable, we have that 
    \begin{align*}
        \Z_1 := \bigcap_{j \in J} \br{\z \in \Z: g_j(\z) = 0} = \br{\z \in \Z: g_j(\z) = 0 \ \forall j \in J}
    \end{align*}
    is a probability one set under $Q_Z$. Because $\br{\alpha_j}_{j \in J}$ is an ONB of $\ltwo(Q_X)$, it also holds that
    \begin{align*}
        \br{\z \in \Z: g_j(\z) = 0 \ \forall j \in J} \sse \Z_1' = \br{\z \in \Z: s(\x, \z) = 0 \text{ for $Q_X$-almost all } \x \in \X},
    \end{align*}
    indicating that the right-hand side is also a probability one set under $Q_Z$.
    Then, applying again the iterated integral,
    \begin{align*}
        \int_{\X \times \Z} s^2(\x, \z) \d (Q_X \otimes Q_Z)(\x, \z) &= \int_{\Z} \p{\int_{\X} s^2(\x, \z) \d Q_X(\x)} \d Q_Z(\z)\\
        &= \int_{\Z_1'} \p{\int_{\X} s^2(\x, \z) \d Q_X(\x)} \d Q_Z(\z)\\
        &= 0,
    \end{align*}
    indicating the $s(\x, \z) = 0$ for $(Q_X \otimes Q_Z)$-almost all $(\x, \z) \in \X \times \Z$. This completes the proof.
\end{proof}

This basis allows us to relate the conditional mean operator to a particular Radon-Nikodym derivative. As a result, both can be used to measure the dependence between $X$ and $Z$ \citep{lancaster1958thestructure}.

\begin{proposition}[Lancaster Decomposition]\label{prop:lancaster}
    Assume that $Q_{X, Z} \ll Q_X \otimes Q_Z$, in which case there exists a Radon-Nikodym derivative $\Rsans = \frac{\d Q_{X, Z}}{\d (Q_X \otimes Q_Z)}$. Then, the following identity holds:
    \begin{align}
        \Rsans = \sum_{i \in I} \sigma_i \alpha_i \beta_i.\label{eq:bivariate:lancaster}
    \end{align}
    In particular, the operator $\M_{Z|X}$ is Hilbert-Schmidt if and only if $\Rsans \in \ltwo(Q_X \otimes Q_Z)$, with the equality
    \begin{align*}
        \norm{\M_{Z|X}}_{\HS(\ltwo(Q_Z), \ltwo(Q_X))}^2 = \norm{\Rsans}_{\ltwo(Q_X \otimes Q_Z)}^2 = \sum_{i \in I} \sigma_i^2.
    \end{align*}
\end{proposition}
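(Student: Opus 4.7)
The plan is to expand $\Rsans$ in the orthonormal basis $\br{\alpha_j\beta_k}_{j \in J, k \in K}$ of $\ltwo(Q_X \otimes Q_Z)$ furnished by \Cref{lem:bivariate:onb}, compute the Fourier coefficients via the singular value decomposition of $\M_{Z|X}$, and then translate Parseval's identity on the function side into a Hilbert--Schmidt norm identity on the operator side.

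The central computation is the evaluation of $\ip{\Rsans, \alpha_j \beta_k}_{\ltwo(Q_X \otimes Q_Z)}$ for each pair $(j, k) \in J \times K$. Using the defining property of the Radon--Nikodym derivative $\d Q_{X, Z} = \Rsans \, \d(Q_X \otimes Q_Z)$, followed by the tower property (\Cref{lem:tower}) and the definition~\eqref{eq:buja:cmo} of $\M_{Z|X}$, I rewrite this inner product as
\begin{align*}
\int_{\X \times \Z} \Rsans(\x,\z)\, \alpha_j(\x)\beta_k(\z)\, \d(Q_X \otimes Q_Z) \;=\; \E{Q_{X,Z}}{\alpha_j(X)\beta_k(Z)} \;=\; \ip{\alpha_j, \M_{Z|X}\beta_k}_{\ltwo(Q_X)}.
\end{align*}
Invoking \Cref{prop:svd}, $\M_{Z|X}\beta_k = \sigma_k \alpha_k$ when $k \in I$ and $\M_{Z|X}\beta_k = 0$ when $k \in K \setminus I$, so by orthonormality of $\br{\alpha_j}$ the coefficient equals $\sigma_i$ when $j = k = i \in I$ and vanishes otherwise.

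The easy direction of the Hilbert--Schmidt equivalence is now immediate. If $\Rsans \in \ltwo(Q_X \otimes Q_Z)$, then by the expansion condition in \Cref{def:orthonormal_basis}, $\Rsans = \sum_{i \in I} \sigma_i \alpha_i \beta_i$ in $\ltwo$ and Parseval gives $\norm{\Rsans}_{\ltwo(Q_X \otimes Q_Z)}^2 = \sum_{i \in I} \sigma_i^2$. On the operator side, applying~\eqref{eq:compact:hsnorm} with the ONB $\br{\beta_k}_{k \in K}$ of $\ltwo(Q_Z)$ yields $\norm{\M_{Z|X}}_{\HS}^2 = \sum_{k \in K} \norm{\M_{Z|X}\beta_k}_{\ltwo(Q_X)}^2 = \sum_{i \in I} \sigma_i^2$, so $\M_{Z|X}$ is Hilbert--Schmidt and both norms agree.

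The main obstacle is the converse: if $\M_{Z|X}$ is Hilbert--Schmidt, I must recover that $\Rsans \in \ltwo(Q_X \otimes Q_Z)$ and is equal to the claimed series. The plan is to define the candidate $\tilde{\Rsans} := \sum_{i \in I} \sigma_i \alpha_i \beta_i$, which converges in $\ltwo(Q_X \otimes Q_Z)$ by the same completeness argument used in \Cref{lem:hilbert_space} since $\sum_i \sigma_i^2 < \infty$, and then to identify $\tilde{\Rsans}$ with $\Rsans$ almost everywhere under $Q_X \otimes Q_Z$. For this I would verify $\int_{A \times B} \tilde{\Rsans}\, \d(Q_X \otimes Q_Z) = Q_{X,Z}(A \times B)$ on every measurable rectangle $A \times B \sse \X \times \Z$, by repeating the coefficient calculation above with $\alpha_j, \beta_k$ replaced by $\ind_A, \ind_B$ and using dominated convergence on the partial sums of $\tilde{\Rsans}$ to interchange the integral and the series. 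Uniqueness of the Radon--Nikodym derivative then extends this rectangle identity to the full product $\sigma$-algebra via Dynkin's $\pi$--$\lambda$ theorem, yielding $\tilde{\Rsans} = \Rsans$ a.e.\ and hence~\eqref{eq:bivariate:lancaster}. The delicate point throughout is tracking convergence modes: the ONB expansion is an $\ltwo$ statement, so the rectangle identity must be established using dominated convergence rather than a naive termwise interchange.
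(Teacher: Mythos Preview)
Your approach is essentially the paper's: both compute the Fourier coefficients $\ip{\Rsans, \alpha_j \beta_k}_{\ltwo(Q_X \otimes Q_Z)}$ via the Radon--Nikodym property, the tower property, and the SVD of $\M_{Z|X}$, obtaining $\sigma_i$ when $j=k=i\in I$ and zero otherwise, and both read off the norm identity from orthonormality. Your handling of the converse direction (Hilbert--Schmidt $\Rightarrow \Rsans \in \ltwo$), via defining $\tilde{\Rsans}$, checking the rectangle identity, and extending by a $\pi$--$\lambda$ argument, is actually more thorough than the paper's proof, which just says ``square-summability of $\br{\sigma_i}_{i \in I}$ implies finiteness and equality'' without spelling out why the series equals $\Rsans$ almost everywhere.

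One correction: dominated convergence is the wrong tool for the rectangle identity, and as stated that step would not go through. The partial sums $\tilde{\Rsans}_n = \sum_{i \leq n} \sigma_i \alpha_i \beta_i$ converge to $\tilde{\Rsans}$ in $\ltwo(Q_X \otimes Q_Z)$, not pointwise with an integrable majorant. What you actually need is continuity of the inner product: since $\ind_A \ind_B \in \ltwo(Q_X \otimes Q_Z)$, Cauchy--Schwarz gives $\ip{\tilde{\Rsans}_n, \ind_A \ind_B} \to \ip{\tilde{\Rsans}, \ind_A \ind_B}$, and termwise this is $\sum_i \sigma_i \ip{\alpha_i, \ind_A}_{\ltwo(Q_X)} \ip{\beta_i, \ind_B}_{\ltwo(Q_Z)}$, which matches $\ip{\ind_A, \M_{Z|X} \ind_B}_{\ltwo(Q_X)} = Q_{X,Z}(A \times B)$ directly from the SVD expansion in \Cref{prop:svd}. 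This is a naming slip rather than a structural gap; your own closing remark about ``tracking convergence modes'' shows you anticipated it.
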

\begin{proof}
    Using \Cref{lem:bivariate:onb}, we represent $\Rsans$ on the ONB $\br{\alpha_j \beta_k}_{j \in J, k \in K}$. For any $j \in J$ and $k \in K$, use the definition of the Radon-Nikodym derivative \citep[Theorem 20.2]{schilling2017measures} to write
    \begin{align*}
        \ip{\Rsans, \alpha_j \beta_k}_{\ltwo(Q_X \otimes Q_Z)} &= \E{Q_X \otimes Q_Z}{\Rsans(X, Z) \alpha_j(X) \beta_k(Z)}\\
        &= \E{Q_{X, Z}}{\alpha_j(X) \beta_k(Z)}\\
        &= \E{Q_{X}}{\alpha_j(X) \E{Q_{X, Z}}{\beta_k(Z)|X}}\\
        &= \ip{\alpha_j, \M_{Z|X}\beta_k}_{\ltwo(Q_X)}\\
        &= \begin{cases}
            \sigma_i &\text{ if } j = k = i \in I\\
            0 & \text{ otherwise}
        \end{cases},
    \end{align*}
    where we recall $I$ as the set indexing the non-zero singular values of $\M_{Z|X}$ (see \Cref{prop:svd}). This proves~\eqref{eq:bivariate:lancaster}, the first claim. For the second claim, we use the orthonormality of $\br{\alpha_j \beta_k}_{j \in J, k \in K}$ in $\ltwo(Q_X \otimes Q_Z)$, so that
    \begin{align*}
        \norm{\Rsans}_{\ltwo(Q_X \otimes Q_Z)}^2 = \sum_{i \in I} \sigma_i^2 = \norm{\M_{Z|X}}_{\HS(\ltwo(Q_Z), \ltwo(Q_X))},
    \end{align*}
    so that square-summability of $\br{\sigma_i}_{i \in I}$ implies finiteness and equality of the left-hand and right-hand sides above.
\end{proof}

The formulas in \Cref{sec:framework} simply equated $I = \mathbb{N} = \br{1, 2, \ldots}$ for ease of presentation. For completeness, the $\varepsilon_d$ term in~\eqref{eq:lancaster} represents the tail of~\eqref{eq:bivariate:lancaster}, i.e.,
\begin{align*}
    \varepsilon_d = \sum_{i=d+1}^\infty \sigma_i \alpha_i \beta_i,
\end{align*}
which vanishes as $d \rightarrow \infty$ because $\sigma_i \rightarrow 0$ and $\alpha_i \beta_i$ is unit-norm in $\ltwo(Q_X \otimes Q_Z)$.

The Radon-Nikodym derivative $\Rsans = \frac{\d Q_{X, Z}}{\d (Q_X \otimes Q_Z)}$ is also useful for converting conditional expectation computations into marginal expectation computations. In this sense, we may say that $\Rsans$ acts as a kernel for an integral operator representation of $\M_{Z|X}$, where the integral is taken with respect to $Q_Z$. The following identity is referenced by \citet[Section 3]{buja1990remarks} and \citet[Lemma 1, Eq.~(14)]{dytso2023meta}. We provide a self-contained proof below.
\begin{lemma}\label{lem:info_density}
    Adopt the setting of \Cref{prop:lancaster}. Then, for all $g \in \ltwo(Q_Z)$ and $h \in \ltwo(Q_X)$, it holds that
    \begin{align*}
        \E{Q_{X, Z}}{g(Z)|X}(\x) &= \E{Q_{Z}}{g(Z) \Rsans(\x, Z)} \text{ for $Q_X$-almost all } \x \in \X,\\
        \E{Q_{X, Z}}{h(X)|Z}(\z) &= \E{Q_{X}}{h(X) \Rsans(X, \z)} \text{ for $Q_Z$-almost all } \z \in \Z.
    \end{align*}
\end{lemma}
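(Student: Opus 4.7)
The goal is to show that the function $\phi: \x \mapsto \E{Q_Z}{g(Z)\Rsans(\x, Z)}$ serves as a version of the conditional expectation function $\E{Q_{X, Z}}{g(Z)|X}$, and symmetrically for $h$. I will verify the two defining properties of conditional expectation from \Cref{def:conditional_expectation}: that $\phi \in \ltwo(Q_X)$, and that $\phi$ and $\E{Q_{X, Z}}{g(Z)|X}$ produce the same integrals over every $\sigma(X)$-measurable set. Uniqueness of the orthogonal projection onto $\Lsans(\sigma(X))$ will then yield $Q_X$-a.e. equality.

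\textbf{Step 1: Well-posedness via Fubini.} The Radon-Nikodym derivative $\Rsans$ is jointly $\calB(\X) \otimes \calB(\Z)$-measurable, so $(\x, \z) \mapsto g(\z)\Rsans(\x, \z)$ is measurable. Since $Q_Z$ is a probability measure, $g \in \ltwo(Q_Z) \subseteq \mathbf{L}^1(Q_Z)$, so applying Tonelli's theorem together with the defining property of $\Rsans$ gives
\begin{align*}
    \int_\X \int_\Z \abs{g(\z)} \Rsans(\x, \z) \, \d Q_Z(\z) \, \d Q_X(\x) = \int_{\X \times \Z} \abs{g(\z)} \, \d Q_{X, Z}(\x, \z) = \E{Q_Z}{\abs{g(Z)}} < +\infty.
\end{align*}
Hence the inner integral $\int_\Z g(\z)\Rsans(\x, \z) \, \d Q_Z(\z)$ is absolutely convergent for $Q_X$-almost every $\x$, so $\phi$ is well-defined $Q_X$-a.e. and measurable on its domain of definition.

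\textbf{Step 2: Square-integrability of $\phi$.} Conditional Jensen's inequality, in the form of Cauchy--Schwarz with respect to the probability measure $\Rsans(\x, \cdot) \, \d Q_Z$ on $\Z$ (which integrates to $\E{Q_Z}{\Rsans(\x, Z)} = 1$ for $Q_X$-a.e.~$\x$ by Tonelli and the Radon-Nikodym property), yields
\begin{align*}
    \phi^2(\x) = \p{\int_\Z g(\z) \Rsans(\x, \z) \, \d Q_Z(\z)}^2 \leq \int_\Z g^2(\z) \Rsans(\x, \z) \, \d Q_Z(\z).
\end{align*}
Integrating in $\x$ and applying Tonelli with the Radon-Nikodym property once more, $\int_\X \phi^2(\x) \, \d Q_X(\x) \leq \int g^2(\z) \, \d Q_Z(\z) = \norm{g}_{\ltwo(Q_Z)}^2 < +\infty$, so $[\phi]_X \in \ltwo(Q_X)$.

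\textbf{Step 3: Matching integrals against indicators.} Fix $A \in \calB(\X)$ and apply Fubini (justified by Step 1 applied to $\ind_A(\x) g(\z) \Rsans(\x, \z)$) together with the Radon-Nikodym property of $\Rsans$:
\begin{align*}
    \int_A \phi(\x) \, \d Q_X(\x) = \int_{\X \times \Z} \ind_A(\x) g(\z) \Rsans(\x, \z) \, \d(Q_X \otimes Q_Z)(\x, \z) = \E{Q_{X, Z}}{\ind_A(X) g(Z)}.
\end{align*}
By the tower property (\Cref{lem:tower}), this last quantity equals $\int_A \E{Q_{X, Z}}{g(Z)|X}(\x) \, \d Q_X(\x)$. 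Since this holds for every $A \in \calB(\X)$, and both integrands are elements of $\ltwo(Q_X)$, uniqueness of the orthogonal projection in \Cref{def:conditional_expectation} gives $\phi(\x) = \E{Q_{X, Z}}{g(Z)|X}(\x)$ for $Q_X$-almost every $\x$. The second identity follows by an identical argument with $(X, Z)$ swapped, using the same $\Rsans$ as the density of $Q_{X, Z}$ against $Q_X \otimes Q_Z$.

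\textbf{Main obstacle.} The only non-trivial issue is the careful handling of measurability and almost-sure well-definedness when passing between the product integral over $Q_X \otimes Q_Z$ and the iterated integrals; once $g \in \ltwo(Q_Z) \subseteq \mathbf{L}^1(Q_Z)$ is combined with the Radon-Nikodym property to verify absolute integrability, Fubini's theorem handles the rest, and the result reduces to the uniqueness of conditional expectation.
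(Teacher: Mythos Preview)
Your proof is correct and follows essentially the same approach as the paper: both verify the defining integral identity $\int_A \phi \, \d Q_X = \E{Q_{X,Z}}{\ind_A(X) g(Z)}$ for all $A \in \calB(\X)$ via the Radon--Nikodym theorem and Fubini, then invoke uniqueness of conditional expectation. Your version is more careful than the paper's in explicitly checking the well-definedness of $\phi$ and its square-integrability (Steps~1--2), which the paper leaves implicit.
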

\begin{proof}
    We prove the first identity, whereas the second follows by a symmetric argument.
    To confirm that the two functions are equal almost surely, it is sufficient to prove that for any measurable set $A \in \sigma(X)$ (the $\sigma$-algebra generated by $X$), the relation 
    \begin{align}
        \int_A \E{Q_{X, Z}}{g(Z)|X}(\x) \d Q_X(\x) = \int_A \E{Q_{Z}}{g(Z) \Rsans(\x, Z)} \d Q_X(\x). \label{eq:zsp:almost_surely_equal}
    \end{align}
    By the definition of conditional expectation, we have that
    \begin{align*}
        \int_A \E{Q_{X, Z}}{g(Z)|X}(\x) \d Q_X(\x) &= \int_{\X} \E{Q_{X, Z}}{g(Z)|X}(\x) \ind_{A}(\x) \d Q_X(\x)\\
        &= \E{Q_{X, Z}}{g(Z)\ind_{A}(X)}\\
        &= \E{\red{Q_X \otimes Q_Z}}{g(Z)\ind_{A}(X)\red{\Rsans(X, Z)}},
    \end{align*}
    where the last step follows from the Radon-Nikodym theorem \citep[Theorem 20.2]{schilling2017measures}. Next, we compute the expectation, taken under the product measure, using Fubini's theorem \citep[Corollary 14.9]{schilling2017measures}. That is,
    \begin{align*}
        \int_A \E{Q_{X, Z}}{g(Z)|X}(\x) \d Q_X(\x) &= \E{Q_X \otimes Q_Z}{g(Z)\ind_{A}(X)\Rsans(X, Z)}\\
        &= \int_{A} \p{\int_{\Z} g(\z) \Rsans(\x, \z)\d Q_Z(\z)} \d Q_X(\x)\\
        &= \int_{A} \E{Q_Z}{g(Z) \Rsans(\x, Z)} \d Q_X(\x).
    \end{align*}
    This achieves~\eqref{eq:zsp:almost_surely_equal} and completes the proof.
\end{proof}

While \Cref{lem:info_density} applies for a general function $g$, the function $g_\rho$ from~\eqref{eq:g_rho} is itself a conditional mean. This can be leveraged to produce yet another identity, which acts as a technical lemma for the proof of \Cref{thm:complexity_rn_derivative}.
\begin{lemma}\label{lem:info_density2}
    Recall that $g_\rho(\z) := \E{\rho_{Y, Z}}{\fstar(Y)|Z}(\z)$ for $\fstar \in \ltwo(P_Y)$. Assume in addition that $\fstar \in \ltwo(\rho_Y)$. Then,
    \begin{align*}
        \eta_\rho(\x) = \E{\rho_{Y, Z}}{\fstar(Y) \Rsans(\x, Z)} + \int_{\Z} g_\rho(\z)\Rsans(\x, \z) \p{\d Q_Z(\z) - \d \rho_Z(\z)}.
    \end{align*}
    for $Q_X$ almost all $\x \in \X$.
\end{lemma}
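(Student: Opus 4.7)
The plan is to start from the definition $\eta_\rho(\x) = \E{Q_{X, Z}}{g_\rho(Z)|X}(\x)$, convert the conditional expectation into an integral against $Q_Z$ using the information density, and then split that integral into a $\rho_Z$-piece plus a signed-measure remainder. The key algebraic identity to establish is that the $\rho_Z$-piece can be rewritten as a joint expectation under $\rho_{Y,Z}$ by invoking the tower property in reverse.

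Concretely, the first step applies Lemma~\ref{lem:info_density} (which is available since we are under the assumptions of \Cref{prop:lancaster}) to obtain
\begin{align*}
    \eta_\rho(\x) = \E{Q_Z}{g_\rho(Z)\,\Rsans(\x, Z)}
\end{align*}
for $Q_X$-almost every $\x \in \X$. The second step is purely additive: write
\begin{align*}
    \E{Q_Z}{g_\rho(Z)\,\Rsans(\x, Z)} = \E{\rho_Z}{g_\rho(Z)\,\Rsans(\x, Z)} + \int_\Z g_\rho(\z)\Rsans(\x,\z)\bigl(\d Q_Z(\z) - \d \rho_Z(\z)\bigr),
\end{align*}
which is a valid identity of signed integrals as soon as both marginal integrals are finite. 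The third step handles the $\rho_Z$-term by the tower property together with the fact that $\Rsans(\x, Z)$ is $\sigma(Z)$-measurable:
\begin{align*}
    \E{\rho_{Y, Z}}{\fstar(Y)\,\Rsans(\x, Z)} = \E{\rho_Z}{\Rsans(\x, Z)\,\E{\rho_{Y, Z}}{\fstar(Y)\mid Z}} = \E{\rho_Z}{\Rsans(\x, Z)\,g_\rho(Z)},
\end{align*}
using the definition $g_\rho(\z) = \E{\rho_{Y, Z}}{\fstar(Y)|Z}(\z)$. Combining the three steps delivers the claim.

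The only genuinely delicate point is justifying integrability throughout, so that the addition/subtraction and the pull-out of $\Rsans(\x, Z)$ in the conditional expectation are both legitimate. The assumption $\fstar \in \ltwo(\rho_Y) \cap \ltwo(P_Y)$ guarantees that $g_\rho \in \ltwo(\rho_Z)$ via Jensen's inequality (mirroring the computation in the proof of \Cref{prop:svd}), while the $Q_Z$-side already appears in the statement of Lemma~\ref{lem:info_density}. The remaining integral against $\d Q_Z - \d \rho_Z$ is then well-defined as a difference of two finite integrals, and the ``$Q_X$-almost all $\x$'' qualifier is inherited directly from Lemma~\ref{lem:info_density}. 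I expect no major obstacle beyond carefully tracking these measurability/integrability hypotheses; the argument is a two-line manipulation once Lemma~\ref{lem:info_density} is in hand.
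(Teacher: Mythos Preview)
Your proposal is correct and follows essentially the same approach as the paper: apply \Cref{lem:info_density}, add and subtract the $\rho_Z$-integral, then use $\sigma(Z)$-measurability of $\Rsans(\x,Z)$ together with the tower property to rewrite $\E{\rho_Z}{g_\rho(Z)\Rsans(\x,Z)}$ as $\E{\rho_{Y,Z}}{\fstar(Y)\Rsans(\x,Z)}$. The paper's proof proceeds identically, invoking \citep[Theorem 27.11 (vii)]{schilling2017measures} for the pull-in step and \Cref{lem:tower} for the tower property.
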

\begin{proof}
    By \Cref{lem:info_density}, we already have that for $Q_X$-almost all $\x \in \X$, the identity
    \begin{align*}
        \eta_\rho(\x) &= \E{Q_Z}{g_\rho(Z)\Rsans(\x, Z)} \\
        &= \E{\rho_Z}{g_\rho(Z)\Rsans(\x, Z)}  + \E{Q_Z}{g_\rho(Z)\Rsans(\x, Z)} - \E{\rho_Z}{g_\rho(Z)\Rsans(\x, Z)} \\
        &= \E{\rho_Z}{g_\rho(Z)\Rsans(\x, Z)}  + \int_{\Z} g_\rho(\z)\Rsans(\x, \z) \p{\d Q_Z(\z) - \d \rho_Z(\z)}.
    \end{align*}
    Now, unpacking the first term on the right-hand side above, we recognize that for fixed $\x \in \X$, the random variable $\Rsans(\x, Z)$ is $\sigma(Z)$-measurable, so via the properties of conditional expectation \citep[Theorem 27.11 (vii)]{schilling2017measures} in $\lone(\rho_Z)$, we may write
    \begin{align*}
        \E{\rho_Z}{g_\rho(Z)\Rsans(\x, Z)} &= \E{\rho_Z}{\E{\rho_{Y, Z}}{\fstar(Y)|Z}\Rsans(\x, Z)} = \E{\rho_Z}{\E{\rho_{Y, Z}}{\fstar(Y) \Rsans(\x, Z)|Z}}.
    \end{align*}
    Using the expression above and the tower property of the conditional expectation (\Cref{lem:tower}), we write
    \begin{align*}
        \E{\rho_Z}{g_\rho(Z)\Rsans(\x, Z)} &= \E{\rho_Z}{\E{\rho_{Y, Z}}{\fstar(Y) \Rsans(\x, Z)|Z}} = \E{\rho_{Y, Z}}{\fstar(Y) \Rsans(\x, Z)},
    \end{align*}
    completing the proof.
\end{proof}

\myparagraph{Mean Square Contingency}
Both singular value decomposition from \Cref{prop:svd} and the Radon-Nikodym derivative $\Rsans$ from \Cref{prop:lancaster} can be used to calculate a dependence measure between $X$ and $Z$ \citep{buja1990remarks}. This dependence measure arises in nonlinear canonical correlation analysis and alternating conditional expectations~\cite{breiman1985estimating,bickel1993efficient}. 

\begin{definition}[Mean Square Contingency]\label{def:msc}
    Assume that $\M_{Z|X}$ is Hilbert-Schmidt. Assume that $I = \mathbb{N}$ (\Cref{prop:svd}), where we may append zeros if $I$ is finite. Recalling that $\sigma_1 = 1$, define the \emph{mean square contingency} $I(X; Z)$ as any of the expressions
    \begin{align*}
        I(X; Z) := \norm{\M_{Z|X}}_{\HS(\ltwo(Q_Z), \ltwo(Q_X))}^2 - 1 = \sum_{i=2}^\infty \sigma_i^2.
    \end{align*}
\end{definition}
Our definition of the mean square contingency is in fact the square of the quantity that was originally introduced as such by \citet{renyi1959onmeasures}, which is shown below.
\begin{definition}\label{def:msc2}
    Assume that $Q_{X,Z} \ll Q_X \otimes Q_Z$, so that $\Rsans$ exists, and that $\Rsans \in \ltwo(Q_X \otimes Q_Z)$. Define the \emph{R\'enyi mean squared contingency} \citet[Eq. (13)]{renyi1959onmeasures} as
    \begin{align*}
        I_{\text{R\'enyi}}(X; Z) :=  \norm{\Rsans - 1}_{\ltwo(Q_X \otimes Q_Z)} = \sqrt{\int_{\X \times \Z} \p{\Rsans(\x, \z) - 1}^2 \d (Q_X \otimes Q_Z)(\x, \z)}.
    \end{align*}
    If $Q_{X,Z}$ is absolutely continuous with respect to a measure $\nu$ on $\X \times \Z$, with joint density $q_{X, Z}$ and marginal densities $(q_X, q_Z)$, we have that
    \begin{align*}
        I_{\text{R\'enyi}}(X; Z) = \sqrt{\int_{\X \times \Z} \p{\Rsans(\x, \z) - 1}^2 q_X(\x) q_Z(\z) \d \nu(\x, \z)}.
    \end{align*}
    Written in the form above, $I_{\text{R\'enyi}}(X; Z)$ may also be called the \emph{$\chi^2$-functional} \citep{buja1990remarks}.
\end{definition}
We apply $I(X; Z) = I_{\text{R\'enyi}}(X; Z)^{\red{2}}$ to achieve the sequence of identities following~\eqref{eq:msc}.
Using the singular decay computations from \Cref{lem:singular_decay} with $c = C = 1$, we have that if $\sigma_i = i^{-\gamma}$, then
\begin{align}
    \frac{1}{2\gamma - 1} - 1 \leq I(X; Z) \leq \frac{1}{2\gamma - 1} \iff \frac{1}{2}\frac{I(X; Z) + 2}{I(X; Z) + 1}\leq \gamma \leq \frac{1}{2}\frac{I(X; Z) + 1}{I(X; Z)}.\label{eq:background:buja:decay}
\end{align}
For simplicity, we will use the upper bounds to describe the order of the quantities, that is,
\begin{align}
    I(X; Z) \sim \frac{1}{2\gamma - 1} \iff  \gamma \sim \frac{I(X; Z) + 1}{2 I(X; Z)}.
\end{align}
We employ this relation in the sample complexity calculations in \Cref{sec:a:complexity}.

\subsection{Reproducing Kernel Hilbert Spaces}\label{sec:a:background:rkhs}
In this section, we review facts about the interplay between reproducing kernel Hilbert spaces (RKHSs), the $\ltwo$-spaces defined in \Cref{sec:a:background:l2}, the Hilbert-Schmidt spaces from \Cref{sec:a:background:compact}, and Bochner spaces (introduced below). The goal is to provide the necessary background in order to understand the results regarding kernel-based estimation methods that are used in other parts of the manuscript. The analyses in \Cref{sec:a:complexity} rely on being able to decompose some target function (related to the dependence between $X$ and $Z$) so that it may be estimated in multiple ways. One method involves estimating the Radon-Nikodym derivative $\Rsans$ introduced in \Cref{prop:lancaster}. The second technique relies on vector-valued regression, with a target function denoted by $F_\star$. Most of the setup below is in service of the vector-valued regression estimation portion.

We maintain the Borel spaces $(\X, \calB(\X))$ and $(\Z, \calB(\Z))$ from \Cref{sec:a:background:buja}, with the topological assumption that $\X$ and $\Z$ are second countable, locally compact, and Hausdorff. In addition, $\calH$ and $\calG$ each denote a separable reproducing kernel Hilbert space (RKHS) containing real-valued functions of $\X$ and real-valued functions of $\Z$, respectively. We let $\phi: \X \rightarrow \calH$ and $\psi: \Z \rightarrow \calG$ be the canonical feature maps and let $k:\X \times \X \rightarrow \R$ and $l: \Z \times \Z \rightarrow \R$ be the reproducing kernels for $\calH$ and $\calG$. The boundedness assumptions  $\sup_{\x, \x' \in \X}k(\x, \x') \leq \kmax <\infty$ and  $\sup_{\z, \z' \in \Z}l(\z, \z') \leq \lmax <\infty$ are maintained throughout the paper. 

\myparagraph{Bochner Space}
We will adopt the equivalence class notation first introduced in \Cref{sec:a:background:l2}, with respect to probability measures $Q_X$ and $Q_Z$. That is, for any two real-valued measurable functions $f: \X \rightarrow \R$ and $h: \X \rightarrow \R$, we say that $f \sim_X h$ if
\begin{align*}
    Q_X\p{\br{\x \in \X: f(\x) \neq h(\x)}} = 0.
\end{align*}
The notation $[f]_X$ denotes an equivalence class with respect to the equivalence relation $\sim_X$, with representative $f$. We say that $[f]_X \in \ltwo(Q_X)$ if
\begin{align*}
    \int_\X h^2(\x) \d Q_X(\x) < \infty \text{ for some, or equivalently all } h \in [f]_X.
\end{align*}
We define $\sim_Z$, $[\cdot]_Z$, and $\ltwo(Q_Z)$ similarly. 
We introduce a similar construction to $\ltwo(Q_X)$ for \emph{vector-valued} functions, i.e., those whose outputs lie in a Hilbert space. For measurable functions $F: \X \rightarrow \calG$ and $H: \X \rightarrow \calG$, we will define the equivalence relation $F \sim_X H$ via $Q_X\p{\br{\x \in \X: F(\x) \neq H(\x)}} = 0$, and corresponding equivalence classes will be denoted $[F]_X$. We define the \emph{Bochner space} $\ltwo(Q_X; \calG)$ via $[F]_X \in \ltwo(Q_X; \calG)$ if
\begin{align*}
    \int_\X \norm{H(\x)}_\calG^2 \d Q_X(\x) < \infty \text{ for some, or equivalently all } H \in [F]_X.
\end{align*}
Analogous to $\ltwo(Q_X)$, this is a set of equivalence classes of vector-valued functions. Recall from \Cref{sec:a:background:compact} and \Cref{sec:a:background:buja} that we use $\HS(\mc{U}, \mc{V})$ to denote the space of Hilbert-Schmidt operators mapping from a Hilbert space $\mc{U}$ to another Hilbert space $\mc{V}$. The following result allows us to relate elements of the Bochner space $ \ltwo(Q_X;\calG)$ to elements of a space of Hilbert-Schmidt operators $\HS(\ltwo(Q_X), \calG)$. For $[f]_X \in \ltwo(Q_X)$ and $g \in \calG$, the notation $f(\cdot) g$ refers to the function mapping $\x \in \X$ to $f(\x)g \in \calG$.
\begin{theorem}{\citep[Theorem 12.6.1]{aubin2000applied}}\label{thm:aubin:isomorphism}
    There exists a function $\Phi: \HS(\ltwo(Q_X), \calG) \rightarrow \ltwo(Q_X;\calG)$ that is a bijective linear transformation satisfying
    \begin{align*}
        \norm{\C}_{\HS(\ltwo(Q_X), \calG)} = \norm{\Phi(\C)}_{\ltwo(Q_X;\calG)} \text{ for all } \C \in \HS(\ltwo(Q_X), \calG),
    \end{align*}
    and for every $[f]_X \in \ltwo(Q_X)$ and $g \in \calG$, associates
    \begin{align}
        \Phi(g \otimes [f]_X) = [f(\cdot) g]_X \iff g \otimes [f]_X = \Phi^{-1}([f(\cdot) g]_X)\label{eq:main:rankone}
    \end{align}
    for the rank-one operator $g \otimes [f]_X \in \HS(\ltwo(Q_X), \calG)$.
\end{theorem}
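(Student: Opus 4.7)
The plan is to construct $\Phi$ via the natural Hilbert space tensor product structure shared by $\HS(\ltwo(Q_X), \calG)$ and $\ltwo(Q_X;\calG)$. First, I would define $\Phi$ on the dense subset of finite-rank operators. Any such operator has a representation $\C = \sum_{i=1}^n g_i \otimes [f_i]_X$ with $g_i \in \calG$ and $[f_i]_X \in \ltwo(Q_X)$, and the natural assignment is
\[
    \Phi(\C) = \left[\sum_{i=1}^n f_i(\cdot)\, g_i \right]_X,
\]
extending the rank-one rule~\eqref{eq:main:rankone} by linearity. A preliminary step is to verify that the equivalence class on the right does not depend on the choice of representatives of $[f_i]_X$, which is immediate from the fact that a $Q_X$-null modification of $f_i$ produces a $Q_X$-null modification of $\sum_i f_i(\cdot) g_i$ in $\calG$-norm.

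Second, I would show that $\Phi$ is isometric on finite-rank operators, so that in particular it is well-defined independently of the chosen decomposition of $\C$. Expanding both norms using bilinearity,
\[
    \left\| \sum_{i=1}^n g_i \otimes [f_i]_X \right\|_{\HS(\ltwo(Q_X), \calG)}^2 = \sum_{i, j=1}^n \langle g_i, g_j \rangle_\calG \, \langle [f_i]_X, [f_j]_X \rangle_{\ltwo(Q_X)},
\]
and Fubini's theorem gives the same value for $\left\| \sum_i f_i(\cdot) g_i \right\|_{\ltwo(Q_X;\calG)}^2$ since the integrand equals $\sum_{i,j} f_i(\x) f_j(\x) \langle g_i, g_j\rangle_\calG$. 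Because the isometry determines $\Phi(\C)$ up to a $Q_X$-null set, any two representations of the same $\C$ yield the same equivalence class in $\ltwo(Q_X;\calG)$, confirming well-definedness.

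Third, I would extend $\Phi$ to all of $\HS(\ltwo(Q_X), \calG)$ by continuity. Every Hilbert--Schmidt operator is the norm limit of its finite-rank truncations obtained from the singular value decomposition (\Cref{thm:svd}), and $\ltwo(Q_X;\calG)$ is complete, so the isometric extension to the closure is unique. Linearity passes to the limit, so the extended $\Phi$ is a linear isometry from $\HS(\ltwo(Q_X), \calG)$ into $\ltwo(Q_X;\calG)$.

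Finally, I would establish surjectivity. The image of $\Phi$ contains every class $[\sum_{i=1}^n \ind_{A_i}(\cdot)\, g_i]_X$ where $A_i \in \calB(\X)$ have finite $Q_X$-measure and $g_i \in \calG$, since $\ind_{A_i} \in \ltwo(Q_X)$ makes $\ind_{A_i} \otimes g_i$ a rank-one operator. These simple vector-valued functions are dense in $\ltwo(Q_X;\calG)$ by a standard argument using separability of $\calG$ (approximate a general $F$ by partitioning $\calG$ into small measurable cells and approximating $\norm{F}_\calG$-integrability by truncation), so the range of the isometry $\Phi$ is both closed and dense, hence all of $\ltwo(Q_X;\calG)$. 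The main obstacle is the density of simple $\calG$-valued functions in the Bochner space; this is classical but requires some care in separating the approximation in the $\calG$-fiber from the approximation in the base measure, and is where separability of $\calG$ is essential.
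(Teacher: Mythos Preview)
The paper does not supply its own proof of this statement: it is quoted verbatim as \citep[Theorem 12.6.1]{aubin2000applied} and used as a black-box background tool, so there is no in-paper argument to compare against. Your proposal is the standard construction of the isometric isomorphism $\HS(\ltwo(Q_X),\calG)\cong \calG\,\widehat\otimes\,\ltwo(Q_X)\cong \ltwo(Q_X;\calG)$ and is correct: defining $\Phi$ on finite-rank operators by the rank-one rule, checking the isometry by expanding both norms bilinearly, extending by completeness, and closing with density of simple $\calG$-valued functions in the Bochner space. The only place to be careful is exactly where you flagged it---density of simple functions in $\ltwo(Q_X;\calG)$ relies on strong measurability (Pettis' theorem via separability of $\calG$) plus the usual $\lone$-to-$\ltwo$ approximation; once that is in hand the range of an isometry is closed and hence equals $\ltwo(Q_X;\calG)$.
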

Based on the definition of the Hilbert-Schmidt norm in~\eqref{eq:compact:hsnorm} (\Cref{sec:a:background:compact}, \Cref{thm:aubin:isomorphism} will make computation of $\ltwo(Q_X;\calG)$-norms more convenient by relating them to $\HS(\ltwo(Q_X), \calG)$-norms. The following technical lemma can be used to simplify computations regarding the inverse of this isomorphism.
\begin{lemma}\label{lem:main:estimation:technical}
    Let $(g_j)_{j \in J}$ be any countable orthonormal basis of $\calG$, and let $\C = \Phi^{-1}([F]_X)$ for some $F: \X \rightarrow \calG$ such that $[F]_X \in \ltwo(Q_X; \calG)$. Define the functions $(f_j)_{j \in J}$ via $f_j(\x) := \ip{g_j, F(\x)}_\calG$. Then, $[f_j]_X \in \ltwo(Q_X)$ for all $j \in J$, and we have the identity
    \begin{align*}
        \C &= \sum_{j \in J} g_j \otimes [f_j]_X,
    \end{align*}
    where the convergence is interpreted in terms of $\HS(\ltwo(Q_X), \calG)$.
\end{lemma}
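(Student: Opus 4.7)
The plan is to first verify the integrability claim by a pointwise Cauchy-Schwarz estimate, then identify the series $\sum_{j \in J} g_j \otimes [f_j]_X$ as an absolutely convergent series in $\HS(\ltwo(Q_X), \calG)$, and finally transport the identity across the isometry $\Phi$ using \Cref{thm:aubin:isomorphism}.

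For the integrability, I would fix $j \in J$ and use Cauchy-Schwarz in $\calG$ to note $|f_j(\x)|^2 = |\ip{g_j, F(\x)}_\calG|^2 \leq \norm{F(\x)}_\calG^2$. Since $[F]_X \in \ltwo(Q_X;\calG)$ by hypothesis, the right-hand side is integrable against $Q_X$, hence $[f_j]_X \in \ltwo(Q_X)$ with $\norm{f_j}_{\ltwo(Q_X)}^2 \leq \norm{F}_{\ltwo(Q_X;\calG)}^2$. Moreover, applying Parseval pointwise in $\calG$ using the ONB $(g_j)_{j \in J}$ gives $\norm{F(\x)}_\calG^2 = \sum_{j \in J} |f_j(\x)|^2$ for every $\x \in \X$, so by Tonelli's theorem
\begin{equation*}
\sum_{j \in J} \norm{f_j}_{\ltwo(Q_X)}^2 = \int_\X \sum_{j \in J} |f_j(\x)|^2 \, \d Q_X(\x) = \norm{F}_{\ltwo(Q_X;\calG)}^2 < \infty.
\end{equation*}

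Next, for any finite $S \sse J$, the operators $\{g_j \otimes [f_j]_X\}_{j \in S}$ are pairwise orthogonal in $\HS(\ltwo(Q_X), \calG)$ because the $g_j$'s are orthonormal in $\calG$; explicitly, using the inner product formula $\ip{g \otimes u, g' \otimes u'}_{\HS} = \ip{g, g'}_\calG \ip{u, u'}_{\ltwo(Q_X)}$, we obtain
\begin{equation*}
\bignorm{\sum_{j \in S} g_j \otimes [f_j]_X}_{\HS}^2 = \sum_{j \in S} \norm{f_j}_{\ltwo(Q_X)}^2.
\end{equation*}
Combined with the square-summability established above, this shows that the series $\sum_{j \in J} g_j \otimes [f_j]_X$ converges unconditionally in $\HS(\ltwo(Q_X), \calG)$ to some operator $\widetilde{\C}$, with $\norm{\widetilde{\C}}_{\HS}^2 = \norm{F}_{\ltwo(Q_X;\calG)}^2$.

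It remains to identify $\widetilde{\C}$ with $\C = \Phi^{-1}([F]_X)$. Since $\Phi$ is a linear isometric isomorphism (\Cref{thm:aubin:isomorphism}), it is in particular continuous, so applying $\Phi$ to the convergent series and using the rank-one identity~\eqref{eq:main:rankone} yields
\begin{equation*}
\Phi(\widetilde{\C}) = \sum_{j \in J} \Phi(g_j \otimes [f_j]_X) = \sum_{j \in J} [f_j(\cdot) g_j]_X,
\end{equation*}
where the convergence is in $\ltwo(Q_X;\calG)$. To conclude, I would compare this to $[F]_X$ by estimating the $\ltwo(Q_X;\calG)$-norm of the tail: for any finite $S \sse J$, pointwise Parseval gives $\norm{F(\x) - \sum_{j \in S} f_j(\x) g_j}_\calG^2 = \sum_{j \in J \setminus S} |f_j(\x)|^2$, which integrates to $\sum_{j \in J \setminus S} \norm{f_j}_{\ltwo(Q_X)}^2$ and tends to zero as $S$ exhausts $J$. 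Hence $\Phi(\widetilde{\C}) = [F]_X = \Phi(\C)$ and injectivity of $\Phi$ forces $\widetilde{\C} = \C$, completing the proof.

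The only subtle step is the last one: interchanging the pointwise Parseval expansion with the $\ltwo(Q_X;\calG)$-limit of partial sums. I expect this to be essentially routine once the Tonelli/Parseval identity for $\sum_j \norm{f_j}_{\ltwo(Q_X)}^2$ is in hand, since it reduces to a dominated convergence argument with dominating function $\norm{F(\x)}_\calG^2$.
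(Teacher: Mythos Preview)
Your proof is correct and follows essentially the same approach as the paper: both arguments rest on the pointwise Parseval expansion $F(\x) = \sum_{j} f_j(\x) g_j$ and then transport the identity across the isometry $\Phi$. The only cosmetic difference is the direction of transport---the paper establishes the series identity in $\ltwo(Q_X;\calG)$ first and then applies $\Phi^{-1}$, whereas you build the series in $\HS(\ltwo(Q_X),\calG)$ first and then apply $\Phi$; your explicit tail estimate via Tonelli/Parseval is in fact a bit more careful about justifying norm convergence than the paper's passage from pointwise to $\ltwo(Q_X;\calG)$ equality.
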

\begin{proof}
    First, consider the case in which we can write the equivalence class of $F$ in $\ltwo(Q_X, \calG)$ in the form
    \begin{align}
        [F]_X = \sum_{j \in J} [f_j(\cdot) g_j]_X, \label{eq:main:hsnorm3}
    \end{align}
    for some sequence of functions $f_1, f_2, \ldots \in \ltwo(Q_X)$. Then, because $\Phi^{-1}$ is a linear isometry, it is a bounded (hence continuous) operator with respect to the norm on $\ltwo(Q_X, \calG)$. This implies via continuity
    \begin{align*}
        \C = \Phi^{-1}\p{[F]_X} &= \Phi^{-1}\p{\textstyle\sum_{j \in J} [f_j(\cdot) g_j]_X} = \sum_{j \in J}\Phi^{-1}\p{[f_j(\cdot) g_j]_X} = \sum_{j \in J} g_j \otimes [f_j]_X,
    \end{align*}
    where the last step follows because $\Phi^{-1}$ satisfies the relation~\eqref{eq:main:rankone}. To achieve the identity~\eqref{eq:main:hsnorm3}, we fix any $\x \in \X$, we expand $F(\x) \in \calG$ onto the basis $(g_j)_{j \in J}$ to write
    \begin{align*}
        F(\x) = \sum_{j \in J} \underbrace{\ip{g_j, F(\x)}_\calG}_{f_j(\x)} g_j.
    \end{align*}
    To pass this pointwise equality to~\eqref{eq:main:hsnorm3}, consider any sequence of $\calG$-valued functions $(H_j)_{j \in J}$ such that $H_j(\x) = f_j(\x) g_j$ for all $\x \in \X_j \sse \X$, where $Q_X(\X_j) = 1$. Similarly, consider $H_0: \X \rightarrow \calG$ such that $H_0(\x) = F(\x)$ for all $\x \in \X_0$ with $Q_X(\X_0) = 1$. Thus, we have that
    \begin{align*}
        \textstyle H_0(\x) = \sum_{j \in J} H_j(\x) \text{ for all } \x \in \X_0 \cap \p{\bigcap_{j \in J} \X_j},
    \end{align*}
    and because $J$ is countable, this implies that $H_0(\x) = \sum_{j \in J} H_j(\x)$ for $Q_X$-almost all $\x \in \X$, granting~\eqref{eq:main:hsnorm3}.
    It remains to be shown that $[f_j]_X \in \ltwo(Q_X)$. This follows by the Bochner-square integrability of $F$, as
    \begin{align*}
        \int_\X f_j^2(\x) \d Q_X(\x) = \int_\X \ip{g_j, F(\x)}_\calG^2 \d Q_X(\x) \leq \norm{g_j}_\calG^2 \int_\X \norm{F(\x)}_\calG^2 \d Q_X(\x) < \infty,
    \end{align*}
    which completes the proof.
\end{proof}
In the sequel, we will define a statistical learning problem in which the target function $F_\star$ is an element of $\ltwo(Q_X;\calG)$. For the kernel-based estimation approach, the estimation function $\Fhat \equiv \Fhat_{\lambda}$ (where $\lambda$ is a to-be-specified regularization parameter) will live in a particular vector-valued RKHS that will be isometrically isomorphic to $\HS(\calH, \calG)$. Using \Cref{thm:aubin:isomorphism}, $F_\star$ will be associated to en element $\C_\star \in \HS(\ltwo(Q_X), \calG)$ via an isometric isomorphism. We introduce the concept of embeddings and interpolation spaces to describe exactly where $\C_\star$ lies in between $\HS(\calH, \calG)$ and $\HS(\ltwo(Q_X), \calG)$ (via a \emph{source condition}).

\myparagraph{Embedding Operator}
See \Cref{sec:a:background:compact} for a review of the terminology surrounding compact operators.
Consider the \emph{embedding operator} $\Iop_X: \calH \rightarrow \ltwo(Q_X)$, which identifies a function $h \in \calH$ with its equivalence class $[h]_X \in \ltwo(Q_X)$. Under the bounded kernel assumption, we have that $\Iop_X$ is compact, and moreover Hilbert-Schmidt, with norm bounded as $\norm{\Iop_X}_{\HS(\calG, \ltwo(Q_X))} \leq \sqrt{\kmax}$ \citep[Lemma 2.3]{steinwart2012mercers}. We denote its adjoint by $\Sop_X := \Iop_X^*$, and finally, construct the self-adjoint, trace class operator
\begin{align}
    \Top_X := \Iop_X \Sop_X: \ltwo(Q_X) \rightarrow \ltwo(Q_X).\label{eq:background:Tx}
\end{align}
Applying the eigendecomposition \Cref{thm:eigen}, there exists an orthonormal basis of $\Cl(\Range(\Iop_X)) \sse \ltwo(Q_X)$, denoted $([e_{X, i}]_X)_{i \in I}$, and a sequence of positive, non-increasing eigenvalues $(\mu_{X, i})_{i \in I}$ such that
\begin{align}
    \Top_X = \sum_{i \in I} \mu_{X, i} \ip{[e_{X, i}]_X, \cdot}_{\ltwo(Q_X)} [e_{X, i}]_X.\label{eq:main:eigendecomposition}
\end{align}
Note that we have only used the index set $I$ from \Cref{thm:eigen}, as opposed to the larger set $J$ for which we can define an ONB for the entirety of $\ltwo(Q_X)$, not only $\Cl(\Range(\Iop_X))$. Analogous to $\Top_X$, we can also define the uncentered covariance operator
\begin{align*}
    \C_X = \Sop_X \Iop_X: \calH \rightarrow \calH.
\end{align*}
Similar to~\eqref{eq:main:eigendecomposition}, $\C_X$ enjoys an eigendecomposition
\begin{align}
    \C_X = \sum_{i \in I} \mu_{Z, i} \ipsmall{\cdot, \mu_{X, i}^{1/2} e_{X, i}}_{\calG} \mu_{X, i}^{1/2} e_{X, i}.\label{eq:main:covariance}
\end{align}
The equation above implicitly contains another fact, which is that the equivalence classes in~\eqref{eq:main:eigendecomposition} all contain representatives that are in $\calH$. This defines the collection $\br{e_{X, i}}_{i \in I}$, which forms an ONB of $\Null(\Iop_X)^\perp \sse \calH$. Combining~\eqref{eq:main:eigendecomposition} and~\eqref{eq:main:covariance}, the embedding can be described using a singular value decomposition
\begin{align}
    \Iop_X = \sum_{i \in I} \mu_{X, i}^{1/2} \p{[e_{X, i}]_Z \otimes (\mu_{X, i}^{1/2}  e_{X, i})}.\label{eq:main:estimation:basis:H}
\end{align}
Lastly, we define $(\Iop_Z, \Sop_Z, \Top_Z, \Cop_Z)$ as the analogous operators for $\ltwo(Q_Z)$ and $\calG$.

\myparagraph{Interpolation Spaces and the Inclusion Map}
For any $\alpha \geq 0$, we define the operator
\begin{align}
    \Top_X^{\alpha/2} &= \sum_{i \in I} \mu_{X, i}^{\alpha/2} \ip{[e_{X, i}]_X, \cdot}_{\ltwo(Q_X)} [e_{X, i}]_X, \label{eq:background:rkhs:power}\\
    \Dom(\Top_X^{\alpha/2}) &= \br{[f]_X \in \ltwo(Q_X): \sum_{i \in I} \mu_{X, i}^{\alpha/2} \ip{[f]_X, [e_{X, i}]_X}_{\ltwo(Q_X)} < \infty},\notag
\end{align}
which is considered to be well-defined when $\Dom(\Top_X^{\alpha/2}) \neq \varnothing$. Then, we define the \emph{$\alpha$-interpolation space} $[\calH]^\alpha$ via
\begin{align*}
    [\calH]^\alpha = \br{\sum_{i \in I} a_i \mu_{X, i}^{\alpha/2}[e_{X, i}]_X: (a_i)_{i \in I} \in \ell_2(I)} \sse \ltwo(Q_X),
\end{align*}
where $(a_i)_{i \in I} \in \ell_2(I)$ indicates that $\sum_{i \in I} a_i^2 < +\infty$. When $\alpha = 0$, we recover $[\calH]^0 = \Cl(\Range(\Iop_X))$, whereas for $\alpha = 1$, $[\calH]^1$ is isometrically isomorphic to $\Null(\I_X)^\perp \sse \calH$ \citep{fischer2020sobolev}. Thus, for $\alpha \in (0, 1)$, we interpret $[\calH]^\alpha$ as an ``interpolation'' between the well-behaved functions in the RKHS $\calH$ and the elements of $\ltwo(Q_X)$.
Associated to each $[\calH]^\alpha$ is the \emph{inclusion map} $\I_X^{\alpha, \infty}$, which simply views an element $[h]_X \in [\calH]^\alpha$ as an element of $\linf(Q_X)$ (this requires the boundedness of the kernel). Here, $\linf(Q_X)$ denotes equivalence classes of real-valued functions on $\X$ that have a finite essential supremum under $Q_X$.
We write $\I_X^{\alpha, \infty}: [\calH]^\alpha \hookrightarrow \linf(Q_X)$ when the inclusion map is continuous (see \Cref{asm:rkhs:source}).

We use the standard generalization of these notions onto spaces of vector-valued functions \citep{li2024towards, meunier2024optimal}: for any $\beta \geq 0$, we define the \emph{$\beta$-interpolation norm} for $\C \in \HS(\ltwo(Q_X), \calG)$ via the formula
\begin{align}
    \norm{\C}_\beta := \norm{\C \Top_X^{-\beta/2}}_{\HS(\ltwo(Q_X), \calG)} \in [0, +\infty].\label{eq:rkhs:interpolation}
\end{align}
This norm, when finite, will be used to define the source condition of the target function $F_\star$ alluded to in \Cref{sec:theory}, as we may compute $\norm{\C_\star}_\beta$ for $\C_\star := \Phi^{-1}([F_\star]_X)$ (see \Cref{thm:aubin:isomorphism}). While we phrase the condition in terms of the constant $\beta$ above in order to relate it to the kernel methods and inverse problem literature below, we will use the constructions of \Cref{sec:a:background:buja} to phrase the finiteness of~\eqref{eq:rkhs:interpolation} for $\C_\star$ in terms of the mean square contingency (\Cref{def:msc}) in \Cref{sec:a:complexity}.

\myparagraph{Vector-Valued Spectral Regularization Learning}
We may now describe estimation techniques for an $\ltwo(Q_X; \calG)$-valued target function that fall into the category of vector-valued spectral regularization learning. We give only a brief overview in order to state the statistical convergence guarantees; see \citet{meunier2024optimal} for a detailed description, including computational properties of the estimator.
As we prove in \Cref{lem:main:construction}, there exists a function $F_\star: \X \rightarrow \calG$ such that $[F_\star]_X \sse \ltwo(Q_X; \calG)$ and for every $g \in \calG$,
\begin{align}
    \E{Q_{X, Z}}{g(Z)|X}(\x) = \ip{g, F_\star(\x)}_\calG.\label{eq:background:rkhs:target}
\end{align}
For each $\x \in \X$, we also refer to $F_\star(\x)$ as the conditional mean embedding of $Z$ given $X = \x$. Note that for a fixed $g \in \calG$, we do not assume that $\x \mapsto \ip{g, F_\star(\x)}$ is an element of an RKHS $\calH$. This avoids some of the technical challenges raised, for instance, in \citet{klebanov2020arigorous, klebanov2021thelinear}, where this requirement places strong implicit restrictions on the chosen kernel and RKHS. Instead, the mis-specified case is handled using vector-valued interpolation spaces.

Next, using \Cref{thm:aubin:isomorphism}, we associate to $F_\star$ the element $\C_\star = \Phi^{-1}(F_\star) \in \HS(\ltwo(Q_X), \calG)$.
Given independent and identically distributed pre-training data $(X_1, Z_1), \ldots, (X_N, Z_N)$  drawn from $Q_{X, Z}$, define the empirical (uncentered) auto-covariance and cross-covariance operator
\begin{align*}
    \hC_{XX} = \frac{1}{N}\sum_{i=1}^N \phi(X_i) \otimes \phi(X_i) \text{ and } \hC_{ZX} = \frac{1}{N}\sum_{i=1}^N \psi(Z_i) \otimes \phi(X_i).
\end{align*}
Let $f_\lambda: \R_{\geq 0} \rightarrow \R_{\geq 0}$ denote the spectral cutoff function
\begin{align}
    f_\lambda(x) = \begin{cases}
        x^{-1} &\text{ if } x \geq \lambda\\
        0 &\text{ otherwise}
    \end{cases}.\label{eq:background:rkhs:cutoff}
\end{align}
We can interpret $f_\lambda(x)$ as a regularized inverse that behaves in a reasonable manner near $x = 0$. A similar function corresponding to the more familiar Tikhonov regularization is $f_\lambda(x) = (x + \lambda)^{-1}$.
While other options for $f_\lambda$ (i.e.~filter functions) exist owing to the tools of regularization theory \citep{bauer2007onregularization}, the spectral cutoff function will be sufficient for our purposes, as it allows for the simplest statement of the upcoming results.
For a self-adjoint positive semidefinite operator $\C$, we define $f_\lambda(\C)$ as replacing each eigenvalue $\mu_i \geq 0$ of $\C$ with $f_\lambda(\mu_i)$ in the eigendecomposition (see \Cref{thm:eigen}). For regularization parameter $\lambda > 0$, we define the estimator
\begin{align}
    \Fhat_\lambda(\cdot) := \hC_{\lambda}\phi(\cdot) \text{ for } \hC_{\lambda} := \hC_{ZX} f_\lambda(\hC_{XX}): \calH \rightarrow \calG.\label{eq:background:rkhs:estimate}
\end{align}
Now, consider the following assumptions, which include the source condition.
\begin{assumption}{\citep[Assumptions (SRC), (MOM), (EVD), (EMB)]{meunier2024optimal}}\label{asm:rkhs:source}
    \begin{enumerate}
        \item There exist positive constants $\beta > 0$ and $B > 0$ such that $\norm{F_\star}_\beta := \norm{\C_\star}_\beta \leq B$.
        \item For positive constants $\sigma^2, c > 0$ the Bernstein moment condition
        \begin{align*}
            \Ex_{Q_{X, Z}}\sbr{\norm{\psi(Z) - F_\star(X)}_\calG^2|X}(\x) \leq \frac{1}{2}q!\sigma^2c^{q-2} 
        \end{align*}
        is satisfied for $Q_X$-almost all $\x \in \X$ and all $q \geq 2$. 
        \item There exist constants $D > 0$ and $p < 1$ such that
        \begin{align*}
            \mu_{X, i} \leq D i^{-1/p}.
        \end{align*}
        \item For $\alpha \in [p, 1]$, the inclusion map $\I_X^{\alpha, \infty}: [\calH]^\alpha \hookrightarrow \linf(Q_X)$ is bounded, with operator norm $\norm{I^{\alpha, \infty}_X}_{\op} \leq A$.
    \end{enumerate}
\end{assumption}
Note that the first assumption is always satisfied for $\alpha = 1$, due to boundedness of the kernel (as the $[\calH]^1$ norm can be associated to the RKHS norm of an element of $\calH$). We pay particular attention to the constant $\beta$ which defines the aforementioned source condition. In \Cref{sec:a:complexity:conditional_mean}, we translate this condition into one regarding the dependence between $X$ and $Z$, using the tools from \Cref{sec:a:background:buja}. We refer to the case when $\beta \geq 1$ as the \emph{well-specified} case.  We also employ one additional assumption to state the result.
\begin{assumption}[Sub-Gaussian Tail]\label{asm:rkhs:tail}
    There exists a positive constant $\tau > 0$ such that the following holds:
    \begin{align*}
        \P{Q_X}{\norm{F_\star(X)}_\calG > t} \leq 2e^{-\frac{t^2}{2\nu^2}}.
    \end{align*}
\end{assumption}
\Cref{asm:rkhs:tail} is only used to replace a statement of the form ``for $N \geq 1$ sufficiently large'' from \citet[Theorem 4]{meunier2024optimal} with a quantitative condition on $N$. It is used to control the probability that $\norm{F_\star(X_i)}_\calG > t$ for any $i = 1, \ldots, N$ for the choices of $t$ specified in the proof of \citet[Theorem 8]{meunier2024optimal}.

\begin{theorem}{\citep[Theorem 4]{meunier2024optimal}}\label{thm:vvkr}
    Consider a failure probability $\delta \in (0, 1]$, the estimate $\Fhat_{\lambda}$ defined in~\eqref{eq:background:rkhs:estimate}, and the target function $F_\star$ defined in~\eqref{eq:background:rkhs:target}. Under \Cref{asm:rkhs:source} and \Cref{asm:rkhs:tail}, there exists a constant $C > 0$ (independent of $N$ and $\delta$) such that the following statements hold.
    \begin{itemize}
        \item {\bf Case 1: $\beta + p > \alpha$.} If $N^{\p{\frac{1}{2}\p{1 + \frac{p - \alpha}{p + \beta}} + \frac{\alpha - \beta}{2\alpha}}} \geq 2\nu^2\polylog(N/\delta)$ and $\lambda = \Theta(N^{-\frac{1}{\beta+p}})$, then
        \begin{align*}
            \norm{[\Fhat_{\lambda}]_X - F_\star}_{\ltwo(Q_X; \calG)}^2 \leq C\polylog(1/\delta) N^{-\frac{\beta}{\beta + p}}
        \end{align*}
        with probability at least $1 - \delta/4$.
        \item {\bf Case 2: $\beta + p \leq \alpha$.} If $N^{\frac{\alpha-\beta}{\alpha}} \geq 2\nu^2\polylog(N/\delta)$ and $\lambda = \Theta((N/\polylog(N))^{-\frac{1}{\alpha}})$, then
        \begin{align*}
            \norm{[\Fhat_{\lambda}]_X - F_\star}_{\ltwo(Q_X; \calG)}^2 \leq C\polylog(1/\delta) (N/\log^2(N))^{-\frac{\beta}{\alpha}}
        \end{align*}
        with probability at least $1 - \delta/4$.
    \end{itemize}
\end{theorem}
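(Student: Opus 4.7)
The theorem is essentially a direct invocation of \citet[Theorem 4]{meunier2024optimal}, since \Cref{asm:rkhs:source} collects precisely their assumptions (SRC), (MOM), (EVD), and (EMB) specialized to the conditional mean embedding setting. The only substantive addition relative to quoting their theorem verbatim is converting the qualitative ``for $N$ sufficiently large'' condition implicit in their statement into the explicit quantitative thresholds $N^{(\cdot)} \geq 2\nu^2 \polylog(N/\delta)$ appearing in the two cases, and this is exactly where the sub-Gaussian tail assumption \Cref{asm:rkhs:tail} enters.

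The strategy of \citet{meunier2024optimal} is a bias--variance decomposition of $\norm{[\Fhat_{\lambda}]_X - F_\star}_{\ltwo(Q_X; \calG)}^2$. The deterministic bias is controlled by the source condition with exponent $\beta$, while the stochastic fluctuation is controlled via Bernstein-type concentration for the empirical auto- and cross-covariance operators, relying on the moment bound \Cref{asm:rkhs:source} (2) and on effective-dimension quantities governed by the eigendecay exponent $p$ and the embedding exponent $\alpha$. Balancing the two errors by choosing $\lambda = \Theta(N^{-1/(\beta+p)})$ in the well-specified regime (Case 1, $\beta + p > \alpha$) and $\lambda = \Theta((N/\polylog(N))^{-1/\alpha})$ in the mis-specified regime (Case 2, $\beta + p \leq \alpha$) yields the stated rates $N^{-\beta/(\beta+p)}$ and $(N/\log^2 N)^{-\beta/\alpha}$.

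To quantify the threshold on $N$, the plan is to trace their proof to the step where it must be ensured that $\max_{i \leq N}\norm{F_\star(X_i)}_\calG$ is smaller than a $\lambda$-dependent effective-dimension quantity. Under \Cref{asm:rkhs:tail}, a union bound yields
\begin{align*}
\Prob\sbr{\max_{i \leq N}\norm{F_\star(X_i)}_\calG > t} \leq 2N e^{-t^2/(2\nu^2)},
\end{align*}
which is at most $\delta/4$ for $t$ of order $\nu\sqrt{\log(N/\delta)}$. Substituting this explicit bound into the operator concentration step in each regime, together with the corresponding choice of $\lambda$, produces precisely the conditions with exponents $\tfrac{1}{2}\bigl(1 + \tfrac{p-\alpha}{p+\beta}\bigr) + \tfrac{\alpha-\beta}{2\alpha}$ in Case 1 and $\tfrac{\alpha-\beta}{\alpha}$ in Case 2 on the left-hand side of the threshold $\geq 2\nu^2 \polylog(N/\delta)$.

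The main obstacle is pure bookkeeping: one must verify that these two specific exponents on $N$ emerge correctly when the sub-Gaussian tail bound is substituted into the variance term at the particular $\lambda$ balancing each regime, and one must accommodate the union bound over the tail event in the failure probability, which is why the statement is phrased at level $1 - \delta/4$ rather than $1 - \delta$ (leaving budget for the remaining events controlled by other parts of the end-to-end analysis). No new conceptual ingredient is required beyond \citet[Theorem 4]{meunier2024optimal} and the Gaussian-tail union bound.
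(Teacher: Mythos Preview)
Your proposal is correct and matches the paper's treatment: the paper does not give an independent proof but cites \citet[Theorem 4]{meunier2024optimal} directly, noting (just before the theorem) that \Cref{asm:rkhs:tail} is used only to replace ``for $N$ sufficiently large'' with the quantitative thresholds via controlling $\max_i\norm{F_\star(X_i)}_\calG$, and following the theorem with a brief bias--variance sketch identical in spirit to yours.
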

This result is applied in \Cref{sec:a:complexity:conditional_mean} and provides an example of the ``conditional mean'' approach outlined in \Cref{sec:framework} and \Cref{sec:theory}. Regarding the setting of $\lambda$ in \Cref{thm:vvkr}, the argument follows the typical recipe of defining an element $F_\lambda \in \ltwo(Q_X; \calG)$ which represents the population version of the regularized predictor. Let $\norm{\cdot}_\gamma$ denote the $\gamma$-interpolation norm for $\gamma \in [0, 1]$, which is equal to $\norm{\cdot}_{\ltwo(Q_X;\calG)}$ when $\gamma = 0$. Then, the approximation error $\norm{[F_{\lambda}]_X - F_\star}^2_\gamma$ decays according to $\lambda^{\beta - \gamma}$ when using the spectral cutoff regularizer, which reflects the classical analyses of \citet{smale2007learning}. In the well-specified case, the estimation error, $\norm{[\Fhat_{\lambda}]_X - [F_{\lambda}]_X}_\gamma$ decomposes into multiple terms which include irreducible noise terms of order $N^{-1}\lambda^{-\alpha/2}$ and additional approximation terms of order $N^{-1/2}\lambda^{(\beta-\alpha)/2}$. By using $\lambda = \Theta(N^{-\frac{1}{\beta+p}})$ and the condition $\beta + p > \alpha$ from Case 1, the irreducible noise error converges at rate $N^{-1/2}$ whereas the approximation term converges at rate $N^{-\beta/2(\beta + p)}$. Note that these rates will be squared in \Cref{thm:vvkr}. The argument for Case 2 follows similarly.

\myparagraph{Radon-Nikodym Derivative Estimation}
To set the stage for this technique, we describe a function class in which $\Rhat: \X \times \Z \rightarrow \R_{\geq 0}$ will live. Let $\calS$ denote a separable reproducing kernel Hilbert space (RKHS) of real-valued functions on $\X \times \Z$, with canonical feature map $\varphi: \X \times \Z \rightarrow \R$ and reproducing kernel $\kappa: (\X \times \Z) \times (\X \times \Z) \rightarrow \R$. As before, we first assume boundedness of the kernel, i.e., $\sup\br{\kappa(\x, \z, \x', \z'): (\x, \z), (\x', \z') \in \X \times \Z} \leq \kappa_{\max{}}$.

Let us describe the estimation procedure, which relies on a similar spectral regularization technique as the one described for vector-valued regression. Because the Radon-Nikodym derivative being estimated is $\frac{\d Q_{X, Z}}{\d (Q_X \otimes Q_Z)}$, we consider having samples from both distributions available. In particular, we observe $\Np$ paired examples $(X_1, Z_1), \ldots, (X_{\Np}, Z_{\Np}) \sim Q_{X, Z}$ and $\Nu$ unpaired examples $(X'_1, Z'_1), \ldots, (X'_{\Nu}, Z'_{\Nu}) \sim Q_{X} \otimes Q_Z$. Define the uncentered covariance operators
\begin{align}
    \hCopp = \frac{1}{\Np}\sum_{i=1}^{\Np} \varphi(X_i, Z_i) \otimes \varphi(X_i, Z_i), \quad \hCopu = \frac{1}{\Nu}\sum_{i=1}^{\Nu} \varphi(X'_i, Z'_i) \otimes \varphi(X'_i, Z'_i),\label{eq:background:rn:samples}
\end{align}
representing the paired and unpaired examples, respectively. Then, using the spectral cutoff function $f_\lambda$ (see~\eqref{eq:background:rkhs:cutoff}), we define the estimate
\begin{align}
    \Rhat \equiv \Rhat_{\lambda} = f_\lambda(\hCopu) \hCopp \ones, \label{eq:background:rkhs:rn_estimate}
\end{align}
where $\ones(\x, \z) = 1$ for all $(\x, \z) \in \X \times \Z$. Because $f_\lambda$ can be viewed as a regularized inverse, $\Rhat$ can readily be interpreted as the ``ratio'' of the covariance operator of the paired sample over that of the unpaired sample.

To state the assumptions for the analysis, we require an analogous operator to $\Iop_X$ and $\Iop_Z$ introduced earlier in this section. 
We then define the \emph{embedding operator} $\Iop_{X,Z}: \calS \rightarrow \ltwo(Q_X \otimes Q_Z)$, which takes an element $\Ssans \in \calS$ and indexes its equivalence class in $\ltwo(Q_X \otimes Q_Z)$. We will not need to define an explicit notation for the equivalence class for this discussion, but will do so in \Cref{sec:a:complexity:rn_derivative}. Due to \citet[Lemma 2.3]{steinwart2012mercers}, the bounded kernel assumption implies  that $\Iop_{X,Z}$ is Hilbert-Schmidt with norm bounded as $\norm{\Iop_{X,Z}}_{\HS(\calS, \ltwo(Q_X \otimes Q_Z))} < \sqrt{\kappa_{\max{}}}$. 

Recall the powers of operators introduced in~\eqref{eq:background:rkhs:power}. We will use a similar construction for this technique as well.
Define the (compact) adjoint operator $\Sop_{X,Z} = \Iop_{X,Z}^*: \ltwo(Q_X \otimes Q_Z) \rightarrow \calS$. Via \Cref{thm:eigen}, let $(\mu_i)_{i \in I}$ denote the non-zero eigenvalues of the compact, trace class operator $\Sop_{X,Z} \Iop_{X,Z}$, where we consider $I = \mathbb{N}$ for simplicity. Let the degrees of freedom function be defined as
\begin{align*}
    \df(\lambda) := \sum_{i=1}^\infty \frac{\mu_i}{\mu_i + \lambda}.
\end{align*}
Consider the following assumption.
\begin{assumption}{\citep[Eq. (9) and Remark 13]{nguyen2024onregularized}}\label{asm:main:estimation:rn:source}
    There exists an absolute constant $C_{\df}$ and a constant $\alpha > 1$ such that $\df(\lambda) \leq C_{\df} \lambda^{-1/\alpha}$.
    There exists a $\beta \geq 1$, along with an element $\Ssans_{Q_{X, Z}} \in \Null(\I_{X, Z})^\perp$, such that 
    \begin{align*}
         \Rsans = (\Sop_{X,Z} \Iop_{X, Z})^\beta \Ssans_{Q_{X, Z}}.
    \end{align*}
\end{assumption}
The upper bound on $\df(\lambda)$ reflects a polynomial eigendecay of order $\mu_i \sim i^{-\alpha}$ (see \citet[Section 7.6.6]{bach2024learning}).
\Cref{asm:main:estimation:rn:source} is more specific than the one stated in the referenced work, in that we use the specific index function $x \mapsto x^{\beta}$, growing at least linearly. Furthermore, their result may achieve faster convergence rates than the one stated in \Cref{coro:background:rn} using an additional source condition on the feature map $\varphi$. However, our intention is not necessarily to provide convergence rates that are optimal in a particular parameter regime, but ones that are informative with regard to the dependence structure of $Q_{X, Z}$. To this end, we do not incorporate the additional condition.

To state the result, we define $\lambda_\star$ as the solution of
\begin{align*}
    \frac{\df(\lambda)}{\lambda} = \Nu,
\end{align*}
which is guaranteed to exist as $\frac{\df(\lambda)}{\lambda}$ is decreasing from $+\infty$ to $0$ on the interval $(0, +\infty)$. Observe the following.
\begin{theorem}{\citep[Proposition 10 and Lemma 11]{nguyen2024onregularized}}\label{thm:background:rn2}
    Consider a failure probability $\delta \in (0, 1]$ and constant $\beta$ from \Cref{asm:main:estimation:rn:source}.
    Consider the estimate $\Rhat \equiv \Rhat_{\lambda}$ defined in~\eqref{eq:background:rkhs:rn_estimate} and the target function $\Rsans$ defined in~\eqref{prop:lancaster}. Finally, define
    \begin{align*}
        K_{\max{}} := 1 + (4\kappa_{\max}^2 + \kappa_{\max})^2.
    \end{align*}
    There exists a constant $C > 0$ (independent of $N$ and $\delta$) such that for all $\lambda \in [\lambda_\star, \kappa_{\max{}}]$,
    \begin{align}
        \norm{\Rhat - \Rsans}_{\calS} \leq C\polylog(1/\delta) \sbr{K_{\max{}} \lambda^{\beta} + \frac{K_{\max{}}^{1/2}}{\lambda}\p{\Np^{-1/2} + \Nu^{-1/2}}}\label{eq:main:estimation:rn:sample_complexity2}
    \end{align}
    with probability at least $1 - \delta/2$.
\end{theorem}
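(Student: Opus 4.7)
The plan is to adapt the bias-variance decomposition argument standard for kernel spectral regularization methods, following the recipe of \citet{nguyen2024onregularized} which the theorem restates. The estimator $\Rhat_\lambda = f_\lambda(\hCopu)\hCopp \ones$ is the sample version of a regularized inversion; the target $\Rsans$ arises as the (unregularized) solution of $\Cop_u \Rsans = \Cop_p \ones$ where $\Cop_p$ and $\Cop_u$ are the population covariance operators on $\calS$ associated with $Q_{X,Z}$ and $Q_X \otimes Q_Z$ respectively. Introducing the population regularized solution $\Rsans_\lambda := f_\lambda(\Cop_u) \Cop_p \ones$, the natural decomposition is
\begin{align*}
    \norm{\Rhat_\lambda - \Rsans}_{\calS} \leq \underbrace{\norm{\Rhat_\lambda - \Rsans_\lambda}_{\calS}}_{\text{estimation}} + \underbrace{\norm{\Rsans_\lambda - \Rsans}_{\calS}}_{\text{approximation}},
\end{align*}
and each piece will yield one of the two terms on the right-hand side of~\eqref{eq:main:estimation:rn:sample_complexity2}.

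For the approximation term, I would invoke \Cref{asm:main:estimation:rn:source}, which places $\Rsans$ in the range of $(\Sop_{X,Z}\Iop_{X,Z})^\beta$, hence in the range of $\Cop_u^{\beta}$ up to isometry, with source element of bounded norm. Combining the spectral calculus with the standard estimate $\sup_{t \in [0,\kappa_{\max}]} |t^\beta(1 - t f_\lambda(t))| \lesssim \lambda^\beta$ for the spectral cutoff filter yields $\norm{\Rsans_\lambda - \Rsans}_\calS \lesssim \lambda^\beta$, producing the $K_{\max} \lambda^\beta$ contribution (where $K_{\max}$ absorbs the $\calS$-norm of the source element inflated by kernel-boundedness constants).

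For the estimation term, the key step is to write
\begin{align*}
    \Rhat_\lambda - \Rsans_\lambda = f_\lambda(\hCopu)(\hCopp - \Cop_p)\ones + \bigl[f_\lambda(\hCopu) - f_\lambda(\Cop_u)\bigr]\Cop_p \ones
\end{align*}
and control the two pieces separately. The first piece is bounded by $\lambda^{-1}$ (from $\norm{f_\lambda(\hCopu)}_\op \leq \lambda^{-1}$) times a Bernstein-type concentration bound for $\norm{\hCopp - \Cop_p}_\op$ in $\HS$-norm, which delivers the $\Np^{-1/2}$ rate with the prescribed polylogarithmic factor. The second piece is handled by a filter perturbation bound (e.g.~the Lipschitzness of $f_\lambda$ on $[\lambda, \kappa_{\max}]$) combined with an operator-Bernstein bound on $\norm{\hCopu - \Cop_u}_\op$, giving the $\Nu^{-1/2}$ rate. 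The restriction $\lambda \geq \lambda_\star$ ensures that $\lambda$ dominates the effective noise level encoded by $\df(\lambda)$, which is needed to deploy these perturbation arguments cleanly without blowup.

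I expect the main technical obstacle to be the filter perturbation step, where passing from $f_\lambda(\Cop_u)$ to $f_\lambda(\hCopu)$ for the spectral cutoff is genuinely discontinuous at the cutoff level, so one must either work with an equivalent smooth filter or use the refined analysis of \citet{nguyen2024onregularized} that exploits the polynomial eigendecay encoded in $\df(\lambda) \leq C_{\df}\lambda^{-1/\alpha}$. Aggregating the two bounds with the prefactor $K_{\max}^{1/2}$, taking a union bound over the two concentration events to absorb the failure probability into $\delta/2$, and using $\polylog(1/\delta)$ to hide the logarithmic factors from the matrix Bernstein inequality completes the proof.
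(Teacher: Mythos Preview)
The paper does not prove this statement; it is quoted verbatim as a background result from \citet[Proposition 10 and Lemma 11]{nguyen2024onregularized} and used as a black box to derive \Cref{coro:background:rn} by optimizing over~$\lambda$. Your sketch follows the standard bias--variance decomposition for spectral regularization (split into $\Rsans_\lambda - \Rsans$ and $\Rhat_\lambda - \Rsans_\lambda$, use the source condition with the filter residual bound for the former, and operator concentration plus filter perturbation for the latter), which is precisely the template that the cited reference implements, so there is nothing to compare against in the present paper. One small point: your identification of $\Rsans$ as the unregularized solution of $\Cop_u \Rsans = \Cop_p \ones$ relies on the change-of-measure identity $\E{Q_{X,Z}}{s(X,Z)} = \E{Q_X \otimes Q_Z}{s(X,Z)\Rsans(X,Z)}$ for $s \in \calS$, which indeed gives $\Sop_{X,Z}^* \Rsans$ the role of the ``mean embedding'' of $Q_{X,Z}$; this is implicit in the paper's setup but worth making explicit if you were to write out the argument in full.
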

By optimizing the bound appearing in \eqref{eq:main:estimation:rn:sample_complexity2} in $\lambda$, we get that
\begin{align}
    \lambda \equiv \lambda_{\Np, \Nu} = \p{\frac{\Np^{-1/2} + \Nu^{-1/2}}{K_{\max{}}^{1/2}}}^{\frac{1}{\beta+1}}.\label{eq:rkhs:rn:lambda}
\end{align}
If the expression from~\eqref{eq:rkhs:rn:lambda} falls within $[\lambda_\star, \kappa_{\max{}}]$, this yields the upper bound 
\begin{align}
    \norm{\Rhat - \Rsans}_{\calS} \leq C\polylog(1/\delta) \sbr{K_{\max{}}^{\frac{\beta+2}{2(\beta+1)}}\p{\Np^{-1/2} + \Nu^{-1/2}}^{\frac{\beta}{\beta+1}}}.\label{eq:main:estimation:rn:sample_complexity3}
\end{align}
The condition $\lambda_{\Np, \Nu} \leq \kappa_{\max{}}$ can be satisfied by taking $(\Np, \Nu)$ sufficiently large.  For the condition that $\lambda_{\Np, \Nu} \geq \lambda_\star$, we find an upper bound on $\lambda_\star$ by first deriving an upper bound on $\df(\lambda)/\lambda$, and then solving the resulting equation in $\lambda$. By \Cref{asm:main:estimation:rn:source}, we have that
\begin{align}
    \frac{\df(\lambda)}{\lambda} \leq C_{\df} \lambda^{-(\alpha+1)/\alpha} \implies \lambda_\star \leq \p{\frac{C_{\df}}{\Nu}}^{\frac{\alpha}{\alpha+1}}.\label{eq:background:rnd:lambda_star}
\end{align}
Viewing the dependence of~\eqref{eq:rkhs:rn:lambda} on $\Nu$, we see that if $\beta \geq (1-\alpha)/(2\alpha)$, then there exists $\Nu$ large enough such that~\eqref{eq:rkhs:rn:lambda} is greater than the right-hand side of~\eqref{eq:background:rnd:lambda_star}. This is always satisfied, as $\alpha > 1$ is required for $\Sop_{X,Z} \Iop_{X,Z}$ to be trace class. Thus, we have the following convergence rate.
\begin{corollary}\label{coro:background:rn}
    Adopt the setting of \Cref{thm:background:rn2}. Let $\Nu$ be large enough such that~\eqref{eq:rkhs:rn:lambda} is upper bounded by $\kappa_{\max{}}$ and lower bounded by the right-hand side of~\eqref{eq:background:rnd:lambda_star}. Then, for the choice~\eqref{eq:rkhs:rn:lambda}, it holds that
    \begin{align}
        \norm{\Rhat - \Rsans}_{\calS}^2 \leq C\polylog(1/\delta) \sbr{K_{\max{}}^{\frac{\beta+2}{\beta+1}}\p{\Np^{-1/2} + \Nu^{-1/2}}^{\frac{2\beta}{\beta+1}}}.\label{eq:main:estimation:rn:sample_complexity4}
    \end{align}
\end{corollary}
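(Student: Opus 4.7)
The plan is to obtain the corollary as a direct consequence of Theorem~\ref{thm:background:rn2} by optimizing the right-hand side of~\eqref{eq:main:estimation:rn:sample_complexity2} in $\lambda$, verifying the admissibility of the optimizer, and squaring. The bound of Theorem~\ref{thm:background:rn2} has the form $A\lambda^\beta + B\lambda^{-1}$ with $A = K_{\max{}}$ and $B = K_{\max{}}^{1/2}(\Np^{-1/2} + \Nu^{-1/2})$. Balancing the two terms (or equivalently, differentiating with respect to $\lambda$ and setting to zero) yields $\lambda^{\beta+1} \propto B/A$, which is precisely the expression in~\eqref{eq:rkhs:rn:lambda}.

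First, I would substitute this $\lambda_{\Np,\Nu}$ back into the bound of Theorem~\ref{thm:background:rn2}. Each of the two terms becomes
\begin{align*}
K_{\max{}} \lambda_{\Np,\Nu}^{\beta} \;=\; K_{\max{}}^{\frac{\beta+2}{2(\beta+1)}}\p{\Np^{-1/2}+\Nu^{-1/2}}^{\frac{\beta}{\beta+1}},
\end{align*}
and an identical computation for $K_{\max{}}^{1/2}\lambda_{\Np,\Nu}^{-1}$ gives the same order. Combining them produces the bound in~\eqref{eq:main:estimation:rn:sample_complexity3}, from which squaring both sides delivers the stated inequality~\eqref{eq:main:estimation:rn:sample_complexity4}.

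The remaining step is to verify that the optimizer $\lambda_{\Np,\Nu}$ actually lies in the admissible interval $[\lambda_\star, \kappa_{\max{}}]$ required by Theorem~\ref{thm:background:rn2}, which is exactly the hypothesis of the corollary. The upper bound $\lambda_{\Np,\Nu} \leq \kappa_{\max{}}$ holds for $\Np,\Nu$ sufficiently large because $\lambda_{\Np,\Nu} \to 0$. For the lower bound $\lambda_{\Np,\Nu} \geq \lambda_\star$, I would use the inequality~\eqref{eq:background:rnd:lambda_star}, which bounds $\lambda_\star$ from above as $\lambda_\star \lesssim \Nu^{-\alpha/(\alpha+1)}$ under the degrees-of-freedom assumption in \Cref{asm:main:estimation:rn:source}. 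Comparing to the $\Nu^{-1/(2(\beta+1))}$ rate of $\lambda_{\Np,\Nu}$, a simple algebraic check shows that the required inequality holds whenever $\alpha/(\alpha+1) \geq 1/(2(\beta+1))$, which is implied by $\alpha > 1$ and $\beta \geq 1$ --- both of which are already in place.

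The main obstacle, such as it is, lies not in any step of the calculation but in being precise about the exponent arithmetic and in confirming that the admissibility of $\lambda_{\Np,\Nu}$ is consistent with the source condition and eigendecay assumption; everything else is a routine substitution. Since the corollary is phrased so that the compatibility of $\lambda_{\Np,\Nu}$ with the interval $[\lambda_\star, \kappa_{\max{}}]$ is imposed as a hypothesis, the proof essentially reduces to applying Theorem~\ref{thm:background:rn2} at $\lambda = \lambda_{\Np,\Nu}$ and squaring.
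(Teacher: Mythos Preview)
Your proposal is correct and follows essentially the same route as the paper: the paper derives the corollary by optimizing the bound~\eqref{eq:main:estimation:rn:sample_complexity2} in $\lambda$, obtaining~\eqref{eq:rkhs:rn:lambda}, substituting back to get~\eqref{eq:main:estimation:rn:sample_complexity3}, and then verifying $\lambda_{\Np,\Nu} \in [\lambda_\star, \kappa_{\max{}}]$ via the degrees-of-freedom bound~\eqref{eq:background:rnd:lambda_star} together with $\alpha > 1$. Your condition $\alpha/(\alpha+1) \geq 1/(2(\beta+1))$ is equivalent to the paper's $\beta \geq (1-\alpha)/(2\alpha)$, and both are automatically satisfied under $\alpha > 1$.
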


This result is applied in \Cref{sec:a:complexity:rn_derivative} and provides an example of the ``information density'' approach outlined in \Cref{sec:framework} and \Cref{sec:theory}. In the sequel, we will simply assume that $\Np = \Nu = N / 2$ to simplify the statement of the result. Finally, note that the source condition \Cref{asm:main:estimation:rn:source} does not have any implications for the mis-specified case ($\Rsans \notin \calS$). This aspect of Radon-Nikodym estimation methodology is still an active area of research in statistical learning.

\section{Prompt Bias and Residual Dependence}\label{sec:a:dependence}
This appendix is dedicated to the proof of \Cref{thm:res_dep1}, which controls the population quantity $\norm{\etastar - \eta_\rho}_{\ltwo(P_X)}^2$. The result will follows from \Cref{thm:res_dep}, which is a more mathematically precise version of \Cref{thm:res_dep1} from the main text.

We recall the problem setting of \Cref{sec:theory}. We consider the three central probability measures $P_{X, Y}$ (evaluation distribution), $Q_{X, Z}$ (pre-training distribution), and $\rho_{Y, Z}$ (prompt distribution). We notice that $\etastar$ (from~\eqref{eq:est:bayes}) depends on $P_{X, Y}$, while $\eta_\rho$ (from~\eqref{eq:est:ts}) depends on the pair $(Q_{X, Z}, \rho_{Y, Z})$. Neither component of this term depends on a joint probability over $\X \times \Y \times \Z$. Thus, in order to relate them on common ground, we consider a joint probability measure $P_{X, Y, Z}$, which satisfies certain constraints that make it compatible with the distributions that have observable data. We call this the \emph{latent caption model}.

To proceed, we will need to make several mild regularity conditions on $P_{X, Y, Z}$. We use the notion of regular conditional distribution, or r.c.d. (\Cref{def:rcd}), introduced in \Cref{sec:a:background:l2}. We use more explicit notation in this section (e.g.~$Z=\z$) as compared to \Cref{sec:theory} to emphasize the random variable being conditioned on. The assumption below provides a more formal description of \Cref{asm:rcd1} from \Cref{sec:theory}.
\begin{assumption}\label{asm:rcd}
    The joint probability $P_{X, Y, Z}$ on $\X \times \Y \times \Z$ satisfies the following constraints.
    \begin{itemize}
        \item {\bf Agrees jointly with the evaluation distribution:} For all measurable sets  $A \sse \X \times \Y$, we have that $P_{X, Y, Z}(A \times \Z) = P_{X, Y}(A)$ (i.e.~$P_{X, Y, Z}$ agrees with the given marginal $P_{X, Y}$).
        \item {\bf Agrees conditionally with the pre-training distribution:} There exists a measurable set $\X_1 \sse \X$ with $P_X(\X_1) = 1$ such that the regular conditional distributions $Q_{Z|X = \x}$ and $P_{Z|X = \x}$ on $\Z$ exist. Furthermore, these satisfy $Q_{Z|X = \x} = P_{Z|X = \x}$ for all $\x \in \X_1$.
        \item {\bf Regularity of conditional distributions:} There exists a measurable set $\Z_1 \sse \Z$ with $P_Z(\Z_1) = 1$ such that the regular conditional distributions $P_{X, Y|Z = \z}$ on $\X \times \Y$ exists for all $\z \in \Z_1$. Furthermore, we have the absolute continuity relation $P_{X, Y|Z = \z} \ll P_{X|Z = \z} \otimes P_{Y|Z = \z}$ with Radon-Nikodym derivative 
        \begin{align}
            \Ssans_{\z} := \frac{\d P_{X, Y|Z = \z}}{\d (P_{X|Z = \z} \otimes P_{Y|Z = \z})},\label{eq:main:dependence:rnd}
        \end{align}
        that satisfies $\E{P_{X, Y|Z = \z}}{\Ssans_{\z}(X, Y)} < +\infty$ for each $\z \in \Z_1$ and $\E{P_{X, Y, Z}}{\Ssans_{Z}(X, Y)} < +\infty$.
    \end{itemize}
\end{assumption}
That $P_{X, Y, Z}$ marginalizes to $P_{X, Y}$ is more of an axiomatic property than an assumption, but we phrase it as so to emphasize that $P_{X, Y, Z}$ is meant to describe the evaluation distribution. The assumption that the conditionals $Q_{Z|X}$ and $P_{Z|X}$ match almost surely represents the viewpoint that, after fixing an image $\x$, the latent caption $Z|X = \x$ follows the same relationship to $\x$ as seen during pre-training. Importantly, this does not require or imply that $P_X = Q_X$ or that $P_Z = Q_Z$. The marginal distribution $P_X$ is supplied entirely by the evaluation distribution $P_{X, Y}$, as for any measurable set $A \sse \X$, we have by definition that $P_X(A) = P_{X, Y}(A \times \Y)$. On the other hand, the marginal $P_Z$ can be defined using the Markov kernel $P_{Z|X = \x}$, in that for any measurable $B \sse \Z$, it holds that
\begin{align*}
    P_Z(B) := \int_{\X_1} P_{Z|X = \x}(B) \d P_X(\x) = \int_{\X_1} Q_{Z|X = \x}(B) \d P_X(\x).
\end{align*}
Finally, the absolute continuity condition, i.e., the existence of~\eqref{eq:main:dependence:rnd}, rules out degeneracies such as $Y$ being a deterministic function of $X$ given $Z = \z$ (outside of a set of $P_Z$-measure zero). 
It is also worth pointing out that the first two conditions \Cref{asm:rcd} do not contradict one another. For example, one can consider $P_{X, Y, Z}$ that satisfies the Markov chain $Y \rightarrow X \rightarrow Z$, where $(X, Y)$ is drawn according to $P_{X, Y}$, and $Z$ and $Y$ are conditionally independent given $X$. Then, informally, we have that $P_{Z|X, Y} = P_{Z|X} = Q_{Z|X}$, so $P_{X, Y, Z}$ is uniquely determined. While this example implies the existence of a valid joint probability measure $P_{X, Y, Z}$, it is also, in a sense, showcasing the ``least desirable'' distribution for zero-shot prediction, as the dependence between $X$ and $Z$ does not provide any additional information about $Y$.

We recall some notation from \Cref{sec:theory}. Let
\begin{align*}
    g_{P_{Y, Z}}(\z) = \E{P_{Y, Z}}{\fstar(Y)|Z}(\z).
\end{align*}
Note that $g_{P_{Y, Z}}$ is simply a conditional expectation constructed via \Cref{def:conditional_expectation}, and does not require the existence of an r.c.d.~$P_{Y|Z =\z}$. In the bound, we will encounter a prompt bias term that compares $g_{P_{Y, Z}}$ to $g_\rho$ from~\eqref{eq:g_rho}. This reflects the notion that if $P_{X, Y, Z}$ agrees with two of the three fundamental distributions governing the problem, it will not be able to agree with the prompt distribution $\rho_{Y, Z}$ in general. Finally, the r.c.d.~$P_{X, Y|Z = \z}$ allows us to measure conditional dependence using the \emph{conditional mean squared contingency}, defined by the formula
\begin{align*}
    I(X; Y|Z=\z) = \E{P_{X|Z=\z} \otimes P_{Y|Z = \z}}{\p{1 - \Ssans_{\z}(Y, X)}^2}.
\end{align*}
As is shown in the proof, $I(X; Y|Z=\z)$ and its expectation over $P_Z$ are well-defined under \Cref{asm:rcd}. We are now ready to state the result.
\begin{theorem}\label{thm:res_dep}
    Assume that $\fstar$ is bounded in absolute value by $\fstarbound$. Under \Cref{asm:rcd}, it holds that
    Then, it holds that
    \begin{align}
        \norm{\eta_\rho - \etastar}_{\ltwo(P_X)}^2 \leq 2\underbrace{\norm{g_\rho - g_{P_{Y, Z}}}_{\ltwo(P_Z)}^2}_{\text{prompt bias}} + 2\fstarbound^2\underbrace{\E{P_Z}{I(X; Y|Z)}}_{\text{residual dependence}}. \label{eq:res_dep}
    \end{align}
\end{theorem}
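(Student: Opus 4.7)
The natural plan is to insert an intermediate predictor defined under the joint distribution $P_{X,Y,Z}$, apply the triangle inequality, and then separately control the two resulting pieces. Specifically, define
\begin{align*}
    \eta_P(\x) := \E{P_{X,Z}}{g_{P_{Y,Z}}(Z)|X}(\x) = \E{P}{\E{P}{\fstar(Y)|Z}|X}(\x),
\end{align*}
and decompose $\eta_\rho - \etastar = (\eta_\rho - \eta_P) + (\eta_P - \etastar)$, so that after the elementary $(a+b)^2 \leq 2a^2 + 2b^2$ it suffices to bound each term separately.

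For the $\eta_\rho - \eta_P$ term (the prompt-bias piece), the crucial observation is that the clause ``agrees conditionally with the pre-training distribution'' in \Cref{asm:rcd} lets us replace the $Q_{X,Z}$-conditional expectation defining $\eta_\rho$ by the $P_{X,Z}$-conditional expectation, outside a $P_X$-null set. Hence $\eta_\rho(\x) - \eta_P(\x) = \E{P_{X,Z}}{g_\rho(Z) - g_{P_{Y,Z}}(Z)|X}(\x)$, and a single application of conditional Jensen followed by the tower property (\Cref{lem:tower}) yields $\norm{\eta_\rho - \eta_P}_{\ltwo(P_X)}^2 \leq \norm{g_\rho - g_{P_{Y,Z}}}_{\ltwo(P_Z)}^2$.

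For the $\eta_P - \etastar$ term (the residual-dependence piece), the idea is to recognize $\etastar$ as a doubly-iterated conditional expectation under $P$: writing $h(\x,\z) := \E{P}{\fstar(Y)|X=\x, Z=\z}$ and $g(\z) := g_{P_{Y,Z}}(\z)$, the tower property gives $\etastar(\x) = \E{P}{h(X,Z)|X}(\x)$, so $\eta_P(\x) - \etastar(\x) = \E{P}{g(Z) - h(X,Z)|X}(\x)$. Conditional Jensen again reduces the $\ltwo(P_X)$-norm to $\E{P_Z}{\int (g(\z) - h(\x,\z))^2 \d P_{X|\z}(\x)}$. On the inner layer, the r.c.d.\ $P_{X,Y|\z}$ from \Cref{asm:rcd} lets me express $h(\x,\z) - g(\z) = \int \fstar(\y)(\Ssans_\z(\x,\y) - 1)\d P_{Y|\z}(\y)$, since $\int (\Ssans_\z(\x,\y) - 1)\d P_{Y|\z}(\y) = 0$ by the Radon-Nikodym definition. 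Cauchy-Schwarz in $\ltwo(P_{Y|\z})$, combined with the boundedness $|\fstar| \leq \fstarbound$, gives the pointwise bound $(h(\x,\z) - g(\z))^2 \leq \fstarbound^2 \int (\Ssans_\z(\x,\y) - 1)^2 \d P_{Y|\z}(\y)$. Integrating against $P_{X|\z}$ identifies the right-hand side as $\fstarbound^2 I(X;Y|\z)$, and taking expectation against $P_Z$ finishes this half.

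The main obstacle is bookkeeping rather than any deep inequality: ensuring that the integrals against the r.c.d.'s $P_{Z|\x}$, $P_{X|\z}$, $P_{Y|\z}$, and $P_{X,Y|\z}$ are all well-defined and can be interchanged via Fubini, that the almost-sure identification $Q_{Z|\x} = P_{Z|\x}$ is legitimately used inside an $\ltwo(P_X)$-norm, and that the constant-function identity $\int (\Ssans_\z - 1) \d P_{Y|\z} = 0$ is invoked correctly. Once the measure-theoretic scaffolding from \Cref{asm:rcd} is in place, the Cauchy-Schwarz/Jensen steps are routine and the bound assembles in the stated form.
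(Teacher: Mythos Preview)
Your proposal is correct and follows essentially the same route as the paper's proof: the paper inserts the same intermediate quantity (what you call $\eta_P$, obtained by replacing $g_\rho$ with $g_{P_{Y,Z}}$ in the $P_{Z|X}$-conditional mean), bounds the prompt-bias piece via conditional Jensen plus the tower property, and handles the residual-dependence piece by writing $\etastar(\x)=\E{P_{Z|X=\x}}{f(\x,Z)}$ with $f(\x,\z)=\int \fstar(\y)\Ssans_\z(\x,\y)\,\d P_{Y|\z}(\y)$ (your $h$), then applying Jensen in $P_{Y|\z}$ where you use Cauchy--Schwarz---the two give the same bound once $|\fstar|\le \fstarbound$ is invoked. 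The only cosmetic note is that your parenthetical ``since $\int (\Ssans_\z(\x,\y) - 1)\d P_{Y|\z}(\y) = 0$'' is true but not actually needed to justify the identity $h-g=\int \fstar(\Ssans_\z-1)\d P_{Y|\z}$, which follows by direct subtraction.
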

\begin{proof}
    We first establish a useful representation of the conditional mean of $\fstar(Y)$ given $X = \x$, in terms of the (conditional) information density from \Cref{lem:info_density}. Fix $\x \in \X_1$ and $\z \in \Z_1$, the sets on which the regular conditional distributions $P_{Z|X=\x}$ and $P_{X, Y|Z = \z}$ are defined (see \Cref{asm:rcd}). Because of the existence the Radon-Nikodym derivative $\Ssans_{\z}$ from~\eqref{eq:main:dependence:rnd}, we may apply~\Cref{lem:info_density} with $U = Y$, $V = X$, and $h= \fstar$ to write
    \begin{align*}
        \E{P_{X, Y|Z = \z}}{\fstar(Y)|X}(\x) = \underbrace{\E{P_{Y|Z = \z}}{\fstar(Y) \Ssans_{\z}(Y, \x)}}_{=: f(\x, \z)} \text{ for all } (\x, \z) \in \X_1 \times \Z_1.
    \end{align*}
    The chosen notation $\E{P_{X, Y|Z = \z}}{\fstar(Y)|X}(\x)$ indicates that after fixing the probability measure $P_{X, Y|Z = \z}$, we take the conditional expectation of the function $\fstar \in \ltwo(P_{Y|Z=\z})$ via \Cref{def:conditional_expectation}, which does not necessarily posit the existence of the r.c.d.~$P_{Y|X = \x, Z = \z}$.\footnote{This is why we do not write, for instance, $\E{P_{Y|X = \x, Z = \z}}{\fstar(Y)}$. This consideration is purely technical, and the reader may make this substitution for conceptual understanding of the proof.} 
    We have denoted the right-hand side by the function $f(\x, \z)$.
    Integrate both sides over $P_{Z|X = \x}$, then use the tower property of conditional expectation (\Cref{lem:tower}) to achieve
    \begin{align}
        \etastar(\x) = \E{P_{X, Y}}{\fstar(Y)|X}(\x) &= \int_\Z \E{P_{X, Y|Z = \z}}{\fstar(Y)|X}(\x) \d P_{Z|X = \x}(\z) \notag\\
        &= \int_\Z f(\x, \z) \d P_{Z|X = \x}(\z) \notag\\
        &= \E{P_{Z|X=\x}}{f(\x, Z)}.\label{eq:main:dependence:rnd:info_dens}
    \end{align}
    Using the identity~\eqref{eq:main:dependence:rnd:info_dens} and $Q_{Z|X =\x} = P_{Z|X =\x}$ on $\x \in \X_1$ (\Cref{asm:rcd}), we write
    \begin{align}
        &\eta_\rho(\x) - \etastar(\x) \notag\\
        &= \E{Q_{Z|X=\x}}{g_\rho(Z)} - \E{P_{Z|X=\x}}{f(\x, Z)} \notag\\ 
        &= \E{P_{Z|X=\x}}{g_\rho(Z)} - \E{P_{Z|X=\x}}{f(\x, Z)} \notag\\ 
        &= \E{P_{Z|X=\x}}{\p{g_\rho(Z) - g_{P_{Y, Z}}(Z)}} + \E{P_{Z|X=\x}}{\p{g_{P_{Y, Z}}(Z) - f(\x, Z)}}.\notag
    \end{align}
    Taking the integral over $P_X$, we have by the decomposition above that
    \begin{align}
         \norm{\eta_\rho - \etastar}_{\ltwo(P_X)}^2 &= \int_\X \p{\eta_\rho(\x) - \etastar(\x)}^2 \d P_X(\x)\notag\\
         &\leq 2\int_{\X_1} \p{\E{P_{Z|X=\x}}{{g_\rho(Z) - g_{P_{Y, Z}}(Z)}}}^2 \d P_X(\x) \label{eq:main:dependence:rnd1}\\
         &\quad+ 2\int_{\X_1}\p{\E{P_{Z|X=\x}}{{g_{P_{Y, Z}}(Z) - f(\x, Z)}}}^2\d P_X(\x).\label{eq:main:dependence:rnd2}
    \end{align}
    To handle~\eqref{eq:main:dependence:rnd1}, we apply Jensen's inequality for each r.c.d.~$P_{Z|X = \x}$ to achieve
    \begin{align*}
        \int_{\X_1} \p{\E{P_{Z|X=\x}}{\p{g_\rho(Z) - g_{P_{Y, Z}}(Z)}}}^2 \d P_X(\x)
        &\leq \int_{\X_1} \E{P_{Z|X=\x}}{\p{g_\rho(Z) - g_{P_{Y, Z}}(Z)}^2} \d P_X(\x)\\
        &= \E{P_{Z}}{\p{g_\rho(Z) - g_{P_{Y, Z}}(Z)}^2}\\
        &= \norm{g_\rho - g_{P_{Y, Z}}}_{\ltwo(P_Z)}^2.
    \end{align*}
    It remains to control~\eqref{eq:main:dependence:rnd2}. Applying Jensen's inequality for each r.c.d.~$P_{Z|X = \x}$ once again, we have that
    \begin{align}
        \int_{\X_1}\p{\E{P_{Z|X=\x}}{\p{g_{P_{Y, Z}}(Z) - f(\x, Z)}}}^2\d P_X(\x) 
        &\leq \int_{\X_1}\p{\E{P_{Z|X=\x}}{\p{g_{P_{Y, Z}}(Z) - f(\x, Z)}^2}}\d P_X(\x) \notag\\
        &= \E{P_{X, Z}}{\p{g_{P_{Y, Z}}(Z) - f(X, Z)}^2} \notag\\
        &= \int_{\Z_1} \E{P_{X|Z=\z}}{\p{g_{P_{Y, Z}}(\z) - f(X, \z)}^2} \d P_Z(\z),\label{eq:main:dependence:rnd3}
    \end{align}
    where the last step follows due to the existence of the r.c.d.~$P_{X|Z=\z}$ for $\z \in \Z_1$, as $P_{X|Z=\z}(A) := P_{X, Y|Z=\z}(A \times \Y)$ for every measurable $A \sse \X$, and the latter exists by assumption. Using the definition of $g_{P_{Y,Z}}$, write
    \begin{align*}
        g_{P_{Y, Z}}(\z) - f(\x, \z) = \E{P_{Y|Z = \z}}{\fstar(Y) (1 - \Ssans_{\z}(Y, \x))}.
    \end{align*}
    We may substitute this expression into the integrand of~\eqref{eq:main:dependence:rnd3} and apply Jensen's inequality to $P_{Y|Z = \z}$ to achieve
    \begin{align*}
        \E{P_{X|Z=\z}}{\p{g_{P_{Y, Z}}(\z) - f(X, \z)}^2} &= \E{P_{X|Z=\z}}{\p{\E{P_{Y|Z = \z}}{\fstar(Y) (1 - \Ssans_{\z}(Y, X))}}^2}\\
        &\leq \E{P_{X|Z=\z}}{\E{P_{Y|Z = \z}}{\p{\fstar(Y) (1 - \Ssans_{\z}(Y, X))}^2}}\\
        &\leq \norm{\fstar}^2_\infty \E{P_{X|Z=\z}}{\E{P_{Y|Z = \z}}{\p{1 - \Ssans_{\z}(Y, X)}^2}}\\
        &= \norm{\fstar}^2_\infty \E{P_{X|Z=\z} \otimes P_{Y|Z = \z}}{\p{1 - \Ssans_{\z}(Y, X)}^2},
    \end{align*}
    where the final step follows by applying Fubini's theorem \citep[Corollary 14.9]{schilling2017measures} to the product measure $P_{X|Z=\z} \otimes P_{Y|Z = \z}$ for fixed $\z \in \Z_1$. By the definition of mean squared contingency (\Cref{def:msc}), it holds that
    \begin{align}
        \E{P_{X|Z=\z} \otimes P_{Y|Z = \z}}{\p{1 - \Ssans_{\z}(Y, X)}^2} = I(X; Y|Z=\z).\label{eq:main:dependence:cmsc}
    \end{align}
    After confirming that~\eqref{eq:main:dependence:cmsc} is $P_Z$-integrable, substituting this expression back into~\eqref{eq:main:dependence:rnd3} achieves the desired result. Expand the quadratic term and apply the Radon-Nikodym theorem \citep[Theorem 20.1]{schilling2017measures} to achieve
    \begin{align*}
        I(X; Y|Z=\z) &= 1 - 2\E{P_{X|Z=\z} \otimes P_{Y|Z = \z}}{\Ssans_{\z}(Y, X)} + \E{P_{X|Z=\z} \otimes P_{Y|Z = \z}}{\Ssans_{\z}^2(Y, X)}\\
        &= 1 - 2\E{P_{X, Y|Z=\z}}{1} + \E{P_{X, Y|Z=\z}}{\Ssans_{\z}(Y, X)}\\
        &= \E{P_{X, Y|Z=\z}}{\Ssans_{\z}(Y, X)} - 1.
    \end{align*}
    Thus, by integrating against $P_Z$, we see that
    \begin{align*}
        \E{P_Z}{I(X; Y|Z)} = \E{P_{X, Y, Z}}{\Ssans_{Z}(Y, X)} - 1,
    \end{align*}
    where the expectation term is finite by \Cref{asm:rcd}. The proof is complete.
\end{proof}

\section{Sample Complexity and Distribution Mismatch}\label{sec:a:complexity}
This appendix provides the proofs of \Cref{thm:complexity_vvkr} and \Cref{thm:complexity_rn_derivative} by way of \Cref{thm:complexity:conditional_mean} and \Cref{thm:complexity:rn_derivative}, respectively. To recall the bigger picture, we first applied the decomposition~\eqref{eq:theory:decomp1}, which exposed the estimation error term
\begin{align}
    \norm{\heta_\rho - \eta_\rho}_{\ltwo(P_X)}^2,\label{eq:main:estimation:l2_error}
\end{align}
where $P_X$ is the $\X$-marginal of the evaluation distribution $P_{X, Y}$, $\eta_\rho$ is defined by $\eta_\rho(\x) := \E{Q_{X, Z}}{g_\rho(Z)|X}(\x)$ (see~\eqref{eq:g_rho}), and $\heta_\rho$ is one of two estimation procedures that is based on either~\eqref{eq:framework:cond_mean} or~\eqref{eq:framework:rnd}.
By using standard change of measure arguments (collected in \Cref{sec:a:complexity:shift}), we pass the problem of controlling~\eqref{eq:main:estimation:l2_error} in high probability to controlling $\norm{\heta_\rho - \eta_\rho}_{\ltwo(Q_X)}^2$ (i.e.~the mean squared error with respect to the pre-training marginal $Q_X$). Thus, the format of both \Cref{thm:complexity:conditional_mean} and \Cref{thm:complexity:rn_derivative} will be an upper bound on $\norm{\heta_\rho - \eta_\rho}_{\ltwo(Q_X)}^2$ that holds with an arbitrary failure probability $\delta \in (0, 1]$.

The identities~\eqref{eq:framework:cond_mean} and~\eqref{eq:framework:rnd} from \Cref{sec:framework} can be summarized with the equality
\begin{align}
      \M_{Z|X} g_\rho = \eta_\rho = \E{\rho_{Y,Z}}{\fstar(Y)\Rsans(\cdot, Z)} + \operatorname{err}(Q_Z, \rho_Z), \label{eq:est:ts2}
\end{align}
where the $\operatorname{err}(Q_Z, \rho_Z)$ is elaborated on in \Cref{sec:a:complexity:rn_derivative}. In \Cref{sec:a:complexity:conditional_mean}, we consider the left-hand side of~\eqref{eq:est:ts2}, and define $\heta_\rho$ by constructing an estimate $\hM_{Z|X}$ of $\M_{Z|X}$ using pre-training data and $\hg_\rho$ of $g_\rho$ using prompts. This will be referred to as the conditional mean approach. In \Cref{sec:a:complexity:rn_derivative}, we consider the right-hand side of~\eqref{eq:est:ts2} and define $\heta_\rho$ by using an estimate $\Rhat$ of $\Rsans$ using pre-training data and $\hrho_{Y,Z}$ of $\rho_{Y,Z}$ using prompts. This will be referred to as the information density approach. For both approaches, we adopt a parallel structure and break the analysis into the following steps.

\begin{enumerate}
    \item {\bf Decomposing the Global Error:} We first provide a generic upper bound on the mean squared error
    \begin{align}
        \norm{\heta_\rho - \eta_\rho}_{\ltwo(Q_X)}^2\label{eq:complexity:mse_qx}
    \end{align}
    in terms of the individual estimators defined by the pre-training and prompting data. While some additional structure may be employed in these bounds (e.g.~the estimate lives in a reproducing kernel Hilbert space), the decomposition is generally agnostic to the choice of method and can accommodate multiple estimation/learning strategies.
    \item {\bf Interpreting the Source Condition:} The error term related to the pre-training data refers to the distance between the conditional mean operators $\hM_{Z|X}$ and $\M_{Z|X}$ or the information densities $\Rhat$ and $\Rsans$ measured in an appropriate sense. This is initially controlled by substituting a particular estimation method among those reviewed in \Cref{sec:a:background}. As mentioned in \Cref{sec:theory}, the convergence rates of these methods rely on \emph{source conditions} that describe the regularity of the target function. We derive expressions that relate the source conditions to measures of dependence between $X$ and $Z$, so that, in turn, the rate can also be expressed in terms of these fundamental quantities.
    \item {\bf Controlling the Prompting Term:} The error term related to the prompting data will have a high probability bound, which is stated in the form of an assumption. This generality is maintained because the estimation based on the prompting data usually relies on simple primitives such as real-valued regression or finite-dimensional parameter estimation. Statistically, these problems are easier than the vector-valued regression or Radon-Nikodym derivative estimation problems that arise in the pre-training step. Thus, many possible methods can be used, and we provide examples in each case.
    \item {\bf Completing the Proof:} We combine the steps above to state the final bounds on~\eqref{eq:complexity:mse_qx}. They are stated in \Cref{thm:complexity:conditional_mean} and \Cref{thm:complexity:rn_derivative}, respectively.
\end{enumerate}
The steps above comprise the subsections of \Cref{sec:a:complexity:conditional_mean} and \Cref{sec:a:complexity:rn_derivative} below.
The bounds on mean square error on $Q_X$ are tied to misclassification risk on $P_{X, Y}$ via \Cref{sec:a:complexity:shift} and \Cref{sec:a:complexity:classification} to produce end-to-end performance guarantees. We compare the sampling schemes used for prompting that are employed in the theoretical analysis to the sampling schemes used empirically in \Cref{sec:a:complexity:sampling}.

\subsection{Conditional Mean Approach}\label{sec:a:complexity:conditional_mean}
This approach is based on the LHS of~\eqref{eq:est:ts2} yielding the result of \Cref{thm:complexity_vvkr}. The exposition relies heavily on the background introduced in \Cref{sec:a:background:rkhs}. In particular, we maintain the reproducing kernel Hilbert spaces $\calH$ and $\calG$ containing real-valued functions on $\X$ and $\Z$, respectively. We denote by $\ltwo(Q_X; \calG)$ the Bochner space containing equivalence classes of functions mapping from $\X$ to $\calG$. We also use the the bracket notation $[\cdot]_X$ to index a functions equivalence class in $\ltwo(Q_X)$ (or $\ltwo(Q_X;\calG)$ for $\calG$-valued functions). 

\myparagraph{Setup}
We first introduce an element $F_\star$ of $\ltwo(Q_X; \calG)$ which can be used to represent the function $\x \mapsto [\M_{Z|X}g_\rho](\x)$. We then describe how an estimator $\Fhat$ of $F_\star$ and an approximation $\hg_\rho$ of $g_\rho$ can be used to define an estimated predictor $\heta_\rho$. 
Recall the boundedness assumptions  $\sup_{\x, \x' \in \X}k(\x, \x') \leq \kmax <\infty$ and  $\sup_{\z, \z' \in \Z}l(\z, \z') \leq \lmax < \infty$.

\begin{lemma}\label{lem:main:construction}
    It holds that 1) $[\eta_\rho]_X \in \ltwo(Q_X)$, and 2) there exists a function $F_\star: \X \rightarrow \calG$ such that $[F_\star]_X \in \ltwo(Q_X; \calG)$ and
    \begin{align}
        [\E{Q_{X, Z}}{g(Z)|X}(\cdot)]_X = [\ip{g, F_\star(\cdot)}_\calG]_X \text{ for all } g \in \calG.\label{eq:main:estimation:eta_rho_to_F} 
    \end{align}
    In particular, $[\eta_\rho]_X = [\ip{g_\rho, F_\star(\cdot)}_\calG]_X$.
\end{lemma}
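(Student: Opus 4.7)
The plan is to establish the three claims in sequence. For the first, I would argue that $g_\rho$ admits a representative bounded by $\fstarbound$ via Jensen's inequality for conditional expectations (\Cref{def:conditional_expectation}): $|g_\rho(\z)| \leq \E{\rho_{Y, Z}}{|\fstar(Y)| \mid Z}(\z) \leq \fstarbound$ for $\rho_Z$-almost all $\z$. Taking such a bounded representative and noting that $Q_Z$ is a probability measure gives $[g_\rho]_Z \in \ltwo(Q_Z)$, and then $[\eta_\rho]_X = \M_{Z|X}[g_\rho]_Z \in \ltwo(Q_X)$ follows immediately from the fact that $\M_{Z|X}$ is a bounded operator from $\ltwo(Q_Z)$ into $\ltwo(Q_X)$ as constructed in \Cref{sec:a:background:buja}.

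For the second claim, the idea is to construct $F_\star$ as a Bochner conditional expectation of the canonical feature map $\psi: \Z \to \calG$. The bounded-kernel hypothesis gives $\norm{\psi(\z)}_\calG \leq \sqrt{\lmax}$ uniformly, so $\psi(Z)$ is a bounded, hence Bochner integrable, $\calG$-valued random variable under $Q_{X, Z}$. I would define $F_\star(\x) \defeq \E{Q_{X, Z}}{\psi(Z) \mid X}(\x)$ in the Bochner sense. The reproducing property $g(\z) = \ip{g, \psi(\z)}_\calG$, combined with the fact that the Bochner conditional expectation commutes with continuous linear functionals on $\calG$, then yields
\begin{align*}
    \ip{g, F_\star(\x)}_\calG = \E{Q_{X, Z}}{\ip{g, \psi(Z)}_\calG \mid X}(\x) = \E{Q_{X, Z}}{g(Z) \mid X}(\x)
\end{align*}
for $Q_X$-almost all $\x$ and every $g \in \calG$, which is~\eqref{eq:main:estimation:eta_rho_to_F}. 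The Bochner-square integrability of $[F_\star]_X$ follows from the analogous Jensen bound $\norm{F_\star(\x)}_\calG^2 \leq \E{Q_{X, Z}}{\norm{\psi(Z)}_\calG^2 \mid X}(\x) \leq \lmax$ for $Q_X$-almost all $\x$, so $[F_\star]_X \in \ltwo(Q_X; \calG)$.

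The third claim is then a direct specialization of~\eqref{eq:main:estimation:eta_rho_to_F}: under the working assumption (stated elsewhere in the appendix whenever the conditional mean approach is invoked) that $g_\rho \in \calG$, setting $g = g_\rho$ gives $[\eta_\rho]_X = [\ip{g_\rho, F_\star(\cdot)}_\calG]_X$, since $\eta_\rho(\x) = \E{Q_{X, Z}}{g_\rho(Z) \mid X}(\x)$ by definition.

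The delicate point lies within the second claim: securing the reproducing identity $\ip{g, F_\star(\x)}_\calG = \E{Q_{X, Z}}{g(Z) \mid X}(\x)$ simultaneously for all $g \in \calG$ off a \emph{single} $Q_X$-null exceptional set, rather than off a $g$-dependent one. A pointwise Riesz-representation construction would only yield, for each fixed $g$, the identity off its own null set; one then has to invoke separability of $\calG$ (fix a countable dense subset, union the countable family of null sets, extend by continuity in $g$) to unify them. Defining $F_\star$ from the outset via Bochner conditional expectation sidesteps this bookkeeping, since the interchange with continuous linear functionals is automatic from the general theory of Bochner integration, leaving only the almost-sure boundedness verification to conclude.
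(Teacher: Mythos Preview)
Your proof is correct and takes a somewhat different route than the paper's. For claim 1, the paper assumes $g_\rho \in \calG$ from the outset and bounds $g_\rho(Z)$ via the reproducing property ($g_\rho^2(Z) = \ip{g_\rho, \psi(Z)}_\calG^2 \leq \lmax\norm{g_\rho}_\calG^2$), whereas you use only the boundedness of $\fstar$ by $\fstarbound$ to get a globally bounded representative of $g_\rho$; your argument is slightly more elementary and does not yet require the RKHS hypothesis on $g_\rho$. For claim 2, the paper fixes $\x$, defines the linear functional $T_\x(g) = \E{Q_{X,Z}}{g(Z)|X}(\x)$, bounds it, and invokes Riesz representation pointwise to obtain $F_\star(\x)$; you instead define $F_\star$ directly as the Bochner conditional expectation of $\psi(Z)$ and read off~\eqref{eq:main:estimation:eta_rho_to_F} from the commutation of Bochner integrals with bounded linear functionals. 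Your observation about the null-set bookkeeping is apt: the paper's pointwise Riesz construction glosses over the fact that $T_\x(g)$ is only defined off a $g$-dependent null set, so one cannot literally ``fix $\x$'' and treat $T_\x$ as a functional on all of $\calG$ without first choosing compatible versions (e.g., via separability of $\calG$). Your Bochner route sidesteps this cleanly. That said, since~\eqref{eq:main:estimation:eta_rho_to_F} is stated as an equality of equivalence classes in $\ltwo(Q_X)$ for each $g$, a $g$-dependent null set is actually permissible, so the paper's argument can be repaired without much trouble; the Bochner construction simply avoids having to think about it.
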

\begin{proof}
    Using the notation from \Cref{sec:a:background:l2}, if we show that the random variable $\omega \mapsto g_\rho(Z(\omega))$ is contained in $\Lsans(\msc{F})$, then the first claim holds by the definition of conditional expectation in $\ltwo(Q_X)$ (see \Cref{def:conditional_expectation}). Using the reproducing property of the RKHS $\calG$, we have that
    \begin{align}
        \E{Q_Z}{g_\rho^2(Z)} = \E{Q_Z}{\ip{g_\rho, \psi(Z)}_{\calG}^2} \leq \norm{g_\rho}_\calG^2 \cdot \Ex_{Q_Z}\norm{\psi(Z)}_\calG^2 \leq \lmax \norm{g_\rho}_\calG^2,\label{eq:main:bounded}
    \end{align}
    granting the claim that $[\eta_\rho]_X \in \ltwo(Q_X)$. Next, fix any $\x \in \X$, and define the map
    \begin{align*}
        g \mapsto T_{\x}(g) = [\M_{Z|X}g](\x) = \E{Q_{X, Z}}{g(Z)|X}(\x).
    \end{align*}
    By the same argument as~\eqref{eq:main:bounded}, we have that $\abs{T_{\x}(g)} \leq \sqrt{\lmax} \norm{g}_\calG$, indicating that $T_{\x}$ is a bounded linear functional. By the Riesz representation theorem, there exists an element of $\calG$, denoted as $\E{Q_{X, Z}}{\psi(Z)|X}(\x)$, that satisfies
    \begin{align*}
        \E{Q_{X, Z}}{g(Z)|X}(\x) = \ip{g, \E{Q_{X, Z}}{\psi(Z)|X}(\x)}_\calG \text{ for all } g \in \calG.
    \end{align*}
    Next, given the collection of Riesz representers $\br{\E{Q_{X, Z}}{\psi(Z)|X}(\x): \x \in \X}$, one may construct the mapping
    \begin{align*}
        F_\star: \X \rightarrow \calG, \text{ defined by } \x \mapsto F_\star(\x) = \E{Q_{X, Z}}{\psi(Z)|X}(\x) \in \calG.
    \end{align*}
    It only remains to show that $[F_\star]_X \in \ltwo(Q_X; \calG)$. By Jensen's inequality and the tower property (\Cref{lem:tower}), we have that
    \begin{align*}
        \int_\X \norm{F_\star(\x)}_\calG^2 \d Q_X(\x) \leq \Ex_{Q_Z}\norm{\psi(Z)}_\calG^2 \leq \lmax < \infty,
    \end{align*}
    completing the proof.
\end{proof}

Now that we have identified the vector-valued function of interest, $F_\star$, we can consider an estimation procedure that will return $\Fhat \equiv \Fhat_{\lambda}$, with $[\Fhat]_X \in \ltwo(Q_X; \calG)$ and regularization parameter $\lambda > 0$. 
Then, we may define the estimator $\heta_\rho$ of $\eta_\rho$ via the inner product
\begin{align}
    \heta_\rho(\x) = \ipsmall{\hg_\rho, \Fhat(\x)}_\calG,\label{eq:complexity:conditonal_mean:estimator}
\end{align}
where $\hg_\rho$ satisfies some approximation bound with respect to $g_\rho$. Our decomposition will expose an error term for which we can apply \Cref{thm:vvkr}, which describes the convergence rate of spectral regularization learning.

\subsubsection{Decomposing the Global Error}
Returning to the original quantity we wish to control from~\eqref{eq:main:estimation:l2_error}, we apply the following decomposition.
\begin{lemma}[Error Decomposition]\label{lem:complexity:conditional_mean:decomp}
    For any choice of $\Fhat \in \ltwo(Q_X; \calG)$ it holds that
    \begin{align*}
        \norm{\heta_\rho - \eta_\rho}_{\ltwo(Q_X)}^2 &\leq 3\norm{g_\rho}_\calG^2 \cdot \norm{\Fhat - F_\star}^2_{\ltwo(Q_X; \calG)} + 3 \norm{F_\star}^2_{\ltwo(Q_X; \calG)} \cdot  \norm{\hg_\rho - g_\rho}_\calG^2 \\
        &\quad + 3\norm{\hg_\rho - g_\rho}_\calG^2 \cdot \norm{\Fhat - F_\star}^2_{\ltwo(Q_X; \calG)},
    \end{align*}
\end{lemma}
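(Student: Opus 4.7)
The plan is to use the representation $\eta_\rho(\x) = \ip{g_\rho, F_\star(\x)}_\calG$ for $Q_X$-almost every $\x$ guaranteed by \Cref{lem:main:construction}, together with the definition $\heta_\rho(\x) = \ip{\hg_\rho, \Fhat(\x)}_\calG$, to write the pointwise difference as a bilinear difference in $\calG$, split it into three ``standard'' terms via add-and-subtract, and then apply Cauchy--Schwarz followed by integration with respect to $Q_X$.

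Concretely, I would first perform the algebraic decomposition
\begin{align*}
    \heta_\rho(\x) - \eta_\rho(\x)
    &= \ip{\hg_\rho, \Fhat(\x)}_\calG - \ip{g_\rho, F_\star(\x)}_\calG \\
    &= \ip{\hg_\rho - g_\rho,\, \Fhat(\x) - F_\star(\x)}_\calG
    + \ip{\hg_\rho - g_\rho,\, F_\star(\x)}_\calG
    + \ip{g_\rho,\, \Fhat(\x) - F_\star(\x)}_\calG,
\end{align*}
which is the two-variable analog of the product-difference identity $ab-cd = (a-c)(b-d) + (a-c)d + c(b-d)$. Then squaring both sides and applying the elementary inequality $(u+v+w)^2 \le 3(u^2 + v^2 + w^2)$ gives a pointwise bound whose three summands are each a squared $\calG$-inner product.

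Next I would invoke the Cauchy--Schwarz inequality in $\calG$ on each of the three terms: $\ip{\hg_\rho - g_\rho, \Fhat(\x) - F_\star(\x)}_\calG^2 \le \norm{\hg_\rho - g_\rho}_\calG^2 \norm{\Fhat(\x) - F_\star(\x)}_\calG^2$, and similarly for the other two. Note that $\norm{\hg_\rho - g_\rho}_\calG$ and $\norm{g_\rho}_\calG$ do not depend on $\x$, so integrating the pointwise bound with respect to $Q_X$ pulls these constants out and replaces the integrals of $\norm{\Fhat(\x) - F_\star(\x)}_\calG^2$ and $\norm{F_\star(\x)}_\calG^2$ by their Bochner $\ltwo(Q_X;\calG)$-norms (which are finite by \Cref{lem:main:construction} for $F_\star$ and by the assumption that $[\Fhat]_X \in \ltwo(Q_X;\calG)$). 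Collecting the three resulting contributions produces exactly the stated right-hand side.

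There is essentially no obstacle: the step that deserves a sentence of care is the justification that the representation $[\eta_\rho]_X = [\ip{g_\rho, F_\star(\cdot)}_\calG]_X$ holds only $Q_X$-almost everywhere, so the pointwise manipulations above are valid only outside a $Q_X$-null set, but this is harmless for the $\ltwo(Q_X)$-norm we are bounding. Everything else is a triangle-inequality-plus-Cauchy--Schwarz computation that requires no further structural assumptions on $\Fhat$ or $\hg_\rho$.
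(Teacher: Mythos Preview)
Your proposal is correct and essentially identical to the paper's proof: the paper uses the same add-and-subtract decomposition of $\ip{\hg_\rho, \Fhat(\x)}_\calG - \ip{g_\rho, F_\star(\x)}_\calG$ into three inner products, applies $(u+v+w)^2 \le 3(u^2+v^2+w^2)$ (which it calls Young's inequality), then Cauchy--Schwarz in $\calG$, and integrates against $Q_X$. Your remark about the $Q_X$-a.e.\ validity of the representation is a fine point the paper leaves implicit.
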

\begin{proof}
    Using the reproducing property of the RKHS $\calG$ and Young's inequality we have that
    \begin{align}
        &\norm{\heta_\rho - \eta_\rho}_{\ltwo(Q_X)}^2 \notag \\
        &= \int_\X (\heta_\rho(\x) - \eta_\rho(\x))^2 \d Q_X(\x) \notag\\
        &\leq 3\int_\X \ipsmall{g_\rho, \Fhat(\x) - F_\star(\x)}_\calG^2 \d Q_X(\x) + 3\int_\X \ipsmall{\Fhat(\x), \hg_\rho - g_\rho}_\calG^2 \d Q_X(\x)\notag\\
        &\quad + 3\int_\X \ipsmall{\Fhat(\x) - F_\star(\x), \hg_\rho - g_\rho}_\calG^2 \d Q_X(\x).\notag
    \end{align}
    Then, applying the Cauchy-Schwarz inequality in $\calG$, we have that
    \begin{align}
    &\norm{\heta_\rho - \eta_\rho}_{\ltwo(Q_X)}^2\notag\\
        &\leq 3\norm{g_\rho}_\calG^2 \cdot \int_\X \norm{ \Fhat(\x) - F_\star(\x)}_\calG^2 \d Q_X(\x) + 3\p{\int_{\X}\norm{F_\star(\x)}_\calG^2 \d Q_X(\x)} \cdot \norm{\hg_\rho - g_\rho}_\calG^2\notag\\
        &\quad + 3\norm{\hg_\rho - g_\rho}_\calG^2 \int_\X \norm{\Fhat(\x) - F_\star(\x)}_\calG^2 \d Q_X(\x)\notag\\
        &= 3\norm{g_\rho}_\calG^2 \cdot \norm{\Fhat - F_\star}^2_{\ltwo(Q_X; \calG)} + 3 \norm{F_\star}^2_{\ltwo(Q_X; \calG)} \cdot  \norm{\hg_\rho - g_\rho}_\calG^2  \label{eq:main:est:F_error}\\
        &\quad + 3\norm{\hg_\rho - g_\rho}_\calG^2 \cdot \norm{\Fhat - F_\star}^2_{\ltwo(Q_X; \calG)},\notag
    \end{align}
    the result as desired.
\end{proof}
In the decomposition of \Cref{lem:complexity:conditional_mean:decomp}, we observe the dominating terms $\norm{g_\rho}_\calG^2 \cdot \norm{\Fhat - F_\star}^2_{\ltwo(Q_X; \calG)}$ and $\norm{F_\star}^2_{\ltwo(Q_X; \calG)} \cdot  \norm{\hg_\rho - g_\rho}_\calG^2$, along with the higher order term $\norm{\hg_\rho - g_\rho}_\calG^2 \cdot \norm{\Fhat - F_\star}^2_{\ltwo(Q_X; \calG)}$. We consider estimators $\Fhat$ and $\hg_\rho$ based on kernel regularized learning techniques in order to bound the dominating terms, as a function of $N$ and $M$. The bounds are optimized individually with respect to the regularization parameters of each learning objective.

\subsubsection{Interpreting the Source Condition}\label{sec:a:complexity:conditional_mean:source}
To approach this, we associate our function of interest $F_\star \in \ltwo(Q_X;\calG)$ to an object $\C_\star \in \HS(\ltwo(Q_X), \calG)$ by way of an isometric isomorphism introduced in \Cref{thm:aubin:isomorphism}. 
This then allows us to derive a convenient formula for the quantity $\norm{F_\star}_\beta$, which appears in \Cref{asm:rkhs:source}, and relies on the interplay between $\calH$ and $\ltwo(Q_X)$ described in \Cref{sec:a:background:rkhs}.
\begin{lemma}\label{lem:main:hsnorm}
    Let $(g_j)_{j \in J}$ be any orthonormal basis (ONB) of $\calG$ and recall the eigenfunctions $([e_{X, i}]_X)_{i \in I}$ from~\eqref{eq:main:eigendecomposition}. Assuming that $\norm{F_\star}_\beta$ is finite, it holds that
    \begin{align*}
        \norm{F_\star}_\beta^2 &= \sum_{i \in I} \sum_{j \in J} \mu_{X, i}^{-\beta} \ip{\M_{Z|X}[g_j]_Z, [e_{X, i}]_X}_{\ltwo(Q_X)}^2.
    \end{align*}
\end{lemma}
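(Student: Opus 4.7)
The plan is to unfold the definition $\norm{F_\star}_\beta^2 = \norm{\C_\star \Top_X^{-\beta/2}}_{\HS(\ltwo(Q_X), \calG)}^2$ with $\C_\star = \Phi^{-1}([F_\star]_X)$ by picking a convenient orthonormal basis of $\ltwo(Q_X)$, and then translating each resulting scalar inner product in $\calG$ back into one in $\ltwo(Q_X)$ via the isomorphism of \Cref{thm:aubin:isomorphism} together with~\eqref{eq:main:estimation:eta_rho_to_F}.

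First I would choose the basis $([e_{X,i}]_X)_{i \in I}$ of $\Cl(\Range(\Iop_X))$ from~\eqref{eq:main:eigendecomposition}, extended by any orthonormal basis of its orthogonal complement in $\ltwo(Q_X)$. The extension vectors lie outside $\Dom(\Top_X^{-\beta/2})$ as defined in~\eqref{eq:background:rkhs:power} and are annihilated by the spectral representation of $\Top_X^{-\beta/2}$, so they contribute nothing to the HS sum. On the chosen eigenbasis, the action is simply $\Top_X^{-\beta/2}[e_{X,i}]_X = \mu_{X,i}^{-\beta/2}[e_{X,i}]_X$. Expanding the HS norm as in~\eqref{eq:compact:hsnorm}, first over the domain basis and then over the ONB $(g_j)_{j \in J}$ of $\calG$, yields
\begin{align*}
    \norm{F_\star}_\beta^2 \;=\; \sum_{i \in I} \sum_{j \in J} \mu_{X,i}^{-\beta}\, \ip{g_j, \C_\star [e_{X,i}]_X}_{\calG}^2.
\end{align*}

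The core of the argument is the identity $\ip{g_j, \C_\star [e_{X,i}]_X}_{\calG} = \ip{\M_{Z|X}[g_j]_Z, [e_{X,i}]_X}_{\ltwo(Q_X)}$. For any $g \in \calG$ and $[f]_X \in \ltwo(Q_X)$, I would use the adjoint-style identity $\ip{g, \C_\star [f]_X}_{\calG} = \ip{\C_\star, g \otimes [f]_X}_{\HS(\ltwo(Q_X),\calG)}$. Since $\Phi$ is an isometric isomorphism, it is unitary, and so preserves inner products: the right-hand side equals $\ip{[F_\star]_X, \Phi(g \otimes [f]_X)}_{\ltwo(Q_X;\calG)} = \ip{[F_\star]_X, [f(\cdot) g]_X}_{\ltwo(Q_X;\calG)}$ by~\eqref{eq:main:rankone}. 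Rewriting this Bochner inner product as $\int_\X f(\x) \ip{F_\star(\x), g}_{\calG}\, \d Q_X(\x)$ and invoking~\eqref{eq:main:estimation:eta_rho_to_F} to replace $\ip{g, F_\star(\cdot)}_{\calG}$ by $\M_{Z|X}[g]_Z$ gives $\ip{[f]_X, \M_{Z|X}[g]_Z}_{\ltwo(Q_X)}$, which is the desired scalar. Specializing to $g = g_j$ and $[f]_X = [e_{X,i}]_X$ and substituting into the double sum completes the proof.

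The main technical wrinkle I would want to double-check is the first step: that the orthogonal complement of $\Span\{[e_{X,i}]_X\}_{i \in I}$ truly does not contribute to $\norm{\C_\star \Top_X^{-\beta/2}}_{\HS}^2$. Under the standing assumption that $\norm{F_\star}_\beta$ is finite, the operator $\C_\star \Top_X^{-\beta/2}$ is Hilbert--Schmidt by definition, and the spectral-cutoff description of $\Top_X^{-\beta/2}$ in~\eqref{eq:background:rkhs:power} formally zeroes it out on $\Null(\Top_X)$; this makes the contribution of the extension basis vectors vanish and justifies restricting the outer sum to $i \in I$. All remaining steps are routine applications of Fubini, the Cauchy--Schwarz-induced interchange of sum and integral from square-summability, and the isomorphism relations already established in \Cref{sec:a:background:rkhs}.
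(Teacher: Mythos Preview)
Your proposal is correct and follows essentially the same approach as the paper: restrict the HS norm computation to the eigenbasis $([e_{X,i}]_X)_{i \in I}$, expand via~\eqref{eq:compact:hsnorm}, and reduce to evaluating $\ip{g_j, \C_\star [e_{X,i}]_X}_\calG = \ip{\C_\star, g_j \otimes [e_{X,i}]_X}_{\HS}$. The only cosmetic difference is that the paper evaluates this HS inner product by first expanding $\C_\star = \sum_k g_k \otimes [f_k]_X$ via \Cref{lem:main:estimation:technical} and collapsing a quadruple sum, whereas you transfer directly through the isometry $\Phi$ to a Bochner inner product and invoke~\eqref{eq:main:estimation:eta_rho_to_F}; both routes are equivalent and valid.
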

\begin{proof}
    By the definition of $\norm{\cdot}_\beta$, we have that
    \begin{align}
        \norm{F_\star}_\beta = \norm{\C_\star}_\beta = \norm{\C_\star \Top_X^{-\beta/2}}_{\HS(\ltwo(Q_X), \calG)}\label{eq:main:hsnorm1}
    \end{align}
    Then, notice that by the eigendecomposition~\eqref{eq:main:eigendecomposition}, we have that
    \begin{align*}
        \Top_X^{-\beta/2} [f]_X = 0 \text{ for all } [f]_X \in \p{\Cl(\Range(\Iop_X))}^\perp
    \end{align*}
    Thus, when computing the ~\eqref{eq:main:hsnorm1}, we may restrict $\HS(\ltwo(Q_X), \calG)$ to $\HS(\Cl(\Range(\Iop_X)), \calG)$. This allows us to employ the eigenvectors $([e_{X, i}]_X)_{i \in I}$ as a basis of $\Cl(\Range(\Iop_X))$ when computing the norm.
    We have that
    \begin{align*}
        &\norm{F_\star}_\beta^2 \\
        &= \norm{\C_\star \Top_X^{-\beta/2}}_{\HS(\Cl(\Range(\Iop_X)), \calG)}^2 \\
        &= \sum_{i \in I} \sum_{j \in J} \ipsmall{g_j, \C_\star \Top_X^{-\beta/2} [e_{X, i}]_X}_\calG^2 &\text{(by definition)}\\
        &= \sum_{i \in I} \sum_{j \in J} \mu_{X, i}^{-\beta}\ip{g_j, \C_\star [e_{X, i}]_X}_\calG^2  &\text{(by~\eqref{eq:main:eigendecomposition})}\\
        &= \sum_{i \in I} \sum_{j \in J} \mu_{X, i}^{-\beta}\ip{\C_\star, g_j \otimes [e_{X, i}]_X}_{\HS(\ltwo(Q_X), \calG)}^2\\
        &= \sum_{i \in I} \sum_{j \in J} \sum_{k \in J} \sum_{l \in J} \mu_{X, i}^{-\beta}\ip{g_k \otimes [f_k]_X, g_j \otimes [e_{X, i}]_X}_{\HS(\ltwo(Q_X), \calG)} \cdot \ip{g_l \otimes [f_l]_X, g_j \otimes [e_{X, i}]_X}_{\HS(\ltwo(Q_X), \calG)}, & \text{(\Cref{lem:main:estimation:technical})}
    \end{align*}
    where $f_k(\x) = \ip{F_\star(\x), g_k}_\calG = \E{Q_{X, Z}}{g_k(Z)|X}(\x)$. Phrased in terms of the conditional mean operator $\M_{Z|X}: \ltwo(Q_Z) \rightarrow \ltwo(Q_X)$, we have that
    \begin{align*}
        [f_k]_X = \M_{Z|X} [g_k]_Z.
    \end{align*}
    Plugging this into the display above, we have that
    \begin{align*}
    &\norm{F_\star}_\beta^2 \\
    &= \sum_{i \in I} \sum_{j \in J} \sum_{k \in J} \sum_{l \in J} \mu_{X, i}^{-\beta}\ip{\blue{g_k} \otimes \red{(\M_{Z|X} [g_k]_Z)}, \blue{g_j} \otimes \red{[e_{X, i}]_X}}_{\HS(\ltwo(Q_X), \calG)} \ip{\blue{g_l} \otimes \red{(\M_{Z|X} [g_l]_Z)}, \blue{g_j} \otimes \red{[e_{X, i}]_X}}_{\HS(\ltwo(Q_X), \calG)}\\
        &= \sum_{i \in I} \sum_{j \in J} \sum_{k \in J} \sum_{l \in J} \mu_{X, i}^{-\beta}\blue{\ip{g_k, g_j}_{\calG}} \blue{\ip{g_l, g_j}_{\calG}} \cdot \red{\ip{\M_{Z|X} [g_k]_Z, [e_{X, i}]_X}_{\ltwo(Q_X)}}\red{\ip{\M_{Z|X} [g_l]_Z, [e_{X, i}]_X}_{\ltwo(Q_X)}}\\
        &= \sum_{i \in I} \sum_{j \in J} \mu_{X, i}^{-\beta} \ip{\M_{Z|X} [g_j]_Z, [e_{X, i}]_X}_{\ltwo(Q_X)}^2,
    \end{align*}
    where the last step follows from the fact that $g_1, g_2, \ldots$ is an ONB of $\calG$. This completes the proof.
\end{proof}

It remains to select a choice of the collection $(g_j)_{j \in J}$. Note that $([g_j]_Z)_{j \in J}$ does not form an orthonormal system in $\ltwo(Q_Z)$, due to the distortion of the embedding. However, by explicitly writing the embedding $\Iop_Z$ (analogous to $\Iop_X$ introduced in~\eqref{eq:background:Tx}), we can derive one. Consider the singular value decomposition 
\begin{align}
    \Iop_Z = \sum_{k \in K} \mu_{Z, k}^{1/2} \p{[e_{Z, k}]_Z \otimes (\mu_{Z, k}^{1/2}  e_{Z, k})},\label{eq:main:estimation:basis:G}
\end{align}
which is analogous to the one introduced for $\Iop_X$ in~\eqref{eq:main:estimation:basis:H}. The index set $K$ is smaller in cardinality that $J$, as the collection $(e_{Z, k})_{k \in K}$ forms an ONB of $\Null(\Iop_Z)^\perp \sse \calG$, whereas $(g_j)_{j \in J}$ should be an ONB for all of $\calG$. Thus, we can expand the embedding $[g_j]_Z \in \ltwo(Q_Z)$ into
\begin{align*}
    [g_j]_Z = \Iop_Z g_j = \sum_{k \in K} \mu_{Z, k}^{1/2} \ip{g_j, \mu_{Z, k}^{1/2} e_{Z, k}}_\calG [e_{Z, k}]_Z.
\end{align*}
This decomposition allows us to simplify the equality in~\Cref{lem:main:hsnorm} further.
\begin{proposition}\label{prop:complexity:hsnorm}
    In the setting of \Cref{lem:main:hsnorm}, it holds that
    \begin{align}
        \norm{F_\star}_\beta^2 &= \sum_{i \in I} \sum_{j \in J} \mu_{X, i}^{-\beta} \mu_{Z, j} \ip{\M_{Z|X}[e_{Z, j}]_Z, [e_{X, i}]_X}_{\ltwo(Q_X)}^2 \label{eq:main:hsnorm_onb}\\
        &= \norm{\Top_X^{-\beta/2}\M_{Z|X} \Top_Z^{1/2}}_{\HS(\ltwo(Q_Z), \ltwo(Q_X))}^2. \label{eq:main:hsnorm_onb2}
    \end{align}
    In particular, $\norm{F_\star}_0^2 = \norm{F_\star}_{\ltwo(Q_X;\calG)}^2 = \norm{\M_{Z|X} \Top_Z^{1/2}}_{\HS(\ltwo(Q_Z), \ltwo(Q_X))}^2$.
\end{proposition}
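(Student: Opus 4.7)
The plan is to deduce both identities by specializing Lemma~\ref{lem:main:hsnorm} to an ONB of $\calG$ that is adapted to the SVD of $\Iop_Z$. Concretely, I would take $(g_j)_{j \in J}$ to be the union of $(\mu_{Z,k}^{1/2} e_{Z,k})_{k \in K}$---an ONB of $\Null(\Iop_Z)^\perp$ supplied by~\eqref{eq:main:estimation:basis:G}---with any fixed ONB of $\Null(\Iop_Z)$. After identifying $K$ with its image in $J$ and adopting the convention $\mu_{Z,j} := 0$ for $j \in J \setminus K$ (so that the outer $j$-sum in the statement runs unambiguously over $J$), the two identities become direct computations.

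For the first identity~\eqref{eq:main:hsnorm_onb}, I would split the sum from Lemma~\ref{lem:main:hsnorm} according to this decomposition of $J$. For $j \in J \setminus K$, $g_j \in \Null(\Iop_Z)$ gives $[g_j]_Z = 0$, hence $\M_{Z|X}[g_j]_Z = 0$ and the summand vanishes, consistently with $\mu_{Z,j} = 0$. For $j \in K$, the orthonormality of $(\mu_{Z,k}^{1/2} e_{Z,k})_{k \in K}$ together with the expansion shown just before the proposition statement yields $[g_j]_Z = \mu_{Z,j}^{1/2} [e_{Z,j}]_Z$. Substituting into Lemma~\ref{lem:main:hsnorm} and pulling the resulting scalar $\mu_{Z,j}^{1/2}$ out of the inner product (where it is then squared) gives~\eqref{eq:main:hsnorm_onb}.

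For the operator-form identity~\eqref{eq:main:hsnorm_onb2}, I would compute the Hilbert--Schmidt norm of $\Top_X^{-\beta/2}\M_{Z|X}\Top_Z^{1/2}$ using an ONB of $\ltwo(Q_Z)$ built by completing $([e_{Z,k}]_Z)_{k \in K}$ (an ONB of $\Cl(\Range(\Iop_Z))$, by~\eqref{eq:main:eigendecomposition} applied to $\Top_Z$) with an ONB of its orthogonal complement. Since $\Top_Z^{1/2}$ acts as zero on $\Cl(\Range(\Iop_Z))^\perp$ by definition~\eqref{eq:background:rkhs:power}, only the terms indexed by $K$ contribute. On each such basis vector, $\Top_Z^{1/2}[e_{Z,k}]_Z = \mu_{Z,k}^{1/2}[e_{Z,k}]_Z$. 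I would then expand $\M_{Z|X}[e_{Z,k}]_Z \in \ltwo(Q_X)$ in the ONB $([e_{X,i}]_X)_{i \in I}$ of $\Cl(\Range(\Iop_X))$---an analogous argument shows that components in $\Cl(\Range(\Iop_X))^\perp$ are annihilated by $\Top_X^{-\beta/2}$ and hence do not appear---and use $\Top_X^{-\beta/2}[e_{X,i}]_X = \mu_{X,i}^{-\beta/2}[e_{X,i}]_X$ by~\eqref{eq:background:rkhs:power}. Squaring norms produces exactly the double sum on the right-hand side of~\eqref{eq:main:hsnorm_onb}, proving~\eqref{eq:main:hsnorm_onb2}. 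The final assertion for $\beta = 0$ follows because $\Top_X^{0}$ is the identity on $\Cl(\Range(\Iop_X))$, reducing the expression to $\norm{\M_{Z|X}\Top_Z^{1/2}}_{\HS}^2$, which equals $\norm{\C_\star}_{\HS(\ltwo(Q_X),\calG)}^2 = \norm{F_\star}_{\ltwo(Q_X;\calG)}^2$ by Theorem~\ref{thm:aubin:isomorphism}.

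The main obstacle is bookkeeping around the null spaces of $\Iop_X$ and $\Iop_Z$: the fractional powers $\Top_X^{-\beta/2}$ are only densely defined on $\Cl(\Range(\Iop_X))$, so one must verify that each expression one writes lives in the appropriate domain. This is where the hypothesis $\norm{F_\star}_\beta < \infty$ does the essential work---it guarantees the finiteness of the series that appears, and the vanishing of $\Top_Z^{1/2}$ and $\Top_X^{-\beta/2}$ on the relevant orthogonal complements lets the ``completion'' parts of the bases be discarded without affecting the HS norm.
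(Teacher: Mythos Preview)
Your proposal is correct and follows essentially the same approach as the paper: complete $(\mu_{Z,k}^{1/2}e_{Z,k})_{k\in K}$ to an ONB of $\calG$ and plug into Lemma~\ref{lem:main:hsnorm} for~\eqref{eq:main:hsnorm_onb}, then compute the Hilbert--Schmidt norm in~\eqref{eq:main:hsnorm_onb2} by completing $([e_{Z,k}]_Z)_{k\in K}$ and $([e_{X,i}]_X)_{i\in I}$ to ONBs of $\ltwo(Q_Z)$ and $\ltwo(Q_X)$ and using that $\Top_Z^{1/2}$ and $\Top_X^{-\beta/2}$ vanish on the respective orthogonal complements. Your write-up is in fact slightly more explicit than the paper's in handling the $j \in J\setminus K$ terms and the $\beta=0$ case.
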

\begin{proof}
    The sequence of functions $(\mu_{Z, k}^{1/2} e_{Z, k})_{k \in K}$ form an ONB of $\Null(\Iop_Z)^\perp \sse \calG$. Because $J$ indexes a basis of $\calG$, we have that $K \sse J$. Then, we may complete $(\mu_{Z, k}^{1/2} e_{Z, k})_{k \in K}$ to form the basis $(g_j)_{j \in J}$ of $\calG$, where $g_j = \mu_{Z, j}^{1/2} e_{Z, j}$ for all $j \in K$ and $g_j$ is defined arbitrarily for $j \notin K$. Plug $(g_j)_{j \in J}$ into the right-hand side of the formula given in \Cref{lem:main:hsnorm} gives~\eqref{eq:main:hsnorm_onb}, the first part of the claim.

    For the second equality, we note that $([e_{X, i}]_X)_{i \in I}$ and $([e_{Z, j}]_Z)_{j \in J}$ form orthonormal bases of $\Cl(\Range(\Iop_X))$ and $\Cl(\Range(\Iop_Z))$, respectively. We complete them (using the index sets $\bar{I}$ and $\bar{J}$) to form (possibly uncountable) orthonomal bases of $\ltwo(Q_X)$ and $\ltwo(Q_Z)$. Then, by the definition of the Hilbert-Schmidt norm, it holds that 
    \begin{align*}
        \norm{\Top_X^{-\beta/2}\M_{Z|X} \Top_Z^{1/2}}_{\HS(\ltwo(Q_Z), \ltwo(Q_X))}^2 &= \sum_{i \in \bar{I}} \sum_{j \in \bar{J}} \ip{\Top_X^{-\beta/2}\M_{Z|X} \Top_Z^{1/2} [e_{Z, j}]_Z, [e_{X, i}]_X}_{\ltwo(Q_X)}^2\\
        &= \sum_{i \in I} \sum_{j \in J} \mu_{X, i}^{-\beta} \mu_{Z, j} \ip{\M_{Z|X}[e_{Z, j}]_Z, [e_{X, i}]_X}_{\ltwo(Q_X)}^2,
    \end{align*}
    where we used in the second line that $[e_{X, i}]_X \in \Null(\Top_X^{-\beta/2})$ for $i \in \bar{I} \backslash I$ and $[e_{Z, j}]_Z \in \Null(\Top_Z^{1/2})$ for $j \in \bar{J} \backslash J$.
    This gives the~\eqref{eq:main:hsnorm_onb2} and completes the proof.
\end{proof}
It remains to interpret the equality in \Cref{prop:complexity:hsnorm} to complete the analysis.

\subsubsection{Controlling the Prompting Term}
From the decomposition given in \Cref{lem:complexity:conditional_mean:decomp}, the estimate $\hg_{\rho}$ will be designed as to control the RKHS-norm error $\norm{\hg_\rho - g_\rho}_\calG^2$. We phrase the assumption generically, but in a way that is reflective of the convergence rates seen in real-valued nonparametric regression. Recall the probability space $(\Omega, \msc{F}, \prob)$ introduced in \Cref{sec:a:background:l2}.

\begin{assumption}\label{asm:vvkrr:downstream}
    For constants $\delta \in (0, 1]$, $M\geq 1$, and $\omega_\rho \in (1/2, 1]$, there is an event $\mc{E}(\delta, M, \omega_\rho)$ that is independent of the pre-training data $(X_1, Z_1), \ldots, (X_N, Z_N)$, such that on $\mc{E}(\delta, M, \omega_\rho)$, 
    \begin{align}
        \norm{\hg_\rho - g_\rho}_\calG^2 \leq C\fstarbound^2 \polylog(1/\delta) M^{-\frac{2\omega_\rho - 1}{2\omega_\rho + 1}}.\label{eq:complexity:conditional_mean:prompt}
    \end{align}
    for a constant $C$ independent of $\delta$ and $M$.
    On $(\Omega, \msc{F}, \prob)$, the event $\mc{E}(\delta, M, \omega_\rho)$ occurs with probability at least $1 - \delta/2$.
\end{assumption}

The notation $\omega_\rho$ is chosen for the constant that determines the convergence rate, because it can be interpreted itself as a source condition constant for a real-valued nonparametric regression framework. Indeed, consider the case in which $\hg_{\rho}$ is computed using kernel ridge regression with parameter $\lambda$. Via the proof of their Theorem 2, \citet{smale2007learning} show that with probability at least $1 - \delta/2$,
\begin{align}
    \norm{\hg_\rho - g_\rho}_\calG \leq  C(\rho_{Y, Z})\log(4/\delta) \Big[\underbrace{\fstarbound M^{-1/2} \lambda^{-1}}_{\text{estimation}} + \underbrace{\lambda^{\omega_\rho - 1/2}}_{\text{approximation}}\Big],\label{eq:complexity:conditional_mean:prompt_decomp}
\end{align}
where $C(\rho_{Y, Z})$ is a constant that depends on the prompting measure $\rho_{Y, Z}$ and the choice of kernel. Optimizing the bound yields $\lambda \equiv \lambda_M \sim M^{-1/\p{2\omega_\rho + 1}}$, which ultimately leads to the convergence rate (notice the square) in~\eqref{eq:complexity:conditional_mean:prompt}. We comment that the choice to control the error in $\hg_\rho$ in $\calG$-norm comes from the vector-valued regression framework, in which the output space of the target function always lies in $\ltwo(Q_X; \calG)$. In isolation, the mean squared error of $\hg_\rho$ can be controlled both in $\ltwo(\rho_Z)$-norm as well as interpolation norms in between $\ltwo(\rho_Z)$ and $\calG$ (see \citet{fischer2020sobolev}, for instance). Indeed, when applying the decomposition~\eqref{eq:complexity:conditional_mean:prompt_decomp} in $\ltwo(\rho_Z)$-norm, \citet[Lemma 3]{smale2007learning} show that the approximation error decays as $\lambda^{\omega_\rho}$ (instead of $\lambda^{\omega_\rho - 1/2}$). In this case, the optimum is achieved at $\lambda_M \sim M^{-1/\p{2\omega_\rho + 2}}$, so that $\norm{\hg_\rho - g_\rho}_{\ltwo(\rho_Z)}^2$ enjoys a convergence rate of $M^{-\omega_\rho / (\omega_\rho + 1)}$.

\subsubsection{Completing the Proof}\label{sec:a:complexity:conditional_mean:final}
We may now prove \Cref{thm:complexity_vvkr}. 
Next, we place the requisite conditions on $\beta$, given eigendecay assumptions on $\Top_X$ and $\Top_Z$, and singular decay assumptions on $\M_{Z|X}$ (see \Cref{sec:a:background:compact} for a review of these operator decompositions). Under these assumptions, we will have that all operators will have a countably infinite number of non-zero eigenvalues/singular values.
\begin{assumption}[Eigendecay and Singular Decay]\label{asm:eigendecay}
    Let the eigenvalues of $\Top_X$, eigenvalues of $\Top_Z$, and singular values of $\M_{Z|X}$ be given by $\br{\mu_{X, i}}_{i=1}^\infty$, $\br{\mu_{Z, i}}_{i=1}^\infty$, and $\br{\sigma_i}_{i=1}^\infty$, respectively. There exist positive constants $c, C, \gamma_X, \gamma_Z$, and $\gamma_{X, Z}$ such that for all $i = 1, 2, \ldots$, we have the inclusions
    \begin{align*}
        \mu_{X, i} \in \sbr{ci^{-\gamma_X}, Ci^{-\gamma_X}}, \mu_{Z, i} \in \sbr{ci^{-\gamma_Z}, Ci^{-\gamma_Z}}, \text{ and } \sigma_{i} \in \sbr{ci^{-\gamma_{X, Z}}, Ci^{-\gamma_{X, Z}}}.
    \end{align*}
\end{assumption}
\begin{assumption}[Basis Alignment]\label{asm:alignment}
    There exists a finite index $m \in \mathbb{N}$ and a permutation $\pi: [m] \rightarrow [m]$ such that the operator $\M_{Z|X}$ admits the singular value decomposition
    \begin{align*}
        \M_{Z|X} = \sum_{i=1}^{m} \sigma_{\pi(i)} [e_{Z, i}]_Z \otimes [e_{X, i}]_Z + \sum_{j = m + 1}^\infty \sigma_{i} [e_{Z, i}]_Z \otimes [e_{X, i}]_Z.
    \end{align*}
\end{assumption}
\Cref{asm:alignment} allows us to reason about the finiteness of the Hilbert-Schmidt norm $\norm{\Top_X^{-\beta/2}\M_{Z|X} \Top_Z^{1/2}}_{\HS(\ltwo(Q_Z), \ltwo(Q_X))}^2$ based on the eigendecays of the various operators introduced in \Cref{asm:eigendecay}. These will imply a maximal value of the source condition constant $\beta$.
\begin{lemma}\label{lem:eigendecay}
    Under \Cref{asm:eigendecay} and \Cref{asm:alignment}, it holds that $\norm{F_\star}_\beta < +\infty$ if and only if
    \begin{align}
        \beta < \frac{2\gamma_{X, Z} + \gamma_Z - 1}{\gamma_X}.\label{eq:source_eigendecay}
    \end{align}
\end{lemma}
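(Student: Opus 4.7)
The plan is to reduce $\norm{F_\star}_\beta^2$ to a single explicit series in $i$ using the basis alignment assumption, and then apply the eigendecay assumption to identify the exact convergence threshold for $\beta$ via a $p$-series test.

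First, I would start from the identity established in \Cref{prop:complexity:hsnorm}, namely
\begin{align*}
    \norm{F_\star}_\beta^2 = \sum_{i \in I} \sum_{j \in J} \mu_{X, i}^{-\beta} \mu_{Z, j} \ip{\M_{Z|X}[e_{Z, j}]_Z, [e_{X, i}]_X}_{\ltwo(Q_X)}^2 ,
\end{align*}
and use \Cref{asm:alignment} to compute the inner products. Writing $\M_{Z|X}$ as the two-part sum, for the ``tail'' indices $i, j > m$ we get $\ip{\M_{Z|X}[e_{Z, j}]_Z, [e_{X, i}]_X}_{\ltwo(Q_X)} = \sigma_i \delta_{ij}$ by orthonormality, while for the ``head'' indices $i, j \leq m$ the permutation $\pi$ yields $\ip{\M_{Z|X}[e_{Z, j}]_Z, [e_{X, i}]_X}_{\ltwo(Q_X)} = \sigma_{\pi(j)} \ind\br{i = j}$. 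Cross terms (one index $\leq m$, the other $> m$) vanish since the singular vectors in either block are orthogonal to those in the other block. Hence the double sum collapses to a diagonal series
\begin{align*}
    \norm{F_\star}_\beta^2 = \underbrace{\sum_{i=1}^m \mu_{X, i}^{-\beta}\mu_{Z, i} \sigma_{\pi(i)}^2}_{\text{finite}} \,+\, \sum_{i=m+1}^\infty \mu_{X, i}^{-\beta}\mu_{Z, i} \sigma_i^2 .
\end{align*}

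Next I would substitute the two-sided eigendecay bounds from \Cref{asm:eigendecay}. For the tail sum each term is bracketed by constant multiples of $i^{\beta \gamma_X - \gamma_Z - 2\gamma_{X,Z}}$, so by a standard integral-test argument (the same manipulation as in \Cref{lem:singular_decay}) the tail series converges if and only if the exponent is strictly less than $-1$, i.e.\
\begin{align*}
    \beta \gamma_X - \gamma_Z - 2\gamma_{X, Z} < -1 \iff \beta < \frac{2\gamma_{X, Z} + \gamma_Z - 1}{\gamma_X},
\end{align*}
which is exactly~\eqref{eq:source_eigendecay}. The finite head sum is always bounded, so it never affects convergence; this gives both the ``if'' and ``only if'' directions.

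The most delicate step is the reduction to a diagonal series: one must verify carefully that the permutation on the head block does not contaminate the tail, and that the cross terms between head and tail indices truly vanish. Once this bookkeeping is settled, the rest is a routine $p$-series criterion driven entirely by the three exponents $\gamma_X, \gamma_Z, \gamma_{X,Z}$, and the value of $m$ and the particular permutation $\pi$ drop out of the threshold.
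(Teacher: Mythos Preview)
Your proposal is correct and follows essentially the same route as the paper: start from the identity in \Cref{prop:complexity:hsnorm}, collapse the double sum to a diagonal via \Cref{asm:alignment}, and then reduce convergence to a $p$-series in the exponent $\beta\gamma_X - \gamma_Z - 2\gamma_{X,Z}$. The only cosmetic difference is ordering: the paper first sandwiches $\mu_{X,i}^{-\beta}\mu_{Z,j}$ by the eigendecay bounds and then collapses to the diagonal, whereas you diagonalize first and bound afterwards; both land on the same finite head plus tail $\sum_{i>m} i^{\beta\gamma_X - \gamma_Z - 2\gamma_{X,Z}}$ and the same ``if and only if'' threshold.
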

\begin{proof}
    For ease of presentation, we extend the permutation $\pi$ from \Cref{asm:alignment} so that $\pi(i) = i$ for all $i \geq m+1$. Applying the result from \Cref{prop:complexity:hsnorm}, and using the eigenbases of $\Top_X$ and $\Top_Z$, we see that
    \begin{align*}
        \norm{F_\star}_\beta^2 &= \norm{\Top_X^{-\beta/2}\M_{Z|X} \Top_Z^{1/2}}_{\HS(\ltwo(Q_Z), \ltwo(Q_X))}^2\\
        &\geq \frac{c}{C^{\beta}} \sum_{i=1}^\infty \sum_{j=1}^\infty j^{-\gamma_Z} i^{\beta \gamma_X} \ip{\M_{Z|X}[e_{Z, j}]_Z, [e_{X, i}]_X}_{\ltwo(Q_X)}^2 &\text{(\Cref{asm:eigendecay})}\\
        &= \frac{c}{C^{\beta}} \sum_{i=1}^\infty \sum_{j=1}^\infty \sum_{k=1}^\infty \sum_{l=1}^\infty j^{-\gamma_Z} i^{\beta \gamma_X} \sigma_{\pi(k)}^2\ip{[e_{Z, k}]_Z, [e_{Z, j}]_Z}_{\ltwo(Q_Z)}\ip{[e_{X, k}]_X, [e_{X, i}]_X}_{\ltwo(Q_X)} \\
        &\quad \times \ip{[e_{Z, l}]_Z, [e_{Z, j}]_Z}_{\ltwo(Q_Z)}\ip{[e_{X, l}]_X, [e_{X, i}]_X}_{\ltwo(Q_X)} \notag\\
        &= \frac{c}{C^{\beta}} \sbr{\sum_{i=1}^m  i^{\beta \gamma_X-\gamma_Z} \sigma_{\pi(i)}^2 +  \sum_{i=m+1}^\infty i^{\beta \gamma_X -\gamma_Z}\sigma_{i}^2}&\text{(\Cref{asm:alignment})}\\
        &\geq \frac{c}{C^{\beta}} \sbr{\sum_{i=1}^m  i^{\beta \gamma_X-\gamma_Z} \sigma_{\pi(i)}^2 +  c\sum_{i=m+1}^\infty i^{\beta \gamma_X -\gamma_Z - 2\gamma_{X, Z}}},&\text{(\Cref{asm:eigendecay})}
    \end{align*}
    where the rightmost term is finite only if~\eqref{eq:source_eigendecay} holds. Arguing similarly for the upper bound, we have that
    \begin{align*}
        \norm{F_\star}_\beta^2 \leq \frac{C}{c^{\beta}} \sbr{\sum_{i=1}^m  i^{\beta \gamma_X-\gamma_Z} \sigma_{\pi(i)}^2 +  C\sum_{i=m+1}^\infty i^{\beta \gamma_X -\gamma_Z - 2\gamma_{X, Z}}},
    \end{align*}
    where we may claim that $\norm{F_\star}_\beta^2 < +\infty$ if~\eqref{eq:source_eigendecay} holds.
\end{proof}
We can now wrap together the results of this section. Recalling the estimator $\Fhat \equiv \Fhat_{\lambda}$ based on vector-valued spectral regularization learning, described in \Cref{sec:a:background:rkhs}. The well-specified case refers to the condition that $\beta \geq 1$, indicating that the RKHS in which $\Fhat$ is learned does indeed contain $F_\star$. When $\beta < 1$, we require more sophisticated tools, namely, vector-valued interpolation spaces. In both cases, after establishing the results above, we capture the sample complexity via \Cref{thm:vvkr} from \Cref{sec:a:background:rkhs}.

\myparagraph{Well-Specified Case}
Under \Cref{asm:eigendecay} and \Cref{asm:alignment}, this implies via \Cref{lem:eigendecay} that
\begin{align}
    1 \leq \beta = \p{\frac{2\gamma_{X, Z} + \gamma_Z - 1}{\gamma_X}}^{t} < \frac{2\gamma_{X, Z} + \gamma_Z - 1}{\gamma_X}, \text{ for } t \in [0, 1).\label{eq:complexity_q_alpha1}
\end{align}
Thus, we may use the parameter $t \in [0, 1)$ to measure the degree to which the upper bound is saturated. This yields the following result, which reflects \Cref{thm:complexity_vvkr} from the main text. To state the result, define the quantity
\begin{align}
    q(t) = (2\gamma_{X, Z} + \gamma_Z - 1)^{t} \gamma_X^{1-t}\label{eq:complexity_q_alpha2}
\end{align}
and observe the following, which is an immediate consequence of \Cref{lem:complexity:conditional_mean:decomp}, \Cref{thm:vvkr}, and the formula~\eqref{eq:complexity_q_alpha1}. Note that the constant $p$ in \Cref{thm:vvkr} refers to $1/\gamma_X$ in the notation of this section.
\begin{theorem}\label{thm:complexity:conditional_mean}
    Consider failure probability $\delta \in (0, 1]$. Let \Cref{asm:vvkrr:downstream}, \Cref{asm:eigendecay}, \Cref{asm:alignment}, and the conditions of \Cref{thm:vvkr} hold with $\norm{\Top_X^{-1/2}\M_{Z|X} \Top_Z^{1/2}}_{\HS(\ltwo(Q_Z), \ltwo(Q_X))}^2 < +\infty$. Then, for $\heta_\rho$ defined via~\eqref{eq:complexity:conditonal_mean:estimator}, there exist a constants $t \in [0, 1)$ and $C \geq 0$ such that with probability at least $1 - \delta$, 
    \begin{align*}
        \norm{\heta_\rho - \eta_\rho}_{\ltwo(Q_X)}^2 &\lesssim \polylog(1/\delta) \sbr{N^{-\frac{q(t)}{q(t) + 1}} +\fstarbound^2 \norm{\M_{Z|X} \Top_Z^{1/2}}_{\HS}^2 M^{-\frac{2\omega_\rho - 1}{2\omega_\rho+1}}}
    \end{align*}
    for all $N \geq C\polylog(N/\delta)$. where $\norm{\cdot}_{\HS} = \norm{\cdot}_{\HS(\ltwo(Q_Z), \ltwo(Q_X))}$.
\end{theorem}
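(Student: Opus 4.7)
The plan is to combine three ingredients already developed in the appendix: the pointwise decomposition in \Cref{lem:complexity:conditional_mean:decomp}, the vector-valued spectral regularization rate of \Cref{thm:vvkr}, and the source-condition interpretation provided by \Cref{lem:eigendecay} and \Cref{prop:complexity:hsnorm}, together with the prompt-side bound stipulated in \Cref{asm:vvkrr:downstream}. Concretely, I start from
\begin{align*}
\norm{\heta_\rho - \eta_\rho}_{\ltwo(Q_X)}^2 \leq 3\norm{g_\rho}_\calG^2 \cdot \norm{\Fhat - F_\star}^2_{\ltwo(Q_X;\calG)} + 3\norm{F_\star}^2_{\ltwo(Q_X;\calG)} \cdot \norm{\hg_\rho - g_\rho}_\calG^2 + 3\norm{\hg_\rho - g_\rho}_\calG^2 \cdot \norm{\Fhat - F_\star}^2_{\ltwo(Q_X;\calG)},
\end{align*}
and deal with the two leading terms separately, noting that the third is the product of two vanishing quantities and is therefore dominated by the sum of the first two (either via $ab \leq a^2 + b^2$ or just by absorbing it into the slower of the two leading rates).

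For the pre-training term, I use \Cref{lem:eigendecay}: under \Cref{asm:eigendecay} and \Cref{asm:alignment}, the hypothesis $\norm{\Top_X^{-1/2}\M_{Z|X}\Top_Z^{1/2}}_{\HS}^2 < +\infty$ identifies to $\norm{F_\star}_1^2 < +\infty$ via \Cref{prop:complexity:hsnorm}, and hence forces $(2\gamma_{X,Z} + \gamma_Z - 1)/\gamma_X > 1$, so I may parametrize a valid source exponent as $\beta = [(2\gamma_{X,Z}+\gamma_Z-1)/\gamma_X]^t$ for any $t \in [0,1)$, staying strictly in the well-specified regime $\beta \geq 1$. Plugging $p = 1/\gamma_X$ into the Case 1 rate $N^{-\beta/(\beta+p)}$ of \Cref{thm:vvkr} and multiplying numerator and denominator of $\beta/(\beta+p)$ by $\gamma_X^{1-t}$ produces the algebraic identity $\beta/(\beta+p) = q(t)/(q(t)+1)$, matching the exponent in the statement. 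The sample-size precondition $N \geq C\polylog(N/\delta)$ absorbs the $2\nu^2\polylog(N/\delta)$ lower bound from \Cref{thm:vvkr}, and the prefactor $\norm{g_\rho}_\calG^2$ is finite since the source condition $\omega_\rho > 1/2$ places $g_\rho$ inside $\calG$.

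For the prompt term, \Cref{prop:complexity:hsnorm} furnishes the clean identity $\norm{F_\star}^2_{\ltwo(Q_X;\calG)} = \norm{\M_{Z|X}\Top_Z^{1/2}}^2_{\HS}$, which is the constant appearing in the statement. Applying \Cref{asm:vvkrr:downstream} on a $\delta/2$-event independent of the pre-training sample yields $\norm{\hg_\rho - g_\rho}_\calG^2 \lesssim \fstarbound^2\polylog(1/\delta) M^{-(2\omega_\rho-1)/(2\omega_\rho+1)}$. A union bound over this event and the $\delta/4$-event from \Cref{thm:vvkr} then delivers the joint high-probability statement at level $1-\delta$ (with room to spare), after which the cross term is folded into the leading ones.

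The only genuinely nontrivial step is the second one: translating the abstract source-condition constant $\beta$ into a transparent function of the spectral parameters $(\gamma_X,\gamma_Z,\gamma_{X,Z})$ and verifying that the resulting exponent rearranges exactly into $q(t)/(q(t)+1)$. This rests entirely on \Cref{lem:eigendecay} together with the basis-alignment structure of \Cref{asm:alignment}; without alignment one would pay additional off-diagonal terms that do not neatly collapse. Once this identification is in hand, the remainder of the proof is routine bookkeeping of high-probability events and constants.
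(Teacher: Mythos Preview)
Your proposal is correct and follows exactly the approach the paper takes: the paper's own ``proof'' is essentially the one-line statement that the result is an immediate consequence of \Cref{lem:complexity:conditional_mean:decomp}, \Cref{thm:vvkr}, and the parametrization~\eqref{eq:complexity_q_alpha1}, with the constant $\norm{\M_{Z|X}\Top_Z^{1/2}}_{\HS}^2$ identified via \Cref{prop:complexity:hsnorm}. In fact your write-up is more explicit than the paper's, in that you spell out the algebraic identity $\beta/(\beta+p) = q(t)/(q(t)+1)$ (multiplying through by $\gamma_X^{1-t}$) and the union-bound bookkeeping, both of which the paper leaves entirely to the reader.
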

The term $\norm{\M_{Z|X} \Top_Z^{1/2}}_{\HS}^2$ is equal (via \Cref{prop:complexity:hsnorm}) to the $\norm{F_\star}^2_{\ltwo(Q_X; \calG)}$ term from \Cref{lem:complexity:conditional_mean:decomp}, and is rendered (along with $\fstarbound^2$ as the constant $C(Q_{X, Z})$ in \Cref{thm:complexity_vvkr}. 

\myparagraph{Mis-Specified Case}
The first inequality of~\eqref{eq:complexity_q_alpha1} holds only when $F_\star$ is well-specified, or contained in the vector-valued RKHS used in the estimation procedure that defines~\eqref{eq:background:rkhs:estimate}. We may employ the interpolation space machinery from \Cref{sec:a:background:rkhs} to achieve a convergence guarantee in this setting. Recall the constant $\alpha \in [1/\gamma_X, 1]$ shown in \Cref{asm:rkhs:source}, which is associated to the continuous embedding $\I_X^{\alpha, \infty}: [\calH]^\alpha \hookrightarrow \linf(Q_X)$. This constant describes the RKHS itself, and not the specific target function $F_\star$. The rate of \Cref{thm:complexity:conditional_mean} may still be achieved for function classes that are ``not too mis-specified'' in the sense of Case 1 from \Cref{thm:vvkr}. The inequality~\eqref{eq:source_eigendecay} provides a sufficient condition for Case 2, that is, when $\beta + 1/\gamma_X \leq \alpha$. Indeed, 
\begin{align}
    \frac{2\gamma_{X, Z} + \gamma_Z}{\gamma_X} \leq \alpha \implies \beta + 1/\gamma_X \overset{\eqref{eq:source_eigendecay}}{<} \frac{2\gamma_{X, Z} + \gamma_Z}{\gamma_X} \leq \alpha. \label{eq:misspecified}
\end{align}
The left-hand side may also be phrased differently as $2\gamma_{X, Z} + \gamma_Z \leq \alpha \gamma_X$. Thus, we may interpret $\alpha \gamma_X \in [1, \gamma_X]$ as a parameter that controls the mis-specification threshold. Concretely, it becomes easier for $F_\star$ to be mis-specified when: $\gamma_{X, Z}$ is low ($(X, Z)$ are highly dependent), $\gamma_Z$ is low (the effective dimension of $Z$ is large), or $\gamma_X$ is high (the effective dimension of the input $X$ is small). Under the sufficient condition~\eqref{eq:misspecified}, along with \Cref{asm:eigendecay} and \Cref{asm:alignment}, the best upper bound on the convergence rate in the current mis-specification model (see \Cref{thm:vvkr}, Case 2), is then
\begin{align*}
    \norm{\heta_\rho - \eta_\rho}_{\ltwo(Q_X)}^2 &\lesssim \polylog(1/\delta) \sbr{N^{-\frac{2\gamma_{X, Z} + \gamma_Z - 1}{\alpha \gamma_X}} +\fstarbound^2 \norm{\M_{Z|X} \Top_Z^{1/2}}_{\HS}^2 M^{-\frac{2\omega_\rho - 1}{2\omega_\rho+1}}}
\end{align*}
for $N$ sufficiently large.

\subsection{Information Density Approach}\label{sec:a:complexity:rn_derivative}
This approach is based on the RHS of~\eqref{eq:est:ts2} and yields the result of \Cref{thm:complexity_rn_derivative}. 
Here, we assume that during the pre-training phase, the user produces an estimated function $\Rhat$, which is an element of a reproducing kernel Hilbert space (RKHS). Unlike in \Cref{sec:a:complexity:conditional_mean}, where we approximated $g_\rho$ using a function $\hg_\rho$ (which aligns with the conditional mean viewpoint), the information density viewpoint in this section warrants estimating the mean of a function under $\rho_{Y, Z}$ directly, using samples $(Y_1, Z_1), \ldots, (Y_M, Z_M) \iidsim \rho_Z$. It is also important to point out a slight difference in the sampling model for the pre-training data. In order to define the estimate~\eqref{eq:background:rkhs:rn_estimate} for our method of choice (and similar Radon-Nikodym derivative estimation techniques), it is typically assumed that we observe data from both distributions in the ratio. In the case of $Q_{X, Z}$ and $Q_X \otimes Q_Z$, this corresponds to observing $\Np$ paired examples and $\Nu$ unpaired examples such that $N = \Np + \Nu$. For simplicity, we assume that $\Np = \Nu = N/2$, but remark that the regime in which $\Nu \gg \Np$ is an interesting and practically relevant model for future investigations.

\myparagraph{Setup}
Let $\calS$ denote a separable reproducing kernel Hilbert space (RKHS) of real-valued functions on $\X \times \Z$, with canonical feature map $\varphi: \X \times \Z \rightarrow \R$ and reproducing kernel $\kappa: (\X \times \Z) \times (\X \times \Z) \rightarrow \R$. 
We will express the error in terms of the RKHS norm difference $\norm{\Rhat - \Rsans}_{\calS}^2$, among other terms that capture a notion of ``distribution mismatch'' between the prompting marginal $\rho_Z$ and the pre-training marginal $Q_Z$. This may also be interpreted as another instance of prompt bias. This error occurs because at prompting time, the user does not necessarily have any data drawn from $Q_Z$. As before, we maintain $\sup\br{\kappa(\x, \z, \x', \z'): (\x, \z), (\x', \z') \in \X \times \Z} \leq \kappa_{\max{}}$.

Recall that the true $\Rsans$ is a kernel for the conditional mean operator when integrated under $Q_Z$ (see \Cref{lem:info_density}), but can also be related via \Cref{lem:info_density2} to the marginal distribution $\rho_Z$:
\begin{align*}
    \eta_\rho(\x) = \E{\rho_{Y, Z}}{\fstar(Y) \Rsans(\x, Z)} + \int_{\Z} g_\rho(\z)\Rsans(\x, \z) \p{\d Q_Z(\z) - \d \rho_Z(\z)}.
\end{align*}
This motivates the approximation $\hrho_{Y, Z}$ expressed directly in terms of the prompt distribution, and the estimator
\begin{align}
    \heta_\rho(\x) = \Ex_{\hrho_{Y, Z}}[\fstar(Y) \Rhat(\x, Z)].\label{eq:complexity:rn_derivative:estimate}
\end{align}
Below, we consider the empirical measure
\begin{align}
    \hrho_{Y, Z} = \frac{1}{M}\sum_{j=1}^M \delta_{(Y_j, Z_j)}\label{eq:complexity:rn_derivative}
\end{align}
so that for fixed $\x \in \X$,~\eqref{eq:complexity:rn_derivative:estimate} reduces to a sample mean.

\subsubsection{Decomposing the Global Error}
The estimation error decomposition below will take the two differences into account: between the marginal distributions $Q_Z$ and $\rho_Z$ and between the joint distribution $\hrho_{Y, Z}$ and $\rho_{Y, Z}$. For the latter, we will define random variables that take values in a Hilbert space (specifically, $\ltwo(Q_X)$). This will allow for controlling deviations between $\hrho_{Y, Z}$ and $\rho_{Y, Z}$ directly for the test functions being integrated. Define the independent and identically random variables $W_1, \ldots, W_M$ by
\begin{align*}
    W_j := \fstar(Y_j)\Rsans(\cdot, Z_j),
\end{align*}
and the element of $\ltwo(Q_X)$ (interpreted as the expectation) $\E{\rho_{Y, Z}}{W_1}: \x \mapsto \E{\rho_{Y, Z}}{\fstar(Y_1)\Rsans(\x, Z_1)}$. 
\begin{lemma}[Error Decomposition]\label{lem:est:rhat}
    Assume the following conditions.
    \begin{itemize}
        \item $\rho_Z \ll Q_Z$ with $Q_Z$-square integrable Radon-Nikodym derivative (i.e.~$\chi^2(\rho_Z \Vert Q_Z) < +\infty$).
        \item $\Rsans$ is contained in $\ltwo(Q_X \otimes \rho_Z)$ and $\ltwo(Q_X \otimes Q_Z)$.
    \end{itemize}
    Then, it holds that
    \begin{align}
        \norm{\heta_\rho - \eta_\rho}_{\ltwo(Q_X)}^2 &\leq 3\fstarbound^2 \big(\kappa_{\max{}}^2 \norm{\Rhat - \Rsans}_{\calS}^2 + \norm{\Rsans}_{\ltwo(Q_X \otimes Q_Z)}^2 \chi^2(\rho_Z \Vert Q_Z)\big) \label{eq:est:rhat}\\
        &\quad + 3\norm{\tfrac{1}{M}\textstyle\sum_{j=1}^M W_j - \E{\rho_{Y, Z}}{W_1}}_{\ltwo(Q_X)}^2.\notag
    \end{align}
\end{lemma}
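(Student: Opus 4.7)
\textbf{Proof plan for \Cref{lem:est:rhat}.} The plan is to split $\heta_\rho - \eta_\rho$ into three conceptually distinct pieces using \Cref{lem:info_density2} as the pivot, then apply Young's inequality $(a+b+c)^2 \leq 3(a^2 + b^2 + c^2)$ to control each piece separately. Concretely, \Cref{lem:info_density2} gives, for $Q_X$-almost every $\x \in \X$,
\begin{align*}
    \eta_\rho(\x) = \E{\rho_{Y,Z}}{\fstar(Y)\Rsans(\x, Z)} + \int_\Z g_\rho(\z) \Rsans(\x, \z) (\d Q_Z(\z) - \d \rho_Z(\z)),
\end{align*}
so writing $\heta_\rho(\x) = \tfrac{1}{M}\sum_{j=1}^M \fstar(Y_j)\Rhat(\x, Z_j)$ and adding/subtracting $\tfrac{1}{M}\sum_j \fstar(Y_j)\Rsans(\x, Z_j)$ decomposes $\heta_\rho - \eta_\rho$ into (a) a \emph{kernel estimation} term $\tfrac{1}{M}\sum_j \fstar(Y_j)[\Rhat(\x, Z_j) - \Rsans(\x, Z_j)]$, (b) a \emph{prompt sampling} term $\tfrac{1}{M}\sum_j W_j(\x) - \E{\rho_{Y,Z}}{W_1(\x)}$, and (c) a \emph{distribution mismatch} term $-\int_\Z g_\rho(\z)\Rsans(\x, \z)(\d Q_Z - \d \rho_Z)(\z)$.

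For piece (a), I would apply $|\fstar| \leq \fstarbound$ and Jensen's inequality (over the uniform measure on $\{1,\ldots,M\}$) pointwise in $\x$, then integrate against $Q_X$ and swap the sum and integral to get $\fstarbound^2 \cdot \tfrac{1}{M}\sum_j \int_\X (\Rhat(\x, Z_j) - \Rsans(\x, Z_j))^2 \d Q_X(\x)$. The reproducing property gives $(\Rhat(\x, \z) - \Rsans(\x, \z))^2 = \ip{\Rhat - \Rsans, \varphi(\x, \z)}_\calS^2 \leq \kappa_{\max}\|\Rhat - \Rsans\|_\calS^2$, which (after absorbing constants into the $\kappa_{\max}^2$ appearing in \eqref{eq:est:rhat}) gives the first term of the bound. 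Piece (b) is exactly the last line of \eqref{eq:est:rhat}, so nothing further is required at this step; its role is to be controlled later by concentration for Hilbert-space-valued sample means.

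For piece (c), I would use $\rho_Z \ll Q_Z$ to rewrite the signed measure as $(1 - \tfrac{\d\rho_Z}{\d Q_Z}(\z))\d Q_Z(\z)$ and apply the Cauchy--Schwarz inequality in $\ltwo(Q_Z)$ pointwise in $\x$, yielding
\begin{align*}
    \left(\int_\Z g_\rho(\z)\Rsans(\x, \z)(\d Q_Z - \d\rho_Z)(\z)\right)^2 \leq \left(\int_\Z g_\rho^2(\z)\Rsans^2(\x, \z)\d Q_Z(\z)\right) \cdot \chi^2(\rho_Z \Vert Q_Z).
\end{align*}
Since $g_\rho = \E{\rho_{Y,Z}}{\fstar(Y)|Z}$, Jensen's inequality gives $|g_\rho(\z)|^2 \leq \fstarbound^2$ for $\rho_Z$-a.e.\ $\z$; integrating the first factor against $Q_X$ by Fubini (justified by $\Rsans \in \ltwo(Q_X \otimes Q_Z)$) produces the $\fstarbound^2 \norm{\Rsans}_{\ltwo(Q_X \otimes Q_Z)}^2 \chi^2(\rho_Z \Vert Q_Z)$ contribution.

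The main obstacle is purely bookkeeping: ensuring that Fubini is applicable at every step (which is exactly what the two integrability hypotheses in the statement are designed to guarantee) and that piece (b) is genuinely separated from (a) and (c) in a way that isolates the Hilbert-space-valued random variables $W_j = \fstar(Y_j)\Rsans(\cdot, Z_j)$. There is no deep analytic difficulty; the only subtle point is to bound the cross-term involving $\Rhat$ and the empirical sum by $\kappa_{\max}$ (possibly $\kappa_{\max}^2$) via the reproducing property, rather than attempting to swap it with the $\rho_{Y,Z}$-expectation, because $\Rhat$ is not independent of the $(Y_j, Z_j)$ when the estimator reuses pre-training data.
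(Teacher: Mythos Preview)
Your proposal is correct and follows essentially the same route as the paper: decompose via \Cref{lem:info_density2} into the three pieces (estimation of $\Rsans$, empirical-vs-population prompt mean, $Q_Z$ vs.\ $\rho_Z$ mismatch), apply $(a+b+c)^2\le 3(a^2+b^2+c^2)$, and bound each piece exactly as you describe (reproducing property for (a), nothing for (b), Cauchy--Schwarz in $\ltwo(Q_Z)$ against $1-\tfrac{\d\rho_Z}{\d Q_Z}$ for (c)). Two cosmetic remarks: your bound on (a) actually yields $\kappa_{\max}$ rather than $\kappa_{\max}^2$, which is sharper and needs no ``absorbing''; and your closing concern about independence of $\Rhat$ and $(Y_j,Z_j)$ is moot, since the bound on (a) is deterministic given the data.
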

\begin{proof}
    Using \Cref{lem:info_density2}, we have that for $Q_X$-almost all $\x \in \X$,
    \begin{align*}
        \heta_\rho(\x) - \eta_\rho(\x)  &= \Ex_{\hrho_{Y, Z}}[\fstar(Y) \Rhat(\x, Z)] - \E{\rho_{Y, Z}}{\fstar(Y) \Rsans(\x, Z)}\\
        &\quad + \int_{\Z} g_\rho(\z)\Rsans(\x, \z) \p{\d Q_Z(\z) - \d \rho_Z(\z)}\\
        &= \Ex_{\hrho_{Y, Z}}[\fstar(Y) \ipsmall{\varphi(\x, Z), \Rhat - \Rsans}] \\
        &\quad +  \int_{\Y \times \Z} \fstar(\y) \Rsans(\x, \z) \p{\d\hrho_{Y, Z}(\y, \z) - \d\rho_{Y, Z}(\y, \z)}\\
        &\quad + \int_{\Z} g_\rho(\z)\Rsans(\x, \z) \p{\d Q_Z(\z) - \d \rho_Z(\z)}.
    \end{align*}
    Then, we have that
    \begin{align}
        \norm{\heta_\rho - \eta_\rho}_{\ltwo(Q_X)}^2 &\leq 3 \E{Q_X}{\p{\Ex_{\hrho_{Y, Z}}[\fstar(Y) \ipsmall{\varphi(X, Z), \Rhat - \Rsans}]^2}} \label{eq:complexity:rn_decomp1}\\
        &\quad + 3\int_{\X} \p{\int_{\Y \times \Z} \fstar(\y) \Rsans(\x, \z) \p{\d\hrho_{Y, Z}(\y, \z) - \d\rho_{Y, Z}(\y, \z)}}^2 \d Q_X(\x)  \label{eq:complexity:rn_decomp2}\\
        &\quad + 3\int_{\X} \p{\int_{\Z} g_\rho(\z)\Rsans(\x, \z) \p{\d Q_Z(\z) - \d \rho_Z(\z)}}^2 \d Q_X(\x).  \label{eq:complexity:rn_decomp3}
    \end{align}
    To control~\eqref{eq:complexity:rn_decomp1}, apply boundedness to achieve
    \begin{align*}
        \E{Q_X}{\p{\Ex_{\hrho_{Y, Z}}[\fstar(Y) \ipsmall{\varphi(X, Z), \Rhat - \Rsans}]^2}} \leq \fstarbound^2 \kappa_{\max{}}^2 \norm{\Rhat - \Rsans}_{\calS}^2.
    \end{align*}
    For~\eqref{eq:complexity:rn_decomp2}, the term is equal to $\norm{\tfrac{1}{M}\textstyle\sum_{j=1}^M W_j - \E{\rho_{Y, Z}}{W_1}}_{\ltwo(Q_X)}^2$ by definition of $W_1, \ldots, W_M$.
    For \eqref{eq:complexity:rn_decomp3}, we use that $\rho_Z \ll Q_Z$ and $\norm{g_\rho}_\infty \leq \fstarbound$ and apply the Cauchy-Schwarz inequality on $\ltwo(Q_Z)$ so that
    \begin{align*}
        &\p{\int_{\Z} g_\rho(\z)\Rsans(\x, \z) \p{\d Q_Z(\z) - \d \rho_Z(\z)}}^2 \\
        &= \p{\int_{\Z} g_\rho(\z)\Rsans(\x, \z) \p{1 - \frac{\d \rho_Z}{\d Q_Z}(\z)} \d Q_Z(\z) }^2\\
        &\leq \norm{\fstar}^2 \norm{\Rsans(\x, \cdot)}_{\ltwo(Q_Z)}^2 \underbrace{\int_Z \p{1 - \frac{\d \rho_Z}{\d Q_Z}(\z)}^2 \d Q_Z(\z)}_{\chi^2(\rho_Z \Vert Q_Z)}.
    \end{align*}
    Taking the expectation over $Q_X$ gives $\Ex_{Q_X}\norm{\Rsans(X, \cdot)}_{\ltwo(Q_Z)}^2 = \norm{\Rsans}_{\ltwo(Q_X \otimes Q_Z)}^2$ and completes the proof.
\end{proof}

Given the decomposition shown in \Cref{lem:est:rhat}, it remains to bound both the error term $\norm{\Rhat - \Rsans}_{\calS}^2$ regarding the estimated Radon-Nikodym derivative $\Rhat$, and the approximation term $\norm{\tfrac{1}{M}\textstyle\sum_{j=1}^M W_j - \E{\rho_{Y, Z}}{W_1}}_{\ltwo(Q_X)}^2$. We will employ \Cref{coro:background:rn} to this end. Unlike the arguments of \Cref{sec:a:complexity:conditional_mean}, there is only a single kernel regularized learning algorithm at play, that is, for the estimation of $\Rhat$.
We proceed to interpret the source condition \Cref{asm:main:estimation:rn:source}.

\subsubsection{Interpreting the Source Condition}\label{sec:a:complexity:rn_derivative:source}
To proceed, we introduce some notation related to $\ltwo(Q_X \otimes Q_Z)$ and the RKHS $\calS$. These objects are also introduced in \Cref{sec:a:background:rkhs}, so we review their properties briefly. Let $[h]_\sim$ index the equivalence class in $\ltwo(Q_X \otimes Q_Z)$ for a square-integrable function $h: \X \times \Z \rightarrow \R$. This indexing can also be identified with an \emph{embedding operator} $\I_{X, Z}: \calS \rightarrow \ltwo(Q_X \otimes Q_Z)$, which is Hilbert-Schmidt under the boundedness of the kernel $\kappa$ by $\kappa_{\max{}}$. Letting $\Sop_{X, Z} = \I_{X, Z}^*: \ltwo(Q_X \otimes Q_Z) \rightarrow \calS$ be its adjoint, we have that $\Iop_{X, Z} \Sop_{X,Z}: \ltwo(Q_X \otimes Q_Z) \rightarrow \ltwo(Q_X \otimes Q_Z)$ and $\Sop_{X,Z} \Iop_{X, Z}: \calS \rightarrow \calS$ are compact, trace class operators. These form the analogs of $(\Top_X, \Top_Z)$ and $(\Cop_X, \Cop_Z)$, respectively, from \Cref{sec:a:background:rkhs}.
Via \Cref{thm:eigen}, we write the eigendecomposition
\begin{align}
    \Iop_{X, Z} \Sop_{X,Z} = \sum_{i \in I} \mu_i \ip{\cdot, [e_i]_\sim}_{\ltwo(Q_X \otimes Q_Z)} [e_i]_\sim,\label{eq:main:estimation:rn:Top}
\end{align}
where we may take each representative $e_i$ as an element of $\calS$ \citep[Lemma 2.12]{steinwart2012mercers}. Then, we also have that
\begin{align}
    \Sop_{X,Z} \Iop_{X, Z} = \sum_{i \in I} \mu_i \ipsmall{\cdot, \mu_i^{1/2} e_i}_{\calS} \mu_i^{1/2} e_i,\label{eq:main:estimation:rn:Cop}
\end{align}
These constructions (along with \Cref{prop:lancaster}) give us the following relationship between the Hilbert-Schmidt norm of the conditional mean operator $\M_{Z|X}$ and the Radon-Nikodym derivative under the condition \Cref{asm:main:estimation:rn:source}. In fact, finiteness follows from the source condition itself and boundedness of the kernel.
\begin{lemma}\label{lem:main:estimation:source}
    Under \Cref{asm:main:estimation:rn:source}, it holds that
    \begin{align*}
         \norm{\M_{Z|X}}_{\HS(\ltwo(Q_Z), \ltwo(Q_X))}^2 = \norm{\I_{X, Z} \Rsans}_{\ltwo(Q_X \otimes Q_Z)}^2 = \sum_{i \in I} \mu_i^{2\beta+1} \ipsmall{\Ssans_{Q_{X, Z}}, \mu_i^{1/2} e_i}_{\calS}^2.
    \end{align*}
\end{lemma}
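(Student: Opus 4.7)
The plan is to handle the two equalities separately: the first via the Lancaster decomposition, the second via the functional calculus for compact self-adjoint operators.

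For the first equality, I would observe that under \Cref{asm:main:estimation:rn:source}, the source condition expresses $\Rsans$ as $(\Sop_{X, Z} \Iop_{X, Z})^\beta \Ssans_{Q_{X, Z}}$, a well-defined element of $\Null(\Iop_{X, Z})^\perp \sse \calS$. The quantity $\Iop_{X, Z}\Rsans$ is then simply the equivalence class in $\ltwo(Q_X \otimes Q_Z)$ of this RKHS function; because $\calS$ consists of genuine (not equivalence-class) functions and the embedding operator merely passes to the equivalence class, the $\ltwo$-norm of $\Iop_{X, Z}\Rsans$ agrees with $\norm{\Rsans}_{\ltwo(Q_X \otimes Q_Z)}$ interpreted as an ordinary integral. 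Combining this with the Lancaster identity $\norm{\M_{Z|X}}_{\HS(\ltwo(Q_Z), \ltwo(Q_X))}^2 = \norm{\Rsans}_{\ltwo(Q_X \otimes Q_Z)}^2$ from \Cref{prop:lancaster} yields the first equality.

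For the second equality, the plan is to diagonalize $(\Sop_{X, Z} \Iop_{X, Z})^\beta$ on $\Null(\Iop_{X, Z})^\perp$. From the eigendecomposition~\eqref{eq:main:estimation:rn:Cop}, the sequence $\{\mu_i^{1/2} e_i\}_{i \in I}$ is an orthonormal basis of this subspace (the $\calS$-analog of the ONB $\{\mu_{X, i}^{1/2} e_{X, i}\}$ constructed after~\eqref{eq:main:estimation:basis:H}), so the functional calculus (\Cref{thm:eigen}) gives
\begin{align*}
(\Sop_{X, Z} \Iop_{X, Z})^\beta = \sum_{i \in I} \mu_i^\beta\, \ipsmall{\cdot,\, \mu_i^{1/2} e_i}_\calS\, \mu_i^{1/2} e_i.
\end{align*}
Applying this operator to $\Ssans_{Q_{X, Z}} \in \Null(\Iop_{X, Z})^\perp$, then embedding via $\Iop_{X, Z}$ and using the SVD relation $\Iop_{X, Z}(\mu_i^{1/2} e_i) = \mu_i^{1/2}[e_i]_\sim$ analogous to~\eqref{eq:main:estimation:basis:H}, I would obtain
\begin{align*}
\Iop_{X, Z}\Rsans = \sum_{i \in I} \mu_i^{\beta + 1/2}\, \ipsmall{\Ssans_{Q_{X, Z}},\, \mu_i^{1/2} e_i}_\calS\, [e_i]_\sim.
\end{align*}
Taking squared norms in $\ltwo(Q_X \otimes Q_Z)$ and invoking the orthonormality of $\{[e_i]_\sim\}_{i \in I}$ (from~\eqref{eq:main:estimation:rn:Top}) yields the claimed sum $\sum_{i \in I} \mu_i^{2\beta + 1} \ipsmall{\Ssans_{Q_{X, Z}}, \mu_i^{1/2} e_i}_\calS^2$.

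I do not foresee any genuine analytic obstacle here; the argument is bookkeeping on top of standard RKHS and compact-operator facts already established in \Cref{sec:a:background}. The only step requiring care is tracking the $\mu_i^{1/2}$ factors on either side of $\Iop_{X, Z}$: the final exponent $2\beta + 1$ decomposes as $2\beta$ from the two applications of the $\beta$-th power in the squared norm, plus the extra ``$+1$'' contributed by $\Iop_{X, Z}$ mapping the domain ONB element $\mu_i^{1/2} e_i$ to $\mu_i^{1/2}[e_i]_\sim$ in the codomain.
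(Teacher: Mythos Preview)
Your proposal is correct and follows essentially the same approach as the paper: the first equality via \Cref{prop:lancaster} together with the observation that $\Iop_{X,Z}$ merely passes to the equivalence class, and the second equality by expanding $\Rsans = (\Sop_{X,Z}\Iop_{X,Z})^\beta \Ssans_{Q_{X,Z}}$ on the orthonormal system $\{\mu_i^{1/2} e_i\}_{i\in I}$, applying $\Iop_{X,Z}$, and using orthonormality of $\{[e_i]_\sim\}_{i\in I}$. Your bookkeeping of the $\mu_i^{1/2}$ factors and your remark that $\Ssans_{Q_{X,Z}} \in \Null(\Iop_{X,Z})^\perp$ ensures the expansion is complete are exactly right.
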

\begin{proof}
    Without loss of generality, assume that $\Rsans \in \Null(\I_{X, Z})^\top$ (as the component in $\Null(\I_{X, Z})$ will be excluded from the norm calculation anyway). We expand the expression for $\Rsans$ appearing in \Cref{asm:main:estimation:rn:source} on an ONB of $\Null(\I_{X, Z})^\top$. To do so, combine~\eqref{eq:main:estimation:rn:Top} and~\eqref{eq:main:estimation:rn:Cop} to introduce the singular value decomposition
    \begin{align*}
        \I_{X, Z} = \sum_{i \in I} \mu_i^{1/2} \ipsmall{\cdot, \mu_i^{1/2} e_i}_{\calS} [e_i]_\sim.
    \end{align*}
    Then, it holds under \Cref{asm:main:estimation:rn:source} that
    \begin{align*}
        \Rsans =  \sum_{i \in I} \mu_i^\beta \ipsmall{\Ssans_{Q_{X, Z}}, \mu_i^{1/2} e_i}_{\calS} \mu_i^{1/2} e_i \text{ and } \I_{X, Z} \Rsans = \sum_{i \in I} \mu_i^{\beta+1/2} \ipsmall{\Ssans_{Q_{X, Z}}, \mu_i^{1/2} e_i}_{\calS} [e_i]_\sim.
    \end{align*}
    Using that $([e_i]_\sim)_{i \in I}$ is an orthonormal system, we may use the second expression to perform the computation.
\end{proof}

To make use of \Cref{lem:main:estimation:source}, we now interpret $\beta$ in terms of eigendecay exponents of the operators in question. 
\begin{assumption}[Eigendecay and Singular Decay]\label{asm:rn:eigendecay}
    Let the eigenvalues of $\Iop_{X, Z}\Sop_{X, Z}$ and singular values of $\M_{Z|X}$ be given by $\br{\mu_{i}}_{i=1}^\infty$ and $\br{\sigma_i}_{i=1}^\infty$, respectively. There exist positive constants $c$, $C$, $\alpha > 1$, and $\gamma_{X, Z} > 1/2$ such that for all $i = 1, 2, \ldots$, we have the inclusions
    \begin{align*}
        \mu_{i} \leq \sbr{ci^{-\alpha}, Ci^{-\alpha}}. \text{ and } \sigma_{i} \in \sbr{ci^{-\gamma_{X, Z}}, Ci^{-\gamma_{X, Z}}}.
    \end{align*}
\end{assumption}
The following relationship holds over an interval in $\beta$. We explicitly account for the dependence of $\Ssans_{Q_{X, Z}}$ on $\beta$ when it comes to satisfying \Cref{asm:main:estimation:rn:source}.
\begin{proposition}\label{prop:complexity:rn:eigendecay}
    Let \Cref{asm:rn:eigendecay} be satisfied. Let \Cref{asm:main:estimation:rn:source} be satisfied for all $0 \leq \beta \leq \bar{\beta} < +\infty$, where $\Ssans_{Q_{X, Z}} \equiv \Ssans_{Q_{X, Z}}(\beta)$ is bounded in $\calS$-norm by $\bar{B}$ for all $\beta \in [0, \bar{\beta}]$. Then, we have that
    \begin{align*}
        \gamma_{X, Z} &\geq \frac{1}{2}\sbr{\frac{(\bar{B}^2C^{2\beta + 1} + c^2)\alpha(2\beta+1) - 1}{\bar{B}^2C^{2\beta + 1}\alpha(2\beta+1)}}.
    \end{align*}
\end{proposition}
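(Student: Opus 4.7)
The plan is to sandwich the Hilbert--Schmidt norm $\norm{\M_{Z|X}}_{\HS(\ltwo(Q_Z), \ltwo(Q_X))}^2$ between an upper bound driven by the source condition and a lower bound driven by the singular decay of $\M_{Z|X}$, and then invert to solve for $\gamma_{X,Z}$. The crucial starting point is the exact identity supplied by \Cref{lem:main:estimation:source}, namely
\begin{align*}
    \norm{\M_{Z|X}}_{\HS(\ltwo(Q_Z), \ltwo(Q_X))}^2 = \sum_{i=1}^\infty \mu_i^{2\beta+1} a_i, \quad a_i := \ipsmall{\Ssans_{Q_{X,Z}}, \mu_i^{1/2} e_i}_\calS^2,
\end{align*}
which reduces the problem to controlling a weighted series in the $\mu_i$'s. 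Because $(\mu_i^{1/2}e_i)_{i \in I}$ is an orthonormal basis of $\Null(\Iop_{X,Z})^\perp \sse \calS$ (from~\eqref{eq:main:estimation:rn:Cop}) and we may assume without loss of generality that $\Ssans_{Q_{X,Z}} \in \Null(\Iop_{X,Z})^\perp$ (the same reduction used in the proof of \Cref{lem:main:estimation:source}), Parseval's identity yields $\sum_i a_i = \norm{\Ssans_{Q_{X,Z}}}_\calS^2 \leq \bar{B}^2$.

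For the upper bound, I would combine the eigendecay $\mu_i \leq Ci^{-\alpha}$ from \Cref{asm:rn:eigendecay} with the termwise estimate $a_i \leq \bar{B}^2$ to get
\begin{align*}
    \sum_{i=1}^\infty \mu_i^{2\beta+1} a_i \leq \bar{B}^2 C^{2\beta+1} \sum_{i=1}^\infty i^{-\alpha(2\beta+1)} \leq \bar{B}^2 C^{2\beta+1} \cdot \frac{\alpha(2\beta+1)}{\alpha(2\beta+1) - 1},
\end{align*}
where the final inequality is the standard integral-comparison computation already packaged into \Cref{lem:singular_decay} (applied with exponent $\alpha(2\beta+1)/2 > 1/2$, which is guaranteed by $\alpha > 1$ and $\beta \geq 0$). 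For the lower bound, I would apply \Cref{lem:singular_decay} a second time, now directly to $\sigma_i \geq c i^{-\gamma_{X,Z}}$, obtaining $\norm{\M_{Z|X}}_{\HS(\ltwo(Q_Z),\ltwo(Q_X))}^2 = \sum_i \sigma_i^2 \geq c^2/(2\gamma_{X,Z} - 1)$.

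Chaining the two displays yields
\begin{align*}
    \frac{c^2}{2\gamma_{X,Z} - 1} \leq \frac{\bar{B}^2 C^{2\beta+1}\alpha(2\beta+1)}{\alpha(2\beta+1) - 1},
\end{align*}
and a routine rearrangement gives $2\gamma_{X,Z} \geq \frac{(\bar{B}^2C^{2\beta+1} + c^2)\alpha(2\beta+1) - c^2}{\bar{B}^2 C^{2\beta+1}\alpha(2\beta+1)}$. The stated form follows by noting $c \leq \sigma_1 = 1$ (via \Cref{prop:svd}), which lets us replace the $c^2$ in the numerator with the slightly looser $1$. The main obstacles will be ensuring that the reduction $\Ssans_{Q_{X,Z}} \in \Null(\Iop_{X,Z})^\perp$ is legitimate under the hypotheses and that the positivity assumptions needed to preserve the direction of each inequality (namely $2\gamma_{X,Z} - 1 > 0$ and $\alpha(2\beta+1) - 1 > 0$) are both in force; the latter is immediate from the assumed decays, and the former is part of the conclusion to be derived. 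Apart from these bookkeeping items, the entire argument reduces to combining Parseval, the polynomial-series comparison of \Cref{lem:singular_decay}, and elementary algebra.
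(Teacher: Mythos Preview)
Your approach is essentially identical to the paper's: both start from \Cref{lem:main:estimation:source}, upper bound the series via $a_i \leq \bar{B}^2$ and the eigendecay $\mu_i \leq Ci^{-\alpha}$ together with \Cref{lem:singular_decay}, lower bound $\sum_i \sigma_i^2$ via \Cref{lem:singular_decay} applied to $\sigma_i \geq ci^{-\gamma_{X,Z}}$, and then invert. Your treatment is in fact slightly more careful than the paper's in one respect: the algebraic inversion naturally produces $-c^2$ (not $-1$) in the numerator, and you correctly note that $c \leq \sigma_1 = 1$ allows the weakening to $-1$; the paper's proof simply writes $-1$ at the final step without comment. The Parseval remark and the reduction to $\Null(\Iop_{X,Z})^\perp$ are not strictly needed (Cauchy--Schwarz alone gives $a_i \leq \bar{B}^2$), but they do no harm.
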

\begin{proof}
    Write
    \begin{align}
        \norm{\M_{Z|X}}_{\HS(\ltwo(Q_Z), \ltwo(Q_X))}^2 = \sum_{i \in 1}^\infty \sigma_i^2 &= \sum_{i =1}^\infty \mu_i^{2\beta+1} \ipsmall{\Ssans_{Q_{X, Z}}, \mu_i^{1/2} e_i}_{\calS}^2 \notag \\
        &= \norm{\Ssans_{Q_{X, Z}}}_\calS^2 \sum_{i =1}^\infty \mu_i^{2\beta+1} \ipsmall{\Ssans_{Q_{X, Z}} / \norm{\Ssans_{Q_{X, Z}}}_\calS, \mu_i^{1/2} e_i}_{\calS}^2 \notag \\
        &\leq \bar{B}^2 \sum_{i =1}^\infty \mu_i^{2(\beta+1/2)}.\label{eq:complexity:rn:eigendecay1}
    \end{align}
    The right-hand side is finite, for all $\beta \geq 0$, as the $(\mu_i)_{i=1}^\infty$ sequence is associated to a trace class operator. Next, using that $\mu_i^{\beta+1/2} \leq C^{2\beta + 1}i^{-(\beta+1/2)\alpha}$, we use \Cref{lem:singular_decay} to upper bound~\eqref{eq:complexity:rn:eigendecay1} via
    \begin{align*}
        \sum_{i =1}^\infty \mu_i^{2(\beta+1/2)} \leq\frac{C^{2\beta + 1}(2\beta+1)\alpha}{(2\beta+1)\alpha - 1} = \frac{C^{2\beta + 1}}{1 - (2\beta+1)^{-1}\alpha^{-1}}.
    \end{align*}
    On the other hand, using \Cref{def:msc} and \Cref{lem:singular_decay}, the Hilbert-Schmidt norm is lower bounded via
    \begin{align*}
        \norm{\M_{Z|X}}_{\HS(\ltwo(Q_Z), \ltwo(Q_X))}^2 \geq \frac{c^2}{2\gamma_{X, Z} - 1}.
    \end{align*}
    Combining both bounds, we have
    \begin{align*}
        \frac{c^2}{2\gamma_{X, Z} - 1} \leq\frac{\bar{B}^2C^{2\beta + 1}}{1 - (2\beta+1)^{-1}\alpha^{-1}}.
    \end{align*}
    Inverting the bound gives the condition
    \begin{align*}
        \gamma_{X, Z} &\geq \frac{1}{2}\sbr{\frac{c^2}{\bar{B}^2C^{2\beta + 1}}\p{1 - \frac{1}{\alpha(2\beta+1)}} + 1}\\
        &= \frac{1}{2}\sbr{\frac{(\bar{B}^2C^{2\beta + 1} + c^2)\alpha(2\beta+1) - 1}{\bar{B}^2C^{2\beta + 1}\alpha(2\beta+1)}},
    \end{align*}
    the result as desired.
\end{proof}
From \Cref{prop:complexity:rn:eigendecay}, we consider the case in which $\alpha \rightarrow \infty$ (the data is finite-rank under independence), and derive the singular decay condition
\begin{align*}
    \gamma_{X, Z} &\geq \frac{1}{2}\p{\frac{\bar{B}^2C^{2\beta + 1} + c^2}{\bar{B}^2C^{2\beta + 1}}} > \frac{1}{2}
\end{align*}
for $c > 0$. While the relationship is not as direct as in the case of~\eqref{eq:complexity_q_alpha1}, we may still observe some regimes in which a ``maximally smooth'' target function boils down to an independence assumption. This holds intuitively as well, in the sense that $\Rsans \equiv 1$ holds $(Q_X \otimes Q_Z)$-almost surely if and only if $X$ and $Z$ are independent.

\subsubsection{Controlling the Prompting Term}
The term that relates $\hrho_{Y, Z}$ to $\rho_{Y, Z}$ is simply a measurement of the deviation of a sample mean from its population counterpart, within a Hilbert space. Thus, it is reasonable to assume an $O(1/M)$ scaling on this term. Below, we use the notation $(X_i', Z_i')$ to indicate a sample drawn from $Q_X \otimes Q_Z$, i.e., an unpaired example.
\begin{assumption}\label{asm:complexity:rn_derivative:ps}
    For constants $\delta \in (0, 1]$ and $M\geq 1$, there is an event $\mc{E}(\delta, M)$, which is independent of the pre-training data $\br{(X_i, Z_i)}_{i=1}^{N/2}, \br{(X_i', Z_i')}_{i=1}^{N/2}$, such that on $\mc{E}(\delta, M)$, 
    \begin{align}
        \bignorm{\tfrac{1}{M}\textstyle\sum_{j=1}^M W_j - \E{\rho_{Y, Z}}{W_1}}_{\ltwo(Q_X)}^2 \leq C_{\Rsans, \rho}(Q_X) \polylog(1/\delta) M^{-1},\label{eq:complexity:rn_derivative:ps_inequality}
    \end{align}
    where $C_{\Rsans, \rho}(Q_X)$ depends only on its arguments and $\fstar$, and is independent of $M$ and $\delta$. On $(\Omega, \msc{F}, \prob)$, the event $\mc{E}(\delta, M)$ occurs with probability at least $1 - \delta/2$.
\end{assumption}
The scaling shown in \Cref{asm:complexity:rn_derivative:ps} can be satisfied by placing a Bernstein-type condition on the random variable $W_1$ and applying, for instance, the Pinelis-Sahanenko inequality \citep{pinelis1986remarks}. Specifically, consider the case in which there are positive constants $\sigma, c > 0$ such that
\begin{align*}
    \sum_{j=1}^M \Ex_{\rho_{Y, Z}}\norm{\tfrac{1}{M} W_j - \tfrac{1}{M}\E{\rho_{Y, Z}}{W_1}}_{\ltwo(Q_X)}^{q} \leq \frac{q!}{2} \sigma^2 c^{q-2}
\end{align*}
for all $q \geq 2$. Then,~\eqref{eq:complexity:rn_derivative:ps_inequality} is satisfied, wherein the scalars  $\sigma$ and $c$ will scale as $1/M$, and have additional constants that depend on $\fstar$, $\Rsans$, $\rho_{Y, Z}$, and $Q_X$ (but not $Q_Z$ or $Q_{X, Z}$). This generates the constant $C_{\Rsans, \rho}(Q_X)$ above.

\subsubsection{Completing the Proof}\label{sec:a:complexity:rn_derivative:final}

\myparagraph{Well-Specified Case}
Because \Cref{prop:complexity:rn:eigendecay} yields an inexact relationship between the singular decay exponent $\gamma_{X, Z}$ and the source condition constant $\beta$, we maintain the statement of the result in terms of this constant. The following result comes as an immediate consequence of \Cref{lem:est:rhat} and \Cref{coro:background:rn}.
\begin{theorem}\label{thm:complexity:rn_derivative}
    Consider failure probability $\delta \in (0, 1]$. Assume that the conditions of \Cref{lem:est:rhat} are satisfied and that $N$ is large enough such that the conditions of \Cref{coro:background:rn} are satisfied, in addition to \Cref{asm:complexity:rn_derivative:ps}. Define
    \begin{align*}
        K_{\max{}} := 1 + (4\kappa_{\max}^2 + \kappa_{\max})^2.
    \end{align*}
    Then, with probability at least $1 - \delta$, it holds that
    \begin{align*}
        \norm{\heta_\rho - \eta_\rho}_{\ltwo(Q_X)}^2 &\lesssim \polylog(1/\delta) \sbr{K_{\max{}}^{\frac{\beta+2}{\beta+1}} N^{-\frac{\beta}{\beta + 1}}+ C_{\Rsans, \rho}(Q_X)M^{-1}} + \chi^2(\rho_Z \Vert Q_Z),
    \end{align*}
    where $C_{\Rsans, \rho}(Q_X)$ depends only on its arguments and $\fstar$, and not $M$ or $\delta$.
\end{theorem}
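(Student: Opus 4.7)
The plan is to view the proof as a direct assembly of three ingredients that have already been developed: the deterministic error decomposition \Cref{lem:est:rhat}, the high-probability bound \Cref{coro:background:rn} on the Radon-Nikodym estimate, and the high-probability bound \Cref{asm:complexity:rn_derivative:ps} on the prompt-based Monte Carlo error. The strategy is simply to instantiate each ingredient with the sampling allocation $\Np = \Nu = N/2$ and then combine them via a union bound.

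First I would invoke \Cref{lem:est:rhat} to write
\begin{align*}
    \norm{\heta_\rho - \eta_\rho}_{\ltwo(Q_X)}^2
    &\leq 3\fstarbound^2 \kappa_{\max{}}^2 \norm{\Rhat - \Rsans}_{\calS}^2
    + 3\fstarbound^2 \norm{\Rsans}_{\ltwo(Q_X \otimes Q_Z)}^2 \chi^2(\rho_Z \Vert Q_Z) \\
    &\quad + 3\bignorm{\tfrac{1}{M}\textstyle\sum_{j=1}^M W_j - \E{\rho_{Y, Z}}{W_1}}_{\ltwo(Q_X)}^2,
\end{align*}
noting that the hypotheses of \Cref{lem:est:rhat} are in force by assumption. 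The second (distribution-mismatch) summand is already of the desired form $\chi^2(\rho_Z \Vert Q_Z)$ up to a multiplicative constant that I would absorb into the notation $\lesssim$; the norm $\norm{\Rsans}_{\ltwo(Q_X \otimes Q_Z)}^2$ is finite and independent of $(N, M, \delta)$ under \Cref{asm:main:estimation:rn:source} (see also \Cref{lem:main:estimation:source}).

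Next, I would bound the first summand by applying \Cref{coro:background:rn} with $\Np = \Nu = N/2$. Because $\p{\Np^{-1/2} + \Nu^{-1/2}}^{2\beta/(\beta+1)} \lesssim N^{-\beta/(\beta+1)}$, this yields an event $\mc{E}_1$ of probability at least $1 - \delta/2$ on which
\begin{align*}
    \norm{\Rhat - \Rsans}_{\calS}^2 \lesssim \polylog(1/\delta)\, K_{\max{}}^{\frac{\beta+2}{\beta+1}}\, N^{-\frac{\beta}{\beta+1}},
\end{align*}
provided $N$ is large enough that the regularization parameter choice $\lambda_{\Np, \Nu}$ lies in the admissible interval $[\lambda_\star, \kappa_{\max{}}]$, which is exactly the assumption we import. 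For the Monte Carlo summand, I would directly invoke \Cref{asm:complexity:rn_derivative:ps}, giving an event $\mc{E}_2$ of probability at least $1 - \delta/2$ on which the bound $C_{\Rsans, \rho}(Q_X)\polylog(1/\delta) M^{-1}$ holds.

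The final step is a union bound. Since $\mc{E}_1$ is determined by the pre-training data $\br{(X_i, Z_i), (X_i', Z_i')}$ and $\mc{E}_2$ is, by assumption, independent of the pre-training data, we have $\prob(\mc{E}_1 \cap \mc{E}_2) \geq 1 - \delta$; independence is not actually needed here, as the union bound suffices. Collecting the three terms on $\mc{E}_1 \cap \mc{E}_2$ yields the claimed inequality. There is no substantive obstacle: the only things to watch are (i) the handling of the sampling split $\Np = \Nu = N/2$ when converting the rate from \Cref{coro:background:rn}, and (ii) the bookkeeping that the failure probabilities of the two high-probability events sum to at most $\delta$. Everything else is absorbed into the $\lesssim$ notation and the $\polylog$ factors.
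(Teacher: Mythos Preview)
Your proposal is correct and follows exactly the approach of the paper, which states the result as an immediate consequence of \Cref{lem:est:rhat} and \Cref{coro:background:rn} together with \Cref{asm:complexity:rn_derivative:ps}. The decomposition, the substitution $\Np=\Nu=N/2$, and the union bound over the two high-probability events are precisely the intended steps; the paper simply does not spell them out.
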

The constant $C_{\Rsans, \rho}(Q_X)$ appears directly from \Cref{asm:complexity:rn_derivative:ps}.

\myparagraph{Mis-Specified Case}
As mentioned in \Cref{sec:a:background:rkhs}, the mis-specified case ($\Rsans \notin \calS$) for Radon-Nikodym derivative estimation problems is less understood than the mis-specified case for real-valued and vector-valued nonparametric regression. We intend here to highlight the overall decomposition of error, for which such results could be plugged in as well.

\subsection{Distribution Shift}\label{sec:a:complexity:shift}
The results of the previous two subsections provided bounds in high probability on the term $\norm{\heta_\rho - \eta_\rho}_{\ltwo(Q_X)}^2$. Returning to the original error decomposition of~\eqref{eq:theory:decomp1}, we would like to relate this to a similar bound on $\norm{\heta_\rho - \eta_\rho}_{\ltwo(P_X)}^2$. We collect two general techniques for performing this change of measure, which lead to either a multiplicative or additive error depending on the assumptions the user is willing to make.
\begin{lemma}[Distribution Shift]\label{lem:distribution_shift}
    Assume that $P_X$ and $Q_X$ have densities $p_X$ and $q_X$ with respect to a common dominating measure $\nu_X$ on the measurable space $(\X, \calB(\X))$, and define the \emph{total variation} metric
    \begin{align*}
        \tv(P_X, Q_X) := \int_\X \abs{p_X(\x) - q_X(\x)} \d \nu_X(\x).
    \end{align*}
    Then, for any $\eta:\X \rightarrow \R$ such that $[\eta]_X \in \ltwo(P_X) \cap \ltwo(Q_X)$ (see \Cref{sec:a:background:l2}), the following holds.
    \begin{itemize}
        \item If the essential supremum $\norm{\eta}_\infty := \inf\br{\sup_{A \in \calB(\X)} \sup_{\x \in A}\abs{\eta(\x)}: \nu_X(A^c) = 0}$ is finite, then we have the additive relation
        \begin{align}
            \norm{\eta}_{\ltwo(P_X)}^2 \leq \norm{\eta}_{\ltwo(Q_X)}^2 +  \norm{\eta}_\infty^2 \tv(P_X, Q_X).\label{eq:main:class:add}
        \end{align}
        \item If $Q_X \ll P_X$, and $\frac{\d Q_X}{\d P_X}(\x, \z) \leq B_{P, Q}$ for $P_X$-almost all $\x \in \X$, then we have the multiplicative relation
        \begin{align}
            \norm{\eta}_{\ltwo(P_X)}^2 \leq B_{P, Q} \norm{\eta}_{\ltwo(Q_X)}^2.\label{eq:main:class:multi}
        \end{align}
    \end{itemize}
\end{lemma}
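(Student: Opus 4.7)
The plan is to handle both bounds by reducing them to direct integral computations with respect to the common dominating measure $\nu_X$. Each inequality is really a statement about how much the integral of $\eta^2$ changes when the integrating measure is switched from $Q_X$ to $P_X$, and the two parts correspond to two distinct mechanisms for controlling that change: a Hölder-type argument that trades the $L^\infty$ size of $\eta^2$ against the $L^1$ distance between densities, and a change-of-measure argument that absorbs the density ratio as a multiplicative constant.

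For the additive bound~\eqref{eq:main:class:add}, I would begin by writing
\[
\norm{\eta}_{\ltwo(P_X)}^2 - \norm{\eta}_{\ltwo(Q_X)}^2 = \int_\X \eta^2(\x)\bigl(p_X(\x) - q_X(\x)\bigr)\,\d\nu_X(\x),
\]
which is well-defined since $[\eta]_X \in \ltwo(P_X) \cap \ltwo(Q_X)$ and both densities are integrable against $\nu_X$. The next step is to apply the duality $\bigl|\int fg\,\d\nu_X\bigr| \le \|f\|_\infty \|g\|_1$ with $f = \eta^2$ and $g = p_X - q_X$, yielding an upper bound of $\norm{\eta^2}_\infty \int_\X |p_X(\x) - q_X(\x)|\,\d\nu_X(\x) = \norm{\eta}_\infty^2\,\tv(P_X, Q_X)$. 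Rearranging gives the claim. The only small technicality is verifying that the essential supremum with respect to $\nu_X$ satisfies $\norm{\eta^2}_\infty = \norm{\eta}_\infty^2$, which follows immediately from the definition and the non-negativity of $\eta^2$.

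For the multiplicative bound~\eqref{eq:main:class:multi}, I would proceed by a direct change of measure via the Radon-Nikodym theorem. The core identity $\d Q_X = \tfrac{\d Q_X}{\d P_X}\,\d P_X$ lets one rewrite $\norm{\eta}_{\ltwo(Q_X)}^2$ as an integral against $P_X$ weighted by the density ratio; substituting the uniform bound $B_{P,Q}$ on that ratio and rearranging gives the inequality in one line. The sense in which absolute continuity is assumed must be consistent with the direction of the bound (so that the dominating measure appears in the integrand where the uniform upper bound can be pulled out), and once the correct orientation is fixed, the argument is immediate.

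The main ``obstacle'' here is really just bookkeeping: confirming measurability, checking that the essential suprema under $\nu_X$ and under the two probability measures agree (which they do by mutual absolute continuity of $P_X$ and $Q_X$ with respect to $\nu_X$ on their respective supports), and making sure the pointwise density-ratio bound holds on a $P_X$-full measure set so that it transfers inside the integral. No substantive probabilistic difficulty is expected, and the proof should be short enough to be written in a few lines for each case.
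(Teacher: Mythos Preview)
Your proposal is correct and follows essentially the same approach as the paper: for the additive bound you write the difference $\norm{\eta}_{\ltwo(P_X)}^2 - \norm{\eta}_{\ltwo(Q_X)}^2$ as a single integral against $\nu_X$ and apply H\"older's inequality, and for the multiplicative bound you perform a direct change of measure via the Radon--Nikodym derivative and pull out the uniform bound. Your remark about fixing the orientation of the absolute continuity assumption is well placed, as the paper's statement and proof are slightly loose on this point.
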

\begin{proof}
    In the case of~\eqref{eq:main:class:add}, we apply H\"older's inequality to achieve
    \begin{align*}
        \norm{\eta}_{\ltwo(P_X)}^2 = \E{P_X}{\eta^2(X)} &= \E{Q_X}{\eta^2(X)} + \int_\X \eta^2(\x) \p{p_X(\x) - q_X(\x)} \d \nu_X(\x)\\
        &\leq \norm{\eta}_{\ltwo(Q_X)}^2 + \norm{\eta}_\infty^2  \int_\X \abs{p_X(\x) - q_X(\x)} \d \nu_X(\x)\\
        &= \norm{\eta}_{\ltwo(Q_X)}^2 + \norm{\eta}_\infty^2 \tv(P_X, Q_X),
    \end{align*}
   which proves the first claim. For~\eqref{eq:main:class:multi}, on the other hand, write
    \begin{align*}
        \norm{\eta}_{\ltwo(P_X)}^2 = \E{P_X}{\eta^2(X)} = \E{Q_X}{\eta^2(X)\tfrac{\d Q_X}{\d P_X}(X)} \leq B_{P, Q}  \norm{\eta}_{\ltwo(Q_X)}^2,
    \end{align*}
    proving the second claim and completing the proof.
\end{proof}
From \Cref{lem:distribution_shift} and the boundedness assumption $\abs{\fstar(\cdot)} \leq \fstarbound$, we alter~\eqref{eq:theory:decomp1} slightly to read
\begin{align}
    \norm{\etastar - \heta_\rho}_{\ltwo(P_X)}^2 \leq 2\norm{\etastar - \eta_\rho}_{\ltwo(P_X)}^2 + 2\norm{\eta_\rho - \heta_\rho}_{\red{\ltwo(Q_X)}}^2 + \red{4\fstarbound^2 \tv(P_X, Q_X)},\label{eq:complexity:error2}
\end{align}
and plug in the previous bounds on the $\norm{\eta_\rho - \heta_\rho}_{\ltwo(Q_X)}^2$ term for an overall result.

\subsection{From Regression to Classification}\label{sec:a:complexity:classification}
Throughout this appendix, we evaluated the quality of an estimated map $\heta_\rho: \X \rightarrow \R$ via its $\ltwo(Q_X)$ distance to some target predictor $\eta_\rho$. This goal was based on the error decomposition~\eqref{eq:complexity:error2}, which feeds into ultimate upper bound for $\norm{\heta_\rho - \etastar}^2_{\ltwo(P_X)}$,
where each term was controlled using the techniques of \Cref{sec:a:dependence}, \Cref{sec:a:complexity:conditional_mean}, and \Cref{sec:a:complexity:rn_derivative}.
In the case that $\fstar: \Y \rightarrow \R$ represents a classification or structured prediction problem (e.g.~$\fstar(\y) = \ind\br{\y = c}$ for class $c \in \Y$), it is of clear interest whether the control of mean squared error translates to risk guarantees for classification error. Establishing these guarantees, using the notion of a \emph{structure encoding loss function (SELF)} described in \citet[Section 13.2]{bach2024learning}, is the subject of this section.

Assume that $\Y$ is discrete, or that $\abs{\Y} < \infty$. We consider a loss function $\ell: \Y \times \Y \rightarrow \R$ and a regular conditional distribution $P_{Y|X}(\cdot |\x)$ (see \Cref{def:rcd}), under which $\ell(\cdot, \y)$ is integrable for all $(\x, \y) \in \X \times \Y$. The corresponding risk of any map $h: \X \rightarrow \Y$ will be denoted
\begin{align}
    \risk(h) = \E{P_{X, Y}}{\ell(Y, h(X))}.\label{eq:main:misclass}
\end{align}
There are a number of assumptions that mark the SELF framework.
\begin{assumption}[SELF Loss for Structured Prediction]\label{asm:self}
     Consider the existence of a Hilbert space $\calF$, and two mappings $\chi: \Y \rightarrow \calF$ and $\xi: \Y \rightarrow \calF$ which act as embeddings of objects in $\Y$. Then, assume that $\ell$ satisfies the equality
    \begin{align*}
        \ell(\y, \y') = \ip{\chi(\y), \xi(\y')}_\calF.
    \end{align*}
\end{assumption}
As of yet, no assumptions (such as being an RKHS) have been placed on $\calF$. 
Under \Cref{asm:self}, the Bayes optimal predictor (with respect to~\eqref{eq:main:misclass}, and not mean squared error) is given by
\begin{align*}
    h_\star(\x) \in \argmin_{\y' \in \Y} \sum_{\y \in \Y} \ell(\y, \y') P_{Y|X}(\y|\x),
\end{align*}
where ties can be broken arbitrarily. In other words, $h_\star \in \argmin_h \risk(h)$. Additionally, because $\Y$ is finite, we may take the expectation
\begin{align*}
    \sum_{\y \in \Y} \ell(\y, \y') P_{Y|X}(\y|\x) &= \sum_\Y \ip{\chi(\y), \xi(\y')}_\calF P_{Y|X}(\y|\x)\\
    &= \ip{\E{P_{X, Y}}{\chi(Y)|X}(\x), \xi(\y')}_\calF,
\end{align*}
which is only based on finite sums of vectors in $\calF$.
Next, we define the notation of a surrogate loss. To construct a predictor (e.g.~classifier), we consider a function $s: \X \rightarrow \calF$ called the \emph{score function} and a map $\dec: \calF \rightarrow \Y$ known as a \emph{decoder}. We will then define an integrable surrogate loss $L: \Y \times \calF \rightarrow \R$, for which we can define the risk
\begin{align}
    \risk^L(s) = \E{P_{X, Y}}{L(Y, s(X))}.\label{eq:main:surrogate}
\end{align}
We can then define the Bayes surrogate risk\footnote{We assume the map $\x \mapsto \inf_{h \in \calF} \E{P_{X, Y}}{L(Y, h)|X}(\x)$ to be measurable as a technical consideration.} as 
\begin{align*}
    \risk_\star^L = \E{P_X}{\inf_{h \in \calF} \E{P_{X, Y}}{L(Y, h)|X}}.
\end{align*}
The relationship between the surrogate risk~\eqref{eq:main:surrogate} and the true risk~\eqref{eq:main:misclass} for squared surrogates is given in the following result.
\begin{proposition}{\citep[Section 13.4.2]{bach2024learning}}\label{prop:decoding}
    Consider the surrogate loss and decoder given by
    \begin{align*}
        L(\y, s(\x)) := \norm{\xi(\y) - s(\x)}_\calF^2 \text{ and } \dec(h) \in \argmin_{\y \in \Y} \ip{\chi(\y), h}_\calF.
    \end{align*}
    Then, for any score function $s: \X \rightarrow \calF$, it holds that
    \begin{align*}
        \risk(\dec \circ s) - \risk(h_\star) \leq 2\sup_{\y \in \Y}\norm{\chi(\y)}_\calF \cdot \sqrt{\risk^L(s) -\risk_\star^L}.
    \end{align*}
\end{proposition}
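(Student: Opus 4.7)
The plan is a calibration argument that converts an excess surrogate-risk guarantee into an excess target-risk guarantee, paralleling the classical SELF-based analysis in structured prediction. I would proceed in three steps.

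First, I would identify the Bayes surrogate score. Since $L(\y, h) = \norm{\xi(\y) - h}_\calF^2$ is a squared Hilbert-space loss, expanding the square and differentiating in $h$ shows that the pointwise conditional minimizer is the conditional mean $s_\star(\x) = \E{P_{X, Y}}{\xi(Y) \,|\, X}(\x)$. The standard bias--variance identity for squared loss then yields
\begin{align*}
    \risk^L(s) - \risk^L_\star = \E{P_X}{\norm{s(X) - s_\star(X)}_\calF^2}.
\end{align*}
A short calculation using the SELF decomposition $\ell(\y, \y') = \ip{\chi(\y), \xi(\y')}_\calF$ and the linearity of the inner product establishes the Fisher consistency of the pair $(L, \dec)$, namely $\dec(s_\star(\x)) = h_\star(\x)$ for $P_X$-almost every $\x$: substituting $s_\star(\x)$ into $\dec$ and exchanging the inner product with the conditional expectation recovers the very argmin that defines $h_\star(\x)$.

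Second, I would bound the pointwise conditional excess target risk
\begin{align*}
    \Delta(\x) \defeq \E{P_{X, Y}}{\ell(Y, \dec(s(\x))) - \ell(Y, h_\star(\x)) \,\big|\, X = \x}.
\end{align*}
Using the SELF decomposition and integrating out $Y$ against the regular conditional distribution, this becomes an inner product in $\calF$ of the form $\ip{s_\star(\x),\, \xi(\dec(s(\x))) - \xi(h_\star(\x))}_\calF$. Adding and subtracting $s(\x)$ in the first slot splits $\Delta(\x)$ into two pieces. The piece $\ip{s(\x),\, \xi(\dec(s(\x))) - \xi(h_\star(\x))}_\calF$ is nonpositive by the very definition of $\dec(s(\x))$ as the minimizer over $\y' \in \Y$ of the linear functional $\y' \mapsto \ip{\chi(\y'), s(\x)}_\calF$. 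The remaining piece is controlled by Cauchy--Schwarz followed by the triangle inequality:
\begin{align*}
    \Delta(\x) \le \norm{s_\star(\x) - s(\x)}_\calF \cdot \norm{\xi(\dec(s(\x))) - \xi(h_\star(\x))}_\calF \le 2 \sup_{\y \in \Y} \norm{\chi(\y)}_\calF \cdot \norm{s_\star(\x) - s(\x)}_\calF,
\end{align*}
where the final step absorbs both embedded labels into $2\sup_\y \norm{\chi(\y)}_\calF$ via the SELF structure.

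Finally, I would take expectation over $P_X$ of this pointwise bound and apply Jensen's inequality to move the square root outside,
\begin{align*}
    \risk(\dec \circ s) - \risk(h_\star) = \E{P_X}{\Delta(X)} \le 2 \sup_{\y \in \Y} \norm{\chi(\y)}_\calF \cdot \sqrt{\E{P_X}{\norm{s_\star(X) - s(X)}_\calF^2}},
\end{align*}
and substitute the bias--variance identity from the first step to recover the stated bound. The main bookkeeping obstacle is the careful coordination of the two embeddings $\chi$ and $\xi$ across the definitions of $L$, $\dec$, $s_\star$, $h_\star$, and $\ell$ itself: verifying both the sign of the cross term and the Fisher consistency $\dec \circ s_\star = h_\star$ is precisely where the SELF structure is used, and any mismatch between ``regress toward $\xi(Y)$'' and ``decode via $\chi$'' would break the linear-functional argument that kills the cross term.
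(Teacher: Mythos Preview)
Your three-step calibration strategy is exactly the standard argument (the paper does not supply its own proof, citing \citet{bach2024learning}), but the $\chi$/$\xi$ coordination you yourself flagged as the main obstacle is in fact off in Step~2. From $\ell(\y,\y')=\ip{\chi(\y),\xi(\y')}_\calF$, integrating out $Y$ gives
\[
\Delta(\x)=\bigl\langle\,\E{P_{X,Y}}{\chi(Y)\mid X}(\x),\ \xi(\dec(s(\x)))-\xi(h_\star(\x))\bigr\rangle_\calF,
\]
with $\E{}{\chi(Y)\mid X}$ in the first slot---not $s_\star(\x)=\E{}{\xi(Y)\mid X}(\x)$. Likewise, the decoder minimizes $\y'\mapsto\ip{\chi(\y'),s(\x)}_\calF$, so its optimality yields $\ip{s(\x),\chi(\dec(s(\x)))-\chi(h_\star(\x))}_\calF\le 0$, not the version with $\xi$ that you wrote; and the triangle inequality on the residual produces $2\sup_\y\norm{\xi(\y)}_\calF$, not $2\sup_\y\norm{\chi(\y)}_\calF$. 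The same mismatch breaks your Fisher-consistency check: $\dec(s_\star(\x))=\argmin_{\y'}\E{}{\ell(\y',Y)\mid X}(\x)$, which equals $h_\star(\x)=\argmin_{\y'}\E{}{\ell(Y,\y')\mid X}(\x)$ only when $\ell$ is symmetric.

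The clean pairing is ``regress the first-slot embedding, decode via the second-slot embedding.'' For the proposition exactly as stated---regress $\xi(Y)$, decode via $\chi$---your argument goes through verbatim once you invoke the companion decomposition $\ell(\y,\y')=\ip{\xi(\y),\chi(\y')}_\calF$: then $\Delta(\x)=\ip{s_\star(\x),\chi(\dec(s(\x)))-\chi(h_\star(\x))}_\calF$, the cross term is nonpositive by the stated decoder, and the constant is $2\sup_\y\norm{\chi(\y)}_\calF$ as claimed. In the paper's only application (0-1 loss with $\chi$ one-hot and $\xi=\ones-\chi$) this companion decomposition holds because $\ell$ is symmetric, so nothing is lost there; but your write-up should state explicitly which SELF representation of $\ell$ is being used at each step rather than sliding between the two.
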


We stated \Cref{prop:decoding} generally; we now map it to classification, the prototypical task associated with zero-shot prediction. 
Let $\Y = \br{1, \ldots, C}$, where $C$ denotes the number of classes (in contrast to the absolute constants in \Cref{thm:complexity_vvkr} and \Cref{thm:complexity_rn_derivative}). Then, we have that $\chi(\y)$ is the one-hot encoding in $\R^C$, whereas $\xi(\y)$ is the complement, that is, $\xi_j(\y) = 1 - \chi_j(\y)$ for $c = 1, \ldots, C$. Thus, their inner product generates the 0-1 loss
\begin{align*}
    \ell(\y, \y') = \ind\br{\y \neq \y'} = \ip{\chi(\y), \xi(\y')}_{\R^C}.
\end{align*}
Then, we immediately have that $\sup_{\y \in \Y}\norm{\chi(\y)}_\calF = 1$. It remains to determine the score function $s: \X \rightarrow \R^C$. Note that we used a function $\fstar$ to define~\eqref{eq:est:bayes} and~\eqref{eq:est:ts}; we will now use $C$ such functions $\fstar\pow{1}, \ldots, \fstar\pow{C}$ each defined by
\begin{align}
    \fstar\pow{c}(\y) = \xi_j(\y) = \ind\br{\y = j}\label{eq:class:xi_encoding}
\end{align}
which in turn gives us the individual mean squared error minimizers
\begin{align*}
    \etastar\pow{c}(\x) = \E{P_{Y, X}}{\fstar\pow{c}(Y)|X}(\x) = \P{P_{Y, X}}{Y = j|X}(\x).
\end{align*}
Finally, we may use any of the estimation strategies developed in \Cref{sec:a:complexity:conditional_mean} or \Cref{sec:a:complexity:rn_derivative} to produce estimators $\heta_\rho\pow{1}, \ldots, \heta_\rho\pow{C}$ (i.e.~the predicted probability per class) to give the score function
\begin{align}
    s(\x) := \p{\heta_\rho\pow{1}(\x), \ldots, \heta_\rho\pow{C}(\x)} \in \R^C.\label{eq:main:class:score}
\end{align}
Each $\heta_\rho\pow{c}$ is then associated to the conditional mean given by the prompt distribution, which we denote $g_\rho\pow{c}$.
As a final step, we use the classical relationship between mean squared prediction error and mean squared integrated error, as seen below.
\begin{corollary}\label{coro:main:classification}
    For the score function given in~\eqref{eq:main:class:score} and decoder given in \Cref{prop:decoding}, it holds that
    \begin{align*}
        \risk(\dec \circ s) - \risk(h_\star) \leq 2 \sqrt{\textstyle\sum_{j=1}^C \norm{\heta_\rho\pow{c} - \etastar\pow{c}}_{\ltwo(P_X)}^2}.
    \end{align*}
\end{corollary}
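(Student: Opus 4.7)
The plan is to invoke Proposition~\ref{prop:decoding} with the classification-specific embeddings and then reduce the surrogate excess risk to the sum of the $\ltwo(P_X)$-errors of the per-class regressors. First I would verify that the hypotheses of Proposition~\ref{prop:decoding} apply: the loss $\ell(\y,\y') = \ind\{\y \neq \y'\}$ admits the SELF decomposition $\ell(\y,\y') = \ip{\chi(\y), \xi(\y')}_{\R^C}$ with $\chi$ the one-hot encoding and $\xi$ its complement, and with this choice $\sup_{\y \in \Y}\norm{\chi(\y)}_{\R^C} = 1$ since each one-hot vector has unit Euclidean norm. The decoder specializes to $\dec(h) \in \argmin_{\y} \ip{\chi(\y), h}_{\R^C} = \argmin_{\y} h_\y$, which picks the class with the smallest predicted ``non-match'' score, consistent with using $\xi$-encodings as regression targets.

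Next I would compute the surrogate excess risk for the squared loss $L(\y, h) = \norm{\xi(\y) - h}_{\R^C}^2$. By the standard bias-variance identity, conditioning on $X = \x$ yields
\begin{align*}
    \E{P_{X,Y}}{L(Y, s(X))|X}(\x) - \inf_{h \in \R^C}\E{P_{X,Y}}{L(Y, h)|X}(\x) = \norm{s(\x) - \E{P_{X,Y}}{\xi(Y)|X}(\x)}_{\R^C}^2,
\end{align*}
where the minimizer of the conditional risk is the conditional mean of $\xi(Y)$. By the choice~\eqref{eq:class:xi_encoding}, the $c$-th coordinate of this conditional mean is exactly $\etastar^{(c)}(\x)$, while the $c$-th coordinate of $s(\x)$ is $\heta_\rho^{(c)}(\x)$. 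Integrating with respect to $P_X$ and summing over coordinates gives
\begin{align*}
    \risk^L(s) - \risk_\star^L = \sum_{c=1}^C \norm{\heta_\rho^{(c)} - \etastar^{(c)}}_{\ltwo(P_X)}^2.
\end{align*}

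Finally, I would plug this identity into the comparison inequality of Proposition~\ref{prop:decoding} along with $\sup_{\y}\norm{\chi(\y)}_{\R^C} = 1$, which immediately yields the claimed bound. There is essentially no hard step here: the result is a direct specialization of the SELF framework, and the only thing to double check is that the measurability/integrability conditions implicit in \Cref{prop:decoding} are inherited from $\heta_\rho^{(c)}, \etastar^{(c)} \in \ltwo(P_X)$ and from the finiteness of $\Y$, which makes the infimum over $h \in \R^C$ well-posed and ensures that the conditional expectation $\E{P_{X,Y}}{\xi(Y)|X}$ is itself a $P_X$-square-integrable $\R^C$-valued function.
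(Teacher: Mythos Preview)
Your proposal is correct and mirrors the paper's own proof: both invoke \Cref{prop:decoding} with $\sup_{\y}\norm{\chi(\y)}_{\R^C}=1$ and then establish the identity $\risk^L(s)-\risk_\star^L=\sum_{c=1}^C\norm{\heta_\rho^{(c)}-\etastar^{(c)}}_{\ltwo(P_X)}^2$ via the bias--variance decomposition for the squared surrogate. The only cosmetic difference is that you perform the bias--variance step conditionally on $X$ and then integrate, whereas the paper applies it coordinate-wise after taking the full expectation; these are equivalent.
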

\begin{proof}
    Given \Cref{prop:decoding}, we need only show that
    \begin{align}
        \risk^L(s) -\risk_\star^L = \sum_{j=1}^C \norm{\heta_\rho\pow{c} - \etastar\pow{c}}_{\ltwo(P_X)}^2.\label{eq:main:class:wts}
    \end{align}
    First, note that for the score function $s$ given in~\eqref{eq:main:class:score}, it holds by~\eqref{eq:class:xi_encoding} that
    \begin{align*}
        L(\y, s(\x)) := \norm{\xi(\y) - s(\x)}_{\R^C}^2 = \sum_{j=1}^C (\fstar\pow{c}(\y) - \heta_\rho\pow{c}(\x))^2,
    \end{align*}
    and after taking the expectation over $P_{X, Y}$,
    \begin{align*}
        \risk^L(s) = \E{P_{X, Y}}{L(Y, s(X))} = \sum_{j=1}^C \E{P_{X, Y}}{(\fstar\pow{c}(Y) - \heta_\rho\pow{c}(X))^2}.
    \end{align*}
    Then, by the bias-variance decomposition for each $c = 1, \ldots, C$, it holds that
    \begin{align*}
        \underbrace{\sum_{j=1}^C \E{P_{X, Y}}{(\fstar\pow{c}(Y) - \heta_\rho\pow{c}(X))^2}}_{\risk^L(s)} = \sum_{j=1}^C \norm{\heta_\rho\pow{c} - \etastar\pow{c}}_{\ltwo(P_X)}^2 + \underbrace{\sum_{j=1}^C \E{P_{X, Y}}{(\fstar\pow{c}(Y) - \etastar\pow{c}(X))^2}}_{\risk_\star^L}.
    \end{align*}
    Rearranging terms gives~\eqref{eq:main:class:wts} and completes the proof.
\end{proof}
In particular, when applying the bound above to results of \Cref{thm:res_dep1}, \Cref{thm:complexity_vvkr}, and \Cref{thm:complexity_rn_derivative}, we derive a bound of the form
\begin{align*}
    \risk(\dec \circ s) - \risk(h_\star) &\lesssim \sqrt{C\E{P_Z}{I(X; Y|Z)} + \textstyle\sum_{j=1}^C \norm{g_\rho\pow{c} - g_{P_{Y, Z}}\pow{c}}_{\ltwo(P_Z)}^2 + C\tv(P_X, Q_X)}\\
    &\quad + \begin{cases}
        \sqrt{C}\polylog\p{C/\delta}\p{N^{-\frac{q(t)}{2(q(t) + 1)}} + M^{-\frac{2\omega_\rho - 1}{4\omega_\rho + 2}}} &\text{ (conditional mean)}\\
        \sqrt{C}\polylog\p{C/\delta} \p{N^{-\frac{\beta}{2(\beta + 1)}} + M^{-1/2}} + \sqrt{D_{\chi^2}(\rho_Z \Vert Q_Z)} &\text{ (information density)}
    \end{cases},
\end{align*}
which holds with probability at least $1 - \delta$. While generalization bounds for classification and structured prediction can have sharper dependences on the number of examples and number of classes for supervised learning (e.g.,~via the techniques of~\citet{cabannes2021fast} and references therein), the conversion from regression to classification is a remarkably general way to account for the residual dependence, prompt bias, and multiple stages of estimation that mark our problem.

\subsection{Prompting Strategies}\label{sec:a:complexity:sampling}
\begin{figure*}[t!]
    \centering
    \includegraphics[width=\linewidth]{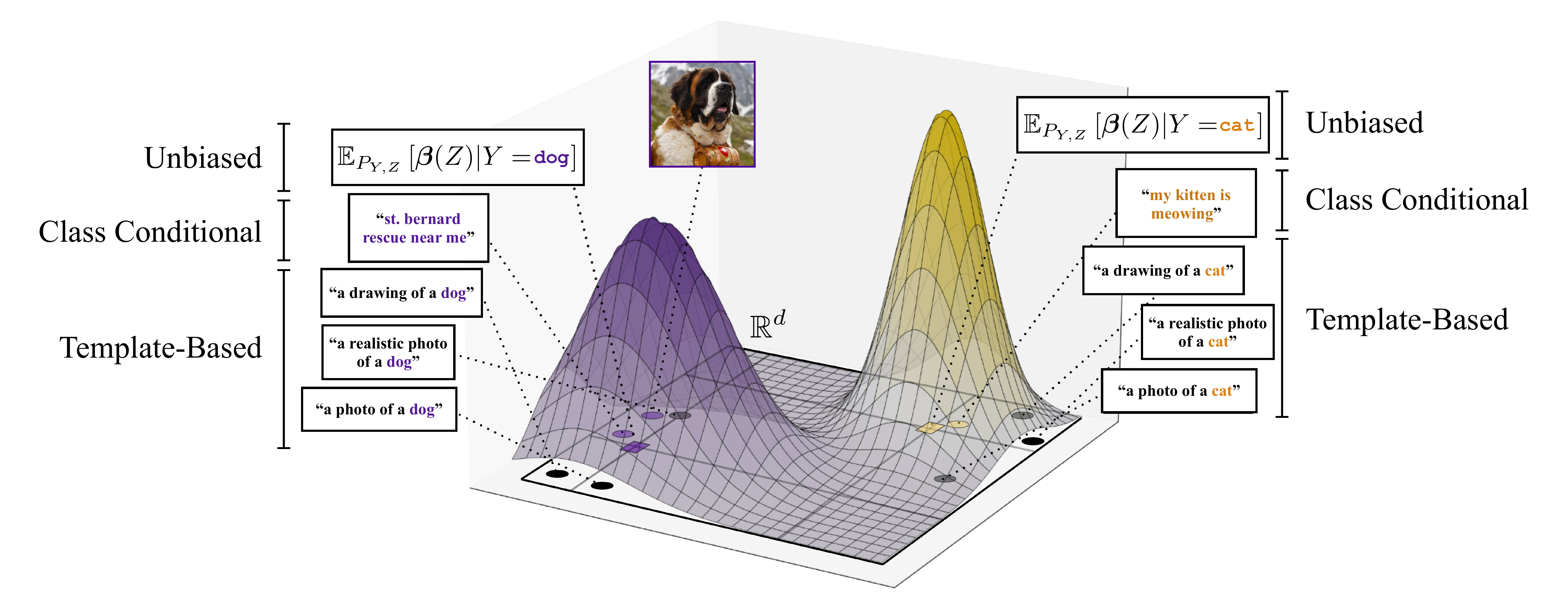}
    \caption{{\bf Illustration of Prompting Strategies.} A hypothetical distribution of embeddings $\bbeta(Z)$ parametrized by two classes (``cat'' and ``dog''). Three prompting strategies (template-based, class-conditional, and unbiased) are shown with example text and resulting embeddings in $\R^d$. Colors represent the probability of each class given the embedding.}
    \label{fig:prompt}
\end{figure*}

We have stated upper bounds on the statistical error in this section that depend on the size of the pre-training set $N$ and the number of prompts $M$. To state them more precisely, however, we must also specify the sampling schemes that lead to these examples/prompts. Sampling of the pre-training data falls into fixed and well-understood categories, boiling down to whether only paired examples or a combination of paired and unpaired examples are available. We describe these as part of the background (\Cref{sec:a:background:rkhs}), alongside the method to which they apply. However, the interpretation of prompting (the empirical procedure used in~\eqref{eq:proxy}) formally as a sampling scheme from a probability measure $\rho_{Y, Z}$ is itself a contribution of this paper. In the results of \Cref{sec:a:complexity:conditional_mean} and \Cref{sec:a:complexity:rn_derivative}, we considered simple random sampling $(Y_1, Z_1), \ldots, (Y_M, Z_M) \sim \rho_{Y, Z}$ i.i.d.~to provide examples of scenarios in which \Cref{asm:vvkrr:downstream} and \Cref{asm:complexity:rn_derivative:ps} can be satisfied. However, multiple practical and idealized strategies exist for prompting (such as the ones explored in \Cref{sec:experiments}). Below, we represent them in our framework below, as ways to define $\rho_{Y, Z}$ and approximate it with $\hrho_{Y, Z}$. 
\begin{itemize}
    \item {\bf Template-Based:} This technique reflects the earlier iterations of representing labels in natural language. Examples include ``photo of a \underline{\hspace{0.5cm}}'', ``realistic photo of a \underline{\hspace{0.5cm}}'', ``drawing of a \underline{\hspace{0.5cm}}'', etc. Notice that the prompt templates have no relationship with the class label. 
    One way this can be understood is by representing the caption via the structural equation $Z = f(Y, U)$, where $U$ represents the text of the caption with the label left blank (drawn according to a probability measure $\rho_U$), and $f$ represents the action of inserting the natural language label. Then, we have that under the template-based prompting distribution, $U \indep Y$. This does not imply that $Z \indep Y$, but instead that the dependence is governed fully by the function $f$. To sample, a fixed number of $m$ examples $\u_1, \ldots, \u_m$ are drawn directly from $\rho_U$. We then use the empirical measure $\hrho_{Y, Z}(\y, \z) = \frac{1}{m}\sum_{k=1}^m \rho_Y(\y) \ind\br{f(\y, \u_k) = \z}$, where $\rho_Y$ is fixed as the uniform distribution on the discrete set $\Y$. Here, $M = m\abs{\Y}$.
    \item {\bf Class Conditional:} This technique reflects the modern LLM-based techniques, such as CuPL \citep{pratt2023what}. We parameterize the joint distribution using the conditional distributions $\rho_{Y, Z} = \sum_{\y \in \Y} \rho_{Z|Y = \y}\cdot \rho_Y(\y)$ for each class $\y \in \Y$. Sampling from each $\rho_{Z|Y = \y}$ occurs by meta-prompting the LLM (such as the one we use in \Cref{sec:a:experiments}), which generates samples $\z^{\y}_1, \ldots, \z^{\y}_1$ and empirical measures $\hrho_{Z|Y = \y} = \frac{1}{m}\sum_{k=1}^m \delta_{\z_k^{\y}}$. Our final approximation is $\hrho_{Y, Z} = \sum_{\y \in \Y} \hrho_{Z|Y = \y}\cdot \rho_Y(\y)$, with $M = m \abs{\Y}$.
    \item {\bf Unbiased:} This techniques reflects the setting of \Cref{fig:ideal}, where the user may drawn samples from a joint distribution $P_{X, Y, Z}$, where the marginal $P_{X, Y}$ is in fact the data on which the zero-shot classifier will be evaluated. Then, the prompt distribution can be constructed, as we do, by drawing samples $(\y_1, \z_1), \ldots, (\y_M, \z_M)$ directly from $P_{Y, Z}$, and defining $\hrho_{Y, Z} = \frac{1}{M} \sum_{j=1}^M \delta_{(\y_j, \z_j))}$. We call this ``unbiased'', because the prompt bias term in \Cref{thm:res_dep1} is zero for this example. It is worth pointing out that even if $P_{Y|Z = \z}$ can be matched by the prompt distribution, the distribution mismatch term from \Cref{thm:complexity_rn_derivative} will be zero if and only if $\rho_Z = Q_Z$ (or the prompt captions match the pre-training captions in distribution). In this sense, $P_{Y, Z}$ may not be the ideal prompting distribution, but instead, $P_{Y|Z}Q_Z$.
\end{itemize}

\section{Self-Supervised Objectives and Cross Covariance Operators}\label{sec:a:objectives}

In \Cref{sec:theory}, we considered specific instances of both the conditional mean and information density approaches based on nonparametric regression in reproducing kernel Hilbert space (RKHS). This reflected the statistical goals of \Cref{thm:complexity_vvkr} and \Cref{thm:complexity_rn_derivative}. In this appendix, we aim to draw relationships with other approaches based on optimizing self-supervised learning (SSL) objectives, in order to align with practice. In particular, we focus on the relationship between such objectives and the mean square contingency $I(X;Z)$ introduced in \Cref{sec:framework}. To do so, we make explicit the intuition that SSL objectives (such as CLIP and VICReg) are implicit forms of dependence maximization between the representations $\balpha(X)$ and $\bbeta(Z)$. Some of the arguments below have previously appeared in the literature---we do not claim originality for them, but instead aim to consolidate them together in a single vignette.

When it comes to specific SSL objectives, we describe here their properties as functions acting on a batch of encoded data $(\balpha(\x_1), \bbeta(\z_1)), \ldots, (\balpha(\x_n), \beta(\z_n)$. This abstract description is agnostic to the function class used for the encoder. %
Reproducing kernel Hilbert space theory has been frequently used, in the recent literature, to define the function classes involved in contrastive and non-contrastive self-supervised foundation modeling~\cite{ li2021selfsupervised, balestriero2022constrastive, kiani2022jointembeddings, johnson2023contrastive, tan2024contrastive}. We also mention that the precise characterization of the function classes of various deep neural networks is an active area of research~\cite{schmidt-hieber2020rejoinder,scetbon2020harmonic,pahri2021banach,wu2022aspectral,bartolucci2023understanding, unser2023ridges, siegel2023characterization, schwarz-ziv2023aninformation,devore2025weighted}. However, these exciting yet still burgeoning theories of deep neural networks have not yet reached a maturity level comparable to the one of RKHS theory~\cite{wahba1990spline,cucker2007learning,christmann2008support,bach2024learning} needed for the theoretical analysis we develop in this paper. For more practical details on self-supervised learning, we point the reader to the recent survey~\cite{balestriero2023cookbook}.

\myparagraph{Covariance Operators}
To relate our theory (which centers around the mean square contingency measure of dependence) to SSL objectives, we first draw the relationship to covariance operators of $Q_{X, Z}$ on particular function spaces. Let $\calH$ be an RKHS of real-valued functions $\X$ and $\calG$ be an RKHS of real-valued functions on $\Z$. Then, define the cross-covariance operator $\C_{XZ}: \calG \rightarrow \calH$ by
\begin{align*}
    \ip{h, \C_{XZ} g}_{\calH} = \Cov_{Q_{X, Z}}(h(X), g(Z)),
\end{align*}
and the analogously defined auto-covariance operators $\C_{XX}: \calH \rightarrow \calH$ and $\C_{ZZ}: \calG \rightarrow \calG$. 
When $\C_{XX}$ and $\C_{ZZ}$ are compact, we define the powers $\C_{XX}^{1/2}$ and $\C_{ZZ}^{1/2}$ in the sense of~\eqref{eq:background:rkhs:power}. It then holds by \citet[Theorem 1]{baker1973joint} that there exists a unique bounded linear operator $\V_{XZ}: \calG \rightarrow \calH$, so that
\begin{align}
    \C_{XZ} = \C_{XX}^{1/2} \V_{XZ} \C_{ZZ}^{1/2}.\label{eq:baker}
\end{align}
The operator $\V_{XZ}$ is called the \emph{normalized cross-covariance operator}, or NOCCO for short \citep{fukumizu2005statistical}. As an abuse of notation, the NOCCO~\eqref{eq:baker} is sometimes communicated as $\V_{XZ} = \C_{XX}^{-1/2}\C_{XZ} \C_{ZZ}^{-1/2}$, though it is uniquely defined without necessarily constructing the square-root inverses. To rigorously use the formula $\C_{XX}^{-1/2}\C_{XZ} \C_{ZZ}^{-1/2}$ with a well-defined adjoint, we must make assumptions on the closure of the range of $\C_{XZ}$ and $\C_{ZX}$ being contained within the closure of the range of $\C_{XX}$ and $\C_{ZZ}$, respectively. The Hilbert-Schmidt norm of the population NOCCO, when finite, is equal to the mean square contingency
\begin{align}
    \norm{\V_{X, Z}}_{\HS(\calG, \calH)}^2 = I(X; Z), \label{eq:fukumizu_msc}
\end{align}
as shown in \citet[Theorem 4]{fukumizu2007kernel}. The relation~\eqref{eq:fukumizu_msc} requires a few additional technical conditions, such as $(\calH \otimes \calG) + \R$ being dense in $\ltwo(Q_X \otimes Q_Z)$ and $Q_{X, Z}$ having joint and marginal densities\footnote{Recall that our definition of $I(X; Z)$ does not include the square root that is usually used in the definition of mean square contingency.}.

\myparagraph{Variational Characterization of the Hilbert-Schmidt Norm}
This operator is an essential component of the kernel canonical correlations analysis (CCA) problem, which (with \eqref{eq:fukumizu_msc}) will be the common bridge that ties together SSL and the mean square contingency. From the nonparametric CCA perspective, the singular values $(\sigma_i)_{i=1}^\infty$ refer precisely to the canonical correlations and the singular functions $((\alpha_i, \beta_i))_{i=1}^\infty$ refer to the canonical variates \citep{lancaster1958thestructure, buja1990remarks, michaeli2016nonparametric}. Returning to~\eqref{eq:fukumizu_msc}, this operator is estimated with a regularization scheme, i.e.
\begin{align*}
    \hV_{X, Z} := (\hC_{XX} + \lambda \I)^{-1/2} \hC_{XZ} (\hC_{ZZ} + \lambda \I)^{-1/2},
\end{align*}
where $\hC_{XX}$, $\hC_{XZ}$, and $\hC_{ZZ}$ are the standard empirical covariance estimates (see \Cref{sec:a:background:rkhs}) and $\lambda > 0$ is a regularization parameter. Then, one solves the empirical CCA problem
\begin{align}
    \max_{\substack{h_1, \ldots, h_d \in \calH \text{ o.n.b}\\g_1, \ldots, g_d \in \calG \text{ o.n.b}}} \sum_{i=1}^d \ipsmall{h_i, \hV_{X, Z} g_i}_{\calH}.\label{eq:cca_empirical}
\end{align}
where o.n.b denotes an orthonormal basis. 
Setting aside matters of estimation, we consider how the norm quantity $\norm{\V_{X, Z}}_{\HS(\calG, \calH)}^2$ relates to the actual encoders returned by the CCA problem~\eqref{eq:cca_empirical} (assuming that $\hV_{X, Z} \approx \V_{X, Z}$).
Let $(s_i)_{i=1}^\infty$ be ordered singular values of the Hilbert-Schmidt operator $\V_{X, Z}$. In this case, denoting $h_1, \ldots, h_d \in \calH$ and $g_1, \ldots, g_d \in \calG$ the orthonormal bases of $\calH$ and $\calG$ resp. maximizing the criterion (\eqref{eq:cca_empirical}), we have
\begin{align}
    \sum_{i=1}^d \ipsmall{h_i, \V_{X, Z} g_i}_{\calH} &= \sum_{i=1}^d s_i \leq \sqrt{d \sum_{i=1}^d s_i^2}  \label{eq:cca_population}\\
    &= \sqrt{d\p{\norm{\V_{X, Z}}_{\HS(\calG, \calH)}^2 - \sum_{i=d+1}^\infty s_i^2}} \notag\\
    &= \sqrt{d \p{I(X; Z) - \sum_{i=d+1}^\infty s_i^2}}\notag\\
\end{align}
The two orthonormal bases maximizing the criterion~\eqref{eq:cca_empirical}
are actually the left and right singular functions of $\V_{XY}$ associated to the leading $d$ singular values (see \Cref{thm:svd}). The larger the truncation level $d$, the closer the quantity is to the mean-square contingency, up to the truncation level factor $d$. 

In either the population~\eqref{eq:cca_population} or empirical~\eqref{eq:cca_empirical} problems, the functions are maximizing an objective that is a measure of covariance with a constraint on variance. The constraint on variance is imposed by the norm condition on $h_1, \ldots, h_d \in \calH$ and $g_1, \ldots, g_d \in \calG$, respectively); see~\cite{fukumizu2007statistical}. This norm condition is relaxed into a penalization term in popular SSL objectives. 

Indeed, several SSL objectives can be written in an analogous \emph{variance-regularized covariance} form. This may offer one intuitive viewpoint as to why estimators based on these objectives might exhibit similar statistical properties to those analyzed in \Cref{sec:theory}. We first describe a format for these variance-regularized covariance objectives and show that a number of popular SSL objectives can be expressed in this form.

\myparagraph{Variance-Regularized Covariance Objectives}
Recall that $\balpha: \X \rightarrow \R^d$ and $\bbeta: \Z \rightarrow \R^d$ denote encoders for $\X$-valued and $\Z$-valued objects (often images and text, respectively). We denote the standard Euclidean inner product by $\ip{\u, \v} = \sum_{j=1}^d u_j v_j$ in $\R^d$. 
In either case, we consider a batch of data points $(\x_1, \z_1), \ldots, (\x_n, \z_n)$ which are thought to be $n$ independent and identically distributed realizations of $(X, Z)$ from the probability distribution $Q_{X, Z}$ over $\X \times \Z$. 
Let us then define the design matrices induced by the embeddings, written as
\begin{align*}
    \A := \begin{bmatrix}
    - \; \balpha(\x_1) \; -\\
    \vdots\\
    - \; \balpha(\x_n)\;  -
    \end{bmatrix} \in \R^{n \times d}
    \text{ and }
     \B := \begin{bmatrix}
    - \; \bbeta(\z_1)\; -\\
    \vdots\\
    - \; \bbeta(\z_n) \; -
    \end{bmatrix} \in \R^{n \times d}.
\end{align*}
Let $\J := \I - \frac{1}{n} \ones \ones^\top \in \R^{n \times n}$ be the centering matrix and construct the empirical auto-covariance and cross-covariance matrices 
\begin{align*}
    \hS_{AA} := (\J \A)^\top (\J \A), \quad \hS_{BB} := (\J \B)^\top (\J \B), \quad\text{and}\quad \hS_{AB} := (\J \A)^\top (\J \B).
\end{align*}
We aim to write the upcoming objectives in the form
\begin{align*}
    \mc{L}(\balpha, \bbeta) = -\Tr\p{\hS_{AB}} + \kappa \norm{\bS_{AB}}_{\mathrm{F}}^2 + V(\balpha, \bbeta),
\end{align*}
for hyperparameter $\kappa \geq 0$, matrix $\bS_{AB}$ (which is $\hS_{AB}$ with its diagonal components set to zero), and variance-regularization term $V(\balpha, \bbeta)$. The term $V(\balpha, \bbeta)$ may explicitly include the regularized inverses of $\hS_{XX}$ and $\hS_{ZZ}$, or may penalize variance or non-smoothness more implicitly.

\myparagraph{Example 1: Multimodal InfoNCE (CLIP)}
Consider the empirical objective for the contrastive language-image pre-training (CLIP) model \citep{radford2021learning} with batch size $n$,
\begin{align*}
    \hLclip(\balpha, \bbeta) := -\frac{1}{n} \sum_{i=1}^n \ip{\balpha(\x_i), \bbeta(\z_i)} + \frac{1}{2}\log \sum_{j=1}^n e^{\ip{\balpha(\x_i), \bbeta(\z_j)}} + \frac{1}{2}\log \sum_{j=1}^n e^{\ip{\balpha(\x_j), \bbeta(\z_i)}} + \log n,
\end{align*}
where the $\log n$ factor is appended to normalize the sums in the logarithmic terms and does not change the minimizer. Following arguments used (e.g.~by \citet{li2021selfsupervised}) for the SimCLR objective---the single-modality counterpart to CLIP---we analyze the logarithmic terms via Tayler expansion. To simplify the analysis, take the large-sample limit to define the population objective
\begin{align}
    \Lclip(\balpha, \bbeta) &:= -\Ex_{P}\ip{\balpha(X), \bbeta(Z)}\notag \\
    &+ \frac{1}{2}\Ex_{P_X}\sbr{\log \Ex_{P} \sbr{e^\ip{\balpha(X), \bbeta(Z)}\Big|X}} + \frac{1}{2}\Ex_{P_Z}\sbr{\log \Ex_{P} \sbr{e^\ip{\balpha(X), \bbeta(Z)}\Big|Z}}.\label{eq:clip:normalization}
\end{align}
Next, define the quantity $c(\x) := \E{P_{XZ}}{\ip{\balpha(X), \bbeta(Z)}|X}(\x)$ and apply a second-order Taylor expansion for every $\x \in \X$, the approximation
\begin{align*}
    e^\ip{\balpha(\x), \bbeta(Z)} &=  e^{c(\x)} e^{\ip{\balpha(\x), \bbeta(Z)} - c(\x)}\\
    &\approx e^{c(\x)} \p{1 + \ip{\balpha(\x), \bbeta(Z)} - c(\x) + \frac{1}{2}\p{\ip{\balpha(\x), \bbeta(Z)} - c(\x)}^2}.
\end{align*}
Plugging this approximation into the first term of~\eqref{eq:clip:normalization} yields
\begin{align*}
    \log \Ex_{P_Z} \sbr{e^\ip{\balpha(X), \bbeta(Z)}\Big | X}(\x) \approx c(\x) + \log \p{1 + \tfrac{1}{2}\Var\p{\ip{\balpha(\x), \bbeta(Z)}|X}(\x)}
\end{align*}
Using the Taylor expansion $\log(1 + y) = y + o(y)$ centered at $y = 0$, and evaluate the first-order approximation at $y = \tfrac{1}{2}\Var\p{\ip{\balpha(\x), \bbeta(Z)}|X}(\x)$, we finally have that
\begin{align*}
    \frac{1}{2}\E{P_X}{\log \Ex_{P_Z} \sbr{e^\ip{\balpha(X), \bbeta(Z)}\Big | X}} \approx \frac{1}{2} \E{P}{\ip{\balpha(X), \bbeta(Z)}} + \frac{1}{4} \E{P_X}{\Var\p{\ip{\balpha(X), \bbeta(Z)}|X}}.
\end{align*}
Applying an identical argument to the second term of~\eqref{eq:clip:normalization} gives
\begin{align*}
    \Lclip(\balpha, \bbeta) &\approx -\p{\Ex_{P}\ip{\balpha(X), \bbeta(Z)} - \ip{\E{P_X}{\balpha(X)}, \E{P_Z}{\bbeta(Z)}}} \\
    &+ \tfrac{1}{4} \E{P_X}{\Var\p{\ip{\balpha(X), \bbeta(Z)}|X}} + \tfrac{1}{4} \E{P_Z}{\Var\p{\ip{\balpha(X), \bbeta(Z)}|Z}}\\
    &= -\Tr\p{\Cov(\balpha(X), \bbeta(Z))} + \tfrac{1}{4} \E{P_X}{\Var\p{\ip{\balpha(X), \bbeta(Z)}|X}} + \tfrac{1}{4} \E{P_Z}{\Var\p{\ip{\balpha(X), \bbeta(Z)}|Z}}.
\end{align*}
which is the desired form for the population. Now, to rewrite the empirical version, we have
\begin{align*}
    \hLclip(\balpha, \bbeta) = \underbrace{-\Tr\p{\hS_{AB}}}_{\text{covariance}} + \underbrace{\tfrac{1}{4N} \sum_{i=1}^N \widehat{\Var}_N\p{\ip{\balpha(X), \bbeta(Z)}|X}(\x_i) + \tfrac{1}{4N} \sum_{i=1}^N \widehat{\Var}_N\p{\ip{\balpha(X), \bbeta(Z)}|Z}(\z_i)}_{\text{variance regularization}},
\end{align*}
where $\widehat{\Var}_N$ denotes the variance with respect to the empirical measure $\frac{1}{N} \sum_{i=1}^N \delta_{(\x_i, \z_i)}$.

\myparagraph{Example 2: BarlowTwins} The BarlowTwins objective \citep{zbontar2021barlow} has already been interpreted as an instance of kernel canonical correlations analysis (CCA) by previous work (e.g.~by \citet{balestriero2022constrastive}). This objective is usually defined in terms of the cross-correlation and auto-correlation matrices. To be consistent with other objectives in this section, we handle this by enforcing a constraint on the variance. Let $\iota_{S}: \R^{d \times d} \rightarrow \br{0, +\infty}$ denote the convex analytic indicator function such that $\iota(\bS) = 0$ if $\bS \in S$ and equals $+\infty$ otherwise. Let $\I$ be the identity matrix in $\R^{d \times d}$.
Given hyperparameter $\kappa > 0$, the objective can be written
\begin{align*}
    \hLBT(\balpha, \bbeta) &:= \frac{1}{2}\sum_{i=1}^d \p{(\hS_{AB})_{i, i} - 1}^2 + \kappa \norm{\bS_{AB}}_{\mathrm{F}}^2 + \iota_{\br{\I}}(\bS_{AA}) + \iota_{\br{\I}}(\bS_{BB})\\
    &= \underbrace{-\Tr(\hS_{AB}) + \kappa \norm{\bS_{AB}}_{\mathrm{F}}^2}_{\text{covariance}} + \underbrace{\sum_{i=1}^d (\hS_{AB})_{i, i}^2 + \frac{d}{2} +  \iota_{\br{\I}}(\bS_{AA}) + \iota_{\br{\I}}(\bS_{BB})}_{\text{variance regularization}}.
\end{align*}
Thus, this objective falls into the class as well, as the penalties enforce a particular variance structure akin to the regularizers above.

\myparagraph{Example 3: Spectral Contrastive Loss}
Finally, we consider the spectral contrastive loss from the pioneering work of \citet{haochen2021provable}. This relates to similar viewpoints of contrastive learning as spectral methods found in the literature, such as the Laplacian eigenmap viewpoint of VICReg \citep[Section 3]{balestriero2022constrastive}, the multidimensional scaling viewpoint of InfoNCE \citep[Section 4]{balestriero2022constrastive}, or the recent spectral clustering viewpoint of SimCLR/CLIP \citep[Sections 3 and 4]{tan2024contrastive}. Recall that $\bar{\balpha} := \frac{1}{n}\sum_{i=1}^n \balpha(\x_i)$ and $\bar{\bbeta} := \frac{1}{n}\sum_{i=1}^n \bbeta(\z_i)$. In the multimodal setting, this loss \citep[Eq.~(6)]{haochen2021provable} can be written as
\begin{align*}
    \hLSC(\balpha, \bbeta) &:= -\frac{1}{n} \sum_{i = 1}^n \ip{\balpha(\x_i), \bbeta(\z_i)} + \frac{1}{n(n-1)} \sum_{i \neq j} \p{\ip{\balpha(\x_i), \bbeta(\z_j)}}^2\\
    &= -\frac{1}{n} \sum_{i = 1}^n \ip{\balpha(\x_i) -\bar{\balpha}, \bbeta(\z_i) - \bar{\bbeta}} - \ip{\bar{\balpha}, \bar{\bbeta}} \\
    &\quad + \frac{1}{n(n-1)} \sum_{i \neq j} \p{\ip{\balpha(\x_i) - \bar{\balpha}, \bbeta(\z_j) - \bar{\bbeta}}}^2 + \p{\ip{\bar{\balpha}, \bar{\bbeta}}}^2\\
    &= -\Tr(\hS_{AB}) + \frac{1}{n-1}\norm{\bS_{AB}}_{\mathrm{F}}^2 - \ip{\bar{\balpha}, \bar{\bbeta}} + \p{\ip{\bar{\balpha}, \bar{\bbeta}}}^2\\
    &= \underbrace{-\Tr(\hS_{AB}) + \frac{1}{n-1}\norm{\bS_{AB}}_{\mathrm{F}}^2}_{\text{covariance}} + \underbrace{\p{\ip{\bar{\balpha}, \bar{\bbeta}} - \frac{1}{2}}^2 - \frac{1}{4}}_{\text{variance regularization}}
\end{align*}
where we set $\kappa := 1/(n-1)$ to complete the argument.

\myparagraph{Example 4: Multimodal VICReg}
We use a variant of the VICReg objective shown in \citet[Equation 1]{schwarz-ziv2023aninformation}. Note that this method is typically designed for one encoder being applied to two augmentations of the same object; however, it naturally generalizes to the multimodal case. The similarity graph simply connects paired observations, leading to the invariance term below. The multimodal VICReg objective has hyperparameters $(c_1, c_2, c_3, \kappa)$. To state it, define the real-valued function $r(x) := \max\br{0, c_1 - \sqrt{x + c_2}}$ for $x \in \R$. We will also apply $r$ to a matrix, which returns the matrix of element-wise applications of the function. The objective is written
\begin{align*}
    \hLvic(\balpha, \bbeta) &= \frac{c_3}{2d} \underbrace{\sbr{\Tr\p{r(\hS_{AA}}) + \Tr\p{r(\hS_{BB}})}}_{\text{variance}} +\frac{1}{2n}\underbrace{\norm{\A - \B}_{\mathrm{F}}^2}_{\text{invariance}} +\underbrace{\kappa\norm{\bS_{AB}}_{\mathrm{F}}^2}_{\text{covariance}}.
\end{align*}
While usually thought of as capturing a separate property, we will incorporate the invariance term into the other two terms, which crucially relies on having an extra degree of freedom via the second encoder (as opposed to the single-modality setting). Define $\bar{\balpha} := \frac{1}{n}\sum_{i=1}^n \balpha(\x_i)$ and $\bar{\bbeta} := \frac{1}{n}\sum_{i=1}^n \bbeta(\z_i)$, then write
\begin{align*}
    \frac{1}{2n}\norm{\A - \B}_{\mathrm{F}}^2 &= \frac{1}{n}\sum_{i=1}^n \norm{\balpha(\x_i) - \bbeta(\z_i)}_2^2\\
    &= \frac{1}{2n}\sum_{i=1}^n \norm{\balpha(\x_i)  - \bar{\balpha} - \bbeta(\z_i)  + \bar{\bbeta}}_2^2 + \frac{1}{2}\norm{\bar{\balpha} - \bar{\bbeta}}_2^2\\
    &= \frac{1}{2}\Tr\p{\hS_{AA}} + \frac{1}{2}\Tr\p{\hS_{BB}} - 2 \Tr\p{\hS_{AB}} + \frac{1}{2}\norm{\bar{\balpha} - \bar{\bbeta}}_2^2.
\end{align*}
The final term $\norm{\bar{\balpha} - \bar{\bbeta}}_2^2$ can harmlessly be dropped in the objective, as all other terms do not depend on the individual means. Thus, we can redefine our VICReg objective as
\begin{align*}
    &\hLvic(\balpha, \bbeta) = \underbrace{-\Tr\p{\hS_{AB}} + \kappa\norm{\bS_{AB}}_{\mathrm{F}}^2}_{\text{covariance}}\\
    &\quad + \underbrace{\frac{1}{2}\p{\Tr\p{\hS_{AA}} + \Tr\p{\hS_{BB}}} + \frac{c_3}{2d} \sbr{\Tr\p{r(\hS_{AA}}) + \Tr\p{r(\hS_{BB}})}}_{\text{variance regularization}},
\end{align*}
as intended.

\section{Experimental Details}\label{sec:a:experiments}
This appendix accompanies \Cref{sec:experiments} with further details of the study. Before describing the experiments, we comment one quantity appearing in the risk bounds that is not analyzed experimentally is the distribution shift error that passes $\ltwo(P_X)$-norm to the $\ltwo(Q_X)$-norm from \Cref{sec:theory}. For this, we refer the reader to the host of empirical work at the intersection of FSL, attribute-based and prompting-based ZSP, and distribution shift (see \citet{recht2019doimage, hendrycks2018benchmarking, goyal2023finetune} and references therein).

\subsection{Compute Environment}
Experiments were run on a CPU/GPU workstation with 12 virtual cores, 126G of memory, and four NVIDIA TITAN Xp GPUs with 12G memory each. The code was written in Python 3.10 with the environment given by the YAML file in the supplement. The \href{https://github.com/mlfoundations/open_clip}{OpenCLIP} and \href{https://github.com/LAION-AI/CLIP_benchmark}{CLIP Benchmark} repositories were either used directly or adapted in our codebase.

\subsection{Evaluation Datasets}
We use the following datasets as evaluation benchmarks for zero-shot image classification. Note that the following standard statistics describe their \emph{test} sets.
\begin{itemize}
    \item {\bf Describable Textures Dataset (DTD):} 1,880 examples labeled with 47 classes \citep{cimpoi14describing}.
    \item {\bf Flowers 102:} 6,149 examples labeled with 102 classes. \citep{nilsback2008automated}.
    \item {\bf FGVC Aircraft:} 3,333 examples labeled with 100 classes \citep{maji2013fine}.
    \item {\bf SUN397:} 21,750 examples labeled with 397 classes \citep{xiao2010sun}.
    \item {\bf ImageNet-1k:} 100,000 examples labeled with 998 classes. \citep{deng2009imagenet}.
\end{itemize}
The {\bf ImageNet-Captions} dataset \citep{fang2023data} is also used for evaluation using a subset of 134,593 examples, whereas a 40,000 held-out subset is used to estimate the conditional means of the text embeddings. For the subsets of ImageNet-Captions, the exact filenames of the ImageNet-1k subsets are provided along with their captions. Image preprocessing for evaluation was done using the transformations in the PyTorch \texttt{transforms} module that were associated with each OpenCLIP model.

For the experiment behind \Cref{fig:ideal}, we design three in-distribution sub-tasks by randomly selecting collections of 50 classes $(\Y_1, \Y_2, \Y_3)$ from each of 998 classes, reserving held-out prompting examples $(Z_1, Y_1), \ldots, (Z_{15,000}, Y_{15,000})$, 100 for each of 150 classes. Then, for task $i$, using $M$ examples $j_1(\y), \ldots, j_M(\y)$ selected randomly without replacement for $\y \in \Y_i$, we use the vector $\frac{1}{M} \sum_{m=1}^M \bbeta(Z_{j_m(\y)})$ as the class embedding (projected to unit norm). Using an evaluation set of approximately 25,000 examples from each sub-task, we compute the classification accuracy of this approach.

\subsection{Model Specification and Hyperparameters}

\paragraph{CLIP Architectures}
First, we specify which OpenCLIP models and pre-training sets were used. These models were chosen due to their range of top-1 zero-shot accuracies on the ImageNet-1k benchmark (as shown below). As opposed to already highly performant models ($\geq$50\% on ImageNet-1k), these models benefited more from optimized prompting techniques in our initial experiments. 
\begin{center}
\begin{tabular}{c|ccc}
     {\bf Model } & {\bf OpenCLIP Model Tag } &  {\bf Pre-Training Set Tag} &  {\bf ImageNet-1k Top-1 Acc.}\\
     \hline
     ResNet-50 & \texttt{RN50} & \texttt{yfcc15m} & 28.11\%\\
     NLLB-CLIP & \texttt{nllb-clip-base} & \texttt{v1} & 33.51\%\\
     ViT-B/32 & \texttt{ViT-B-32} & \texttt{datacomp\_m\_s128m\_b4k} & 32.81\%
\end{tabular}
\end{center}

\paragraph{Prompt-Generating Model}
We employed the \texttt{meta-llama/Llama-3.2-1B-Instruct} model publicly available on \href{https://huggingface.co/meta-llama/Llama-3.2-1B-Instruct}{HuggingFace}. For the purpose of generation, we used a {\bf top-$p$} hyperparameter of {\bf 0.9} and {\bf temperature} hyperparameter of {\bf 0.99} for more diverse responses. Meta-prompting was based on the following instructions per dataset, which are slight variations of those used in \citet{pratt2023what}:
\begin{itemize}
    \item {\bf Describable Textures Dataset (DTD):}
    \begin{itemize}
        \item {\it ``What does \underline{\hspace{0.5cm}} material look like?''},
        \item {\it ``What does a \underline{\hspace{0.5cm}} surface look like?''},
        \item {\it ``What does a \underline{\hspace{0.5cm}} texture look like?''},
        \item {\it ``What does a \underline{\hspace{0.5cm}} object look like?''},
        \item {\it ``What does a \underline{\hspace{0.5cm}} pattern look like?''}
    \end{itemize}
    \item {\bf Flowers 102:}
    \begin{itemize}
        \item {\it ``Describe how to identify a(n) \underline{\hspace{0.5cm}}, a type of flower.''},
        \item {\it ``What does a(n) \underline{\hspace{0.5cm}} flower looks like?''}
    \end{itemize}
    \item {\bf FGVC Aircraft:}
    \begin{itemize}
        \item {\it ``Describe a(n) \underline{\hspace{0.5cm}} aircraft.''},
        \item {\it ``Describe the \underline{\hspace{0.5cm}} aircraft.''}
    \end{itemize}
    \item {\bf SUN397:}
    \begin{itemize}
        \item  {\it ``Describe what a(n) \underline{\hspace{0.5cm}} looks like.''},
        \item {\it ``How can you identify a(n) \underline{\hspace{0.5cm}}?''},
        \item {\it ``Describe a photo of a(n) \underline{\hspace{0.5cm}}.''},
        \item {\it ``Describe the scene of a(n) \underline{\hspace{0.5cm}}.''}
    \end{itemize}
    \item {\bf ImageNet-1k:}
    \begin{itemize}
        \item {\it ``Describe what a(n) \underline{\hspace{0.5cm}} looks like.''},
        \item {\it ``How can you identify a(n) \underline{\hspace{0.5cm}}?''},
        \item {\it ``What does a(n) \underline{\hspace{0.5cm}} look like?''},
        \item {\it ``Describe an image from the Internet of a(n) \underline{\hspace{0.5cm}}.''},
        \item {\it ``Write a caption of an image of a(n) \underline{\hspace{0.5cm}}.''}
    \end{itemize}
\end{itemize}
The following additional instruction was appended for better-formatted responses: {\it ``Please format your response as one that contains only lower case letters and no special characters (including new lines, bold, and any markdown artifacts) other than a period (`.') or commas (`,'). The response should be a single sentence ending in a period that is directed toward the final instruction in this message. Your sentence should be a minimum of three words and a maximum of thirty.''}. 

Our reproducibility effort includes not only the full list of all 164,400 prompts generated from LlaMA 3, but the subset of prompts used for each class and each seed used to generate the figures in \Cref{sec:experiments}. 

\subsection{Derivation of Simulation Setting}
\label{sec:a:experiments:simulation}

The data-generating process for $(X, Z, Y)$ in the simulation from \Cref{sec:experiments} is as follows. Because we isolate the effect residual dependence in this simulation, we construct a joint distribution $P_{X, Y, Z}$ that satisfies \Cref{asm:rcd1}, and moreover, such that $Q_{X, Z} = P_{X, Z}$ and $\rho_{Y, Z} = P_{Y, Z}$.
Let $\Y = \br{0, 1}$, indicating binary classification. We consider $\X = \Z = \R^d$ and a pair of Gaussian distributions $(P_{X, Z|Y = 0}, P_{X, Z|Y = 1})$, where
\begin{align}
    \begin{bmatrix}
        X \\
        Z
    \end{bmatrix}
    \sim
    \mc{N}\p{
    \begin{bmatrix}
        \bmu_{X|\y} \\
        \bmu_{Z|\y}
    \end{bmatrix},
    \begin{bmatrix}
        \C_{XX|\y} & \C_{XZ|\y} \\
        \C_{ZX|\y} & \C_{ZZ|\y}
    \end{bmatrix}
    }
    \text{ given } Y = \y.\label{eq:mvn}
\end{align}
Then, the distribution is fully specified by mean vectors and covariance matrices along with the parameter $p = \P{}{Y = 1}$. Letting $\mc{N}(\cdot; \bmu,\C)$ indicate the density function of the $\mc{N}(\bmu, \C)$ distribution, the \emph{direct} predictor is equal to
\begin{align}
    p(\x) := \E{P_{X, Y}}{Y|X}(\x) = \frac{p \mc{N}(\x; \bmu_{X|1})}{p \mc{N}(\x; \bmu_{X|1}) + (1-p) \mc{N}(\x; \bmu_{X|0})}.\label{eq:re:bayes}
\end{align}
Similarly, the \emph{indirect predictor} is given by
\begin{align}
    \eta_\rho(\x) &= \E{P_{X,Z}}{\E{P_{Z, Y}}{Y|Z}|X}(\x) = \E{P_{X,Z}}{p(Z)|X}(\x), \label{eq:re:twostage}\\
    p(\z) &= \frac{p \mc{N}(Z; \bmu_{Z|1})}{p \mc{N}(Z; \bmu_{Z|1}) + (1-p) \mc{N}(Z; \bmu_{Z|0})}.
\end{align}
The expectation in~\eqref{eq:re:twostage} over $Z$ given $X = x$ can be evaluated via simulation based on the mixture model $P_{Z|X = \x} = (1-p(\x)) P_{Z|X = \x, Y = 0} + p(\x)P_{Z|X = \x, Y = 1}$ and the exact calculation
\begin{align*}
    Z &\sim \mc{N}\p{\bmu_{Z|\y} + \C_{ZX|\y} \C_{XX|\y}^{-1} (\x - \bmu_{X|\y}), \C_{ZZ|\y} - \C_{ZX|\y} \C_{XX|\y}^{-1} \C_{XZ|\y}} \text{ given } X = \x, Y = \y.
\end{align*}
Finally, the \emph{residual dependence} $\E{P_Z}{I(X;Y|Z)}$ can be computed by the following steps. 
First, notice that the conditional distribution of $X$ given $Z = \z$ and $Y = \y$ is given by
\begin{align*}
    X \sim \mc{N}\p{\bmu_{X|\y} + \C_{XZ|\y} \C_{ZZ|\y}^{-1} (\z - \bmu_{Z|\y}), \C_{XX|\y} - \C_{XZ|\y} \C_{ZZ|\y}^{-1} \C_{ZX|\y}}.
\end{align*}
The likelihood ratio $\Ssans_{\z}$ from~\eqref{eq:theory:rnd0} can be computed (where the evaluation at $\x$ refers to the density) via
\begin{align*}
    \Ssans_{\z}(\x, \y) &= \frac{P_{X|Y = \y, Z = \z}(\x)[\y p(\z) + (1-\y)(1 - p(\z))] }{P_{X|Z = \z}(\x)[\y p(\z) + (1-\y)(1 - p(\z))] } \\
    &= \frac{P_{X| Y = \y, Z = \z}(\x)}{P_{X|Z = \z}(\x)}\\
    &= \frac{P_{X| Y = \y, Z = \z}(\x)}{(1-p(\z)) P_{X|Y = 0, Z = \z}(\x) + p(\z)P_{X|Y = 1, Z = \z}(\x)}.
\end{align*}
To simulate from the marginal $P_Z$, we use the mixture $p P_{Z|Y=1} + (1-p)P_{Z|Y=0}$, after which~\eqref{eq:cond_msc} can be directly applied.

To interpolate between the setting in which $X \indep Z | Y$ (the indirect predictor performs at chance) and $X \indep Y | Z$ (the indirect predictor is equivalent to the direct one), we use the setting 
\begin{align*}
    \bmu_{X|0} = \tfrac{1}{2}\ones, \quad  \bmu_{X|1} = -\tfrac{1}{2}\ones.
\end{align*}
Let $a, b > 0$ be constants and let $\theta \in [0, 1]$ be a parameter. Then, we define
\begin{align*}
    \bmu_{Z|0} = 2\theta a \bmu_{X|0},& \quad  \bmu_{Z|1} = 2\theta b \bmu_{X|1}\\
    \C_{ZZ|0} = a\I, \quad \C_{ZX|0} = \tfrac{\theta a}{2} \I,& \quad \C_{ZZ|1} = b \I, \quad \C_{ZX|1} = \tfrac{\theta b}{2} \I
\end{align*}
and finally $\C_{XX|0} = (1 + \tfrac{a}{4})\I$ and $\C_{XX|1} = (1 + \tfrac{b}{4})\I$. Due to Gaussianity, it is clear that
\begin{align*}
    \theta = 0 &\implies \C_{ZX|0} = \C_{ZX|1} = \zeros \implies X \indep Z | Y =y \ \forall y.
\end{align*}
On the other hand, using the distribution of $X$ given $(Z, Y)$, that is,
\begin{align*}
    X \sim \mc{N}\p{\bmu_{X|y} + \C_{XZ|y} \C_{ZZ|y}^{-1} (\z - \bmu_{Z|y}), \C_{XX|y} - \C_{XZ|y} \C_{ZZ|y}^{-1} \C_{ZX|y}} \text{ given } Z = \z, Y = y,
\end{align*}
we have that
\begin{align}
    \theta = 1 &\implies \begin{cases}
    \bmu_{X|0} - \C_{XZ|0} \C_{ZZ|0}^{-1}\bmu_{Z|0} = \bmu_{X|1} - \C_{XZ|1} \C_{ZZ|1}^{-1} \bmu_{Z|1}\\
    \C_{XZ|0} \C_{ZZ|0}^{-1} = \C_{XZ|1} \C_{ZZ|1}^{-1}\\
    \C_{XX|0} - \C_{XZ|0} \C_{ZZ|0}^{-1} \C_{ZX|0} = \C_{XX|1} - \C_{XZ|1} \C_{ZZ|1}^{-1} \C_{ZX|1}\\
    \end{cases} \implies X \indep Y | Z = \z \ \forall \z, \label{eq:re:resid}
\end{align}
as the distribution of $X$ remains the same given either $Z = \z, Y = 0$ or $Z = \z, Y = 1$. Thus, in the simulation, we interpolate between $0$ and $1$ for the value of $\theta$. We set the parameters $a = 5$ and $b = 6$ simply to be different numbers for which $\E{P_Z}{I(X;Y|Z)}$ can be computed in a numerically stable manner. We set $p = \frac{1}{2}$ and $d = 2$. 

Finally, the lines labeled \emph{CLIP} and \emph{VICReg} in \Cref{fig:simulation} indicate the predictors generated by training two MLP encoders using the corresponding objective on observations $\br{(X_i, Z_i)}_{i=1}^N$ for $N = 10,000$ pre-training observations. The encoder had a single hidden layer of $16$ units and an output dimension of $d$. When performing classification, the prompting distribution used for the methods based on self-supervised learning is the true distribution of $Z|Y=\y$ with $M= 500$ samples, allowing us to isolate residual dependence while incurring no prompt bias and negligible prompt variance. Each model was trained for 30 epochs with the AdamW optimizer at a learning rate of $0.01$. In the case of the VICReg objective, we used the parameterization of the original paper \citep{bardes2022vicreg} with the settings $(\gamma, \lambda, \mu, \nu, \epsilon) = (1, 25, 25, 1, 0.0001)$ as per the authors' recommendations (see their Eq.~(6)).

\end{document}